\DeclareMathOperator{\Avg}{Avg}
\newtheorem{lemma}{Lemma}
\newtheorem{proposition}{Proposition}
\def\arrvline{\hfil\kern\arraycolsep\vline\kern-\arraycolsep\hfilneg}
\def\fr#1#2{{\textstyle\frac{#1}{#2}}} 
\newcommand{\cC}{\mathcal{C}}
\newcommand{\cD}{\mathcal{D}}
\newcommand{\cF}{\mathcal{F}}
\newcommand{\cG}{\mathcal{G}}
\newcommand{\cN}{\mathcal{N}}
\newcommand{\cP}{\mathcal{P}}
\newcommand{\cU}{\mathcal{U}}
\newcommand{\cX}{\mathcal{X}}
\newcommand{\cY}{\mathcal{Y}}
\newcommand{\E}{\mathbb{E}}
\newcommand{\R}{\mathbb{R}}
\renewcommand{\P}{\mathbb{P}}
\newcommand{\bI}{\mathbb{I}}
\newcommand{\Var}{\mathrm{Var}}
\DeclareMathOperator*{\argmin}{arg\,min}
\newcommand{\tx}{\tilde{x}}
\newcommand{\ty}{\tilde{y}}
\newcommand{\tS}{\tilde{S}}
\newcommand{\tcG}{\tilde{\cG}}
\newcommand{\tP}{\tilde{P}}
\newcommand{\tcP}{\tilde{\cP}}
\newcommand{\tig}{\tilde{g}}
\newcommand{\tih}{\tilde{h}}
\newcommand{\clD}{\cD_{\cC}}
\newcommand{\bz}{{\bf z}}
\def\fr#1#2{{\textstyle\frac{#1}{#2}}} 
\def\psinv{+} 
\DeclareMathOperator{\Tr}{Tr}
\title{On the Role of Neural Collapse in Transfer Learning}
\author{Tomer Galanti, Andr\'as Gy\"orgy \& Marcus Hutter
\\
DeepMind\\
London, UK \\
\texttt{\{galanti,agyorgy,mhutter\}@deepmind.com}
}
\begin{document}

\maketitle

\begin{abstract}
We study the ability of foundation models to learn representations for classification that are transferable to new, unseen classes. Recent results in the literature show that representations learned by a single classifier over many classes are competitive on few-shot learning problems with representations learned by special-purpose algorithms designed for such problems. In this paper we provide an explanation for this behavior based on the recently observed phenomenon that the features learned by overparameterized classification networks show an interesting clustering property, called neural collapse. We demonstrate both theoretically and empirically that neural collapse generalizes to new samples from the training classes, and -- more importantly -- to new classes as well, allowing foundation models to provide feature maps that work well in transfer learning and, specifically, in the few-shot setting.
\end{abstract}

\section{Introduction}

In a variety of machine learning applications, we have access to a limited amount of data from the task that we would like to solve, as labeled data is oftentimes scarce and/or expensive. In such scenarios, training directly on the available data is unlikely to produce a hypothesis that generalizes well to new, unseen test samples. A prominent solution to this problem is to apply transfer learning (see, e.g., \citealp{NIPS1994_0f840be9,pmlr-v27-bengio12a,NIPS2014_375c7134}). In transfer learning, we are typically given a large-scale source task (e.g., ImageNet ILSVRC, \citealp{ILSVRC15}) and a target task from which we encounter only a limited amount of data. While there are multiple approaches to transfer knowledge between tasks, a popular approach suggests to train a large neural network on a source classification task with a wide range of classes (such as ResNet50, \citealp{7780459}, MobileNet, \citealp{howard2017mobilenets} or the VGG network, \citealp{DBLP:journals/corr/SimonyanZ14a}), and then to train a relatively smaller network (e.g., a linear classifier or a shallow MLP) on top of the penultimate layer of the pretrained network, using the data available in the target task.

Due to the effectiveness of this approach, transfer learning has become a central element in the machine learning toolbox. For instance, using pretrained feature maps is common practice in a variety of applications, including fine-grained classification \citep{Chen2019ThisLL,huang2020interpretable,yang2018learning}, object detection, \citep{Redmon_2016_CVPR,NIPS2015_14bfa6bb,He_2017_ICCV}, semantic segmentation \citep{7298965,conf/eccv/ChenZPSA18}, or medical imaging \citep{DBLP:journals/corr/abs-1904-00625}. 
In fact, due to the cumulative success of transfer learning, large pretrained models that can be effectively adapted to a wide variety of tasks~\citep{NEURIPS2020_1457c0d6,DBLP:journals/corr/abs-2102-12092} have recently been characterized as foundation models~\citep{DBLP:journals/corr/abs-2108-07258}, emphasizing their central role in solving various learning tasks.

Typically, a foundation model is pretrained on a source task at a time when the concrete description of the target task (or target tasks) is not -- or only partially -- available to the practitioner. Therefore, the ability to precondition or design the training regime of the foundation model to match the target task is limited. As a result, there might be a domain shift between the source and target tasks, such as a different amount of target classes, or a different number of available samples. Hence, intuitively, a foundation model should be fairly generic and applicable in a wide range of problems.

On the other hand, when some specifics of the target tasks are known, often special-purpose algorithms are designed to utilize this information. Such an example is the problem of few-shot learning, when it is known in advance that the target problems come with a very small training set \citep{NIPS2016_90e13578,Ravi2017OptimizationAA,pmlr-v70-finn17a,lee2019meta}. While these specialized algorithms have significantly improved the state of the art, the recent work of \citet{DBLP:conf/eccv/TianWKTI20,Dhillon2020A} demonstrated that predictors trained on top of foundation models can also achieve state-of-the-art performance on few-shot learning benchmarks. 

Despite the wide range of applications and success of transfer learning and foundation models in particular, only relatively little is known theoretically why transfer learning is possible between two tasks in the setting mentioned above. 

{\bf Contributions.\enspace} In this paper we present a new perspective on transfer learning with foundation models, based on the recently discovered phenomenon of neural collapse~\citep{Papyan24652}. Informally, neural collapse identifies training dynamics of deep networks for standard classification tasks, where the features (the output of the penultimate layer) associated with training samples belonging to the same class concentrate around their class feature mean. We demonstrate that this property generalizes to new data points and new classes (e.g., the target classes), when the model is trained on a large set of classes with many samples for each class. In addition, we show that in the presence of neural collapse, training a linear classifier on top of the learned penultimate layer can be done using only few samples. We verify these findings both theoretically and via experiments. In particular, our results provide a compelling explanation for the empirical success of foundation models, as observed, e.g., by \citet{DBLP:conf/eccv/TianWKTI20,Dhillon2020A}. 

The rest of the paper is organized as follows: Some additional related work is discussed in Section~\ref{sec:related_work}. The problem setting, in particular, foundation models, are introduced in Section~\ref{sec:setup}. Neural collapse is described in Section~\ref{sec:nc}. Our theoretical analysis is presented in Section~\ref{sec:analysis}: generalization to unseen examples is considered in Section~\ref{sec:generalization_test}, to new classes in Section~\ref{sec:generalization_target}, while the effect of neural collapse on the classification performance is discussed in Section~\ref{sec:error}. Experiments are presented in Section~\ref{sec:experiments}, and conclusions are drawn in Section~\ref{sec:conclusions}. Proofs and additional experiments are relegated to the appendix.

\subsection{Other Related Work}\label{sec:related_work}

Various publications, such as  \citet{baxter,10.5555/2946645.3007034,pmlr-v32-pentina14,Galanti2016ATF,NEURIPS2019_f4aa0dd9,du2021fewshot}, suggested different frameworks to theoretically study various multi-task learning settings. However, all of these works consider learning settings in which the learning algorithm is provided with a sequence of learning problems, and therefore, are unable to characterize the case where only one source task is available.  

In contrast, we propose a theoretical framework in which the learning algorithm is provided with one classification task and is required to learn a representation of the data that can be adapted to solve new unseen tasks using few samples. This kind of modeling is aligned with the common practice of transfer learning using deep neural networks. 

While in unsupervised domain adaptation (DA)~\citep{bendavid,35391} one typically has access to a single source and a single target task, one does not have access to labels from the former, making the problem ill-posed by nature and shaping the algorithms to behave differently than those that are provided with labeled samples from the target task. In particular, although in transfer learning we typically train the feature map on the source dataset, independently of the target task, in DA this would significantly limit the algorithm's ability to solve the target task.

\section{Problem Setup}\label{sec:setup}

We consider the problem of training foundation models, that is, general feature representations which are useful on a wide range of learning tasks, and are trained on some auxiliary task. To model this problem, we assume that the final target task we want to solve is a $k$-class classification problem $T$ (the \emph{target} problem), coming from an unknown distribution $\cD$ over such problems, and the auxiliary task where the feature representation is learned on an $l$-class classification problem, called the \emph{source} problem. Formally, the target task is defined by a distribution $P$ over samples $(x,y)\in \cX\times \cY_k$, where $\cX \subset \R^d$ is the instance space, and $\cY_k$ is a label space with cardinality $k$. To simplify the presentation, we use one-hot encoding for the label space, that is, the labels are represented by the unit vectors in $\R^k$, and $\cY_k=\{e_i: i=1,\ldots,k\}$ where $e_i \in \R^k$ is the $i$th standard unit vector in $\R^k$; with a slight abuse of notation, sometimes we will also write $y=i$ instead of $y=e_i$.
For a pair $(x,y)$ with distribution $P$, we denote by $P_i$ the class conditional distribution of $x$ given $y=i$ (i.e., $P_i(\boldsymbol{\cdot})=\P[x \in \boldsymbol{\cdot} \mid y=i]$).

A classifier $h:\cX\to \R^k$ assigns a \emph{soft} label to an input point $x \in \cX$, and its performance on the target task $T$ is measured by the risk
\begin{equation}
\label{eq:loss}
L_T(h)=\E_{(x,y)\sim P}[\ell(h(x),y)]~,
\end{equation}
where $\ell:\R^k \times \cY_k \to [0,\infty)$ is a non-negative loss function (e.g., zero-one or cross-entropy losses).

Our goal is to learn a classifier $h$ from some training data $S=\{(x_i,y_i)\}_{i=1}^n$ of $n$ independent and identically distributed (i.i.d.) samples drawn from $P$. However, when $n$ is small and the classification problem is complicated, this might not be an easy problem. To facilitate finding a good solution, we aim to find a classifier of the form $h=g \circ f$, where $f:\R^d \to \R^p$ is a feature map from a family of functions $\cF \subset \{f':\R^d \to \R^p\}$ and $g \in \cG=\{g':\R^p \to \R^k\}$ is a classifier used on the feature space $\R^p$. The idea is that the feature map $f$ is learned on some other problem where more data is available, and then $g$ is trained to solve the hopefully simpler classification problem of finding $y$ based on $f(x)$, instead of $x$. That is, $g$ is actually a function of the modified training data $\{(f(x_i),y_i)\}_{i=1}^n$; to emphasize this dependence, we denote the trained classifier by $g_{f, S}$.

We assume that the auxiliary (source) task helping to find $f$ is an $l$-class classification problem over the same sample space $\cX$, given by a distribution $\tP$, and here we are interested in finding a classifier $\tih:\cX \to \R^l$ of the form $\tih=\tig \circ f$, where $\tig \in \tcG \subset \{g':\R^p \to \R^l\}$ is a classifier over the feature space $f(\cX)=\{f(x):x \in \cX\}$. 
Given a training dataset $\tS=\{(\tx_i,\ty_i)\}_{i=1}^m$,
both components of the trained classifier, denoted by $f_{\tS}$ and $\tig_{\tS}$ are trained on $\tS$, with the goal of minimizing the risk in the source task, given by
\[
L_S(\tih_{\tS}) = \E_{(x,y)\sim \tP}[\ell(\tig_{\tS}(f_{\tS}(x)),y)]
\]
(note that with a slight abuse of notation we used the same loss function as in \eqref{eq:loss}, although they operate over spaces of different dimensions). 
A standard assumption is that $\tS$ is drawn i.i.d.\ form $\tP$; however, in this work instead we will use hierarchical schemes where first classes are sampled according to their marginals in $\tP$, and then for any selected class $c$, training samples $\tS_c$ are chosen from the corresponding class-conditional distribution (or just class conditional, for short) $\tP_c$; the specific assumptions on the distribution of $\tS$ will be given when needed. 
Since $f, \tig$ and $\tih$ always depend on the source data only, to simplify the notation we will often drop the $\tS$ subscript whenever this does not cause any confusion.

In a typical setting $\tih$ is a deep neural network, $f$ is the representation in the last internal layer of the network (i.e., the penultimate \emph{embedding} layer), and $\tig$, the last layer of the network, is a linear map; similarly $g$ in the target problem is often taken to be linear. 
The learned feature map $f$ is called a \emph{foundation model}~\citep{DBLP:journals/corr/abs-2108-07258} when it can be effectively used in a wide range of tasks. In particular, its effectiveness can be measure by its expected performance over target tasks:
\[
 L_{\cD}(f)=\E_{T \sim \cD}\E_{S \sim P^n} [L_T(g_S \circ f)]
\]
(recall that $P$ is the distribution associated with task $T$).

Notice that while the feature map $f$ is evaluated on the distribution of target tasks determined by $\cD$, the training of $f$ in a foundation model, as described above, is fully agnostic of this target. In contrast, several transfer learning methods (such as few-shot learning algorithms) have been developed which optimize $f$ not only as a function of its training data $\tS$, but also based on some properties of $\cD$, such as the number of classes and the number of samples per class in $S$.
Perhaps surprisingly, recent studies~\citep{DBLP:conf/eccv/TianWKTI20,Dhillon2020A} demonstrate that the target-agnostic training of foundation models (or some slight variation of them, such as transductive learning) are competitve with such special purpose algorithm. In the rest of the paper we analyze this phenomenon, and provide an explanation through the recent concept of neural collapse.

\paragraph{Notation.}
For an integer $k\geq 1$, $[k]=\{1,\ldots,k\}$. For any real vector $z$, $\|z\|$ denotes its Euclidean norm. For a given set $A = \{a_1,\dots,a_n\} \subset B$ and a function $u \in \cU \subset \{u':B \to \mathbb{R}\}$, we define $u(A) = \{u(a_1),\dots,u(a_n)\}$ and $\cU(A) = \{u(A) : u \in \cU\}$. Let $Q$ be a distribution over $\cX \subset \R^d$ and $u: \cX \to \R^p$. We denote by $\mu_u(Q) = \E_{x \sim Q}[u(x)]$ and by $\Var_u(Q) = \E_{x \sim Q}[\|u(x)-\mu_u(Q)\|^2]$ the mean and variance of $u(x)$ for $x\sim Q$. For $A$ above, we denote by $\Avg^{n}_{i=1}[a_i] = \Avg A = \frac{1}{n} \sum^{n}_{i=1}a_i$ the average of $A$. For a finite set $A$, we denote by $U[A]$ the uniform distribution over $A$.

For a complete list of notation, see Appendix~\ref{app:Notation}.

\section{Neural Collapse}\label{sec:nc}

Neural collapse (NC) is a recently discovered phenomenon in deep learning \citep{Papyan24652}: it has been observed that during the training of deep networks for standard classification tasks, the features (the output of the penultimate, a.k.a. embedding layer) associated with training samples belonging to the same class concentrate around the mean feature value for the same class. More concretely, \citet{Papyan24652} observed essentially that the ratio of the within-class variances and the distances between the class means converge to zero. They also noticed that asymptotically the class-means (centered at their global mean) are not only linearly separable, but are actually maximally distant and located on a sphere centered at the origin up to scaling (they form a simplex equiangular tight frame), and furthermore, that the behavior of the last-layer classifier (operating on the features) converges to that of the nearest-class-mean decision rule. \citet{han2021neural} provided further empirical evidence regarding the presence of neural collapse in the terminal stages of training for both MSE- and cross-entropy-loss minimization.

On the theoretical side, already \citet{Papyan24652} showed the emergence of neural collapse for a Gaussian mixture model and linear score-based classifiers. \citet{poggio2020explicit,Poggio2020ImplicitDR,mixon2020neural} theoretically investigated whether neural collapse occurs when training neural networks under MSE-loss minimization with different settings of regularization. Furthermore, \citet{zhu2021geometric} showed that in certain cases, any global optimum of the classical cross-entropy loss with weight decay in the unconstrained features model satisfies neural collapse.

\citet{Papyan24652} defined neural collapse via identifying that the within-class variance normalized by the intra-class-covariance tends to zero towards the end of the training (see Appendix~\ref{sec:ccnv} for details). In this paper, we work with a slightly different variation of within-class variation collapse, which will be later connected to the clusterability of the sample feature vectors.  For a feature map $f$ and two (class-conditional) distributions $Q_1,Q_2$ over $\cX$, we define their \emph{class-distance normalized variance} (CDNV) to be 
\begin{align*}
V_f(Q_1,Q_2) &= \frac{\Var_f(Q_1) + \Var_f(Q_2)}{2\|\mu_f(Q_1)-\mu_f(Q_2)\|^2}~. \\[-0.7cm]
\end{align*}

The above definition can be extended to finite sets $S_1, S_2 \subset \cX$ by defining $V_f(S_1,S_2)=V_f(U[S_1],U[S_2])$. Our version of \emph{neural collapse} (at training) asserts that $\smash{\lim_{t\to \infty} \Avg_{i\neq j \in [l]} [V_{f_t}(\tS_i,\tS_j)] = 0}$. Intuitively, when decreasing the variance of the features of a given class in comparison to its distance from another class, we expect to be able to classify the features into the classes with a higher accuracy. This definition is essentially the same as that of \citet{Papyan24652}, making their empirical observations about neural collapse valid for our definition (as also demonstrated in our experiments), but our new definition simplifies the theoretical analysis. Furthermore, the theoretical results of \citet{Papyan24652,zhu2021geometric,mixon2020neural,Poggio2020ImplicitDR,poggio2020explicit} also identify settings in which neural collapse holds for training, or the feature map $f$ induced by the global minima of the training loss satisfies $\Avg_{i\neq j} V_f(\tS_i, \tS_j)=0$ (for large enough sample sizes). In later sections, we also consider slight variations of neural collapse, requiring that some version of the CDNV converges to zero towards the end of the training.

\section{Neural Collapse on Unseen Data}\label{sec:analysis}

In this section we theoretically analyze neural collapse in the setting of Section~\ref{sec:setup}, with particular attention of variance collapse on data not used in the training of the classifier.
First, in Section~\ref{sec:generalization_test} we show that if neural collapse happens on the training set $\tS$, then it generalizes to unseen samples from the same task under mild, natural assumptions. Next we show, in Section~\ref{sec:generalization_target}, that one can also expect neural collapse to happen over new classes when the classes in the source and target tasks are selected randomly from the same class distributions. The significance of these results is that they show that the feature representations learned during training are immediately useful in other tasks; we show in Section~\ref{sec:error} how this leads to low classification error in few-shot learning.

\subsection{Generalization to New Samples from the Same Class}
\label{sec:generalization_test}

In this section, we provide a generalization bound on the CDNV, $V_f(\tP_i,\tP_j)$, between two source class-conditional distributions $\tP_i$ and $\tP_j$ in terms of its empirical counterpart, $V_f(\tS_i,\tS_j)$, and certain generalization gap terms bounding the difference between expectations and empirical averages for $f$ and its variants, where $f$ is the output of the learning algorithm with access to $\tilde{S}$.
Assume that for all $c \in [l]$, $\tS_c$ is a set of $m_c$ i.i.d.\ samples drawn from $\tP_c$, and for any $\delta\in (0,1)$, let $\epsilon_{1}(\tP_c,m_c,\delta)$ and  $\epsilon_{2}(\tP_c,m_c,\delta)$ be the smallest positive values such that
with probability at least $1-\delta$, the learning algorithm returns a function $f \in \cF$ that satisfies
\begin{align*}
\left\| \E_{x \sim \tP_c}[f(x)] - \Avg_{x\in \tS_c}[f(x)] \right\| ~&\le~ \epsilon_{1}(\tP_c,m_c,\delta) ~=:~ \epsilon^c_{1}(\delta); \\
\left| \E_{x \sim \tP_c}[\|f(x)\|^2] - \Avg_{x\in \tS_c}[\|f(x)\|^2] \right| ~&\le~ \epsilon_{2}(\tP_c,m_c,\delta) ~=:~ \epsilon^c_{2}(\delta),
\end{align*}
respectively.

Typically, these quantities can be upper bounded using Rademacher complexities~\citep{journals/jmlr/BartlettM02} related to $\cF$,
scaling usually as $\mathcal{O}(\sqrt{\log(1/\delta)/m_c})$ (for a fixed $c$), as we show in Proposition~\ref{prop:epsilons} (in Appendix~\ref{sec:gen_ests}) for ReLU neural networks with bounded weights.

Next, we present our bound on the CDNV of the source distributions; the proof is relegated to  Appendix~\ref{sec:proof1}.

\begin{restatable}{proposition}{generalizationInner}\label{prop:generalizationInner} Fix two source classes, $i$ and $j$ with distributions $\tP_i$ and $\tP_j$, and let $\delta \in (0,1)$. Let $\tS_c \sim \tP^{m_c}_c$ for $c\in \{i,j\}$. Let
\small
\[
A = \frac{\epsilon^i_{1}(\delta/4) + \epsilon^j_{1}(\delta/4) }{\|\mu_f(\tP_i)-\mu_f(\tP_j)\|}
\quad \text{ and } \quad
B = \frac{\Avg_{c\in\{i,j\}}\left[ \epsilon^c_{2}(\delta/4)  + 2\|\mu_f(\tP_c)\| \cdot \epsilon^c_{1}(\delta/4) + \epsilon^c_{1}(\delta/4)^2 \right]}{\|\mu_f(\tS_i)-\mu_f(\tS_j)\|^2}~.
\]
Then, with probability at least $1-\delta$ over $\tS$, we have $V_f(\tP_i,\tP_j) \leq (V_f(\tS_i,\tS_j) + B)\left(1 + A \right)^2$.
\end{restatable}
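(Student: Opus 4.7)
The plan is to establish this as essentially an algebraic identity, where the two correction terms $A$ and $B$ absorb the errors that arise from replacing population means/variances by their sample counterparts. First, I would apply a union bound at level $\delta/4$ to the four events guaranteed by the definitions of $\epsilon_1^c$ and $\epsilon_2^c$ for $c \in \{i,j\}$, so that on a single high-probability event of mass at least $1-\delta$ I can control simultaneously $\|\mu_f(\tilde{P}_c)-\mu_f(U[\tilde{S}_c])\|$ and $|\E_{\tilde{P}_c}\|f(x)\|^2 - \Avg_{x\in\tilde{S}_c}\|f(x)\|^2|$ for both classes.

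Next, I would rewrite each variance using the identity $\Var_f(Q) = \E_{x\sim Q}\|f(x)\|^2 - \|\mu_f(Q)\|^2$. The first-moment difference is then controlled by $\epsilon_2^c$ directly; for the squared-norm-of-mean term, writing $\mu_f(U[\tilde{S}_c]) = \mu_f(\tilde{P}_c) + v_c$ with $\|v_c\| \le \epsilon_1^c(\delta/4)$ and expanding $\|\mu_f(U[\tilde{S}_c])\|^2$ yields
\[
\bigl|\,\|\mu_f(U[\tilde{S}_c])\|^2 - \|\mu_f(\tilde{P}_c)\|^2\bigr| \le 2\|\mu_f(\tilde{P}_c)\|\,\epsilon_1^c(\delta/4) + \epsilon_1^c(\delta/4)^2.
\]
Combining these gives the key numerator bound
\[
\Var_f(\tilde{P}_i) + \Var_f(\tilde{P}_j) \le \Var_f(U[\tilde{S}_i]) + \Var_f(U[\tilde{S}_j]) + 2\cdot\Avg_{c\in\{i,j\}}\!\bigl[\epsilon_2^c(\delta/4) + 2\|\mu_f(\tilde{P}_c)\|\epsilon_1^c(\delta/4) + \epsilon_1^c(\delta/4)^2\bigr].
\]

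For the denominator, I would apply the triangle inequality to $\mu_f(U[\tilde{S}_i]) - \mu_f(U[\tilde{S}_j]) = (\mu_f(\tilde{P}_i) - \mu_f(\tilde{P}_j)) + (v_i - v_j)$ to conclude
\[
\|\mu_f(\tilde{S}_i)-\mu_f(\tilde{S}_j)\| \le \|\mu_f(\tilde{P}_i)-\mu_f(\tilde{P}_j)\|\bigl(1 + A\bigr),
\]
and hence $\|\mu_f(\tilde{S}_i)-\mu_f(\tilde{S}_j)\|^2 / \|\mu_f(\tilde{P}_i)-\mu_f(\tilde{P}_j)\|^2 \le (1+A)^2$. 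Finally, I would write
\[
V_f(\tilde{P}_i,\tilde{P}_j) = \frac{\Var_f(\tilde{P}_i) + \Var_f(\tilde{P}_j)}{2\|\mu_f(\tilde{S}_i)-\mu_f(\tilde{S}_j)\|^2}\cdot\frac{\|\mu_f(\tilde{S}_i)-\mu_f(\tilde{S}_j)\|^2}{\|\mu_f(\tilde{P}_i)-\mu_f(\tilde{P}_j)\|^2},
\]
plug in the numerator bound (which yields exactly $V_f(\tilde{S}_i,\tilde{S}_j) + B$) and the denominator-ratio bound ($(1+A)^2$), and conclude.

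No step is deep; the only subtle point is the bookkeeping in the $\|\mu_f\|^2$ expansion, where one must not drop the quadratic $\epsilon_1^c(\delta/4)^2$ term — this is precisely the reason $B$ contains that term and why the triangle-inequality step on the denominator gives the clean multiplicative factor $(1+A)^2$ rather than only a first-order correction. The main modeling choice to watch is keeping $A$ defined with the population distance (so the empirical denominator is fully absorbed into the first factor $V_f(\tilde{S}_i,\tilde{S}_j) + B$), which matches the statement.
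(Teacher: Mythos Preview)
Your argument is correct and in fact cleaner than the paper's. Both proofs rest on the same three ingredients: the union bound over the four $\epsilon$-events, the variance inequality
\[
\Var_f(\tP_c) \le \Var_f(\tS_c) + \epsilon_2^c(\delta/4) + 2\|\mu_f(\tP_c)\|\,\epsilon_1^c(\delta/4) + \epsilon_1^c(\delta/4)^2
\]
(this is the paper's Lemma~\ref{lem:part2}), and the triangle inequality on the class means. The difference is purely in how the pieces are assembled. The paper works \emph{additively} on the reciprocal denominator: it writes
\[
\frac{1}{\|\mu_f(\tP_i)-\mu_f(\tP_j)\|^2} = \frac{1}{\|\mu_f(\tS_i)-\mu_f(\tS_j)\|^2} + \text{(correction involving } \Delta\text{)},
\]
multiplies through by the variance bound, bounds $\Delta = \bigl|\,\|\mu_f(\tP_i)-\mu_f(\tP_j)\|^2 - \|\mu_f(\tS_i)-\mu_f(\tS_j)\|^2\bigr|$ via the triangle inequality, and only at the very end recognizes that the resulting four-term expression is dominated by the factored form $(V_f(\tS_i,\tS_j)+B)(1+A)^2$. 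Your approach instead factors $V_f(\tP_i,\tP_j)$ from the start as the product of $\tfrac{\Var_f(\tP_i)+\Var_f(\tP_j)}{2\|\mu_f(\tS_i)-\mu_f(\tS_j)\|^2}$ and the ratio of squared distances, and bounds each factor separately by $V_f(\tS_i,\tS_j)+B$ and $(1+A)^2$. This multiplicative decomposition avoids all the intermediate cross terms and the $\Delta$ bookkeeping, making the argument shorter and more transparent; the paper's route gains nothing in exchange.
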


Writing out the bracket, the bound in the proposition has several terms. The first one is the empirical CDNV $V_f(\tS_i,\tS_j)$ between the two datasets $\tS_i \sim \tP^{m_i}_i$ and $\tS_j\sim \tP^{m_j}_j$. This term is assumed to be small by the neural collapse phenomenon during training.
The rest of the terms are proportional to the generalization gaps $\epsilon^c_{1}(\delta/4)$ and $\epsilon^c_{2}(\delta/4)$ --- as discussed above, typically we expect these terms to scale as $\mathcal{O}(\sqrt{\log(1/\delta)/m_c})$. 
In addition, according to the theoretical analyses of neural collapse available in the literature \citep{Papyan24652,mixon2020neural,zhu2021geometric,4880}, under certain conditions the function $f$ converges to a solution for which $\{\mu_f(\tilde{S}_i)\}^{l}_{i=1}$ form a simplex equiangular tight frame (ETF), that is, after centering their global mean, $\mu_f(\tilde{S}_i)$ are of equal length, and  
$\|\mu_f(\tilde{S}_i)-\mu_f(\tilde{S}_j)\|$ are also equal and maximized for all $i\neq j$.
This implies that if $f$ is properly normalized, the distance $\|\mu_f(\tS_i)-\mu_f(\tS_j)\|$ is lower bounded by a constant, and hence $A$ and $B$ indeed are small (in Appendix~\ref{sec:distances} we demonstrate empirically that the minimum distances of the class means are indeed not too small in the scenarios we consider). Finally, we also note that with probability $1-\delta$, we have $\|\mu_f(\tP_i)-\mu_f(\tP_j)\| \geq \|\mu_f(\tS_i)-\mu_f(\tS_j)\| - \epsilon^i_1(\delta/2)-\epsilon^j_1(\delta/2)$. Hence, if $m_i$ and $m_j$ are large enough, $\|\mu_f(\tP_i)-\mu_f(\tP_j)\|$ is larger than a constant with high probability.

Therefore, assuming we have a large enough training set for a pair of source classes $i,j$, if the CDNV $V_f(\tS_i,\tS_j)$ is small, then we expect it to be small for new, unseen samples as well. That is, if neural collapse emerges in the training data of two source classes, we should also expect it to emerge in unseen samples of the same classes.

\subsection{Neural Collapse Generalizes to New Classes}\label{sec:generalization_target}

Previously, we showed that if the CDNV is minimized by the learning algorithm on the training data, we expect it to be small for unseen source samples. As a next step, we show that if neural collapse emerges in the set of source classes, we can also expect it to emerge in new, unseen target classes.

To analyze this scenario, we essentially treat class-conditional distributions as data points on which the feature map $f$ is trained (in a noisy manner, depending on the actual samples), and apply standard techniques to derive generalization bounds to new data points, which, in this case, are class-conditional distributions. Accordingly, we assume that both the source and the target classes come from a distribution over $\clD$ over a set of classes $\cC$. Each class is represented by its class-conditional distribution, and hence, with a slight abuse of notation, we say that the source class-conditional distributions $\tcP=\{\tP_i\}_{i=1}^l$ are selected according to $\clD(\tP_1,\dots,\tP_l \mid \tP_i\neq \tP_j \text{ for all } i\neq j \in[l])$. Below we show that if neural collapse emerges in the source classes, it also emerges in unseen target classes $c \neq c'$ with class-conditional distributions $P_c, P_{c'}\sim \clD(P_c,P_{c'} \mid P_c\neq P_{c'})$ in the sense that $V_f(P_c,P_{c'})$ is expected to be small. Our bound below applies a uniform convergence argument based on Rademacher complexities. The Rademacher complexity of a set $Y \subset \R^n$ is defined as $R(Y) = \E_{\epsilon}\sup_{y\in Y} \left[\langle \epsilon, y\rangle\right]$, where $\epsilon = (\epsilon_1,\dots,\epsilon_n) \sim U[\{\pm 1\}^n]$. For a given feature map $f\in \cF$, we also define a mapping $H_f:\cC \to \R^{p+1}$ as $H_f(P_c) = (\mu_f(P_c),\Var_f(P_c))$, and for a class of functions $\cF^*$, we define $H_{\cF^*}(\tcP) =\{(H_f(\tP_c))^{l}_{c=1} : f \in \cF^*\}$.

\begin{restatable}{proposition}{newclasses}\label{prop:newclasses} Let $\cF^* \subset \cF$ be any finite set of functions with $\Delta(\cF^*) = \inf_{f \in \cF^*}\inf_{P_c\neq P_{c'}} \|\mu_f(P_c)-\mu_f(P_{c'})\|>0$. Then, with probability at least $1-\delta$ over the selection of source class distributions $\tcP$, 
\small
\begin{align*}
\E_{P_c\neq P_{c'}} \!\!\left[V_f(P_c,P_{c'})\right] &\leq \Avg_{i\neq j}\!\!\left[V_f(\tP_i,\tP_j)\right]
\!+\!\left(\!8\!+\!\frac{16\sup\limits_{f\in \cF^*, P'\in \mathcal{C}}\! \Var_f(P')}{\Delta(\cF^*)} \!\right)\! \cdot 
\! \frac{\sqrt{2\pi\log(l)} \E[R(H_{\cF^*}(\tilde{\mathcal{P}}))]}{(l-1) \cdot \Delta(\cF^*)^2}\\
&\quad \!+\! \left(1+\frac{4\sup_{x\in \cX,~f\in \cF^*} \|f(x)\|}{\Delta(\cF^*)}\right)\cdot \frac{2\sqrt{\log(1/\delta)} \cdot \sup\limits_{f\in \cF^*, P' \in \mathcal{C}} \Var_f(P')}{\sqrt{l}\cdot \Delta(\cF^*)^2}~.
\end{align*}
\end{restatable}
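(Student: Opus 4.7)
The plan is to treat class-conditional distributions as data points drawn i.i.d.\ from $\clD$ and derive a uniform convergence bound for the pairwise empirical U-statistic $\Avg_{i\neq j}[V_f(\tP_i,\tP_j)]$ around its expectation $\E_{P_c \neq P_{c'}}[V_f(P_c,P_{c'})]$, uniformly over $f \in \cF^*$. The key structural observation is that $V_f(P,P')$ depends on $P,P'$ only through the ``H-values'' $H_f(P)=(\mu_f(P),\Var_f(P))$ and $H_f(P')$; writing $V_f(P,P')=\psi(H_f(P),H_f(P'))$ with $\psi(\mu,v,\mu',v') = (v+v')/(2\|\mu-\mu'\|^2)$, the condition $\Delta(\cF^*)>0$ ensures $\psi$ is bounded by $\sup_{f,P'}\Var_f(P')/\Delta^2$ and separately Lipschitz in each $v$-coordinate with constant $1/(2\Delta^2)$ and in each $\mu$-coordinate with constant of order $\sup_{f,P'}\Var_f(P')/\Delta^3$. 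These are precisely the two Lipschitz constants that will combine to form the prefactor $(8+16\sup\Var_f/\Delta)/\Delta^2$.

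Let $W(\tcP) = \sup_{f \in \cF^*}\bigl|\Avg_{i\neq j}[V_f(\tP_i,\tP_j)] - \E_{P_c\neq P_{c'}}[V_f(P_c,P_{c'})]\bigr|$, so that it suffices to bound $W(\tcP)$ with high probability. First I would apply McDiarmid's bounded-difference inequality to $W$ viewed as a function of $\tP_1,\ldots,\tP_l$: replacing any single $\tP_c$ affects only $2(l-1)$ of the $l(l-1)$ ordered pairs, and via the boundedness and Lipschitzness of $\psi$ each affected term changes by at most $O((1+\sup\|f(x)\|/\Delta)\sup\Var_f/\Delta^2)$. This gives a per-coordinate bounded-difference constant of order $(1+\sup\|f\|/\Delta)\sup\Var_f/(l\Delta^2)$, so McDiarmid yields, with probability $1-\delta$,
\[
W(\tcP) \le \E[W(\tcP)] + O\!\left(\Bigl(1 + \tfrac{\sup_{x,f}\|f(x)\|}{\Delta}\Bigr)\tfrac{\sup_{f,P'}\Var_f(P')\sqrt{\log(1/\delta)}}{\sqrt{l}\,\Delta^2}\right),
\]
which matches the last term of the claimed bound.

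For $\E[W(\tcP)]$, I would exploit Hoeffding's pairwise representation of the U-statistic, $\Avg_{i\neq j}\phi(Z_i,Z_j) = \E_\sigma[\tfrac{1}{\lfloor l/2\rfloor}\sum_k \phi(Z_{\sigma(2k-1)},Z_{\sigma(2k)})]$ for a uniformly random permutation $\sigma$, so that by convexity $\E[W(\tcP)]$ is dominated by the expected supremum of a standard empirical process over $\lfloor l/2\rfloor$ i.i.d.\ pairs. A standard symmetrization then upper bounds this by twice the Rademacher complexity of the pairwise kernel class $\{(\tP_i,\tP_j)\mapsto\psi(H_f(\tP_i),H_f(\tP_j)):f\in\cF^*\}$. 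Applying the Ledoux--Talagrand contraction principle separately in each coordinate of $\psi$ -- first over the $v,v'$ inputs (Lipschitz constant $1/(2\Delta^2)$) and then over the $\mu,\mu'$ inputs (Lipschitz constant of order $\sup\Var_f/\Delta^3$) -- reduces this to the Rademacher complexity of $H_{\cF^*}(\tcP)$ as defined in the paper, producing the prefactor $(8+16\sup\Var_f/\Delta)/\Delta^2$ up to absolute constants. The $\sqrt{2\pi\log l}$ factor and the $1/(l-1)$ scaling come from passing between the pair-indexed and single-index Rademacher processes via a sub-Gaussian maximum bound ($\E[\max_{i\le l}|g_i|]=O(\sqrt{\log l})$) and counting: there are $l(l-1)$ ordered pairs while $H_{\cF^*}(\tcP)$ is indexed by only $l$ classes.

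The main obstacle is the two-variable contraction step: because $\psi$ is nonlinear jointly in $(\mu,\mu')$, a single direct application of Ledoux--Talagrand is unavailable, and one must contract one coordinate while freezing the other, then swap roles, carefully tracking how the pair Rademacher process is converted back into a single-index one so that the $1/(l-1)$ normalization and the constants $\sup\Var_f$, $\sup\|f(x)\|$ appear exactly as in the statement. The lower bound $\Delta(\cF^*)>0$ is what makes everything work: it is precisely the assumption that turns the inverse-squared distance into a Lipschitz function with the controlled constants needed to carry the entire argument through.
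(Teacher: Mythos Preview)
Your approach is a valid alternative to the paper's proof, but the machinery is genuinely different. The paper does not build the concentration argument from scratch via McDiarmid plus U-statistic decoupling plus contraction; instead it invokes Corollary~3 of \citet{Maurer2019UniformCA}, a ready-made uniform concentration inequality for ``weakly interacting'' functionals $A(z_1,\dots,z_l)$. That result bounds $\sup_f[\E A - A]$ directly in terms of three scalar quantities---a coordinate-wise Lipschitz constant $M(A)$, a second-order difference term $J(A)$, and a bounded-difference constant $K(A)$---together with the Gaussian complexity of $H_{\cF^*}(\tcP)$. The paper's proof then reduces to computing $M(A)$, $J(A)$ and $K(A)$ by exactly the coordinate-wise Lipschitz estimates on $\psi(\mu,v,\mu',v')=(v+v')/(2\|\mu-\mu'\|^2)$ that you isolated; the $\sqrt{2\pi\log l}$ factor arises from converting Gaussian to Rademacher complexity inside Maurer's bound, not (as you suggest) from a sub-Gaussian maximum over $l$ indices. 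Your route through McDiarmid, Hoeffding decoupling, symmetrization and vector contraction is more elementary and, carried out carefully with the vector Ledoux--Talagrand/Maurer contraction inequality applied to the joint $2(p+1)$-dimensional argument of $\psi$, would yield a bound of the same order---plausibly even without the $\sqrt{\log l}$ factor---but with different absolute constants, so it would not reproduce the stated inequality exactly. The price you pay is precisely the delicate two-variable contraction step you flag as the main obstacle; Maurer's theorem absorbs that difficulty internally via the $J(A)$ term. In short: both proofs rest on the same Lipschitz structure of $\psi$ under the margin assumption $\Delta(\cF^*)>0$, but the paper outsources the hard analytic work to a black-box concentration result tailored to this U-statistic-type setting, whereas you propose to reconstruct that result by hand.
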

The above proposition, proved in Appendix~\ref{sec:proof2} (based on Corollary~3 of \citealp{Maurer2019UniformCA}), provides an upper bound on the discrepancy between the expected value of $V_f(P_c,P_{c'})$ for two class-conditional distributions $P_c\neq P_{c'}$ and its averaged value across a set of source distributions $\tP_1,\dots,\tP_l$ that were sampled from $\clD$. In addition, we treat $\cF^*$ as a set of candidate functions from which the learning algorithm selects its candidates. Similarly to Corollary~3 of \citet{Maurer2019UniformCA}, this set is assumed to be finite only for technical (measurability) reasons.

The proposition shows that if
$\Avg_{i\neq j} [V_f(\tP_i,\tP_j)]$ is small, 
we expect $V_f(P_c,P_{c'})$ to be small for two new classes $c$ and $c'$, if the source data is representative enough, as discussed below. First note that by Proposition~\ref{prop:generalizationInner}, each term $V_f(\tP_i,\tP_j)$ is expected to be small in the presence of neural collapse (for large enough training sets  $\tS_c$), so their average $\Avg_{i\neq j}[V_f(\tP_i,\tP_j)]$ is also small. According to Proposition~\ref{prop:rademacherBound} in Appendix~\ref{sec:gen_ests}, the Rademacher complexity in the bound scales as $\mathcal{O}(p\sqrt{l})$ for ReLU networks with bounded weights, so the corresponding term in the bound is $\mathcal{O}(p/(\sqrt{l}\Delta(\cF^*)^2))$, which is of similar order as the last term (neglecting the supremum terms and the exact dependence on $p$ for now).
 
Hence, the bound tends to zero as the number of source classes $l$ increases as long as $\Delta(\cF^*)$ is not too small (in particular, is not zero). While there is no way to guarantee this (e.g., if two class-conditional distributions are too close, $\Delta(\cF^*)$ can be very small), we can say that if the feature maps $f\in \cF^*$ keep the classes apart, the bound may become reasonably small. If the feature dimension $p$ is large and $f\in \cF^*$ are similar to random maps, this assumption typically holds (e.g., if $f$ is not a constant function, which can be expected with a proper training method). Even when the number of classes is very large, the bound can be reasonable: if the corresponding feature means are more-or-less uniformly distributed in a $p$-dimensional unit cube for all $f \in \cF^*$, $\Delta(\cF^*)=\Omega(\sqrt{p} \cdot | \cC|^{-2/p})$ (see Lemma~\ref{lem:unit_cube} in Appendix~\ref{sec:unit_cube}), and hence combining with the dependence of the Rademacher complexity on $l$ and $p$, and the fact that $\Var_f(P')\leq p$ and $\sup_{x\in \cX,~f\in \cF^*}\|f(x)\|\leq \sqrt{p}$, the bound is $\mathcal{O}(\sqrt{p/l} \cdot |\cC|^{6/p})$ which tends to zero as $l$ increases. 

\subsection{CDNV and Classification Error}
\label{sec:error}

In this section we study the relationship between the value of the CDNV and the classification performance it induces. Consider a balanced $k$-class classification problem with class distributions $P_1,\ldots,P_k$, datasets $S_c \sim P^{n_c}_c$, $c \in [k]$ and $n_1=\ldots=n_k$. Let $S=\bigcup_{c \in [k]} S_c$ denote all the samples and consider the 'nearest empirical mean' classifier $h_{S,f}(x)=\argmin_{c \in [k]} \|f(x) - \mu_f(S_c)\|$ (note that $h_{S,f}$ is a classifier with convex polytope cells on top of $f$, and it is a linear classifier when $k=2$). Then, as it is shown in Proposition~\ref{prop:cdnv_error} in Appendix~\ref{sec:error_analysis} 
the expected classification error of $h_{S,f}$ is upper bounded by $16(k-1)V_f(P_1,P_2)(1+1/n_c)$ and if $f\circ P_1$ and $f\circ P_2$ are spherically symmetric, then the classification error is upper bounded by $16(k-1)V_f(P_1,P_2)(1/p+1/n_c)$. 
If $f\circ P_i$ are (assumed to be) Gaussian as in \cite{Papyan24652}, we also derive much better bounds, namely exponentially small in $p/V_f$.
Therefore, in case of neural collapse (i.e., small $V_f(P_1,P_2)$), the error probability is small, explaining the success of foundation models in the low-data regime (such as in few-shot learning problems) in the presence of neural collapse. 

Putting together Propositions~\ref{prop:generalizationInner} and~\ref{prop:newclasses}, and Proposition~\ref{prop:cdnv_error} in Appendix~\ref{sec:error_analysis}, 
it follows in a straightforward way that, with high probability (over the selection of the source, as well as the source samples), the expected error  (over the target classes and target samples) of a nearest class-mean classifier over the target classes can be bounded by the average neural collapse over the source training samples plus terms which converge to zero as the number of source training samples $m_c$, as well as the number of source classes $l$ increase.

\section{Experiments}\label{sec:experiments}

In this section we experimentally analyze the neural collapse phenomenon and how it generalizes to new data points and new classes. We use reasonably good classifiers to demonstrate that, in addition to the neural collapse observed in training time by \citet{Papyan24652}, it is also observable on test data from the same classes, as well as on data from new classes, as predicted by our theoretical results. We also show that, as expected intuitively, neural collapse is strongly correlated with accuracy in few-shot learning scenarios. The experiments are conducted over multiple datasets and multiple architectures, providing strong empirical evidence that neural collapse provides a compelling explanation for the good performance of foundation models in few-shot learning tasks. 

\subsection{Setup}

\begin{table}[t]
\centering 
\resizebox{\linewidth}{!}{
\begin{tabular}{c c c c c c c c}
    \hline
Method & Architecture & \multicolumn{2}{c}{Mini-ImageNet}  & \multicolumn{2}{c}{CIFAR-FS} & \multicolumn{2}{c}{FC-100} \\
 &  & 1-shot & 5-shot  & 1-shot & 5-shot & 1-shot & 5-shot \\
    \hline
Matching Networks~\citep{NIPS2016_90e13578} & 64-64-64-64 & $43.56\pm 0.84$ & $55.31 \pm 0.73$ & - & - & - & - \\ 
LSTM Meta-Learner~\citep{Ravi2017OptimizationAA} & 64-64-64-64 & $43.44 \pm 0.77$ & $60.60\pm 0.71$ & - & - & - & - \\
MAML~\citep{pmlr-v70-finn17a} & 32-32-32-32 & $48.70\pm 1.84$ & $63.11\pm 0.92$ & $58.9 \pm 1.9$ & $71.5\pm 1.0$ & - & - \\
Prototypical Networks~\citep{NIPS2017_cb8da676} & 64-64-64-64 & $49.42\pm 0.78^\dagger$ & $68.20\pm 0.66^\dagger$ & $55.5 \pm 0.7$ & $72.0\pm 0.6$ & $35.3 \pm 0.6$ & $48.6\pm 0.6$ \\
Relation Networks~\citep{Sung_2018_CVPR} & 64-96-128-256 & $50.44\pm 0.82$ & $65.32\pm 0.7$ & $55.0 \pm 1.0$ & $69.3\pm 0.8$ & - & - \\
SNAIL~\citep{mishra2018a} & ResNet-12 & $55.71\pm 0.99$ & $68.88\pm 0.92$ & - & - & - & - \\
TADAM~\citep{NEURIPS2018_66808e32} & ResNet-12 & $58.50\pm 0.30$ & $76.7\pm 0.3$ & - & - & $ 40.1 \pm 0.4$ & $56.1\pm 0.4$\\
AdaResNet~\citep{pmlr-v80-munkhdalai18a} & ResNet-12 & $56.88\pm 0.62$ & $71.94\pm 0.57$ & - & - & - & - \\
Dynamics Few-Shot~\citep{Gidaris_2018_CVPR} &  64-64-128-128 & $56.20\pm 0.86$ & $73.0\pm 0.64$ & - & - & - & - \\
Activation to Parameter~\citep{Act2Param} & WRN-28-10 & $ 59.60 \pm 0.41^\dagger$ & $73.74\pm 0.19^\dagger$ & - & - & - & - \\
R2D2~\citep{bertinetto2018metalearning} & 96-192-384-512 & $51.2\pm 0.6$ & $68.8\pm 0.1$ & $65.3 \pm 0.2$ & $79.4\pm 0.1$ & - & - \\
Shot-Free~\citep{Ravichandran2019FewShotLW} & ResNet-12 & $59.04\pm n/a$ & $77.64\pm n/a$ & $69.2 \pm n/a$ & $84.7\pm n/a$ & - & - \\
TEWAM~\cite{Qiao_2019_ICCV} & ResNet-12 & $60.07\pm n/a$ & $75.90\pm n/a$ & $70.4 \pm n/a$ & $81.3\pm n/a$ & - & - \\
 TPN~\citep{liu2018learning} & ResNet-12 & $55.51 \pm 0.86$ & $75.64\pm n/a$ & - & - & - & - \\
 LEO~\citep{rusu2018metalearning} & WRN-28-10 &  $61.76\pm 0.08^\dagger$ & $77.59\pm 0.12^\dagger$ & - & - & - & - \\
 MTL~\citep{sun2019mtl} & ResNet-12 & $61.20\pm 1.80$ & $75.50 \pm 0.80$ & - & - & - & - \\
 OptNet-RR~\citep{lee2019meta} & ResNet-12 & $61.41 \pm 0.61$ & $77.88\pm 0.46$ & $72.6 \pm 0.7$ & $84.3\pm 0.5$ & $40.5 \pm 0.6$ & $57.6\pm 0.9$\\
 MetaOptNet~\citep{lee2019meta} & ResNet-12 & $62.64\pm 0.61$ & $78.63\pm 0.46$ & $72.0 \pm 0.7$ & $84.2\pm 0.5$ & $41.1 \pm 0.6$ & $55.3\pm 0.6$\\
 Transductive Fine-Tuning~\citep{Dhillon2020A} & WRN-28-10 &  $65.73 \pm 0.68$ & $78.40\pm 0.52$ & $76.58 \pm 0.68$ & $85.79\pm 0.5$ & $43.16 \pm 0.59$ & $57.57\pm 0.55$ \\
 Distill-simple~\citep{DBLP:conf/eccv/TianWKTI20} & ResNet-12 & $62.02\pm 0.63$ & $79.64\pm 0.44$ & $71.5 \pm 0.8$ & $86.0\pm 0.5$ & $42.6 \pm 0.7$ & $59.1\pm 0.6$\\
 Distill~\citep{DBLP:conf/eccv/TianWKTI20} & ResNet-12 & $64.82\pm 0.60$ & $82.14\pm 0.43$ & $73.9 \pm 0.8$ & $86.9\pm 0.5$ & $44.6 \pm 0.7 $ & $60.9\pm 0.6$\\
 \hline
 Ours (simple) & WRN-28-4 & $58.12 \pm 1.19$ & $72.0 \pm 0.99$ & $68.81\pm 1.20$ & $81.49\pm 0.98$ & $44.96 \pm 1.14$ & $57.21 \pm 10.89$ \\
 Ours (lr scheduling) & WRN-28-4 & $60.37 \pm 1.25$ & $72.35\pm 0.99$ & $70.0 \pm 1.29$ & $81.39\pm 0.96$ & $43.42 \pm 1.0$ & $54.14 \pm 1.1$ \\
 Ours (lr scheduling + model selection) & WRN-28-4 & $61.27 \pm 1.14$ & $74.74\pm 0.76$ & $72.37\pm 1.12$ & $82.94\pm 0.89$ & $45.81\pm 1.27$ & $56.85\pm 1.30$ \\
    \hline
\end{tabular}
}\vspace{-0.2cm}\caption{{\bf Comparison to prior work on Mini-ImageNet, CIFAR-FS, and FC-100 on 1-shot and 5-shot 5-class classification.} 
Reported numbers are test target classification accuracy of various methods for various data sets. The notation  a-b-c-d denotes a 4-layer convolutional network with a, b, c, and d filters in each layer. $^\dagger$The algorithm was trained on both train and validation classes. We took the data from \citet{Dhillon2020A} and \citet{DBLP:conf/eccv/TianWKTI20}.}
\label{tab:main_results}
\end{table}

{\bf Method.\enspace} Following our theoretical setup, we first train a neural network classifier $\tih=\tig \circ f$ on a source task.
Then we evaluate the few-shot performance of $f$ on target classification tasks by training a new classifier $h=g \circ f$.
More specifically, $\tih$ is trained by minimizing the cross-entropy loss between the logits of the network and the one-hot encodings of the labels. The training is conducted using SGD with learning rate $\eta$ and momentum $0.9$ with batch size $64$. Here, $\tilde{g}$ is the top linear layer of the neural network and $f$ is the mapping implemented by all other layers.
At the second stage, given a target few-shot classification task with training data $S=\{(x_i,y_i)\}_{i=1}^n$, we train a new top layer $g$ as a solution of 
ridge regression acting on the dataset $\{(f(x_i),y_i)\}^{n}_{i=1}$ with regularization $\lambda_n = \alpha\sqrt{n}$. Thus, $g$ is a linear transformation with the weight matrix
$w_{S,f} = (f(X)^\top f(X)+\lambda_n I)^{-1} f(X)^\top Y$, where $f(X)$ is the $n \times d$ data matrix for the ridge regression problem containing the feature vectors $\{f(x_i)\}$ (i.e., $f(X)^\top=[f(x_1),\ldots,f(x_n)]$) and $Y$ is the $n \times k$ label matrix (i.e., $Y^\top=[y_1,\ldots,y_n]$), where $X \in \mathbb{R}^{n\times d}$, $Y\in \mathbb{R}^{n\times k}$ and $f_{\theta}(X) \in \mathbb{R}^{n\times p}$. We did not apply any form of fine-tuning for $f$ at the second stage. In the experiments we sample 5-class classification tasks randomly from the target dataset, with $n_c$ training samples for each class (thus, altogether $n=5n_c$ above), and measure the performance on 100 random test samples from each class. We report the resulting accuracy rates averaged over 100 randomly chosen tasks.

{\bf Architectures and hyperparameters.\enspace} We experimented with two types of architectures for $\tih$: wide ResNets~\citep{Zagoruyko2016WideRN} and vanilla convolutional networks of the same structure without the residual connections. The networks are denoted by WRN-$N$-$M$ and Conv-$N$-$M$, where $N$ is the depth and $M$ is the width factor. We used the following default hyperparameters: $\eta=2^{-4}$, batch size $64$ and $\alpha=1$ in setting the ridge regression regularization parameter $\lambda_n=\alpha/\sqrt{n}$. In the Figs.~\ref{fig:var_nc_cifar_fs_lr}-\ref{fig:var_nc_emnist_lr} in the appendix we provide experiments showing that the results are consistent for different $\eta$. We also provide experiments with a standard learning-rate schedule. See Appendix~\ref{sec:details} for a complete description of the architectures.

{\bf Datasets.\enspace} We consider four different datasets: (i) Mini-ImageNet~\citep{NIPS2016_90e13578}; (ii) CIFAR-FS~\citep{bertinetto2018metalearning}; (iii) FC-100~\citep{NEURIPS2018_66808e32}; and (iv) EMNIST (balanced)~\citep{cohen2017emnist}. For a complete description of the datasets, see Appendix~\ref{sec:details}. 

{\bf Experimental results} are reported averaged over 20 random initialization together with 95\% confidence intervals.

\subsection{Results}

\begin{figure}[t]
\centering
\begin{tabular}{@{}c@{~}c@{~}c@{~}c}
\includegraphics[width=.25\linewidth]{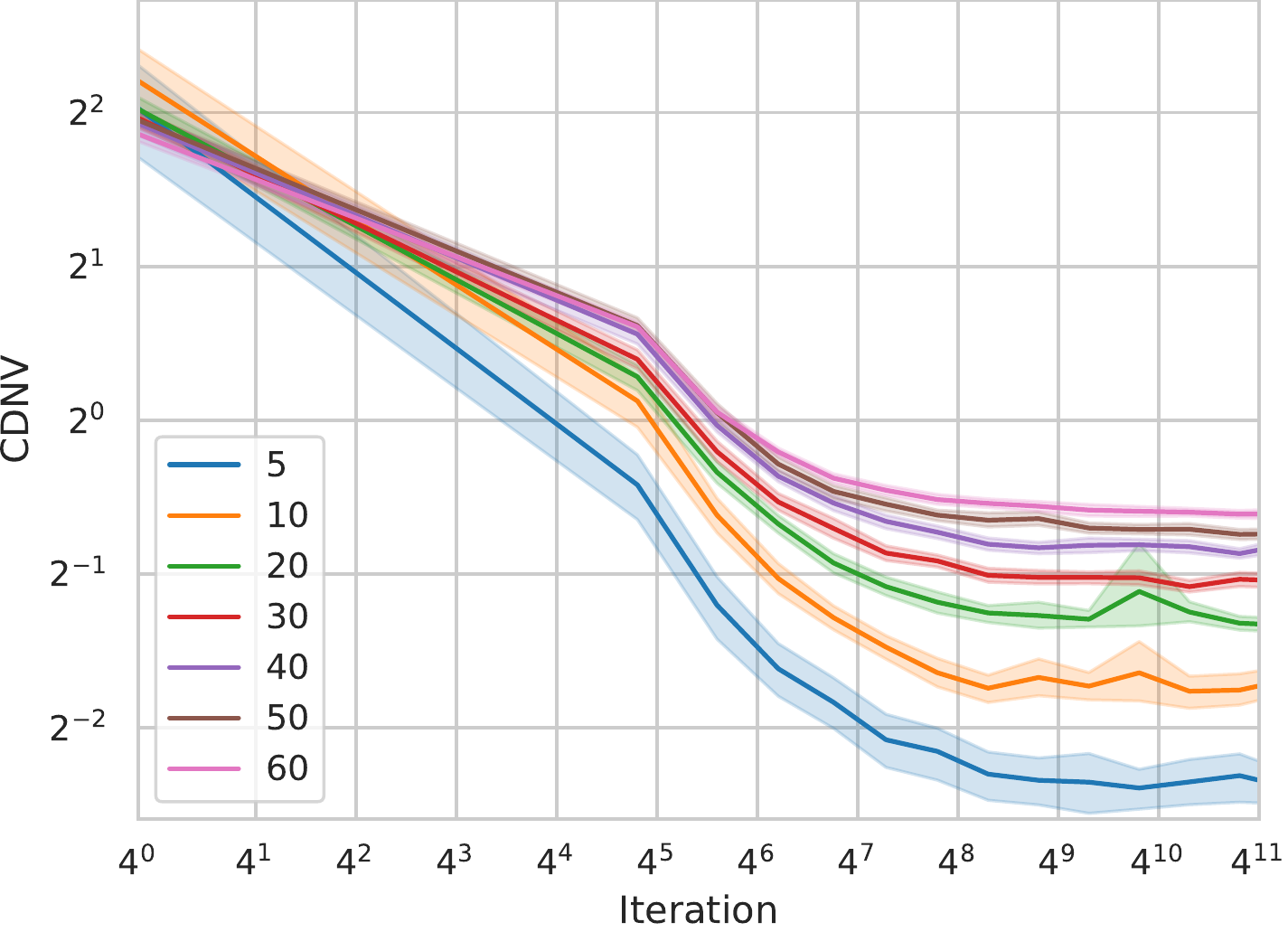}&
\includegraphics[width=.25\linewidth]{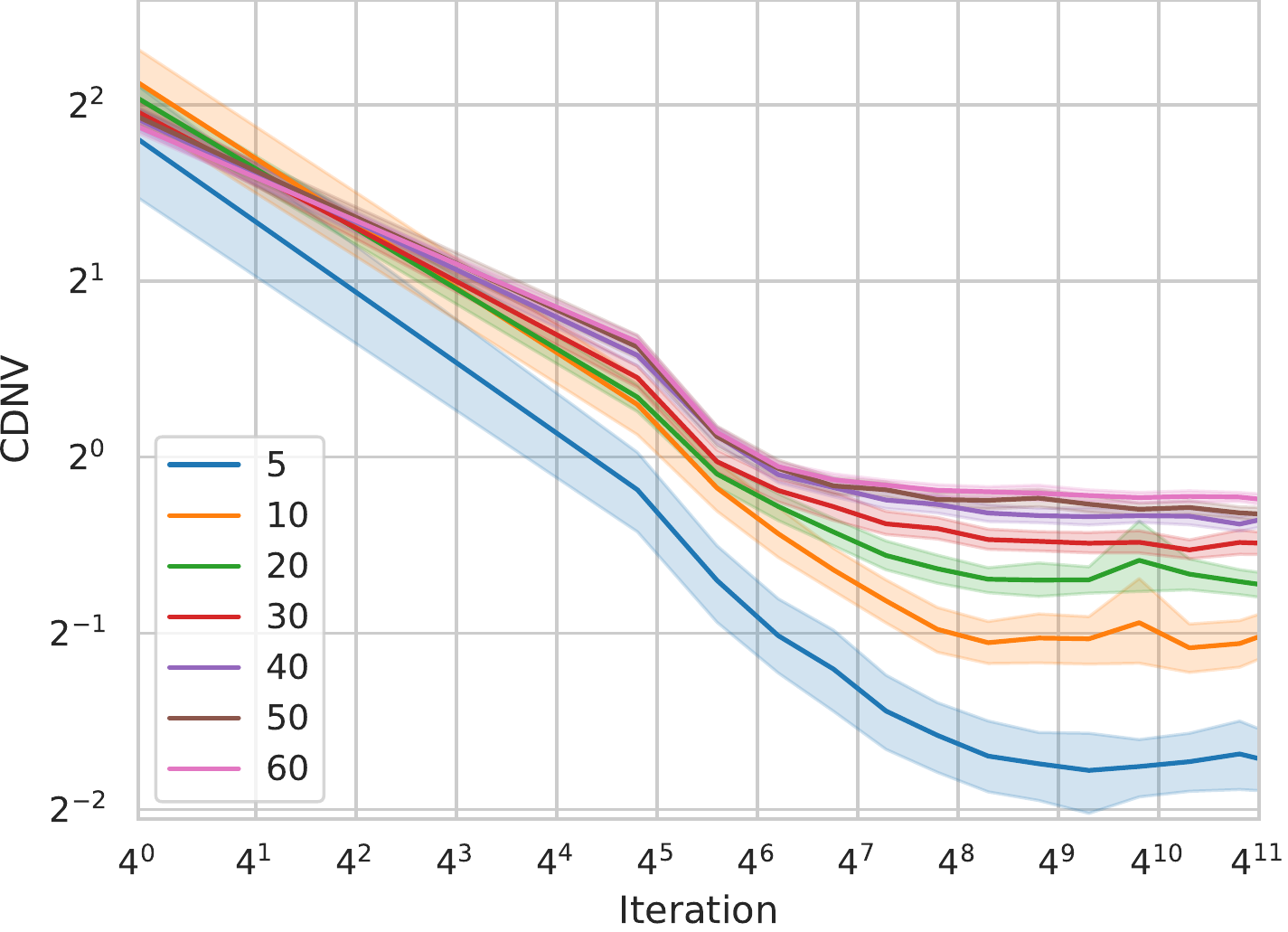}&
\includegraphics[width=.25\linewidth]{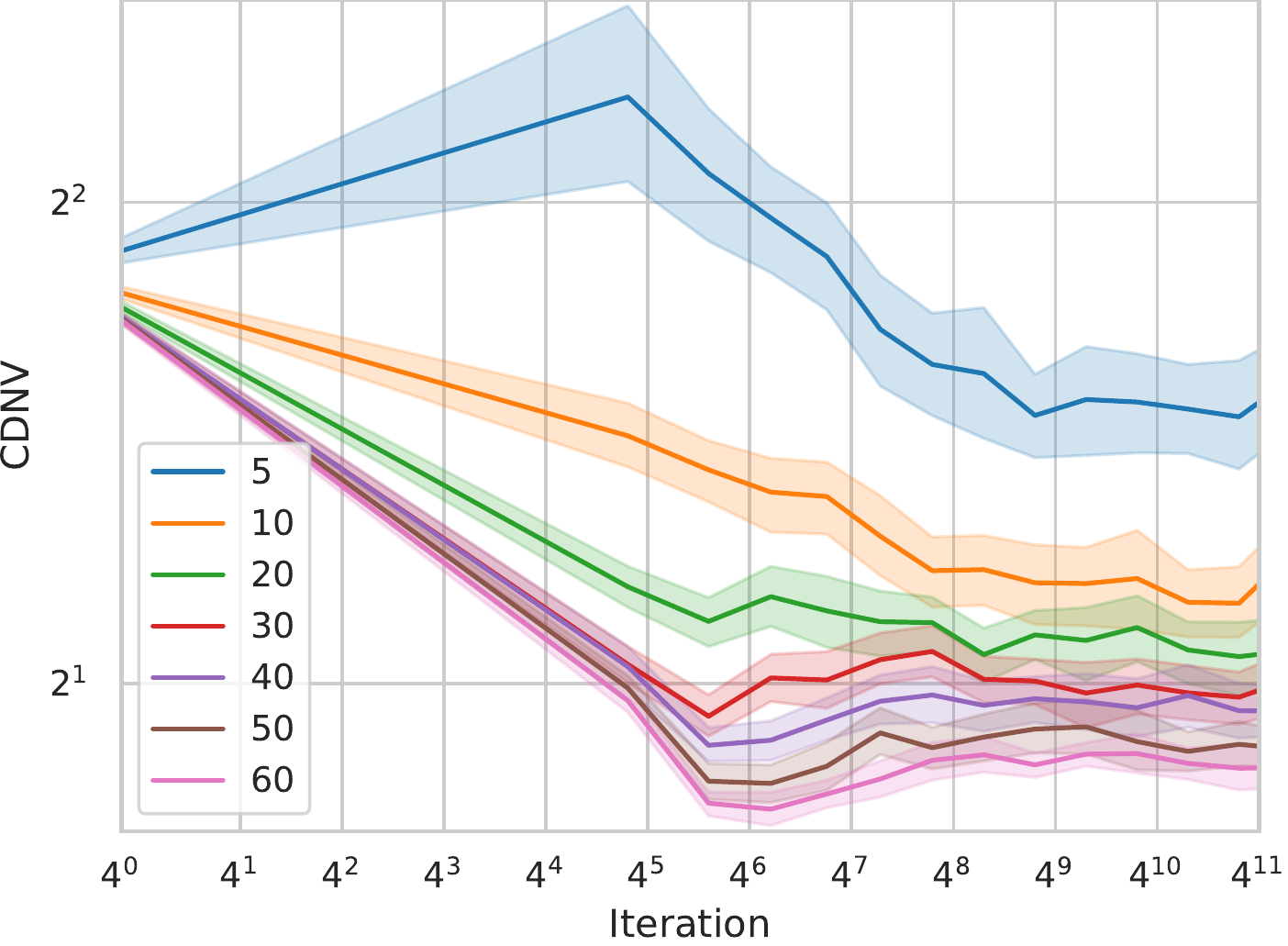}&
\includegraphics[width=.25\linewidth]{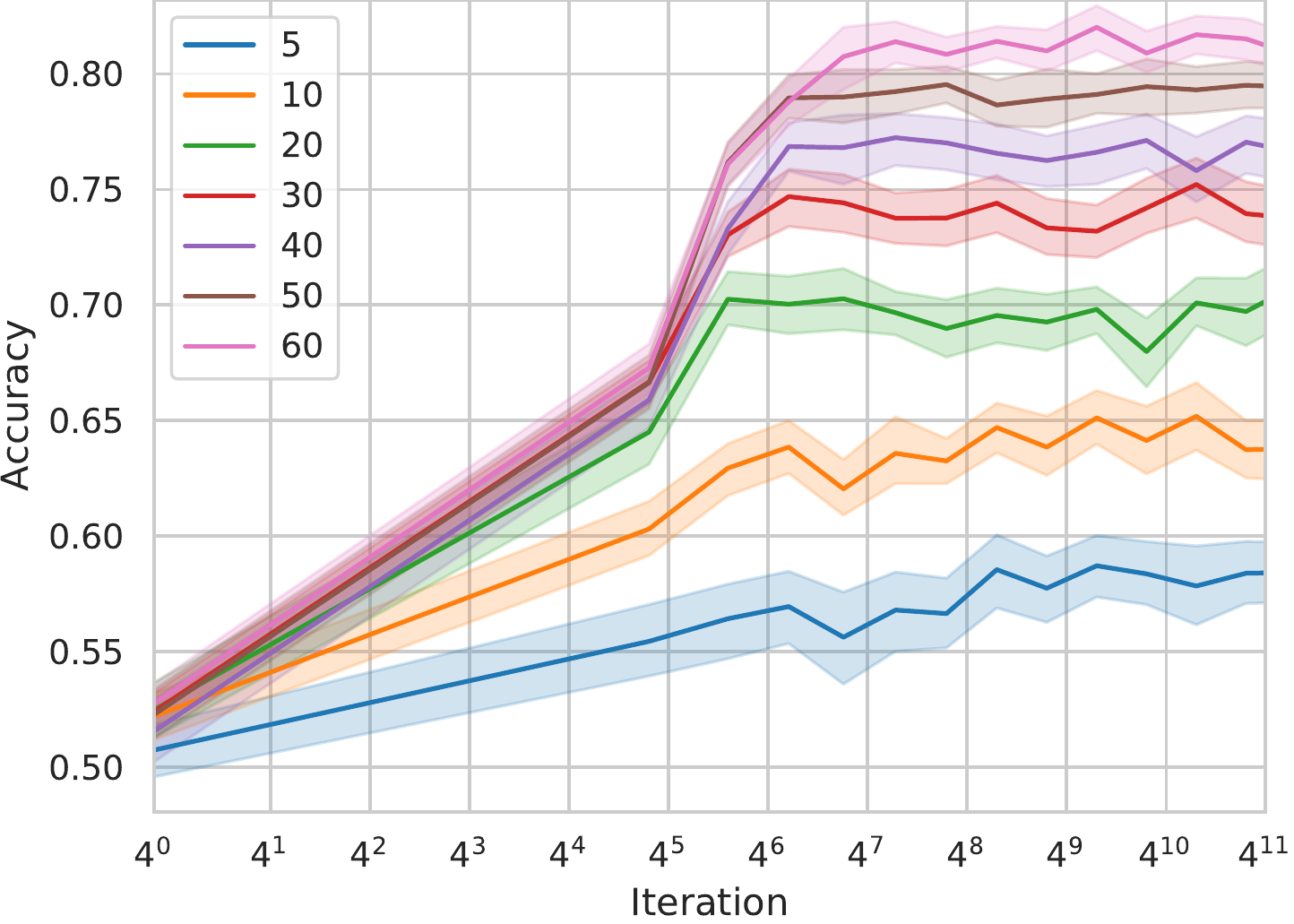}
\\
\multicolumn{4}{c}{WRN-28-4 on CIFAR-FS} \\[0.5em]
\includegraphics[width=.25\linewidth]{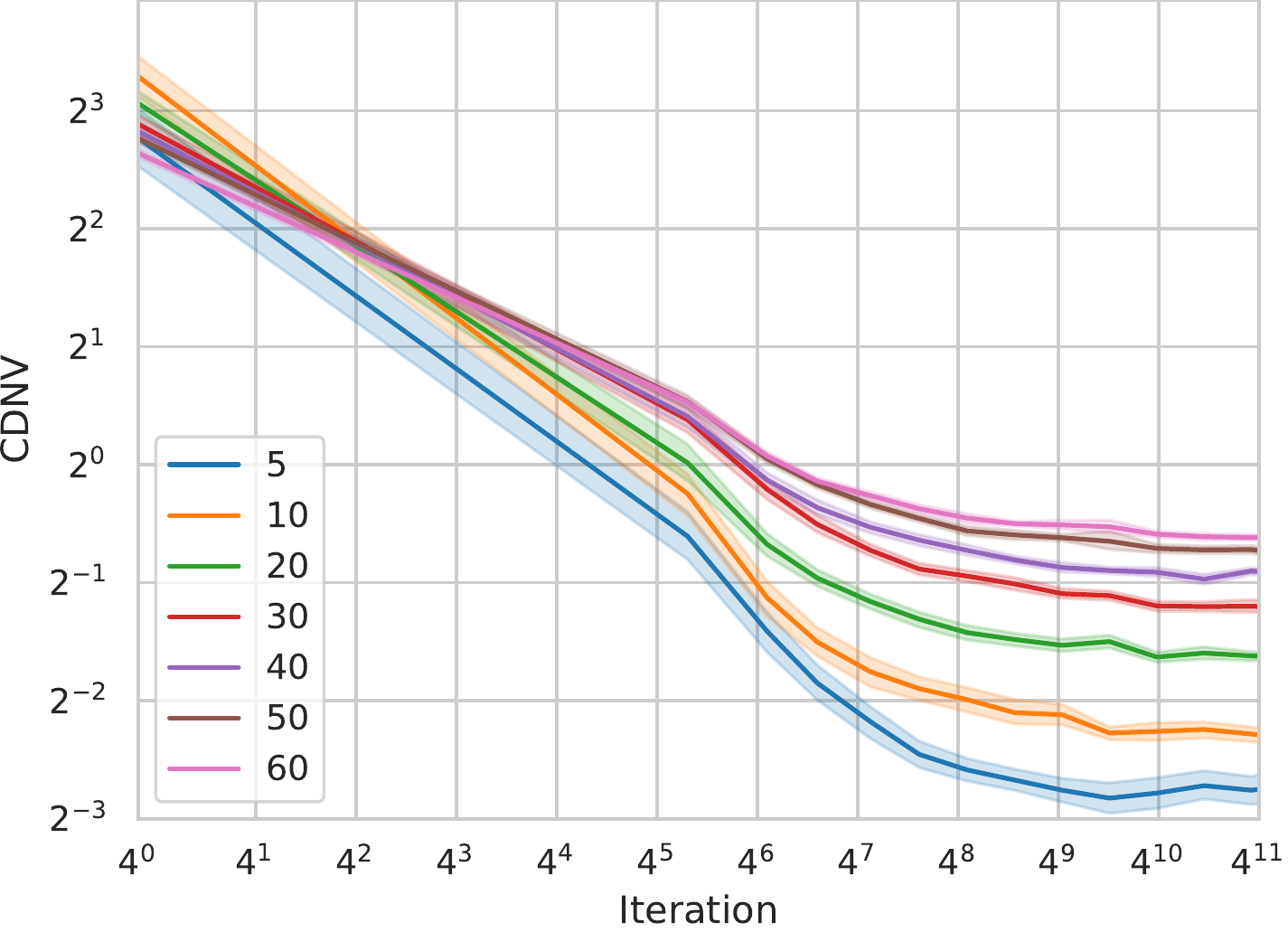}&
\includegraphics[width=.25\linewidth]{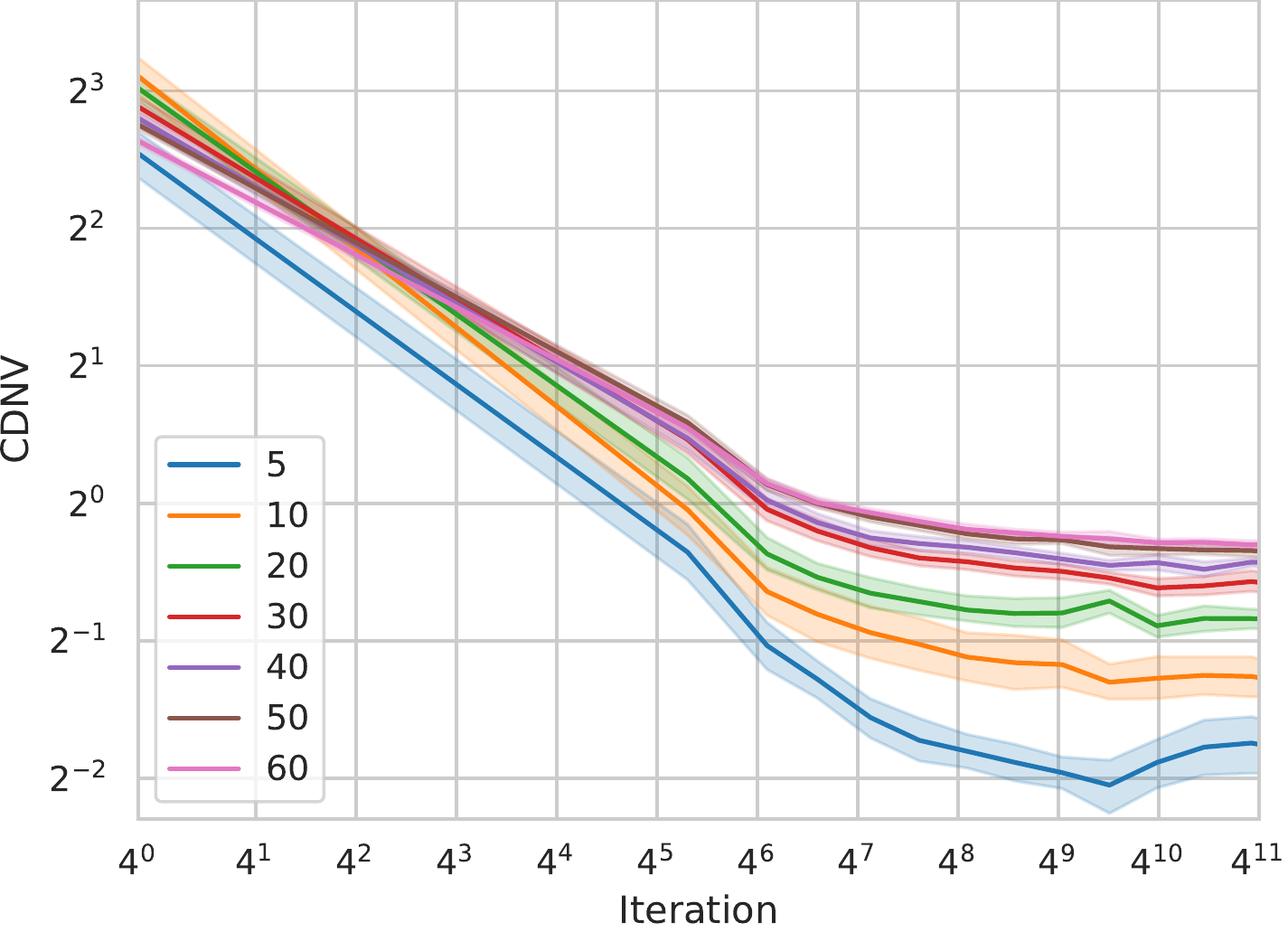}&
\includegraphics[width=.25\linewidth]{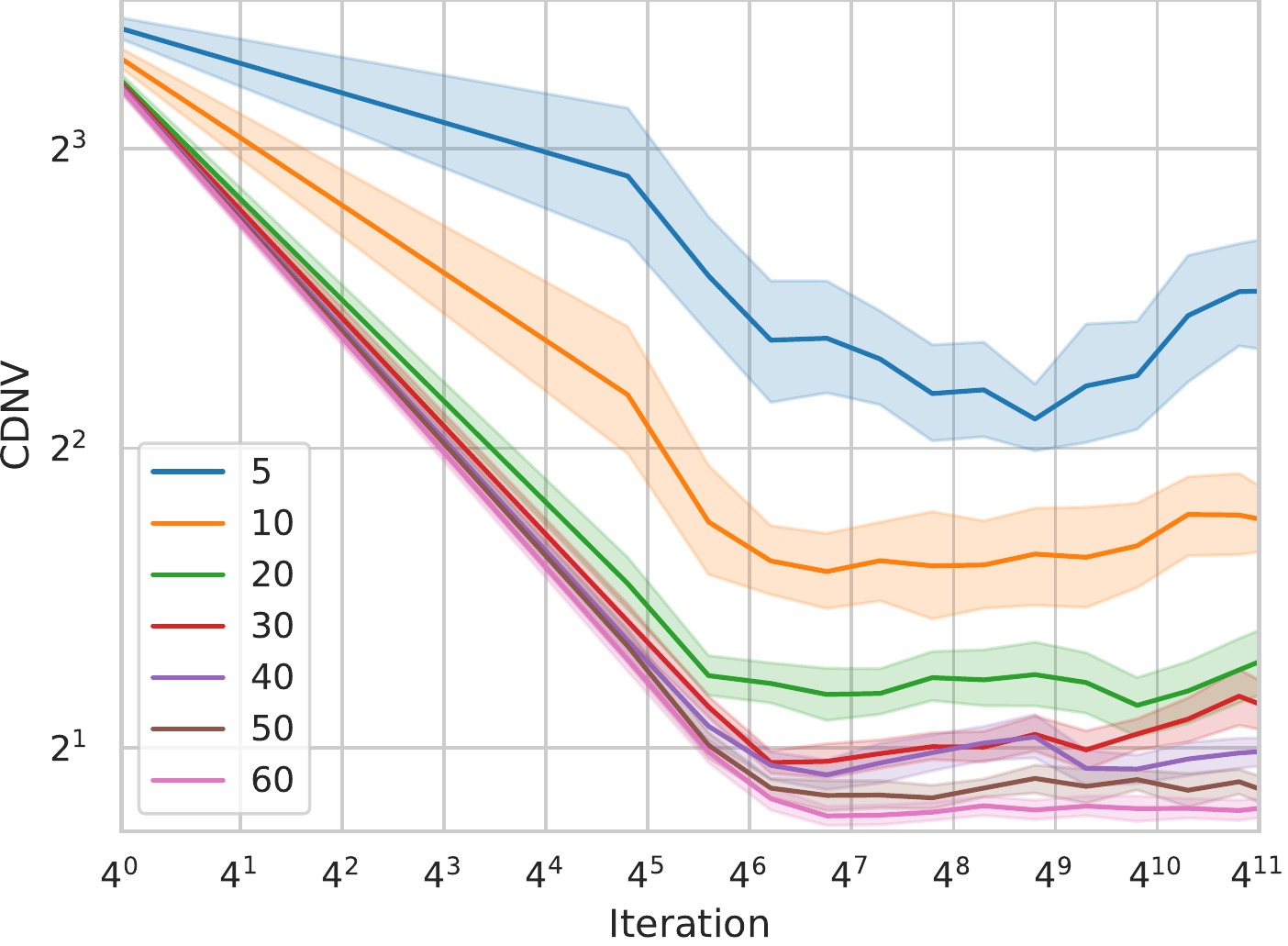}&
\includegraphics[width=.25\linewidth]{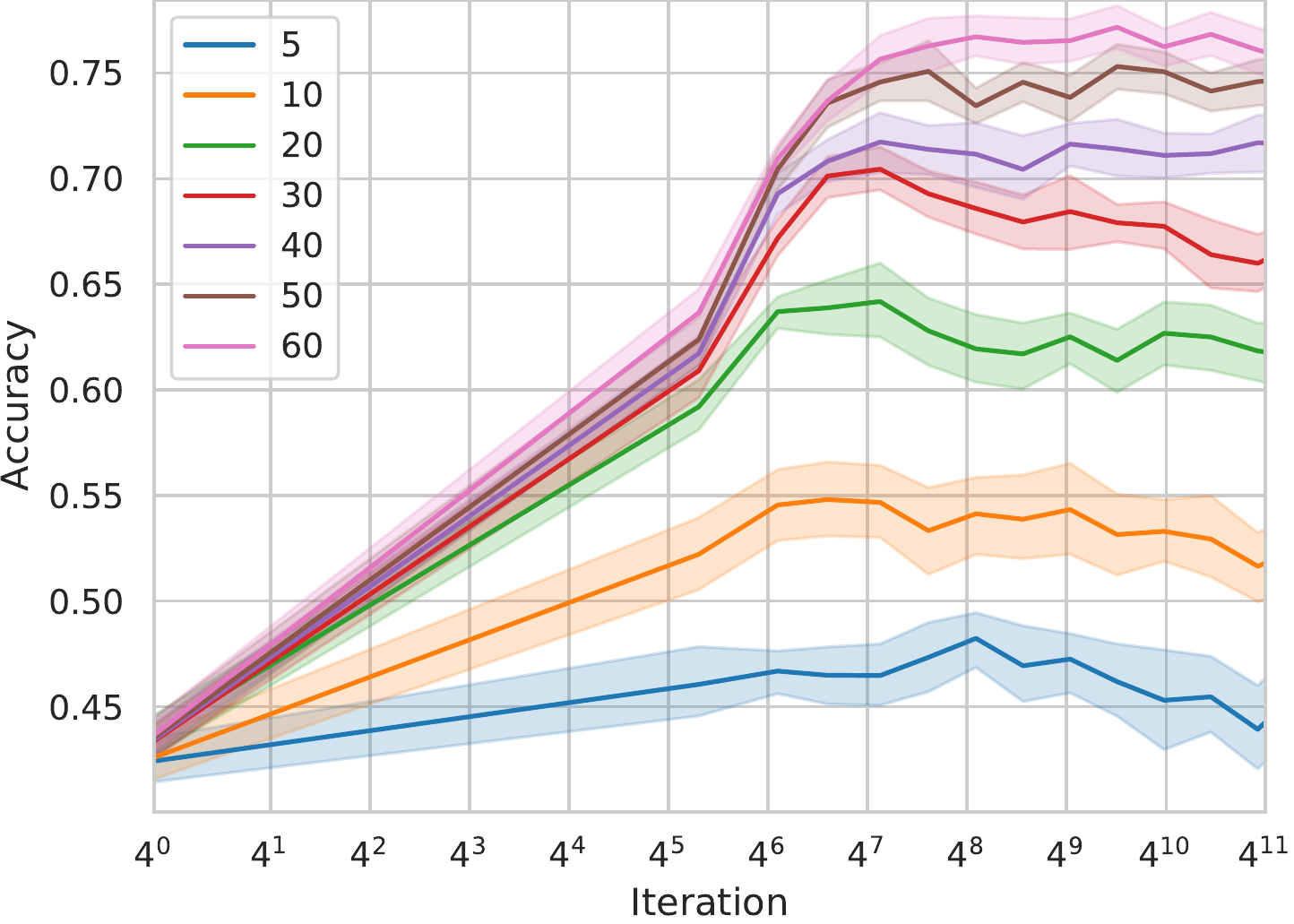}
\\
\multicolumn{4}{c}{Conv-28-4 on CIFAR-FS} \\[0.5em]
\includegraphics[width=.25\linewidth]{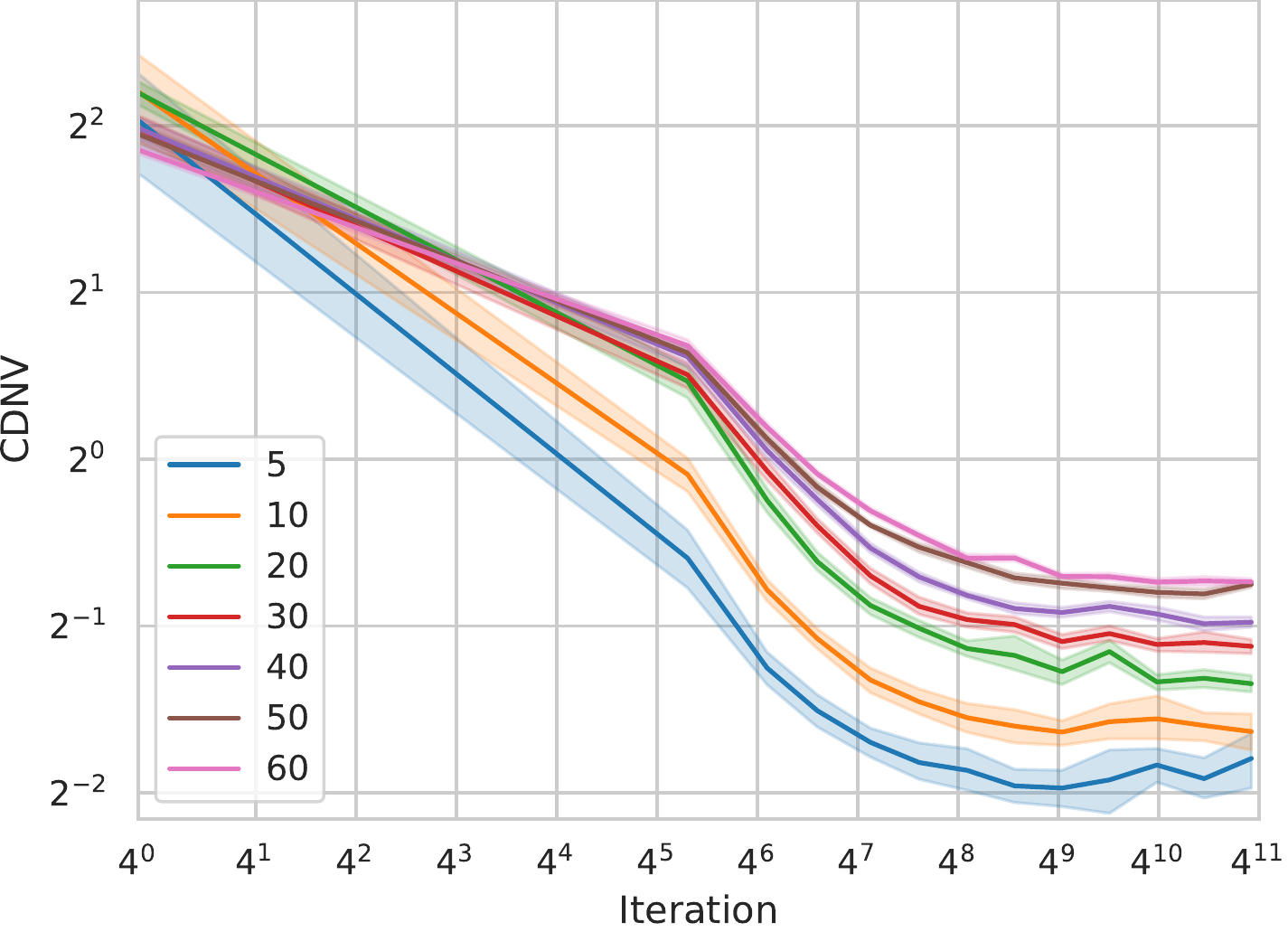}&
\includegraphics[width=.25\linewidth]{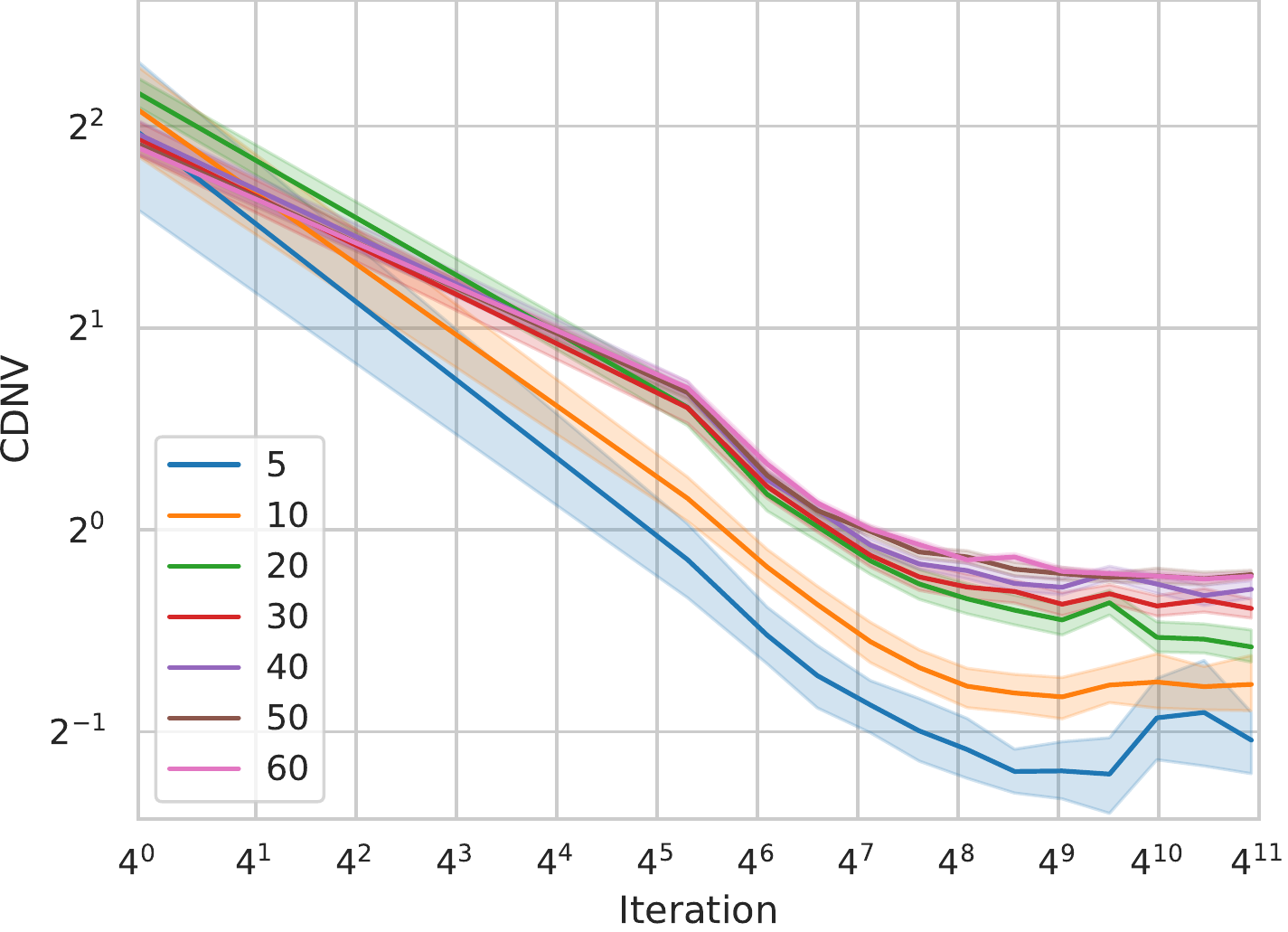}&
\includegraphics[width=.25\linewidth]{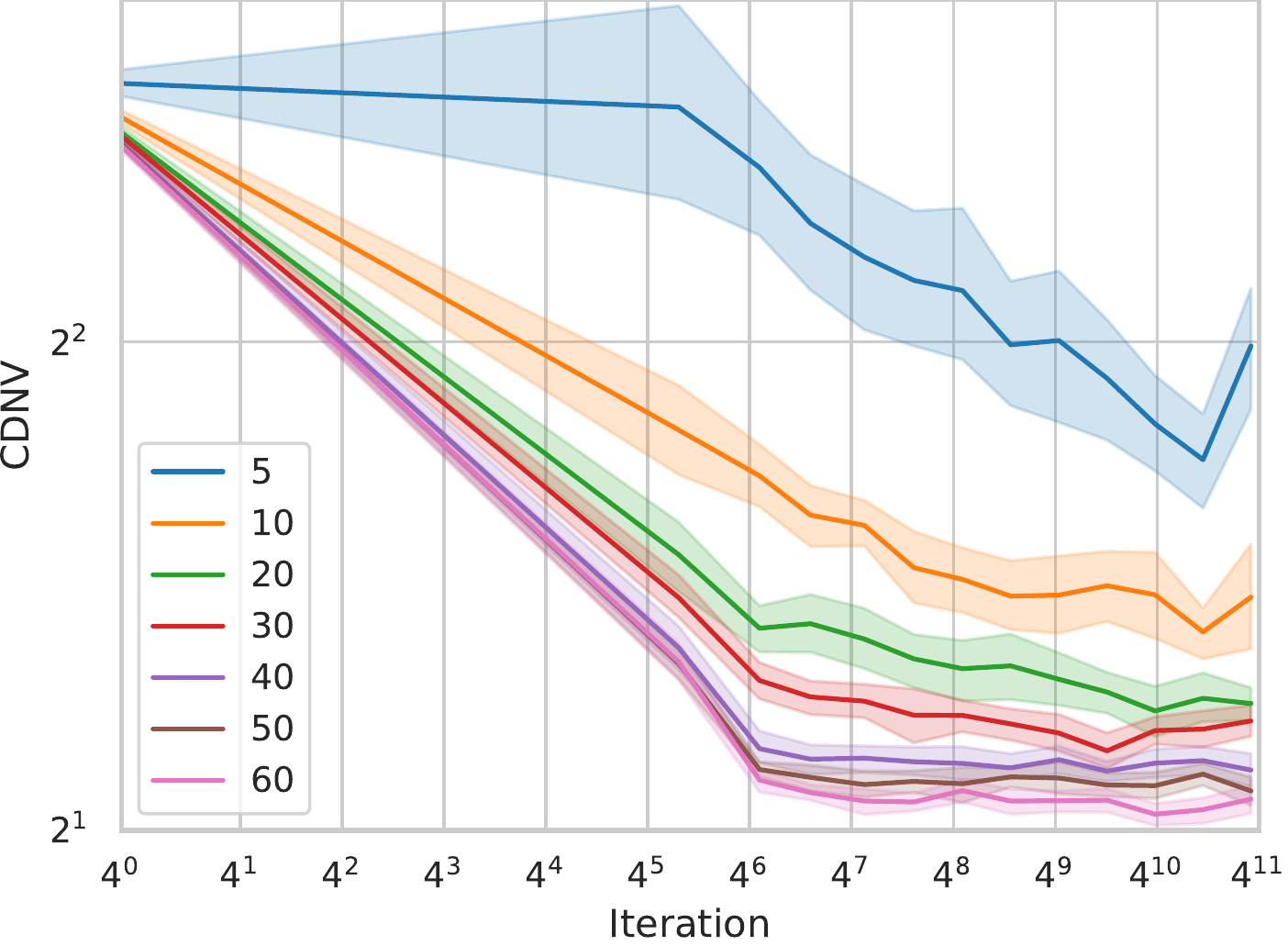}&
\includegraphics[width=.25\linewidth]{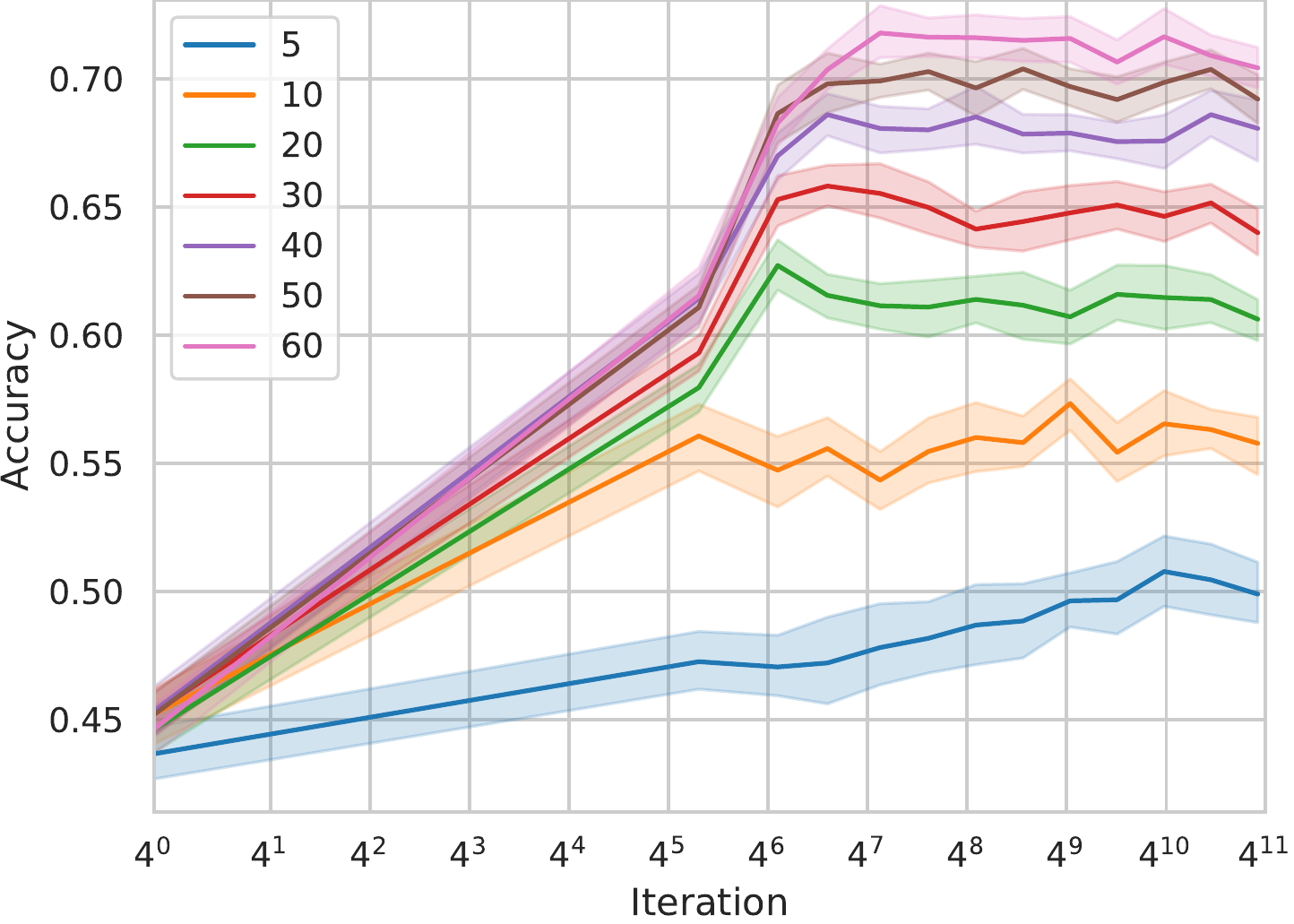}
\\
\multicolumn{4}{c}{WRN-28-4 on Mini-ImageNet} \\[0.5em]
\includegraphics[width=.25\linewidth]{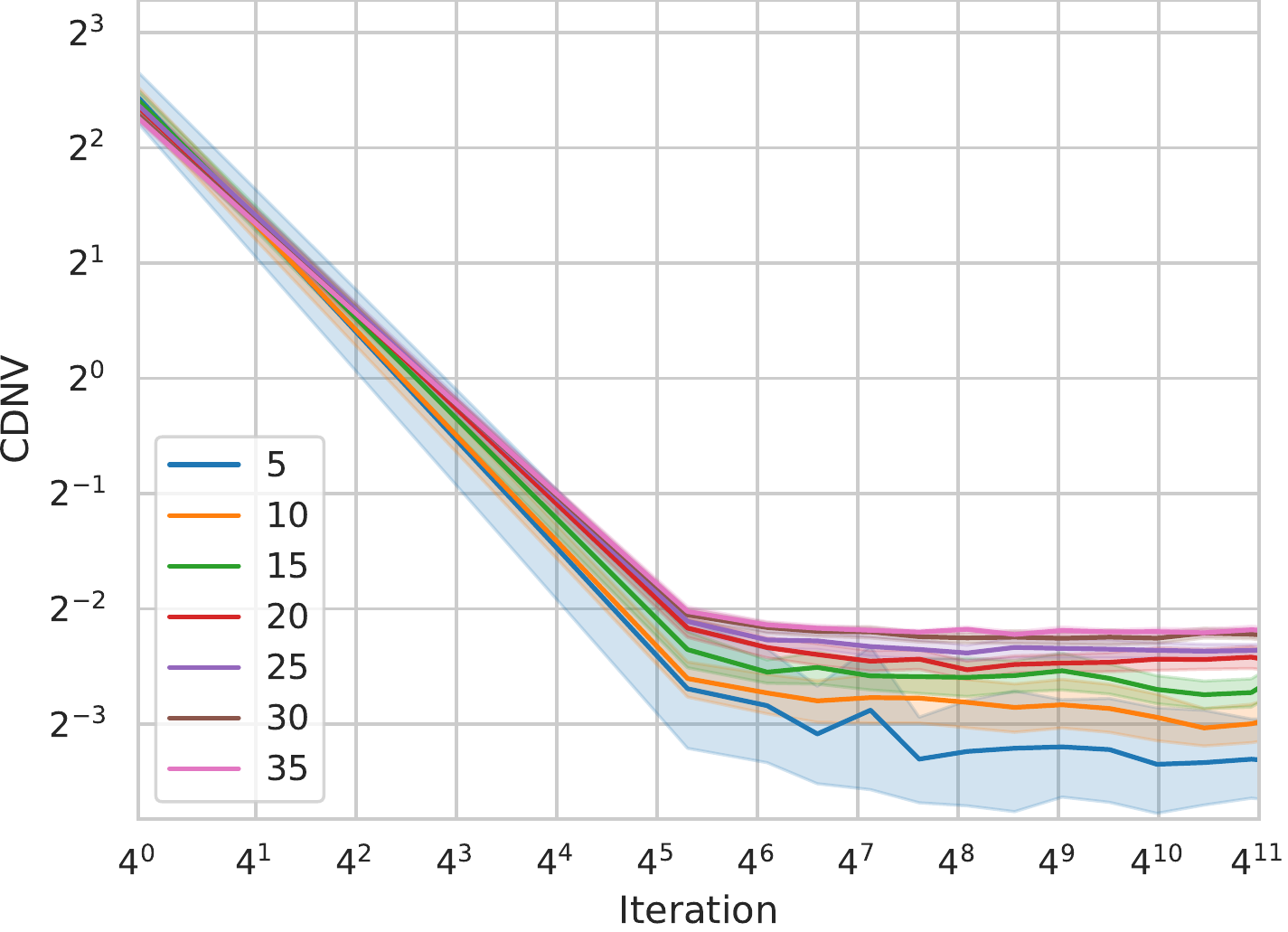}&
\includegraphics[width=.25\linewidth]{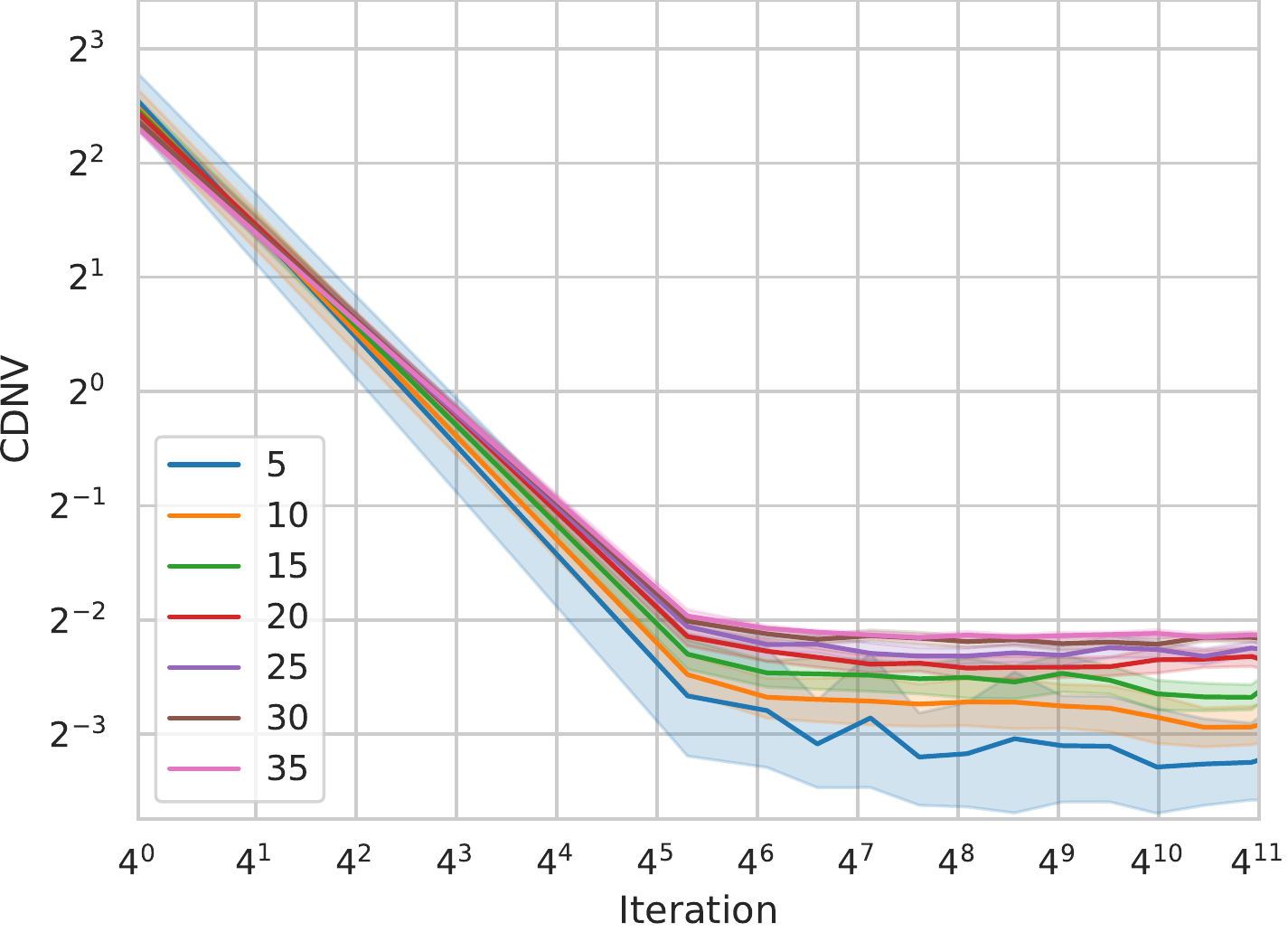}&
\includegraphics[width=.25\linewidth]{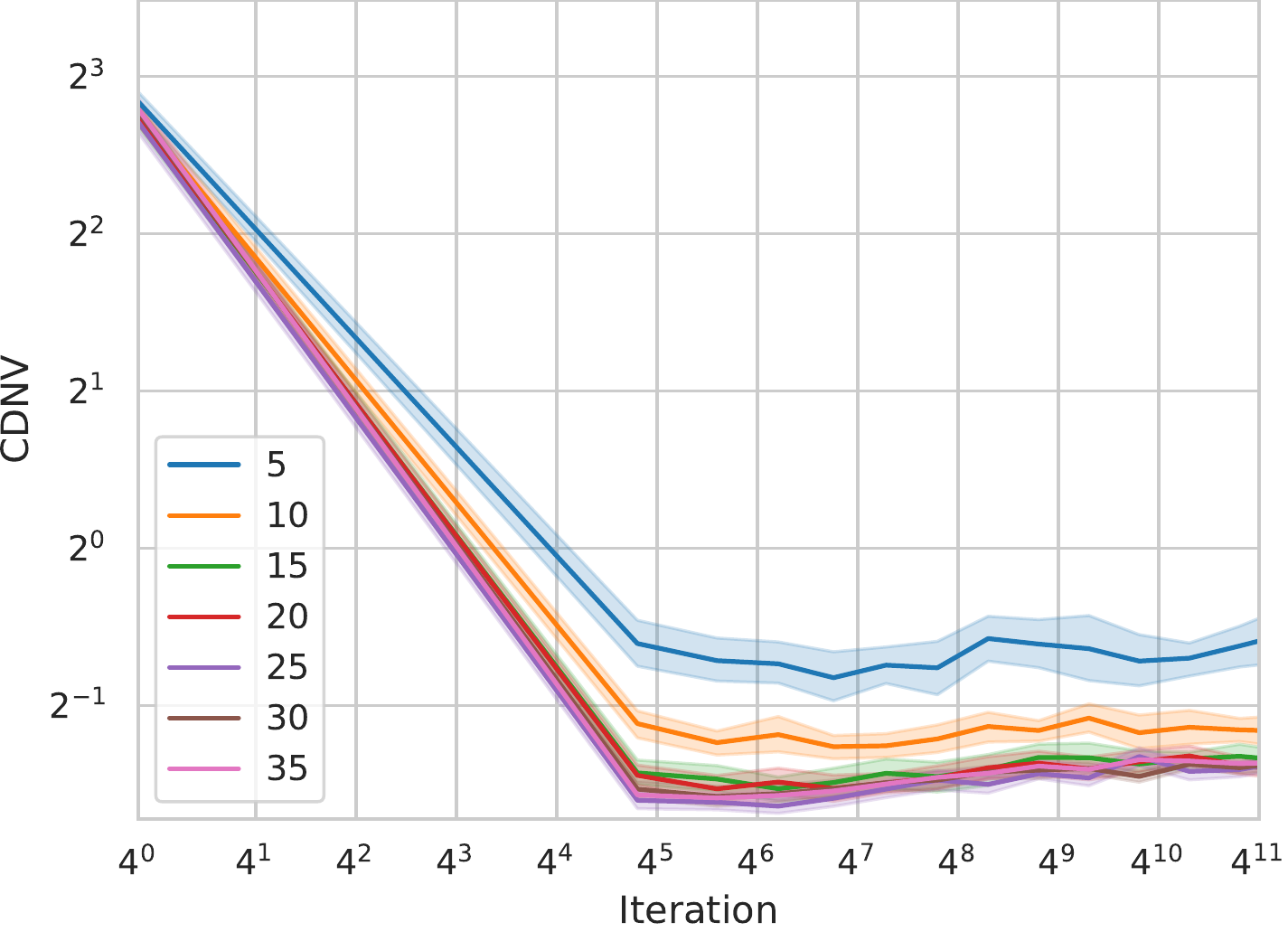}&
\includegraphics[width=.25\linewidth]{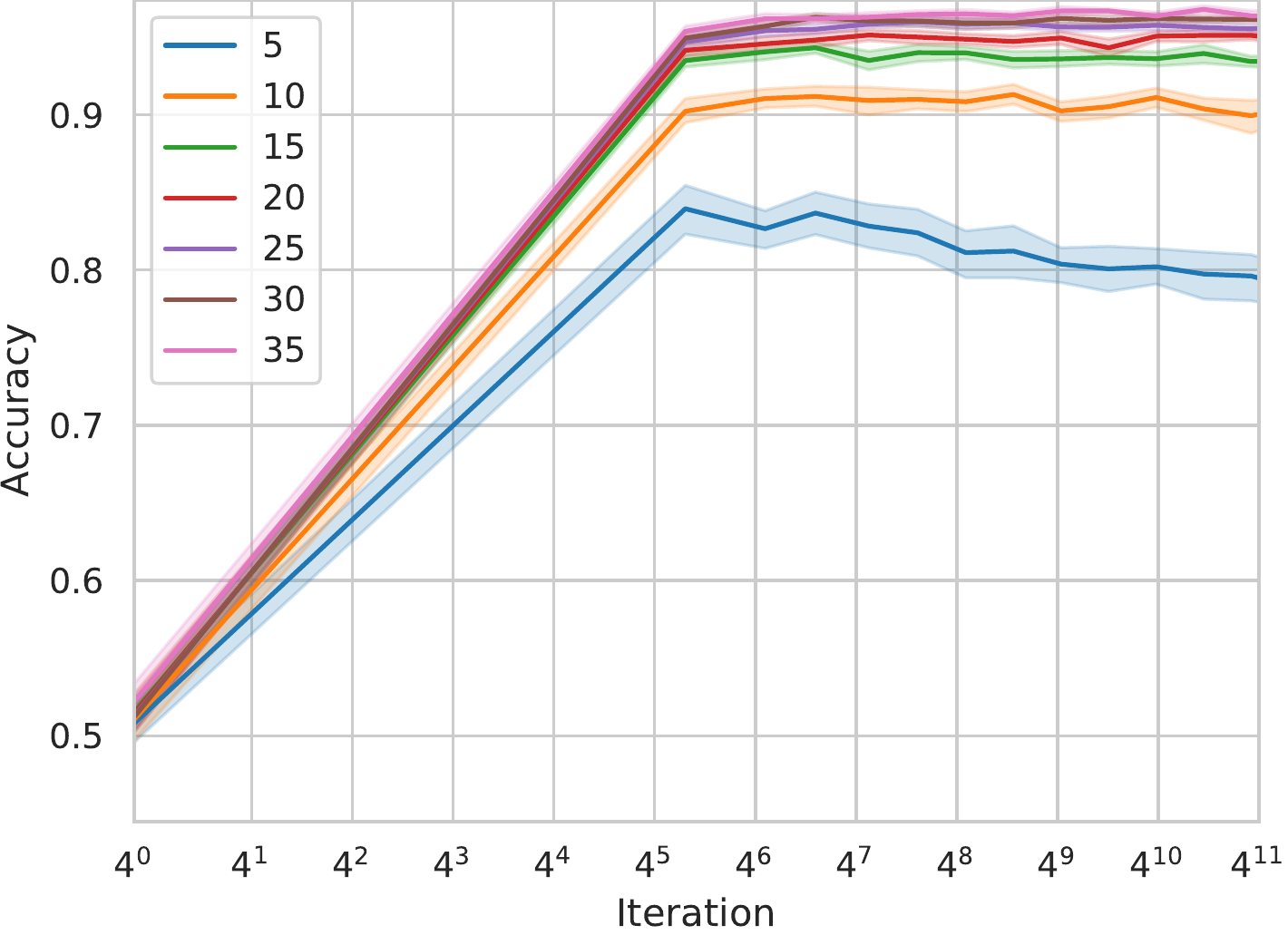}
\\
\multicolumn{4}{c}{Conv-16-2 on EMNIST}\\[0.5em]
{\bf(a)} train CDNV & {\bf(b)} test CDNV & {\bf(c)} target CDNV & {\bf(d)} target accuracy 
\end{tabular}
\vspace{-0.1cm}
    \caption{{\bf Within-class variation collapse.}{\bf (a)} CDNV on the source training data; {\bf (b)} CDNV over the source test data; {\bf (c)} CDNV over the target classes, all plotted in log-log scale. {\bf (d)} Target accuracy rate (lin-log scale).
    In each experiment we trained the model using SGD on different number of source classes $l \in \{5,10,20,30,40,50,60\}$ (as indicated in the legend). 
    } \label{fig:var_nc_main}
    \vspace{-0.1cm}
\end{figure}

While the ridge regression method to train the few-shot classifier $g$ may seem simplistic, it provides reasonably good performance, and hence it is suitable for studying the success of recent transfer/transductive learning methods~\citep{Dhillon2020A,DBLP:conf/eccv/TianWKTI20} for few-shot learning. To demonstrate this, we compared the 1 and 5-shot performance of our simplistic method to several few-shot learning algorithms on Mini-ImageNet, CIFAR-FS and FC-100, summarized in Table~\ref{tab:main_results}. On each dataset, we report the average performance of our method on epochs between 90 and 100. As can be seen, the method we study in this paper is competitive with the rest of the literature on the three benchmarks, especially in the 1-shot case (even achieving the state of the art on FC-100).\footnote{The inferior performance of our method compared to a similarly simple method of \citet{DBLP:conf/eccv/TianWKTI20}, Distill-simple, is most likely due to the choice of the final few-shot classifier: our ridge regression classifier is inferior to their logistic regression solution (as shown in Table~\ref{tab:main_results}), while it is superior compared to their nearest neighbor classifier (see Table~5 in their paper).}
To improve the performance of our method a bit, we employed a standard learning rate scheduling with initial learning rate $\eta=0.05$, decayed twice by a factor $0.1$ after 30 epochs each (accuracy rates are reported averaging over epochs 90--100, as before). Since the performance of these networks plateaued slightly after the first learning rate decay on the source test data, we also applied model selection based on this information, and used the network from the first 60 epochs (to avoid overfitting to the source data and classes happening with the smallest learning rate) with the best source test performance. The combination of these two variations typically resulted in a small improvement of a few percentage points in the problems considered, see the last line of Table~\ref{tab:main_results}.

As our main experiment, we validate the theoretical assessments we made in Section~\ref{sec:analysis}. We argued that in the presence of neural collapse on the training classes, the trained feature map can be used for training a classifier with a small number of samples on new unseen classes. The argument asserts that if we observe neural collapse on a large set of source classes, then we expect to have neural collapse on new unseen classes as well, when assuming that the classes are selected in an i.i.d.\ manner. In this section we demonstrate that neural collapse generalizes to new samples from the same classes, and also to new classes, and we show that it is correlated with good few-shot performance.

To validate the above, we trained classifiers $\tih$ 
with a varying number of $l$ randomly selected source classes. For each run, we plot in Figure~\ref{fig:var_nc_main} the CDNV as a function of the epoch for the training and test datasets of the source classes and over the test samples of the target classes. In addition, we plot the 5-shot accuracy of ridge regression using the learned feature map $f$. Similar experiments with different numbers of target samples are reported in Figures~\ref{fig:multi_shots} and~\ref{fig:lr_scheduling} in the appendix.

As it can be seen in Figure~\ref{fig:var_nc_main}, the value of the CDNV decreases over the course of training on the training and test datasets of the source classes, showing that neural collapse generalizes to new samples from the training classes. Since the classification tasks with fewer number of source classes are easier to learn, the CDNV tends to be larger when training with a wider set of source classes. In contrast, we observe that when increasing the number of source classes, the presence of neural collapse in the target classes strengthens. This is in alignment with our theoretical expectations (the more ``training'' classes, the better generalization to new classes), and the few-shot performance also consistently improves when the overall number of source classes is increased. To validate the generality of our results, this phenomenon is demonstrated in several settings, e.g., using differnet network architectures and datasets in Figure~\ref{fig:var_nc_main}. As can be seen, the values of the CDNV on the target classes are relatively large compared to those on the source classes, except for the results on EMNIST. However, these values still indicate a reasonable few-shot learning performance, as demonstrated in the experiments. These results consistently validate our theoretical findings, that is, that neural collapse generalizes to new source samples, it emerges for new classes, and its presence immediately facilitates good performance in few-shot learning.

In Appendix~\ref{sec:mid_layer}, we also show that similar phenomena as described above happens for feature maps obtained from lower layers of the network, as well, although to a lesser extent.

\section{Conclusions}\label{sec:conclusions}

Employing foundation models for transfer learning is a successful approach for dealing with overfitting in the low-data regime. However, the reasons for this success are not clear. In this paper we presented a new perspective on this problem by connecting it to the newly discovered phenomenon of neural collapse. We showed that the within-class variance collapse tends to emerge in the test data associated with the classes encountered at train time and, more importantly, in new unseen classes when the new classes are drawn from the same distribution as the training classes. In addition, we showed that when neural collapse emerges in the new classes, then it requires very few samples to train a linear classifier on top of the learned feature representation that accurately predicts the new classes. These results provide a justification to the recent successes of transfer learning in few-shot tasks, as observed by \citet{DBLP:conf/eccv/TianWKTI20} and \citet{Dhillon2020A}.

\section*{Acknowledgements}

We would like to thank Csaba Szepesv\'ari and Razvan Pascanu for illuminating discussions during the preparation of this manuscript, and Miruna P\^islar for her priceless technical support.

\bibliography{understanding_transfer}
\bibliographystyle{iclr2022_conference}

\newpage

\appendix

\section{Experimental Details}\label{sec:details}

{\bf Datasets.\enspace} Throughout the experiments, we consider four different datasets: (i) Mini-ImageNet~\citep{NIPS2016_90e13578} and (ii) CIFAR-FS~\citep{bertinetto2018metalearning}, (iii) FC-100~\citep{NEURIPS2018_66808e32} and EMNIST (balanced)~\citep{cohen2017emnist}. Each dataset is split into meta-train, meta-validation and meta-test classes; we select the data for the source classes from the meta-training, and use similarly the meta-test data for the target tasks (we do not use the meta-validation classes). Each one of the class splits is also partitioned into train and test samples; we use these for training and evaluating our models. The Mini-ImageNet dataset contains 100 classes randomly chosen from ImageNet ILSVRC-2012~\citep{ILSVRC15} with 600 images of size $84\times 84$ pixels per class. It is split into 64 meta-training classes, 16 meta-validation classes and 20 classes for meta-testing. CIFAR-FS and FC-100 are two derivatives of the CIFAR-100 dataset~\citep{Krizhevsky09learningmultiple}. CIFAR-FS consists of a random split of the CIFAR-100 classes into 64 classes for meta-training, 14 for meta-validation and 20 for meta-testing. FC-100 contains 100 classes which are grouped into 20 superclasses. These classes are partitioned into 60 meta-train classes from 12 superclasses, 20 classes from 4 superclasses for meta-validation, and 20 meta-test classes from 4 superclasses. We also consider the EMNIST dataset, which is an extension of the original MNIST dataset. We randomly split its classes into 35 source classes and 12 target classes (which are: 2, 10, 11, 12, 13, 16, 18, 22, 25, 33, 34, 44). We use random cropping augmentations at training time across all of the experiments. 

{\bf Architectures.\enspace} We experimented with two types of architectures for $\tih$: wide ResNets~\citep{Zagoruyko2016WideRN} and vanilla convolutional networks. Wide ResNets start with a convolutional layer (with kernels $3\times 3$ and $16$ output channels), followed by three groups of layers. Each group of layers includes a convolutional layer, followed by a sequence of $N$ residual layers. Each residual layer in the $i$'th group contains two convolutional layers using $3\times 3$ kernels and output $2^{i+3} M$ channels. Each convolutional layer is followed by a ReLU activations and batch normalization post activations. The network's penultimate layer is a mean pooling activation with kernels of size $4\times 4$, returning an output of dimension $256$ and followed by a linear layer. The vanilla convolutional networks have the same architectures as the wide ResNets, except that we omit the residual connections. The networks are denoted by WRN-$N$-$M$ and Conv-$N$-$M$, respectively. 

\section{Additional Experiments}

In this section we report additional experiments to provide further insights.

\subsection{Varying Hyperparameters}

We start by presenting experimental results to validate the consistency of our findings when varying different hyperparameters. 

{\bf Varying learning rates.\enspace} We repeated the experiments in rows 1 and 4 of Figure~\ref{fig:var_nc_main} with learning rates $\eta = 2^{-2i-2}$ for $i=1,2,3,4$. We also report the train and test accuracy rates on the source data. The results are reported in Figure~\ref{fig:var_nc_cifar_fs_lr} and Figure~\ref{fig:var_nc_emnist_lr}. The results are consistent with the observations we had in Figure~\ref{fig:var_nc_main}. Interestingly, when using smaller learning rates, the CDNV over the source training and test data tends to be smaller, while the CDNV on the target classes tends to be larger. We attribute this observation to overfitting to the source classes and indeed we observe that the few-shot performance is generally worse in those cases. 

{\bf Learning rate scheduling and varying number of target samples.\enspace} To further demonstrate the relationship between neural collapse and generalization to new classes, we experimented with the default learning rate and standard learning rate scheduling (also used by \citet{DBLP:conf/eccv/TianWKTI20}), with varying number of target samples. In this experiment, we trained WRN-28-4 using SGD with the default learning rate and learning rate scheduling, starting with learning rate $\eta=0.05$ with decay factor $0.1$ which is applied every $30$ epochs twice. For the default training setting, in Figure~\ref{fig:multi_shots}(a-c) we report the dynamics of the CDNV on the source training data, the source test data (i.e., unseen test samples from the source classes), and the target classes (resp.) and in Figure~\ref{fig:lr_scheduling}(d-g) we plot the 1, 5, 10 and 20-shot accuracy rates of the network during training time. The results of learning rate scheduling are provided in Figure~\ref{fig:lr_scheduling}.  

Similarly to Figure~\ref{fig:var_nc_main}, as expected, we can observe that when increasing the number of source classes, the few-shot performance improves, while the CDNV on the target classes tends to decrease. Also, in line with our theory, the CDNV on the target classes is negatively correlated with the few-shot performance, that is, better neural collapse yields better performance. For example, in Figure~\ref{fig:lr_scheduling}(d-g) it is evident that the peak performance on all few-shot experiments for the case of 64 source classes is achieved around the minimal value of the CDNV on the target classes in Figure~\ref{fig:lr_scheduling}(c). This is achieved around iteration $16,000$ for CIFAR-FS and around iteration $20,000$ for Mini-ImageNet, a little bit after the first learning rate decay (at $15,000$ and $18,000$ steps, respectively). As can be seen, the performance slightly decreases after the peak iteration, while the CDNV on the target classes slightly increases, when the training starts to overfit to the source data and the source classes. This effect can be mitigated by selecting the final network based on the performance on the source test data, as we show in Table~\ref{tab:main_results}. 

\begin{figure}[ht]
\centering
\begin{tabular}{@{}c@{~}c@{~}c@{~}c}
\includegraphics[width=.252\linewidth]{figures/multiclass/cifar_fs/variance_plots/relative_variance_train/lr=0.0625.pdf}&
\includegraphics[width=.252\linewidth]{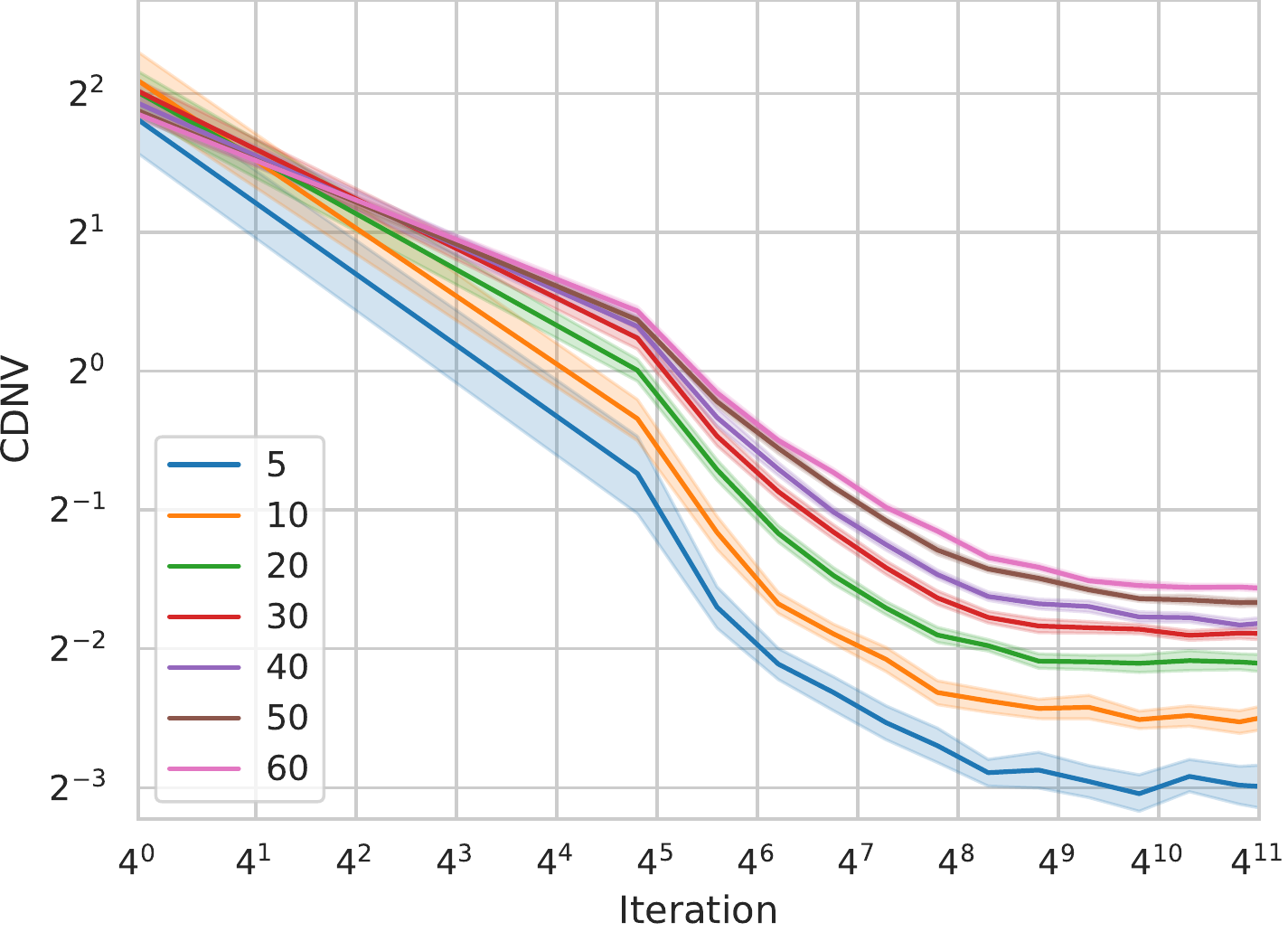}&
\includegraphics[width=.252\linewidth]{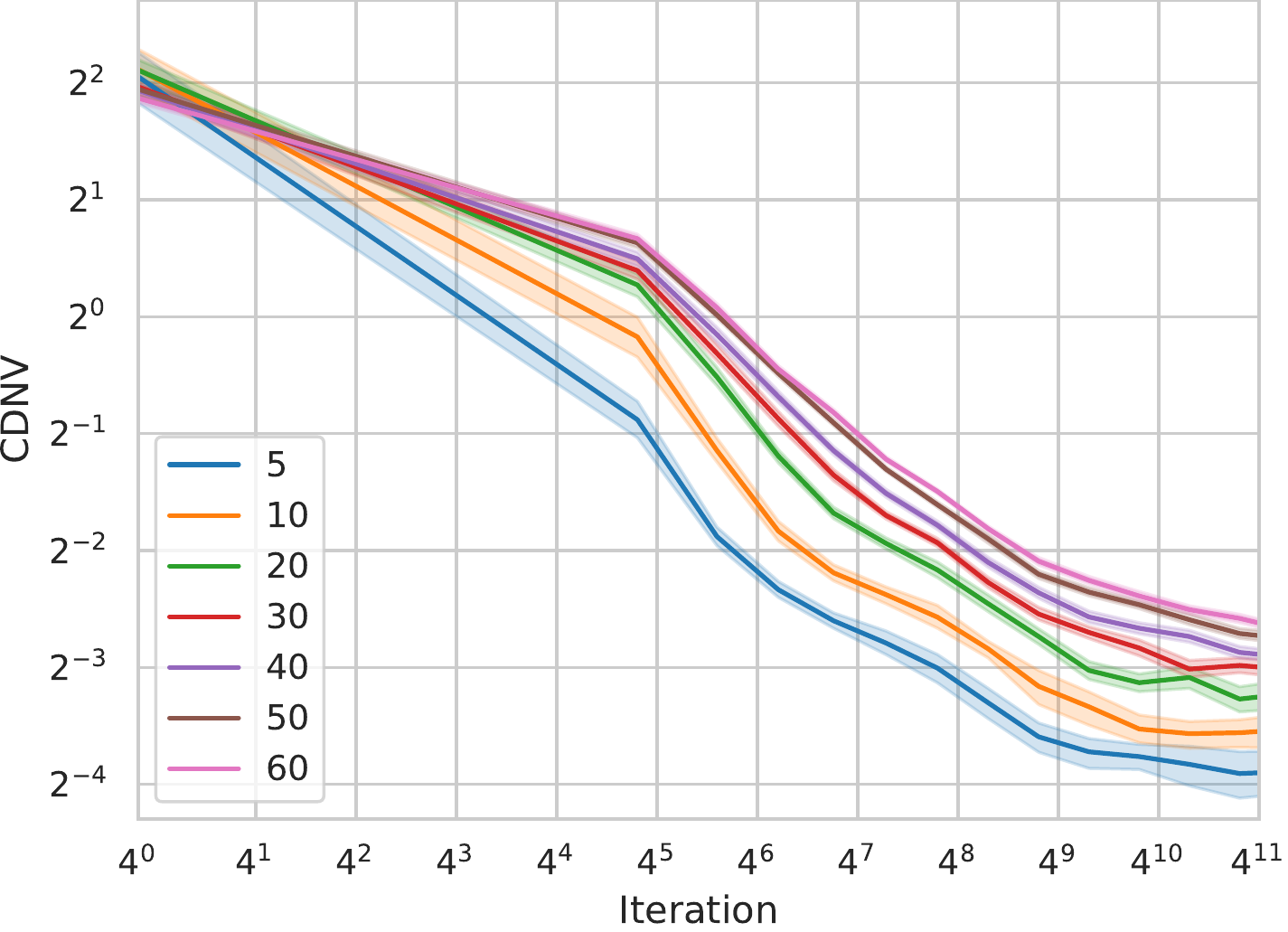}&
\includegraphics[width=.252\linewidth]{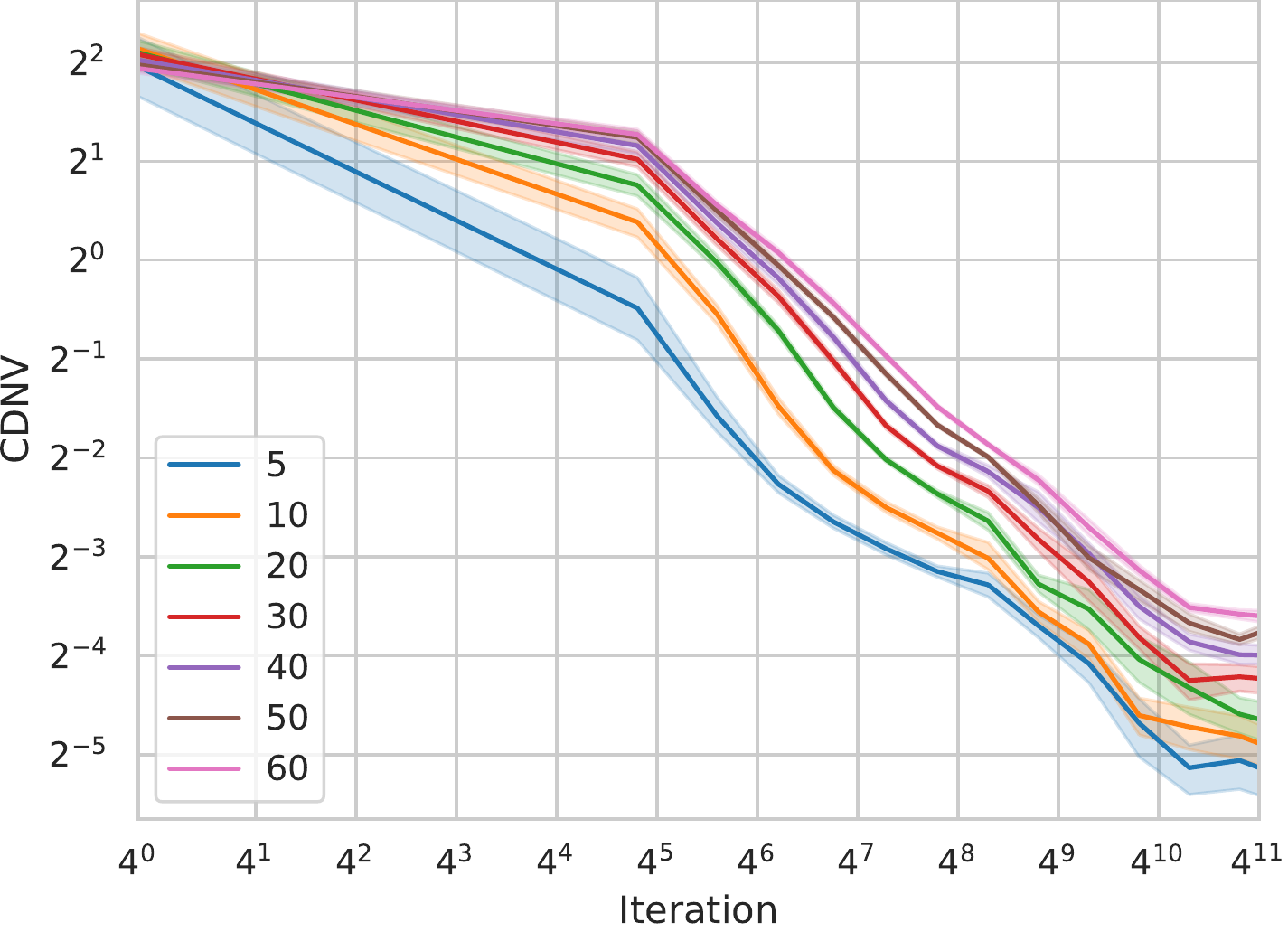}
\\
\includegraphics[width=.252\linewidth]{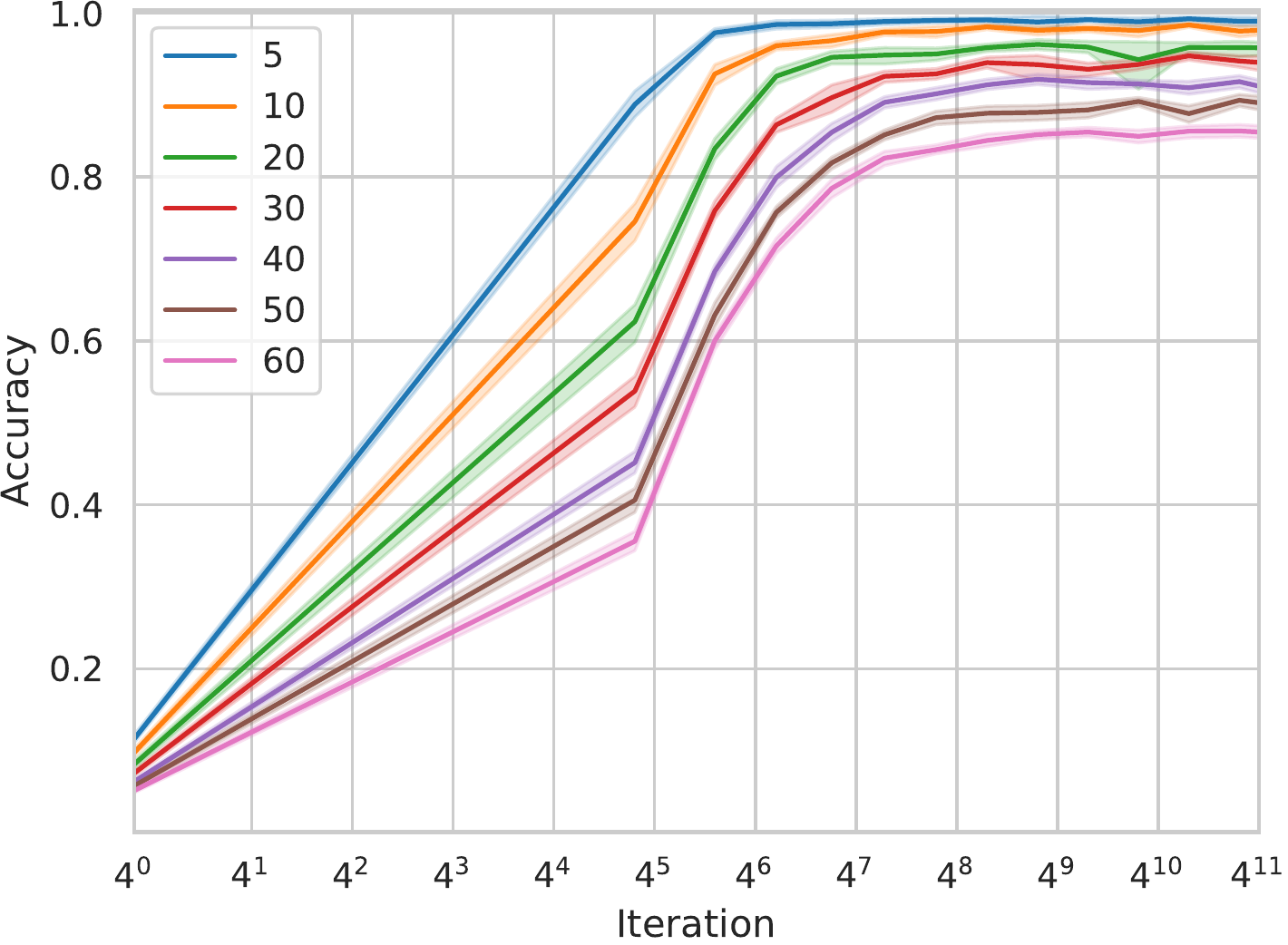}&
\includegraphics[width=.252\linewidth]{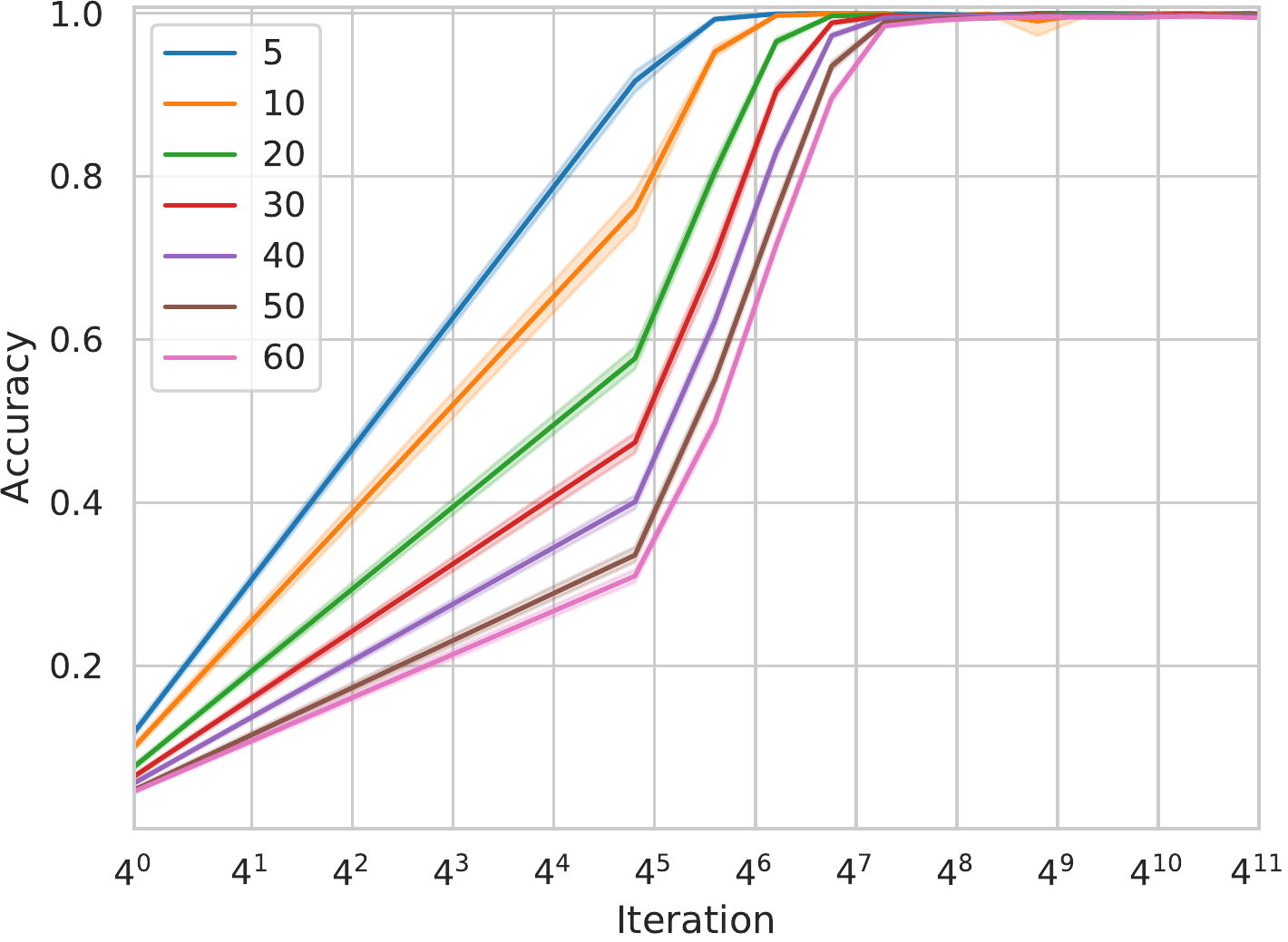}&
\includegraphics[width=.252\linewidth]{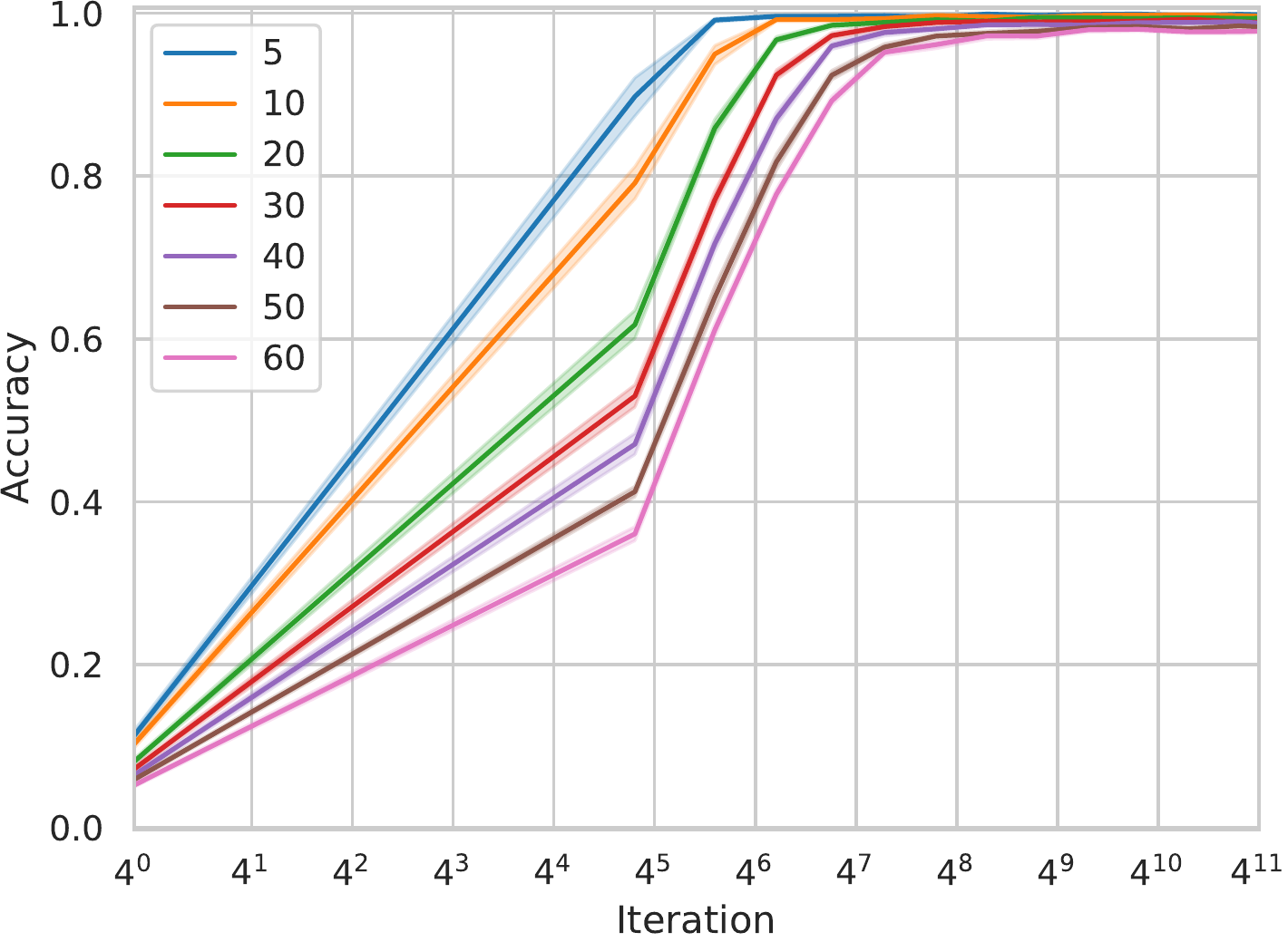}&
\includegraphics[width=.252\linewidth]{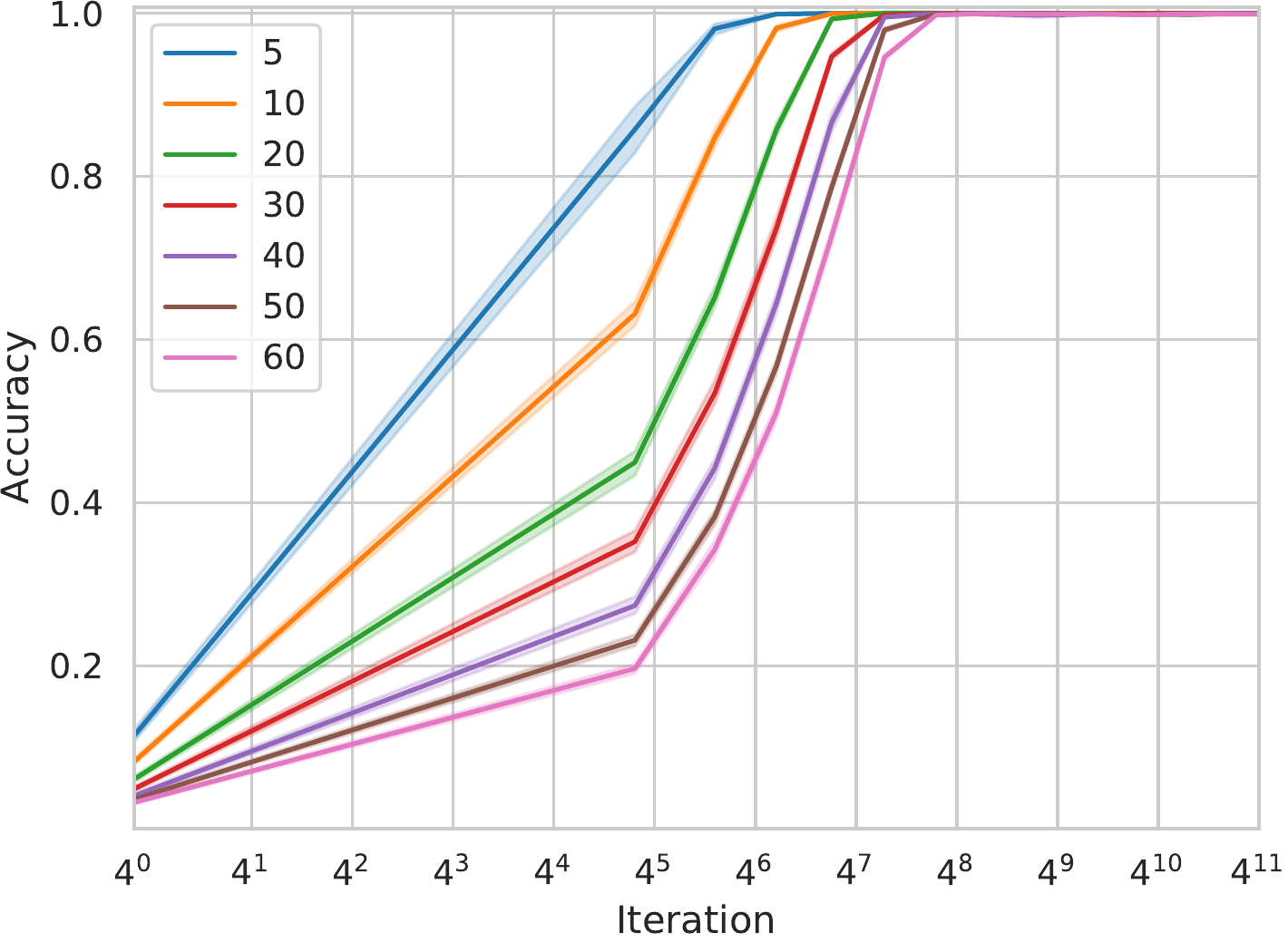}
\\
\multicolumn{4}{c}{{\bf (a)} Source classes, train data} \\[0.5em]
\includegraphics[width=.252\linewidth]{figures/multiclass/cifar_fs/variance_plots/relative_variance_test/lr=0.0625.pdf}&
\includegraphics[width=.252\linewidth]{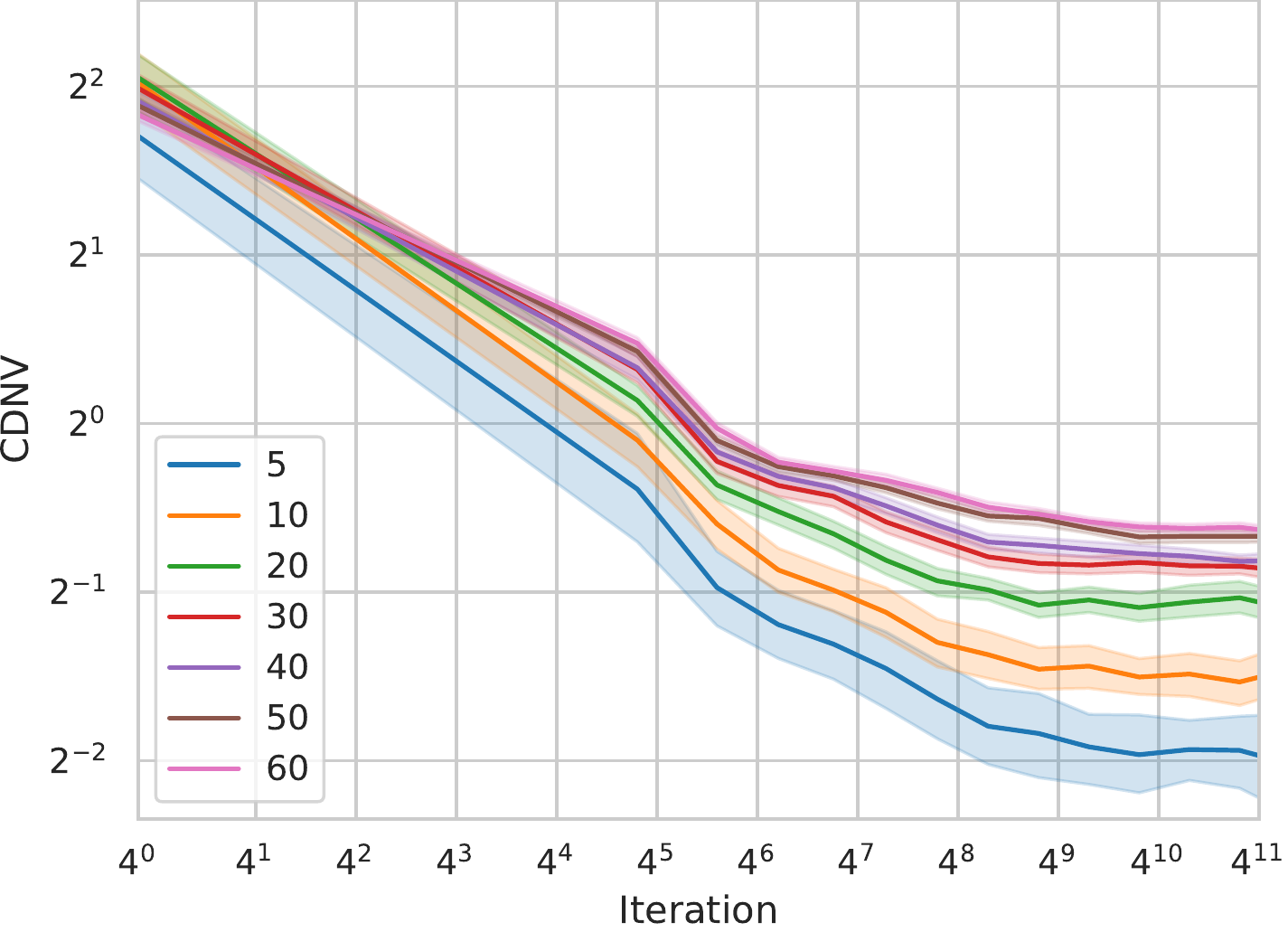}&
\includegraphics[width=.252\linewidth]{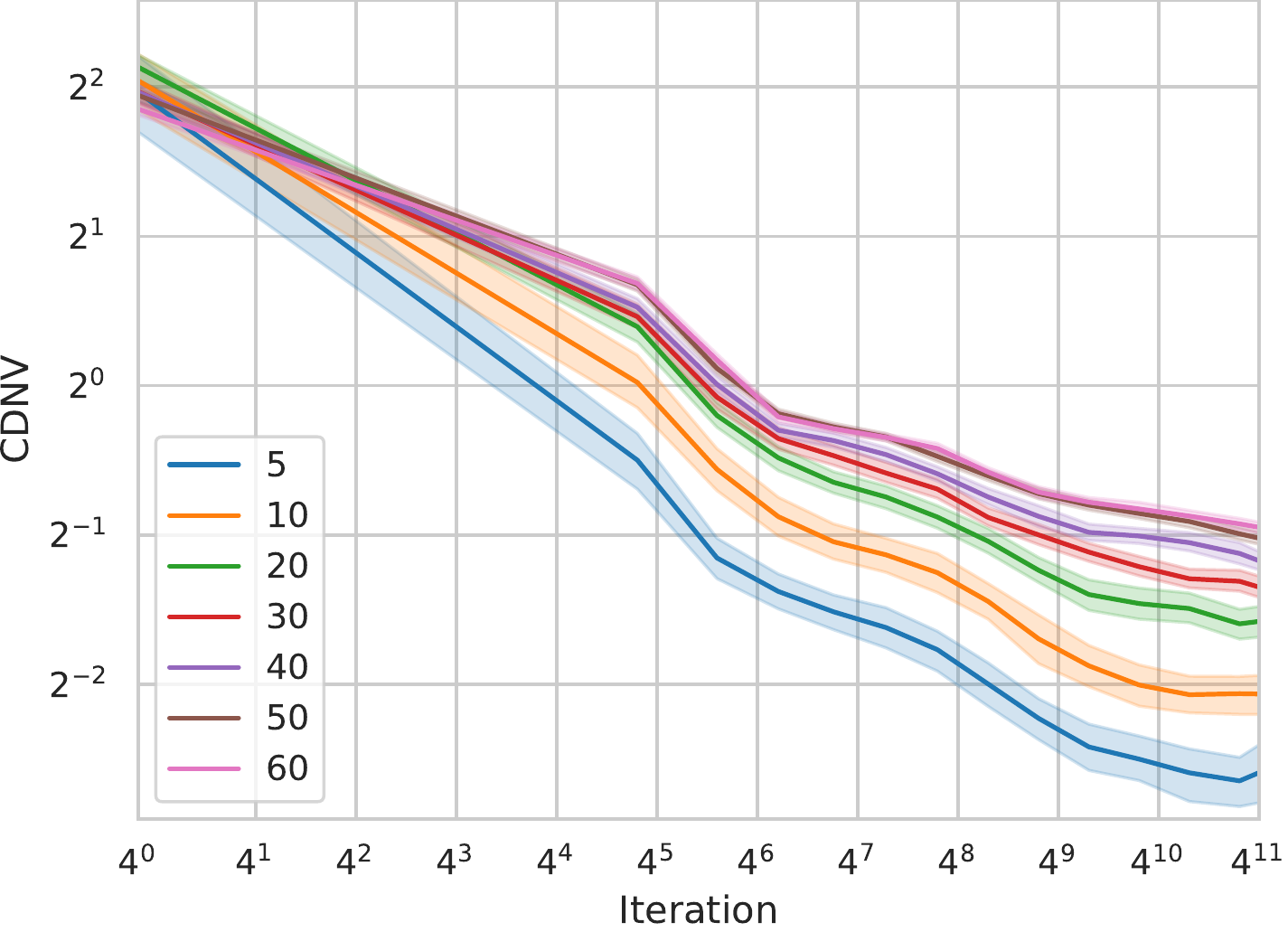}&
\includegraphics[width=.252\linewidth]{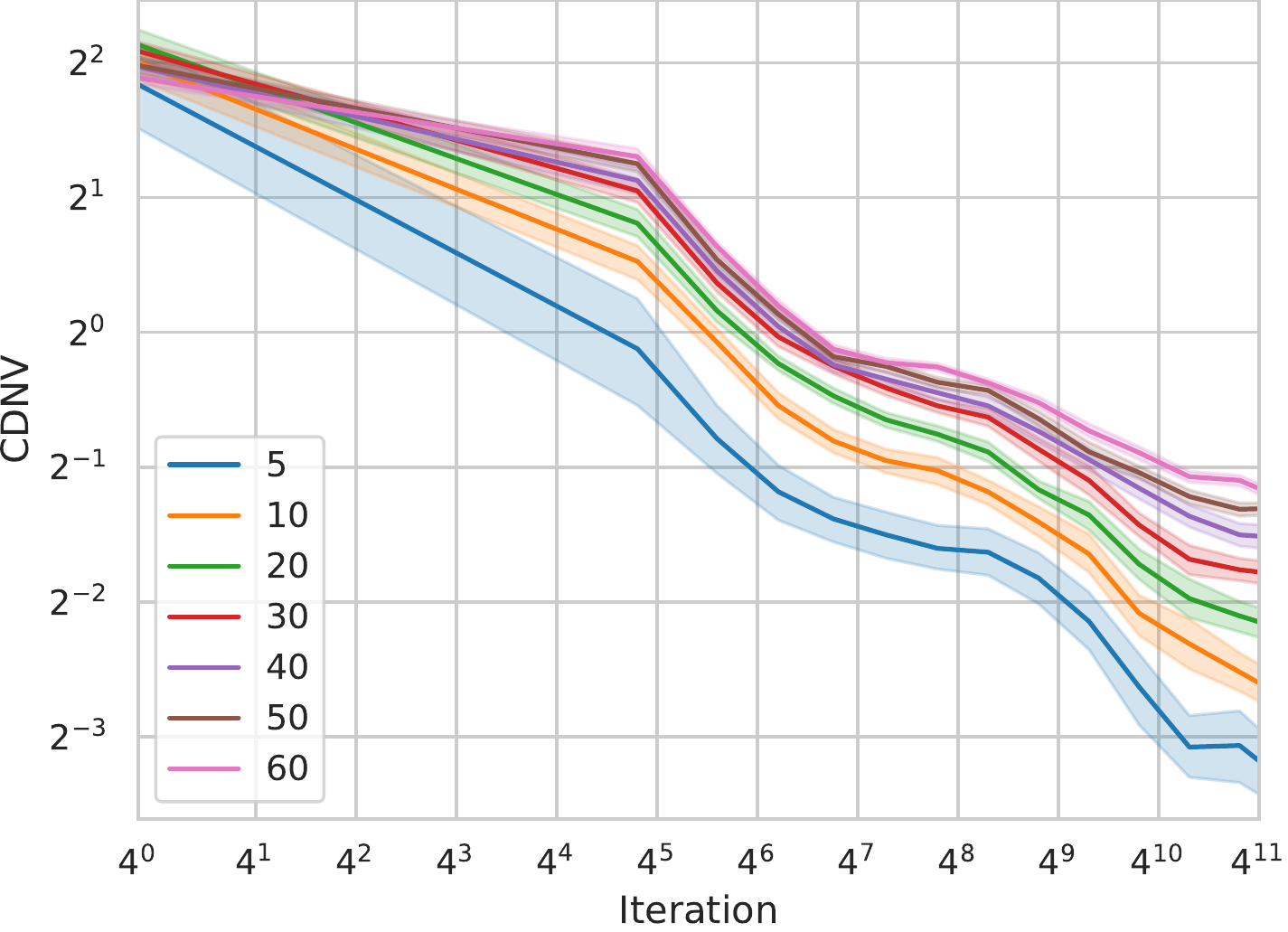}
\\
\includegraphics[width=.252\linewidth]{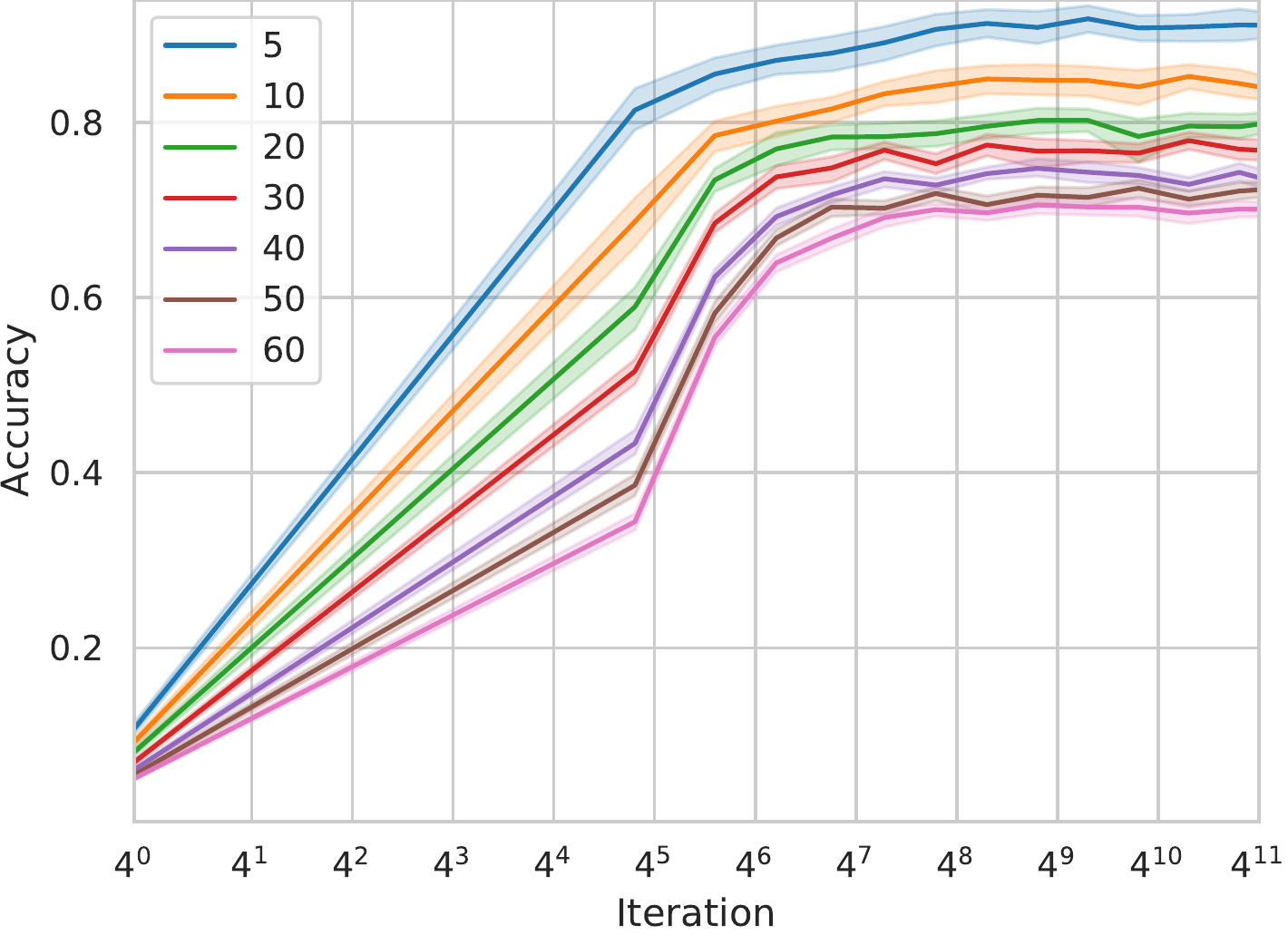}&
\includegraphics[width=.252\linewidth]{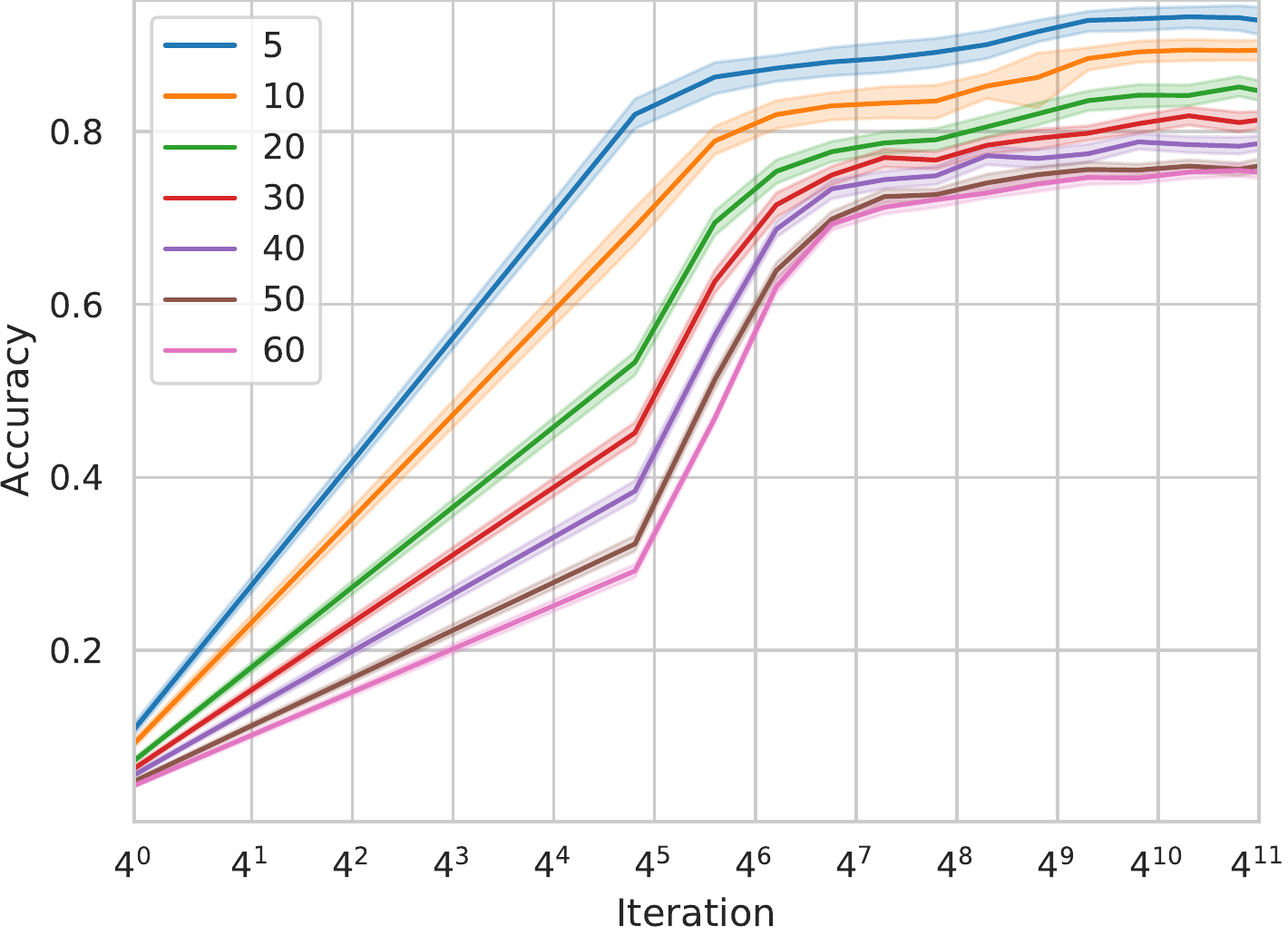}&
\includegraphics[width=.252\linewidth]{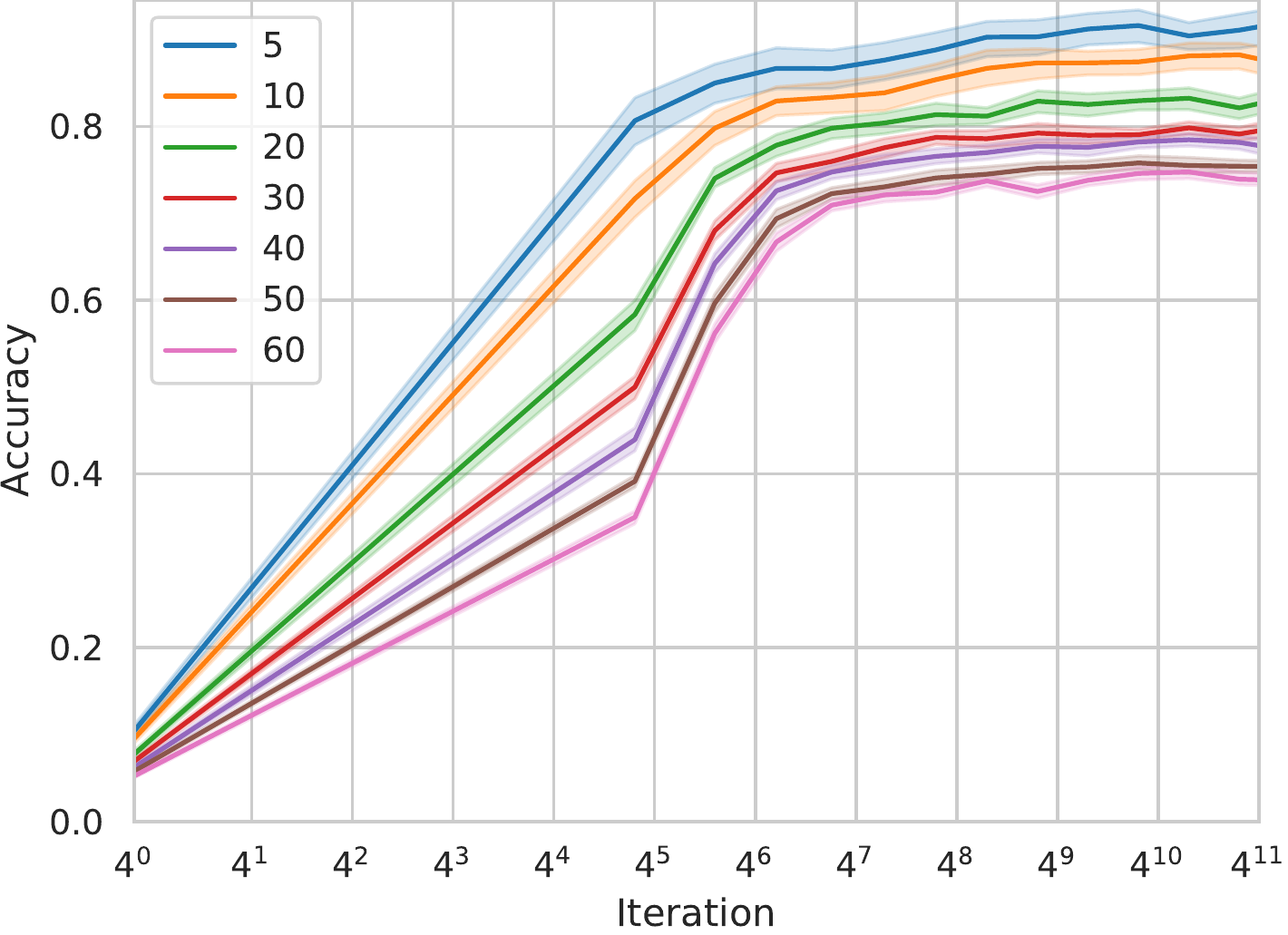}&
\includegraphics[width=.252\linewidth]{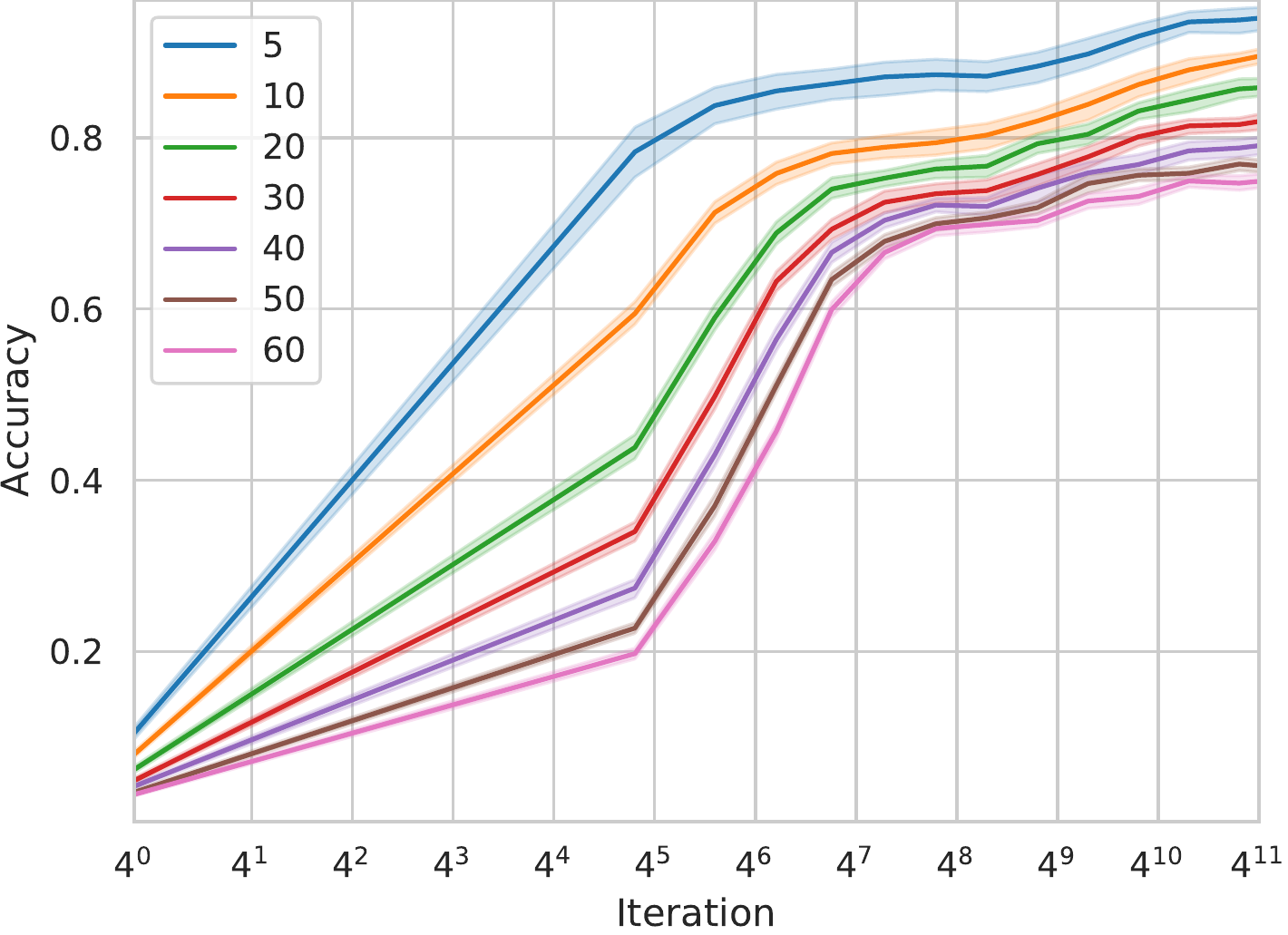}
\\
\multicolumn{4}{c}{{\bf (b)} Source classes, test data} \\[0.5em]
\includegraphics[width=.252\linewidth]{figures/multiclass/cifar_fs/variance_plots/relative_variance_tr/lr=0.0625.pdf}&
\includegraphics[width=.252\linewidth]{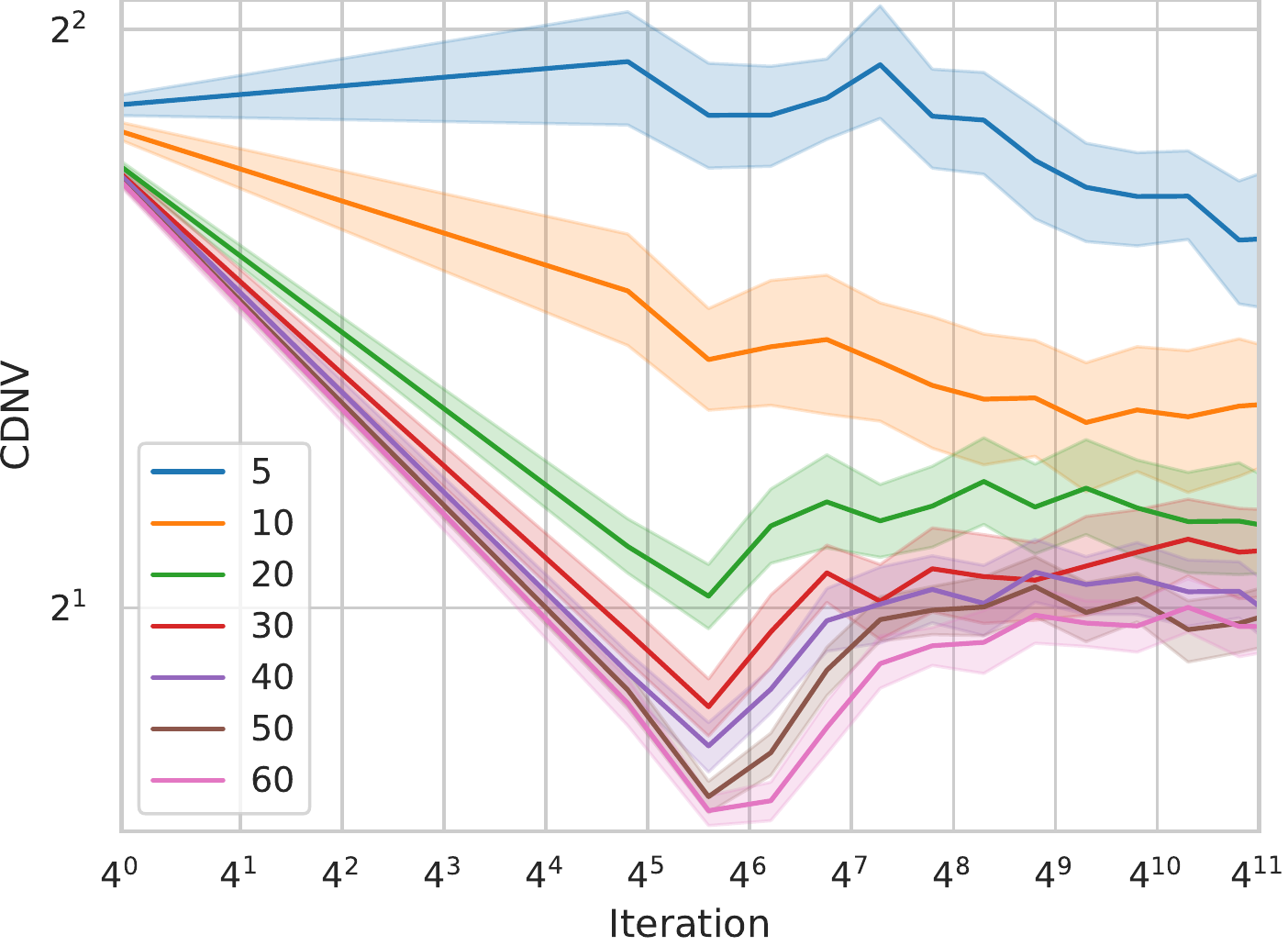}&
\includegraphics[width=.252\linewidth]{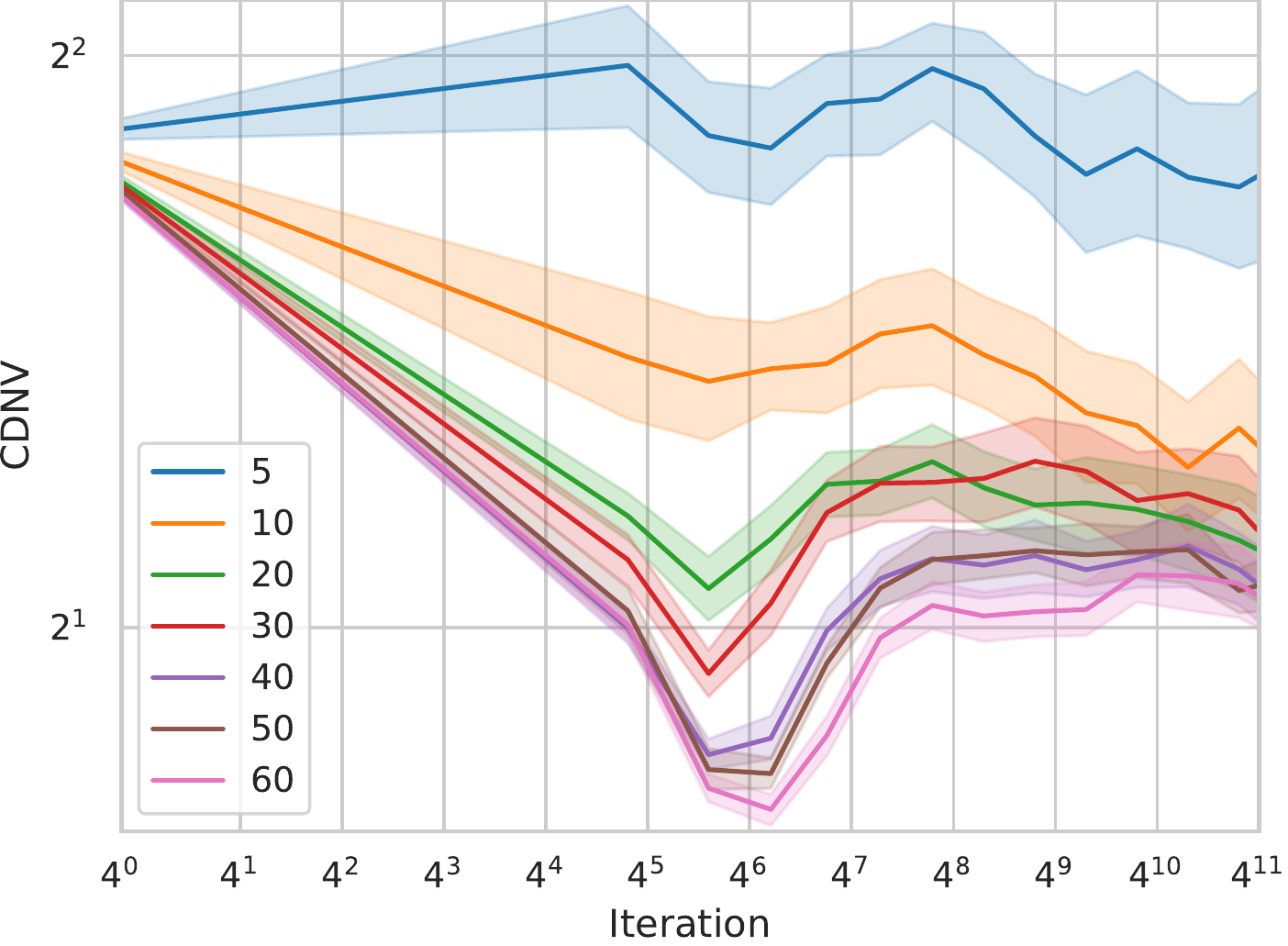}&
\includegraphics[width=.252\linewidth]{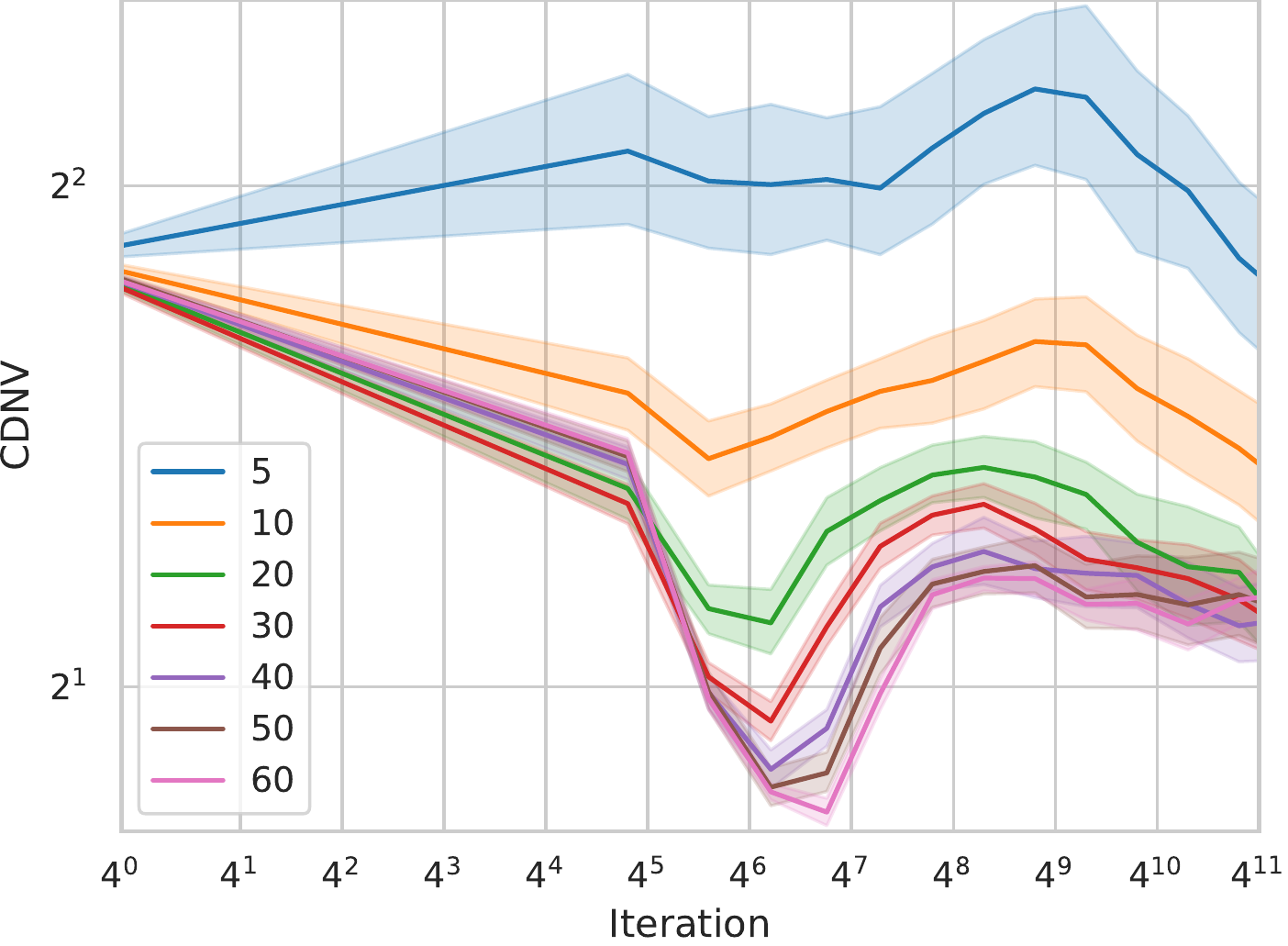}
\\
\includegraphics[width=.252\linewidth]{figures/multiclass/cifar_fs/accuracy_plots/tr_accs/lr=0.0625.pdf}&
\includegraphics[width=.252\linewidth]{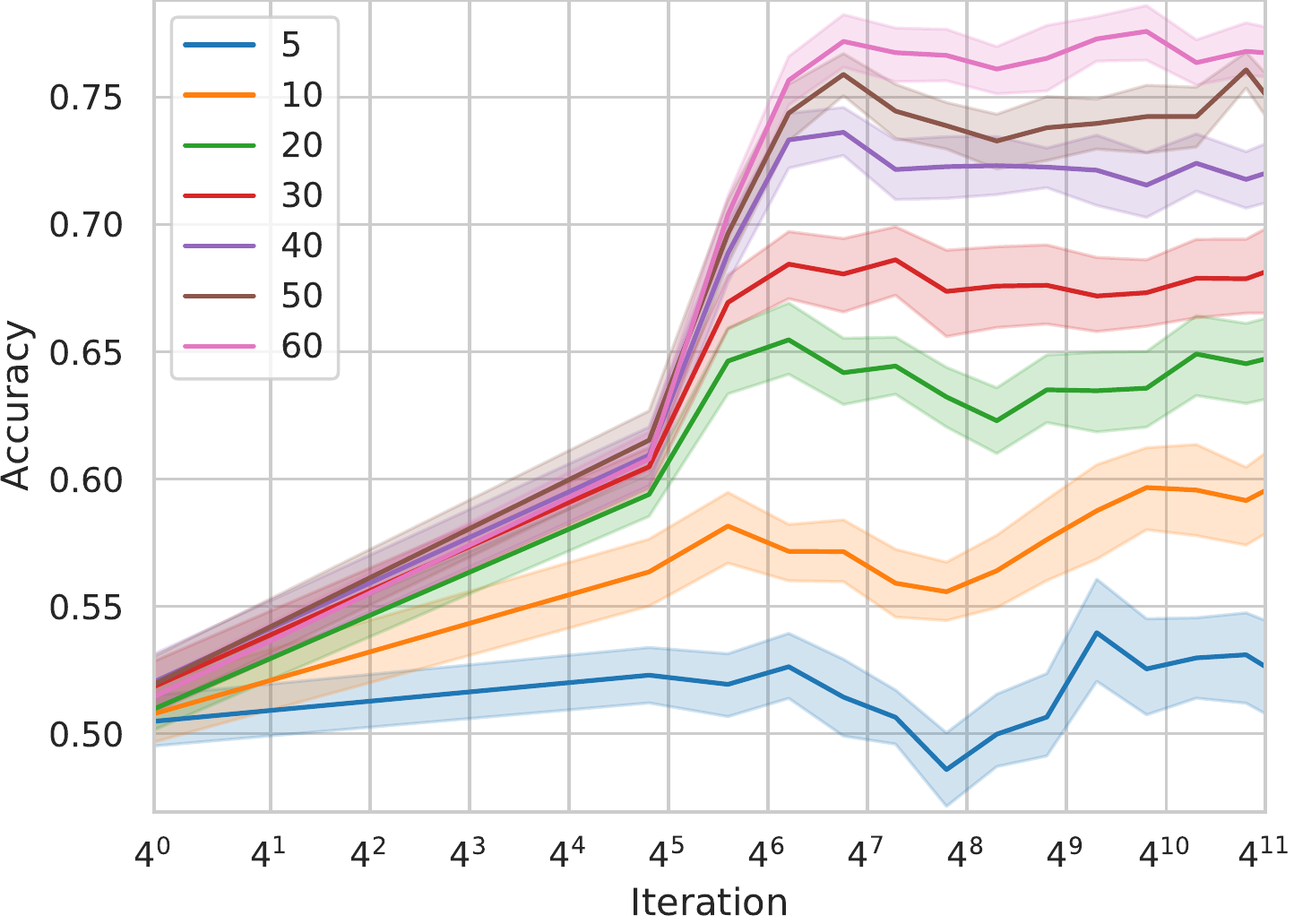}&
\includegraphics[width=.252\linewidth]{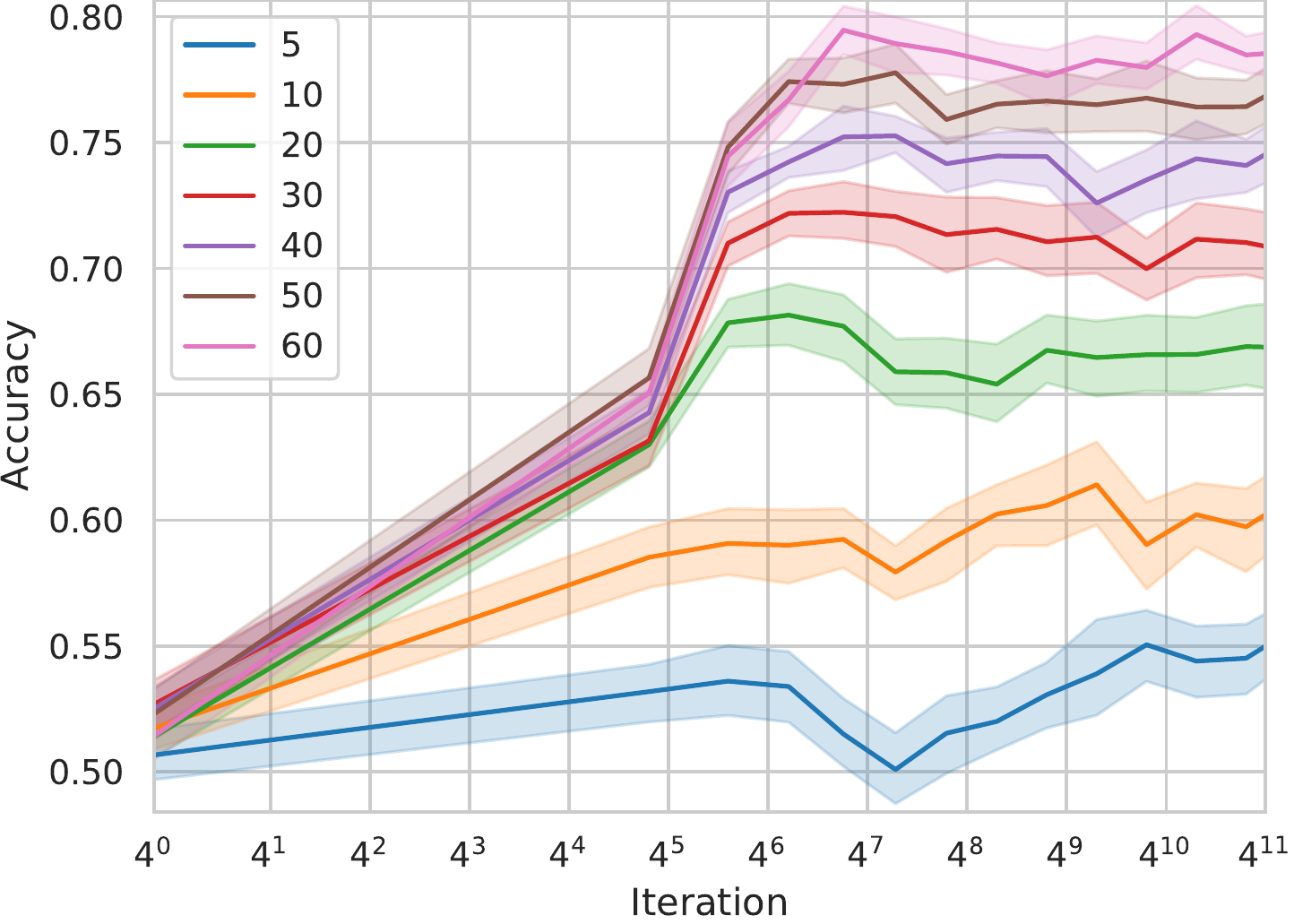}&
\includegraphics[width=.252\linewidth]{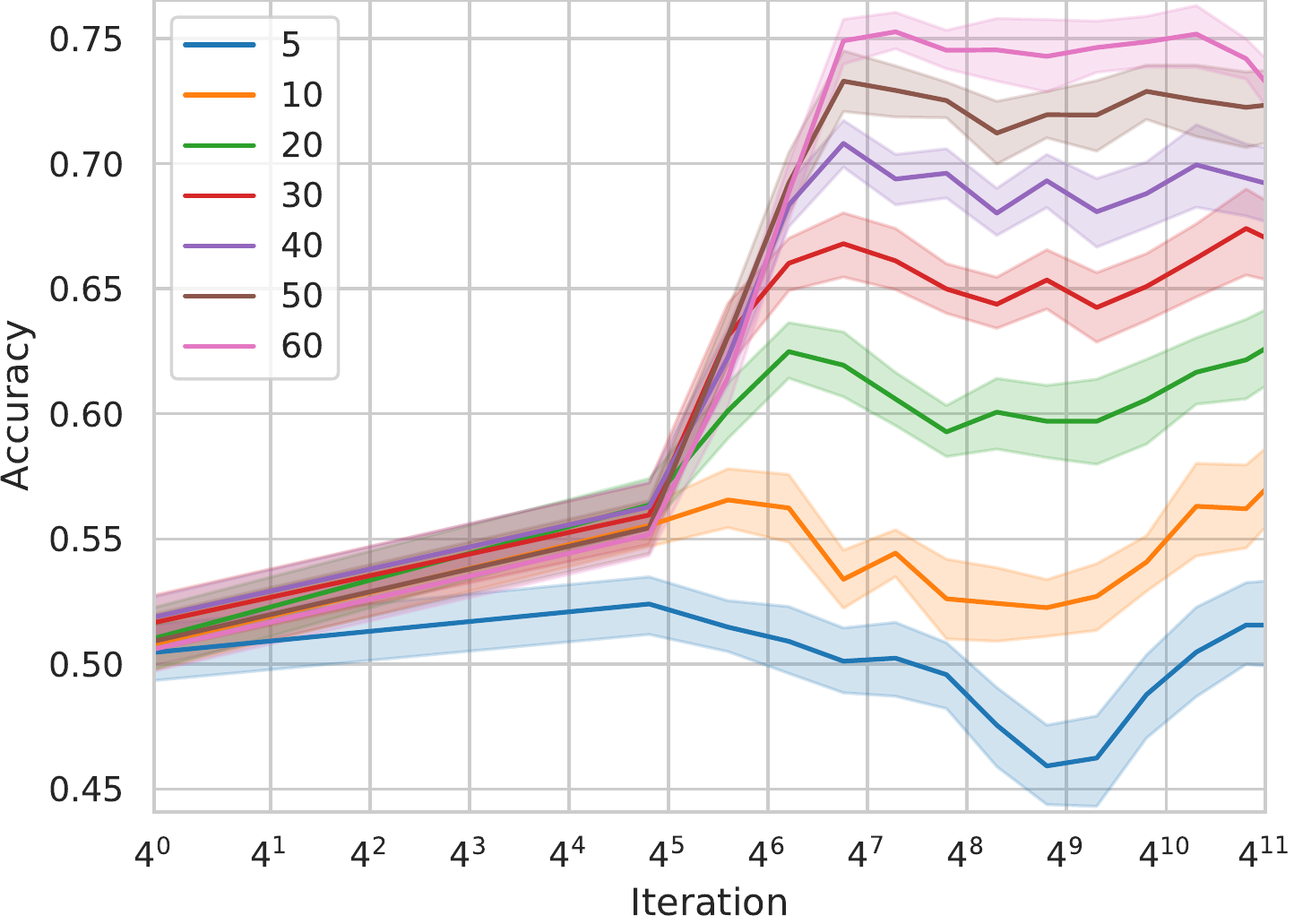}\\
\multicolumn{4}{c}{{\bf (c)} Target classes, test data} \\[0.5em]
\end{tabular}
\caption{{\bf Averaged class variance and accuracy rates when varying the number of source classes on CIFAR-FS.} In {\bf (a)} we plot the CDNV and accuracy rates on the source training data (resp.), in {\bf (b)} on the source test data and in {\bf (c)} on the target test data. In each experiment we trained a WRN-28-4 with SGD on a set of $l\in \{5,10,20,30,40,50,60\}$ source classes (as indicated in the legend). The $i$'th column stands for learning rate $\eta=2^{-2i-2}$. 
    } \label{fig:var_nc_cifar_fs_lr}
\end{figure}

\begin{figure}[ht]
\centering
\begin{tabular}{@{}c@{~}c@{~}c@{~}c}
\includegraphics[width=.252\linewidth]{figures/multiclass/emnist/variance_plots/relative_variance_train/lr=0.0625.pdf}&
\includegraphics[width=.252\linewidth]{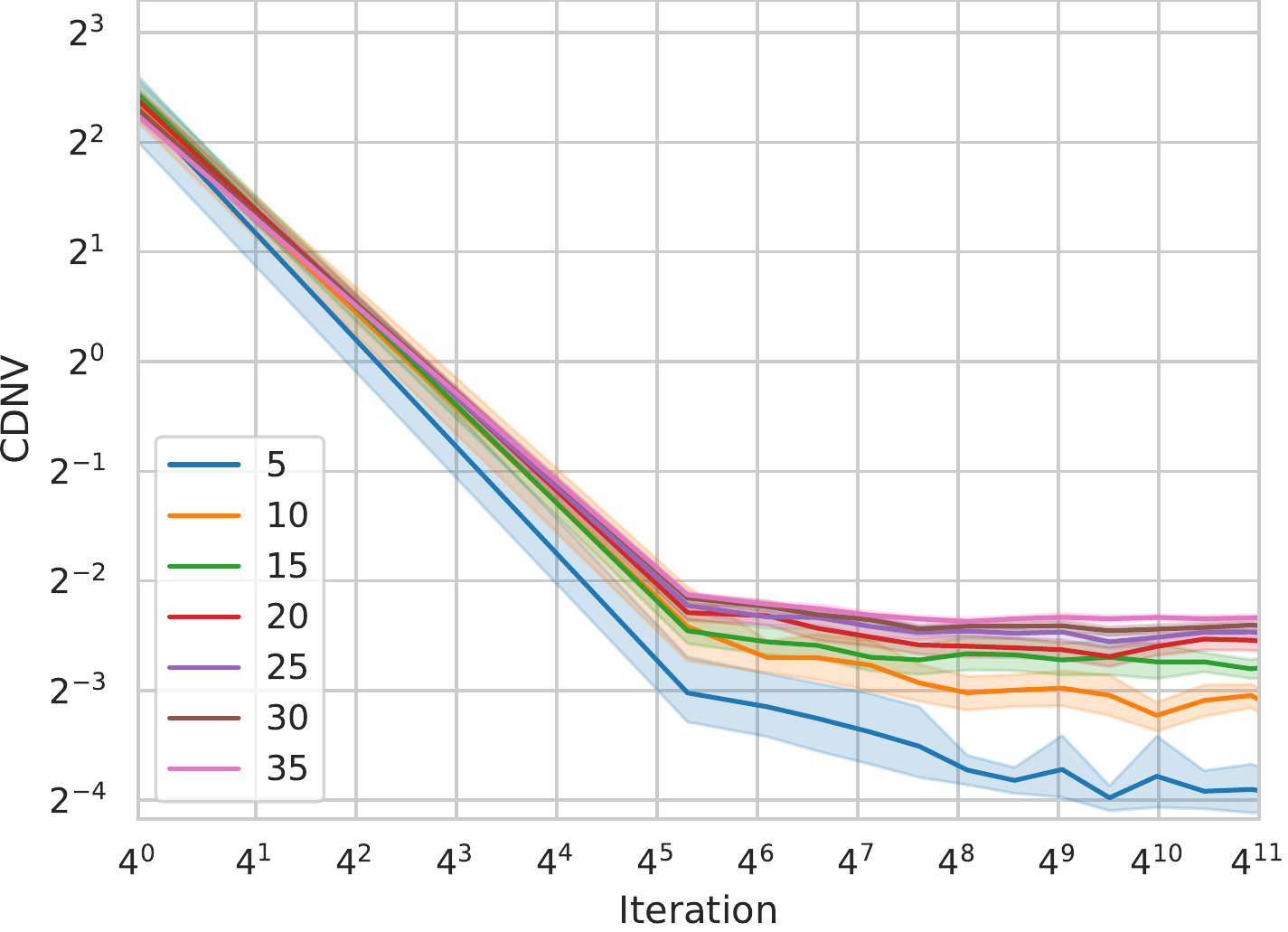}&
\includegraphics[width=.252\linewidth]{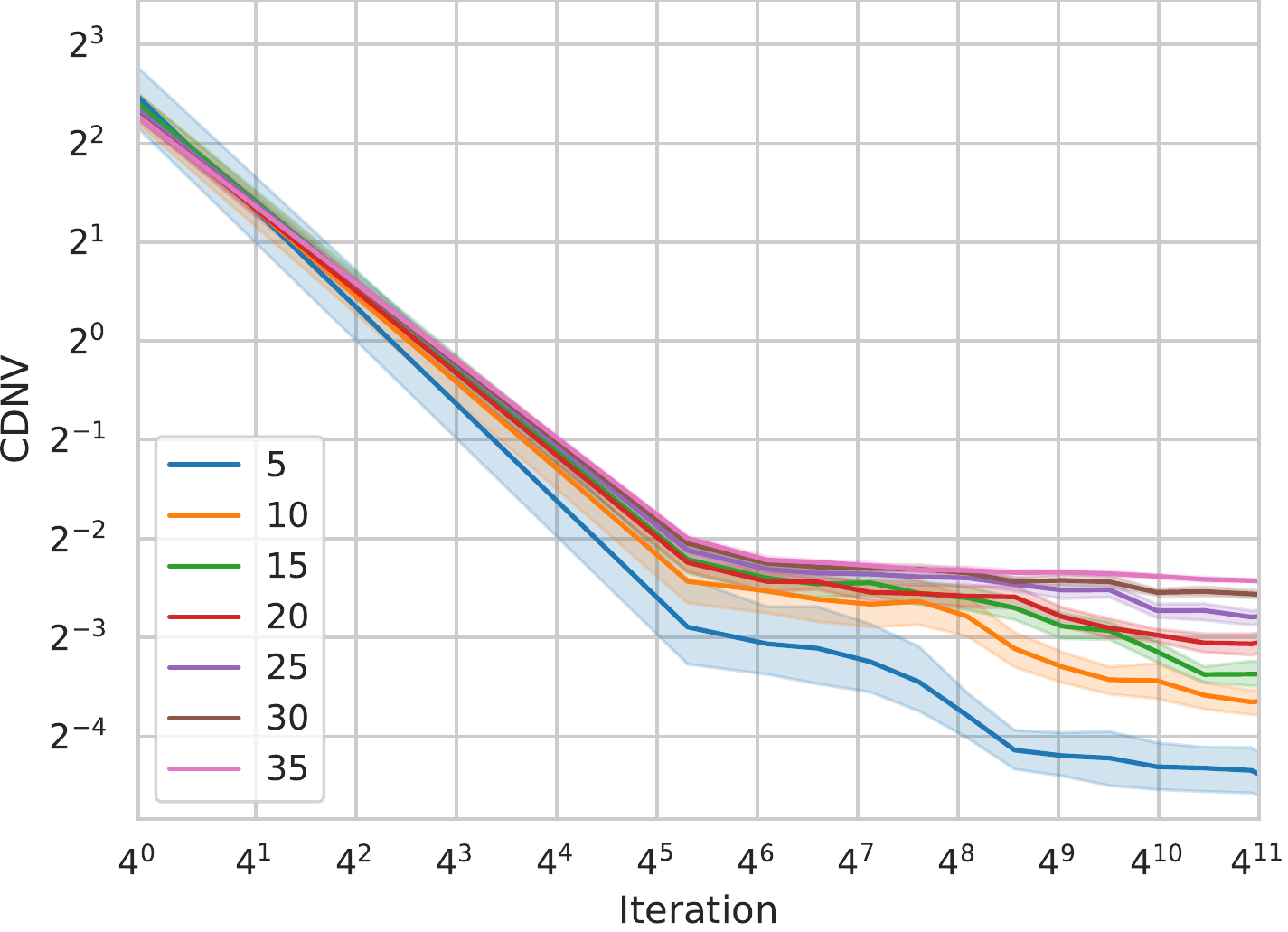}&
\includegraphics[width=.252\linewidth]{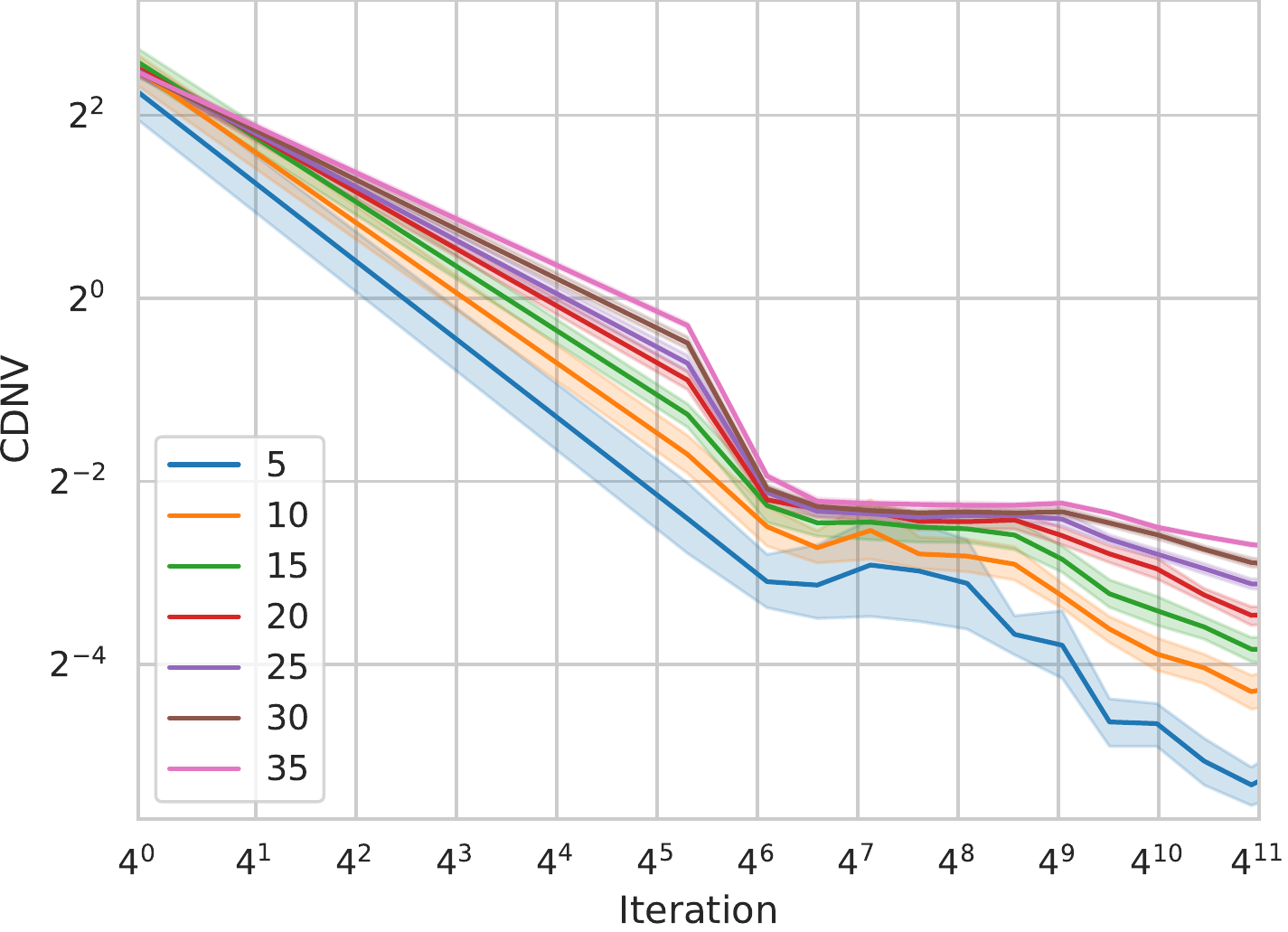}
\\
\includegraphics[width=.252\linewidth]{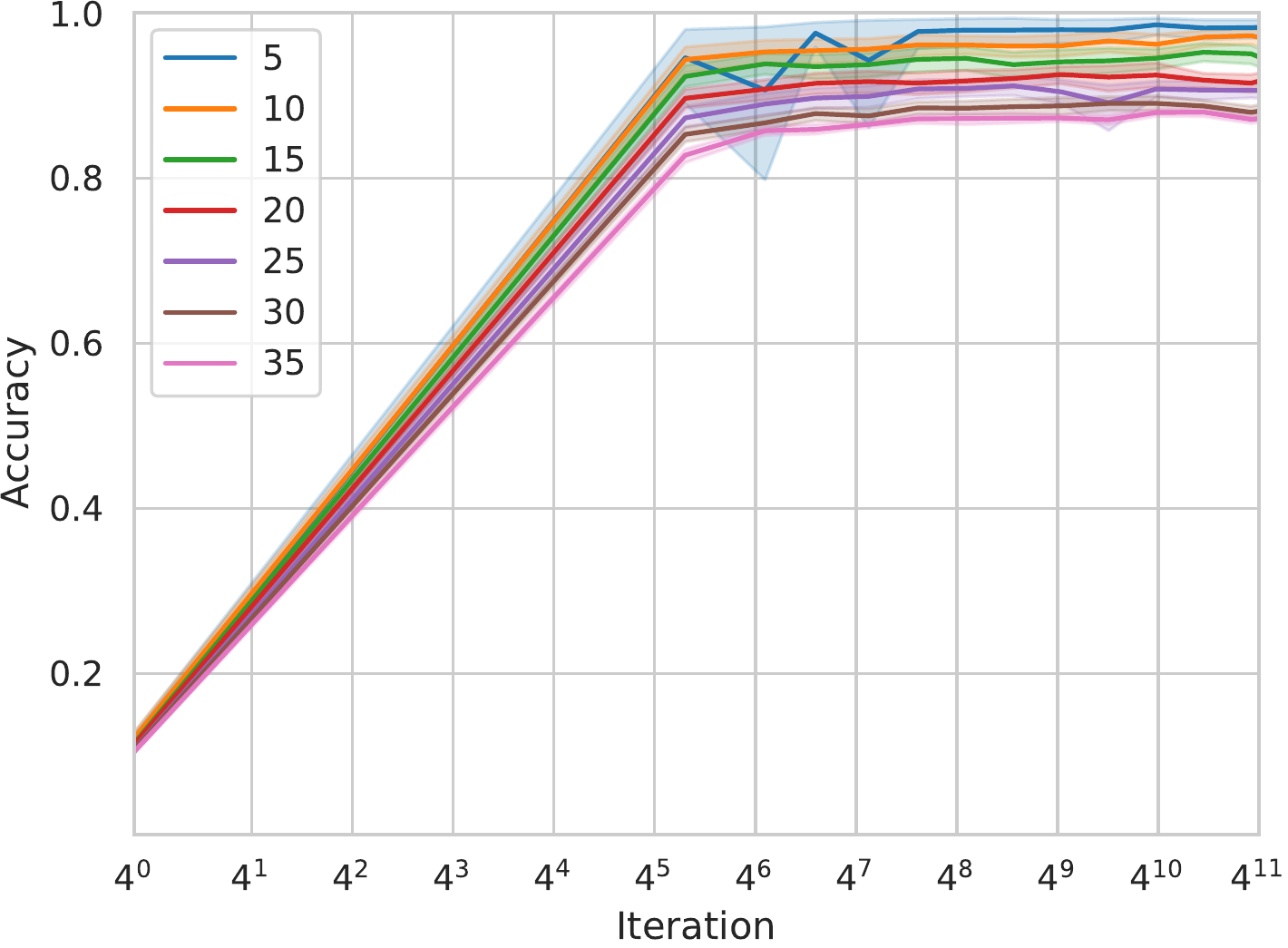}&
\includegraphics[width=.252\linewidth]{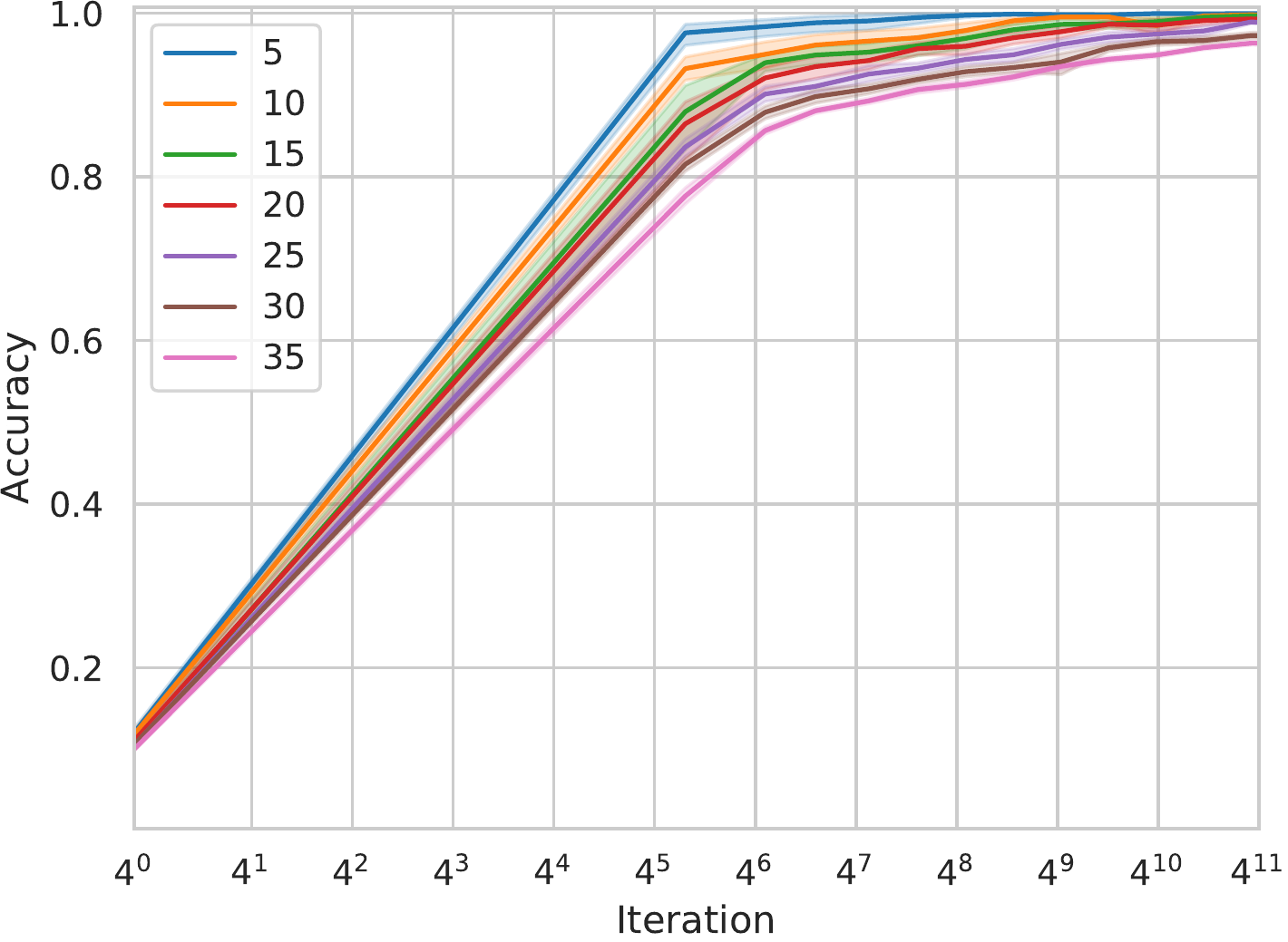}&
\includegraphics[width=.252\linewidth]{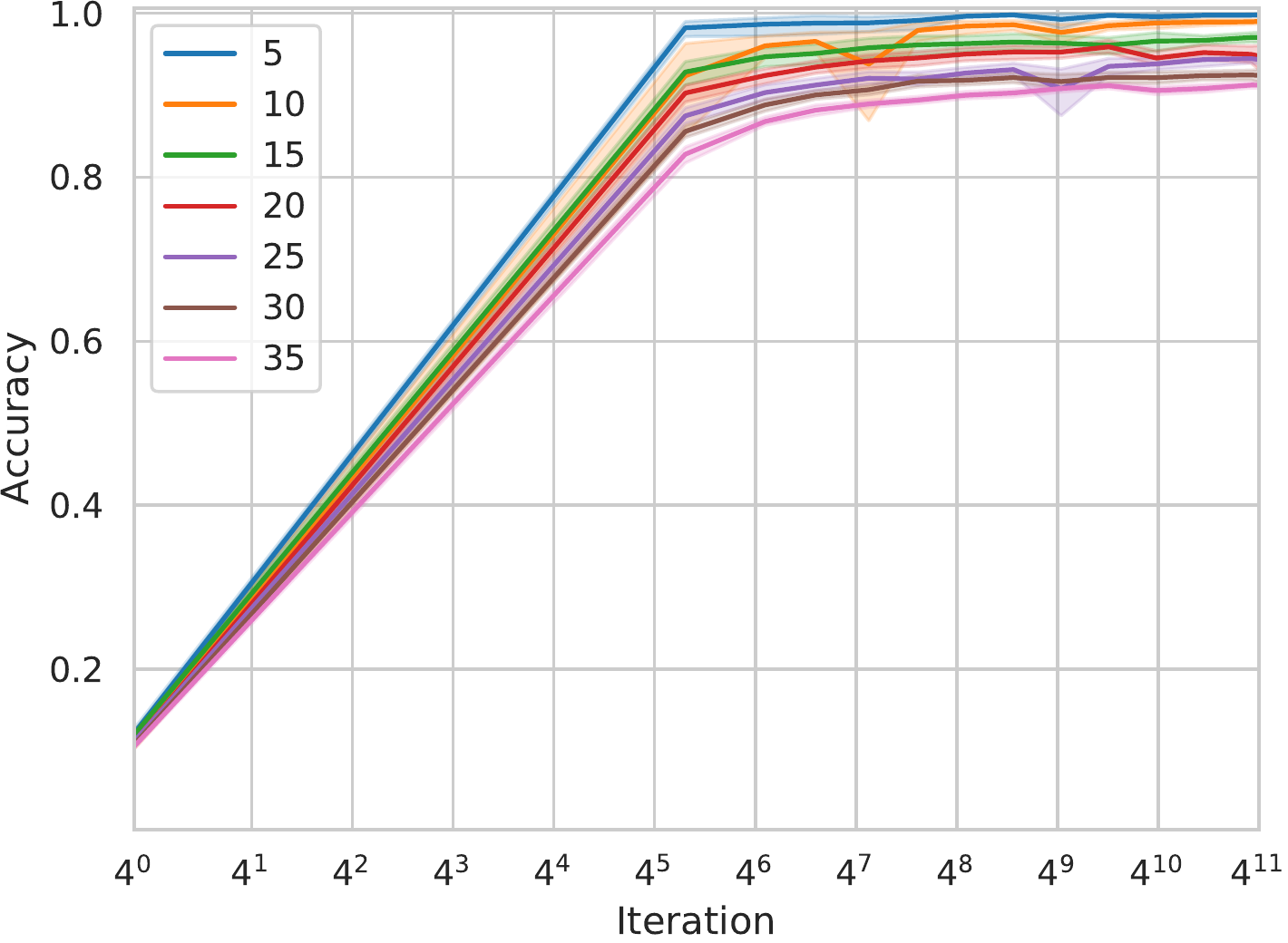}&
\includegraphics[width=.252\linewidth]{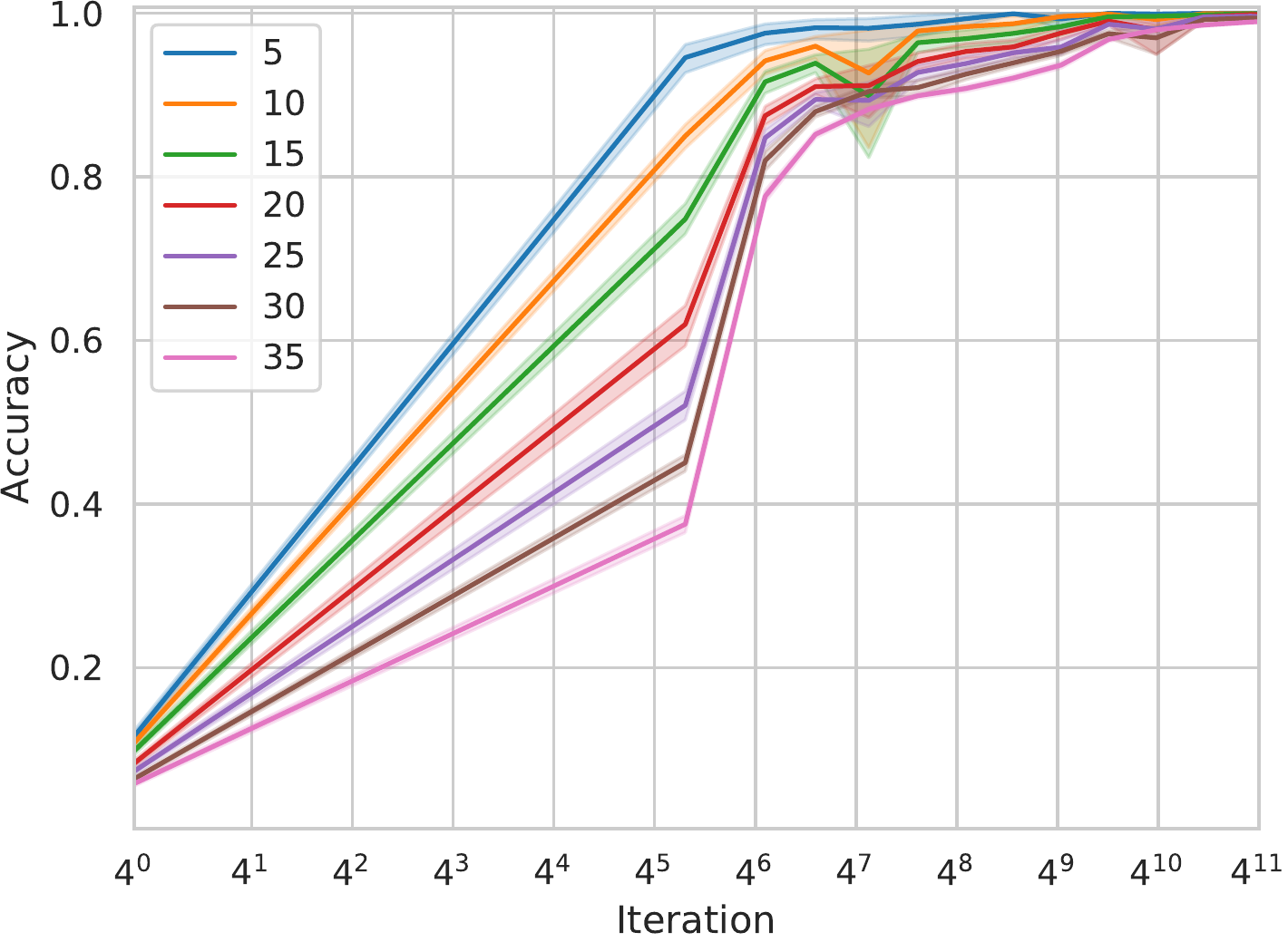}
\\
\multicolumn{4}{c}{{\bf (a)} Source classes, train data} \\[0.5em]
\includegraphics[width=.252\linewidth]{figures/multiclass/emnist/variance_plots/relative_variance_test/lr=0.0625.pdf}&
\includegraphics[width=.252\linewidth]{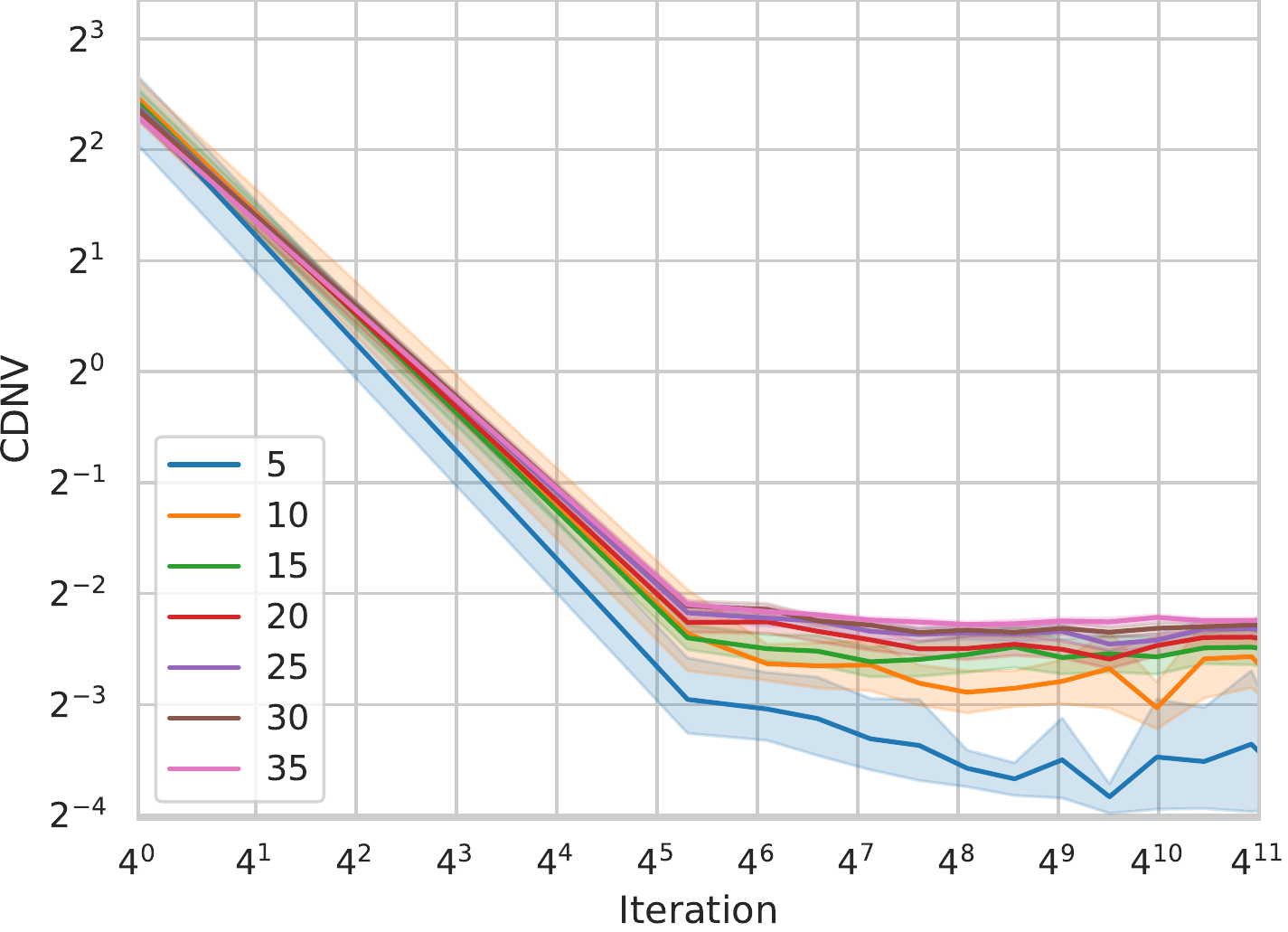}&
\includegraphics[width=.252\linewidth]{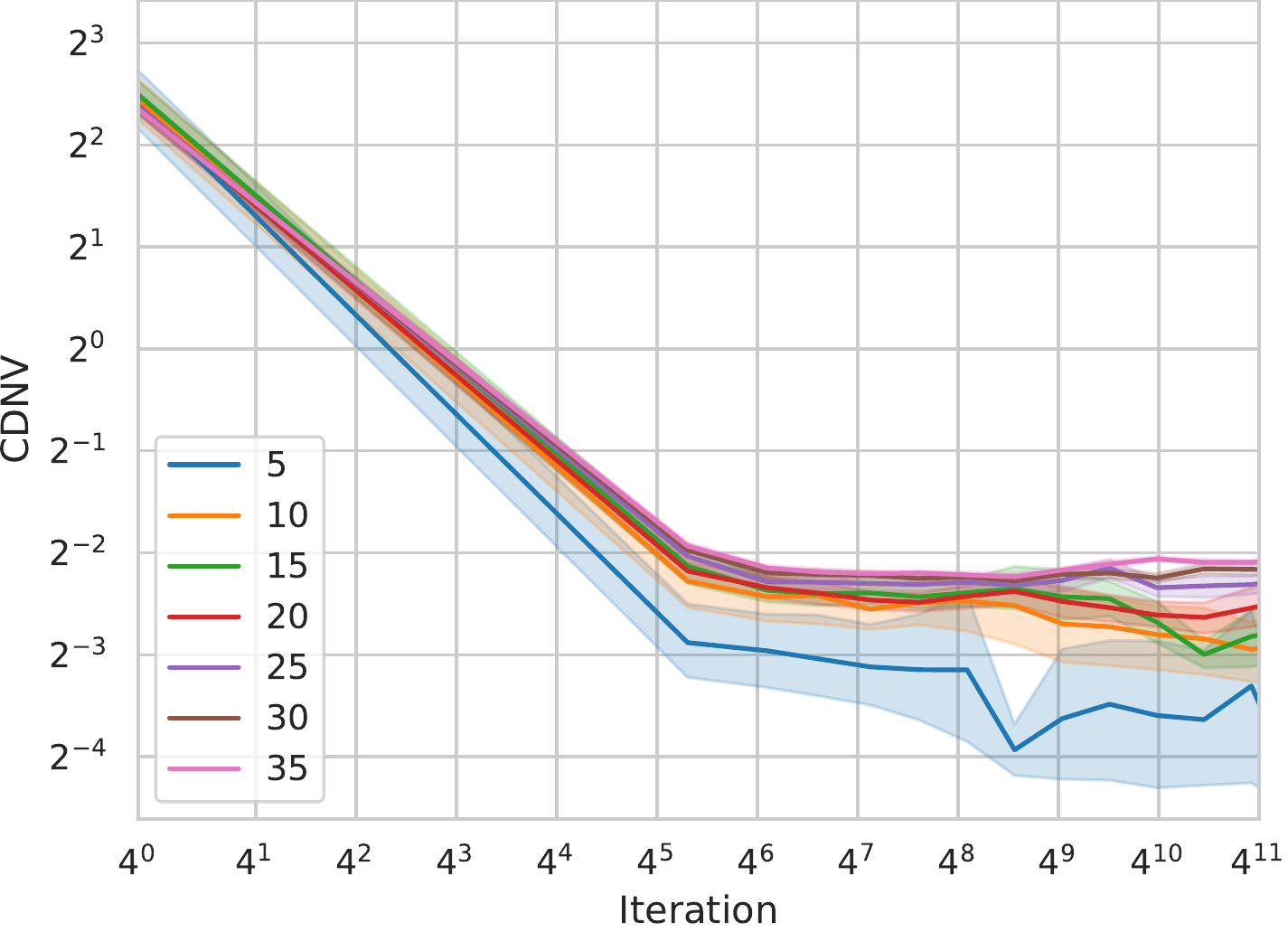}&
\includegraphics[width=.252\linewidth]{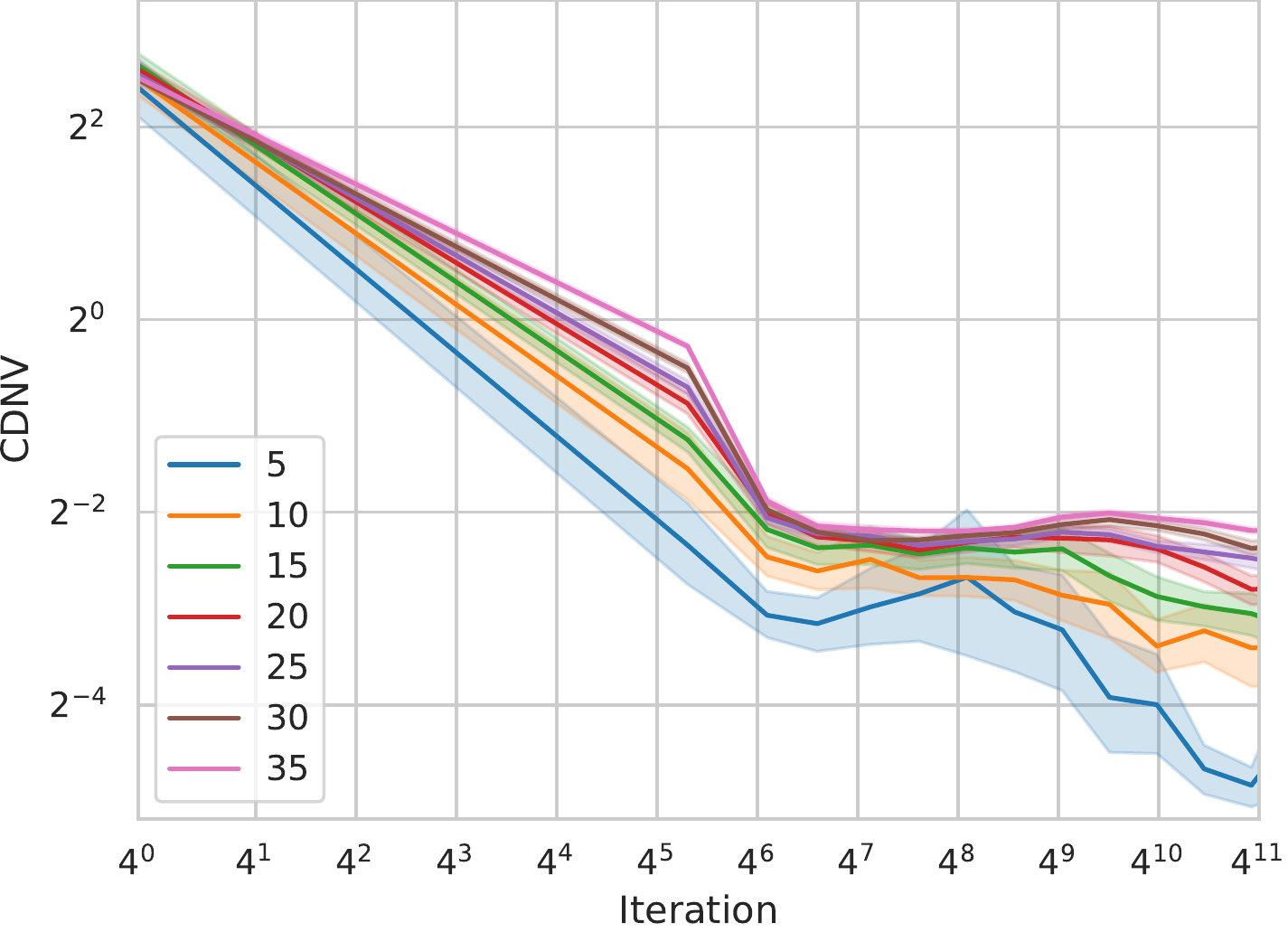}
\\
\includegraphics[width=.252\linewidth]{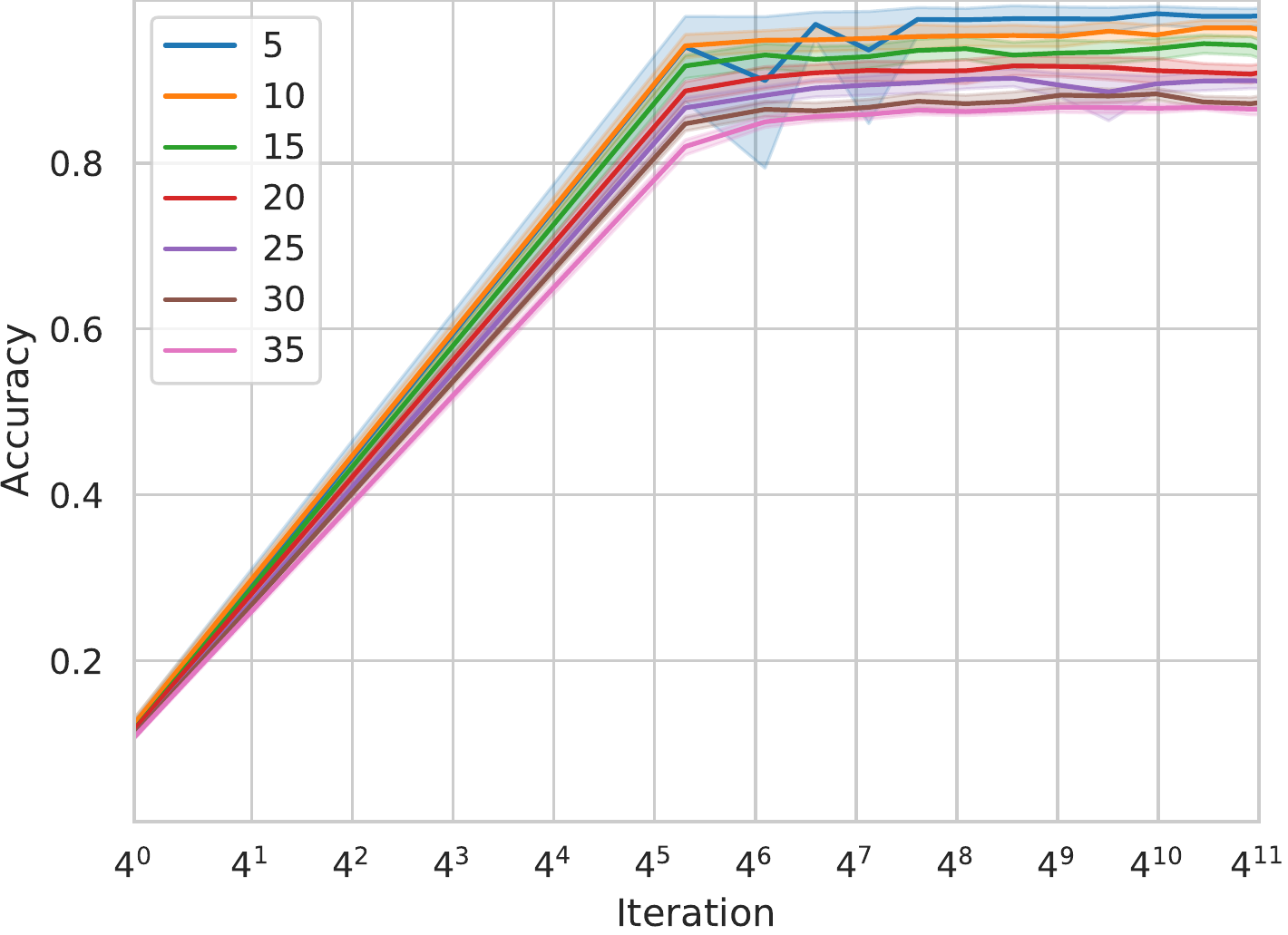}&
\includegraphics[width=.252\linewidth]{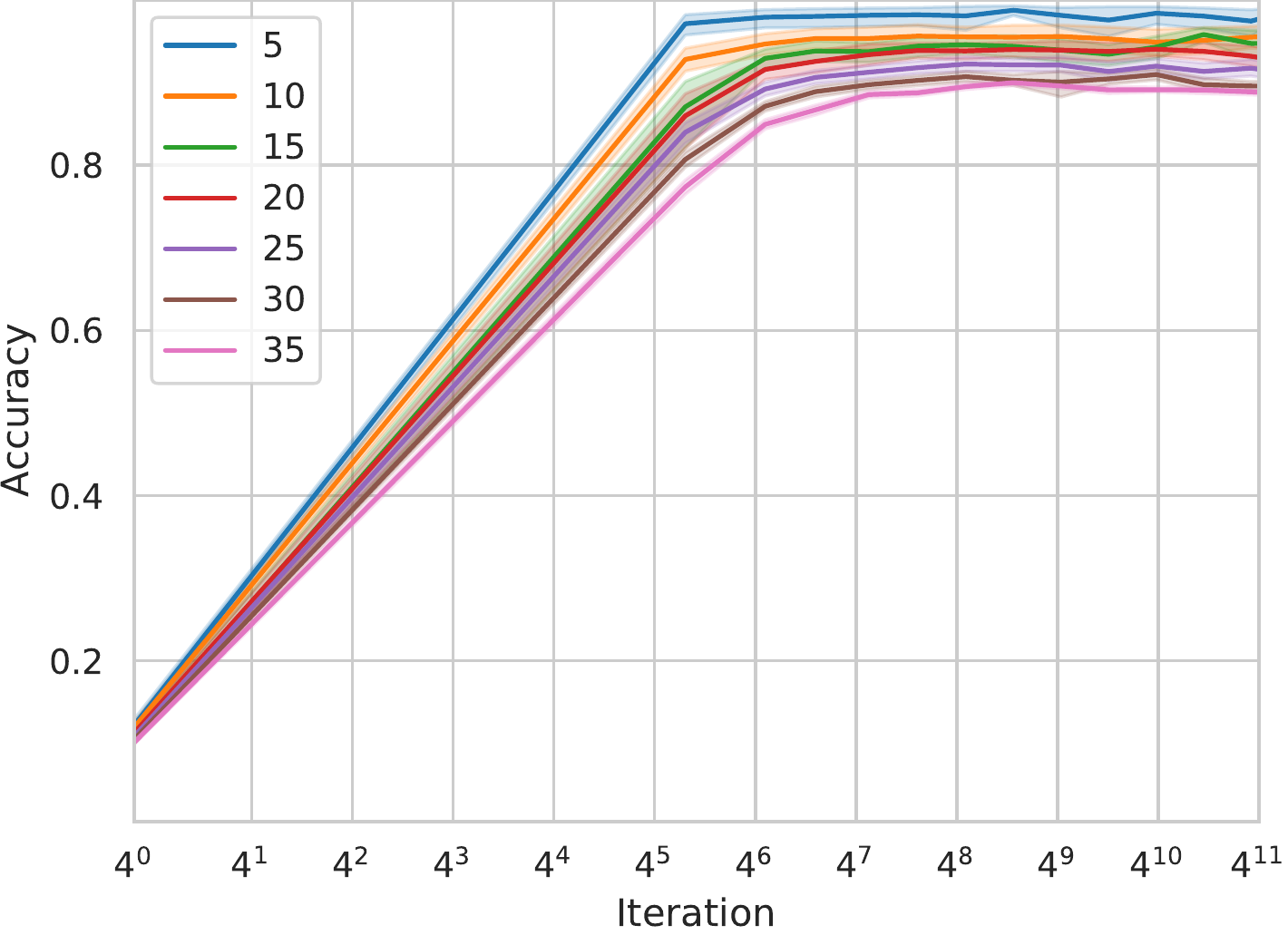}&
\includegraphics[width=.252\linewidth]{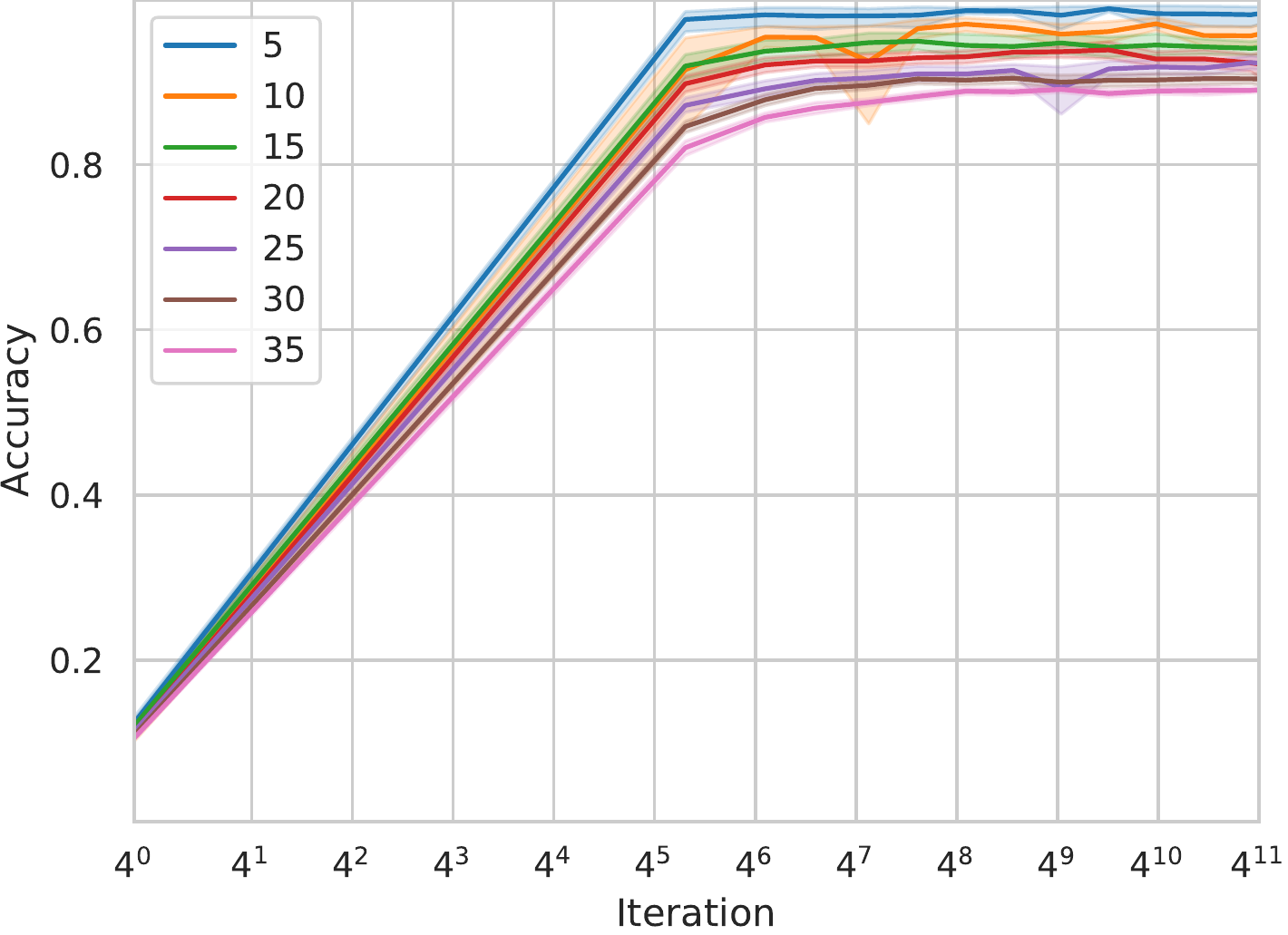}&
\includegraphics[width=.252\linewidth]{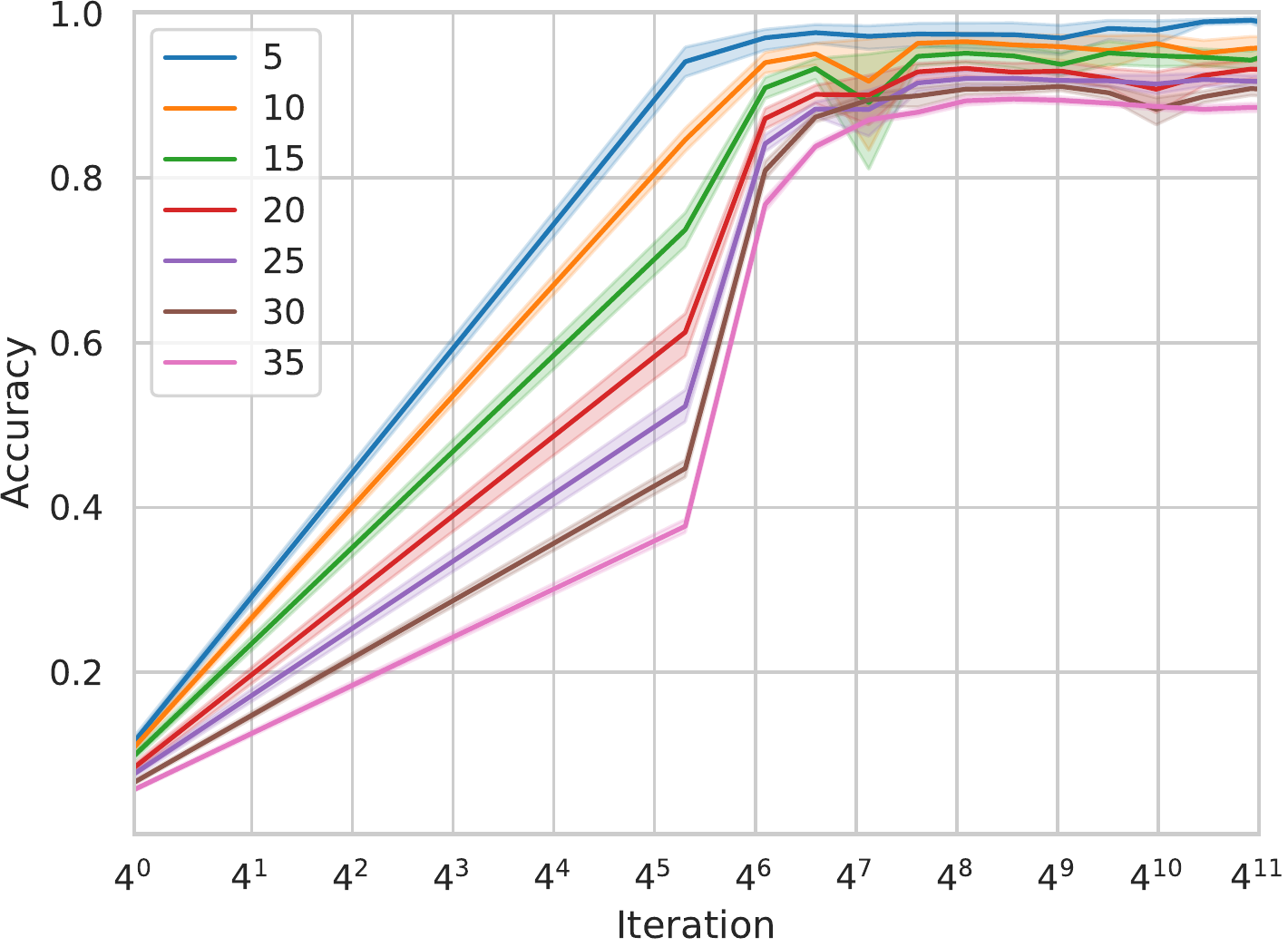}
\\
\multicolumn{4}{c}{{\bf (b)} Source classes, test data} \\[0.5em]
\includegraphics[width=.252\linewidth]{figures/multiclass/emnist/variance_plots/relative_variance_tr/lr=0.0625.pdf}&
\includegraphics[width=.252\linewidth]{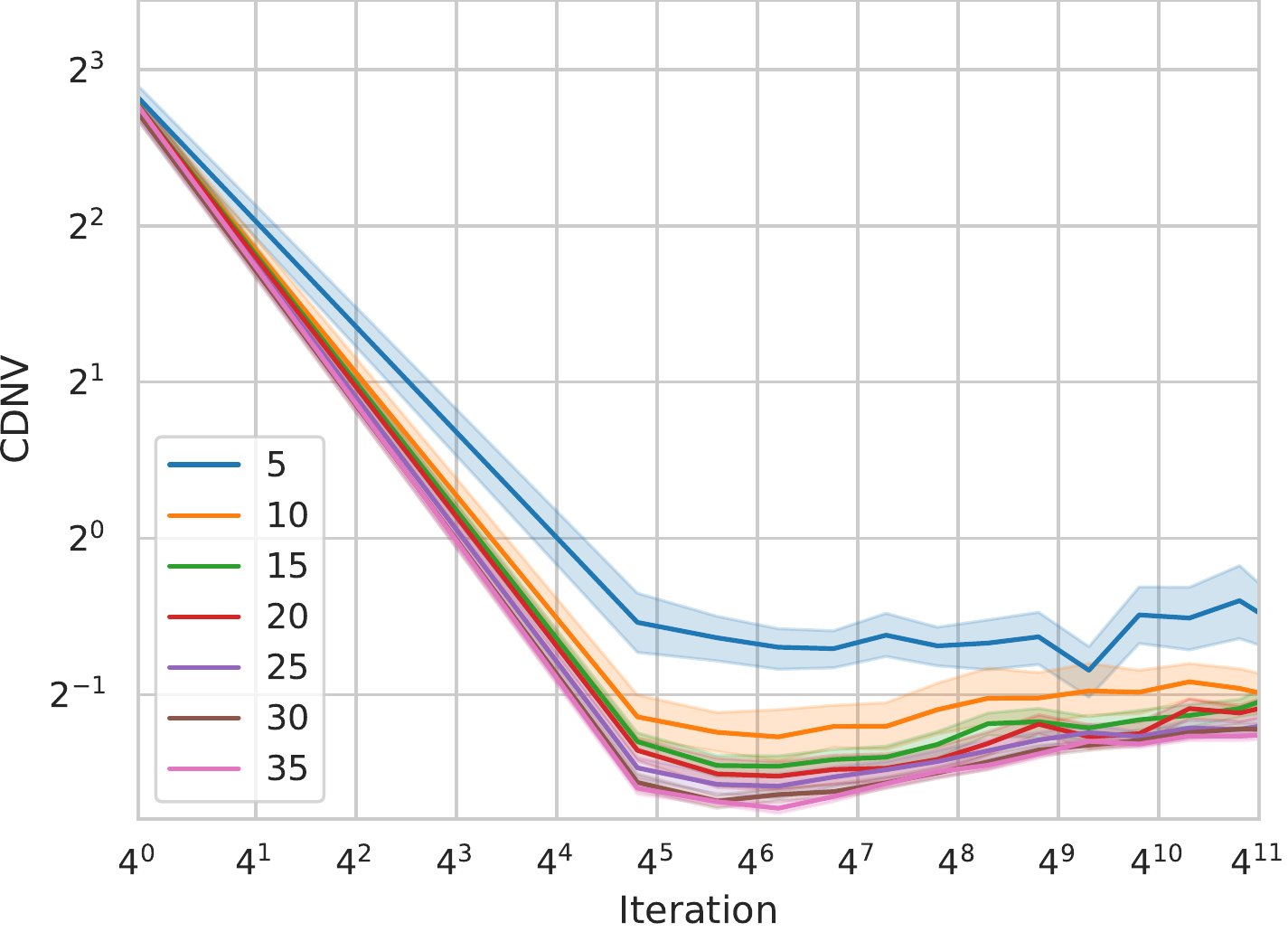}&
\includegraphics[width=.252\linewidth]{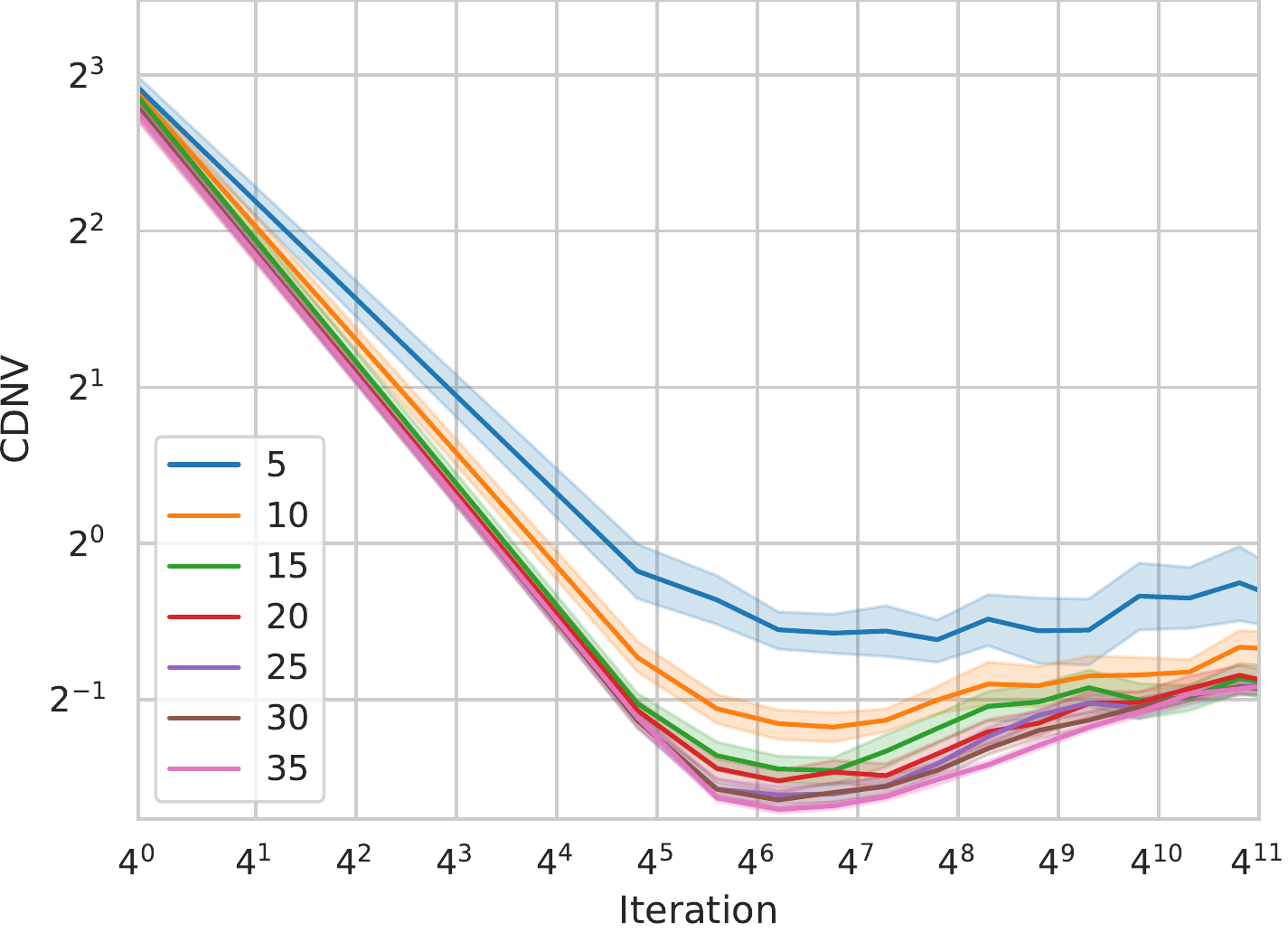}&
\includegraphics[width=.252\linewidth]{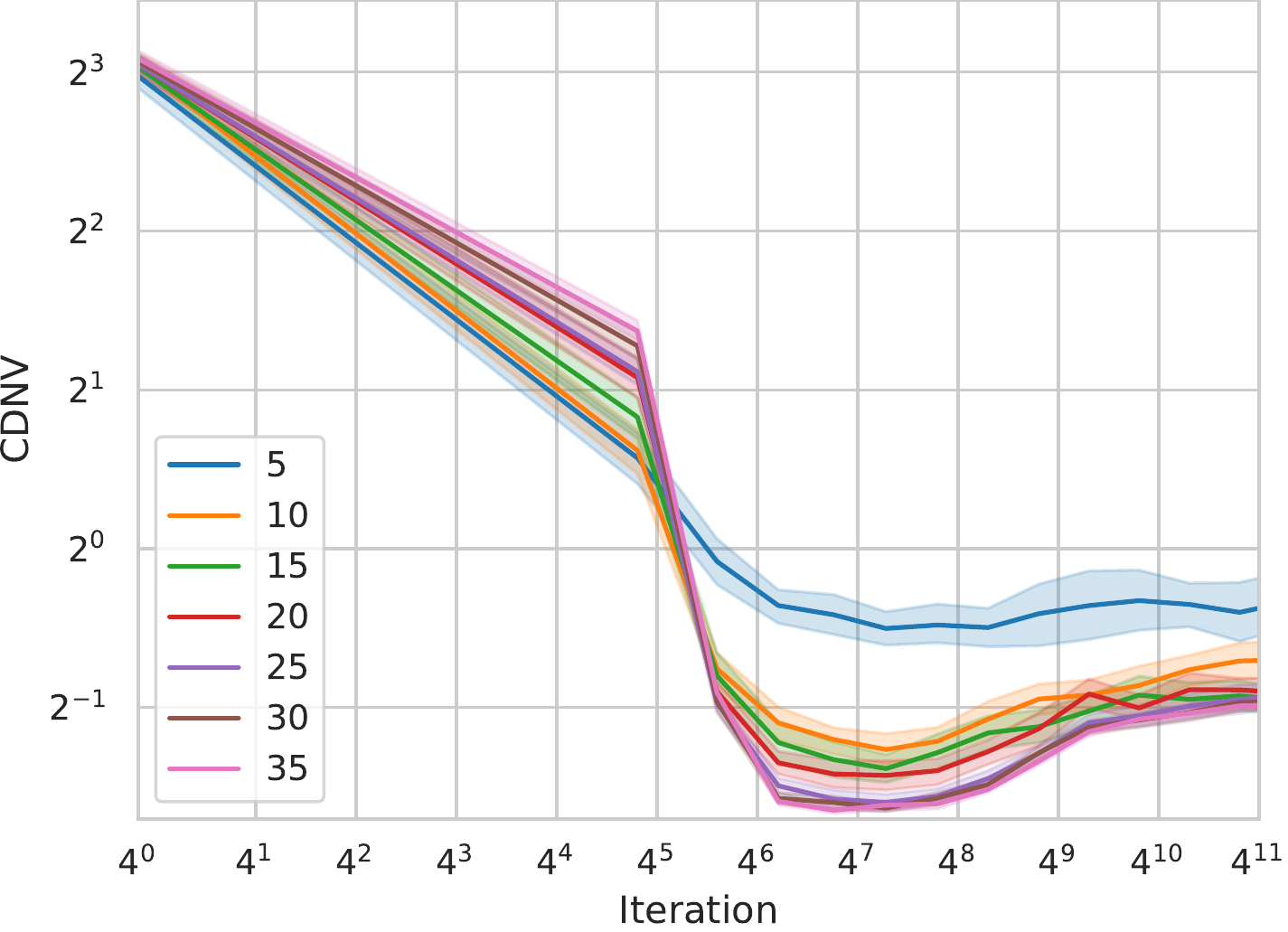}
\\
\includegraphics[width=.252\linewidth]{figures/multiclass/emnist/accuracy_plots/tr_accs/lr=0.0625.pdf}&
\includegraphics[width=.252\linewidth]{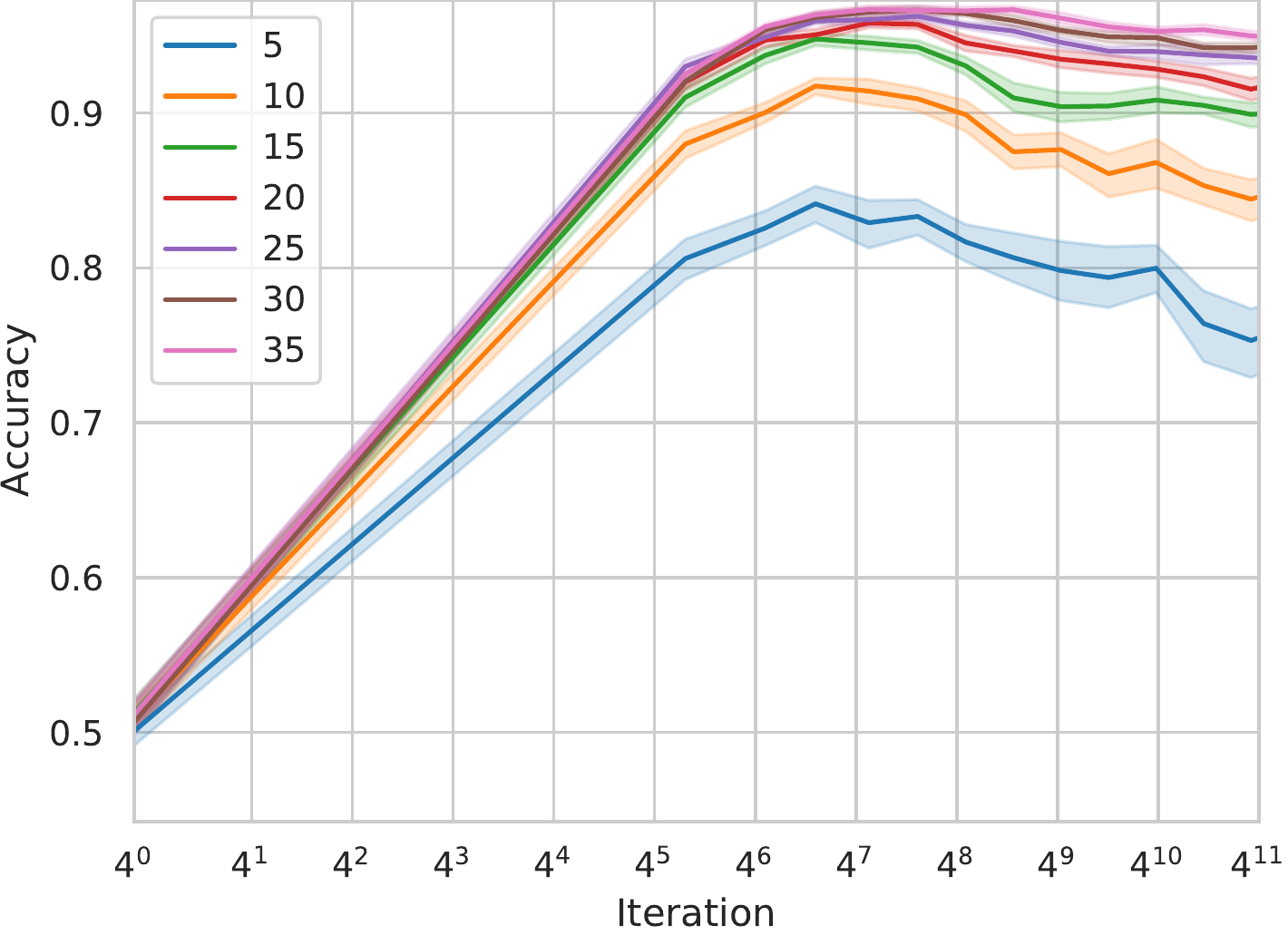}&
\includegraphics[width=.252\linewidth]{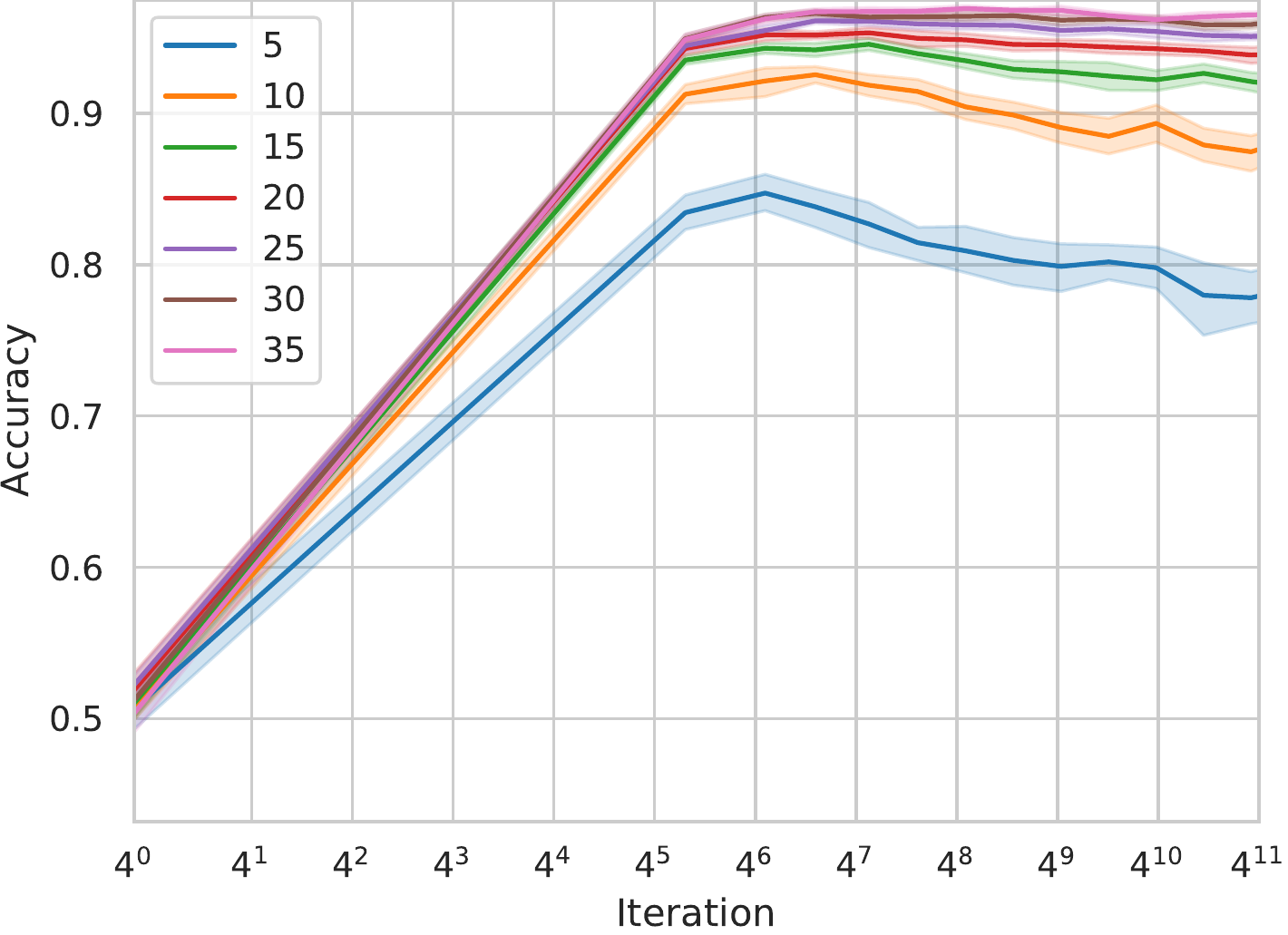}&
\includegraphics[width=.252\linewidth]{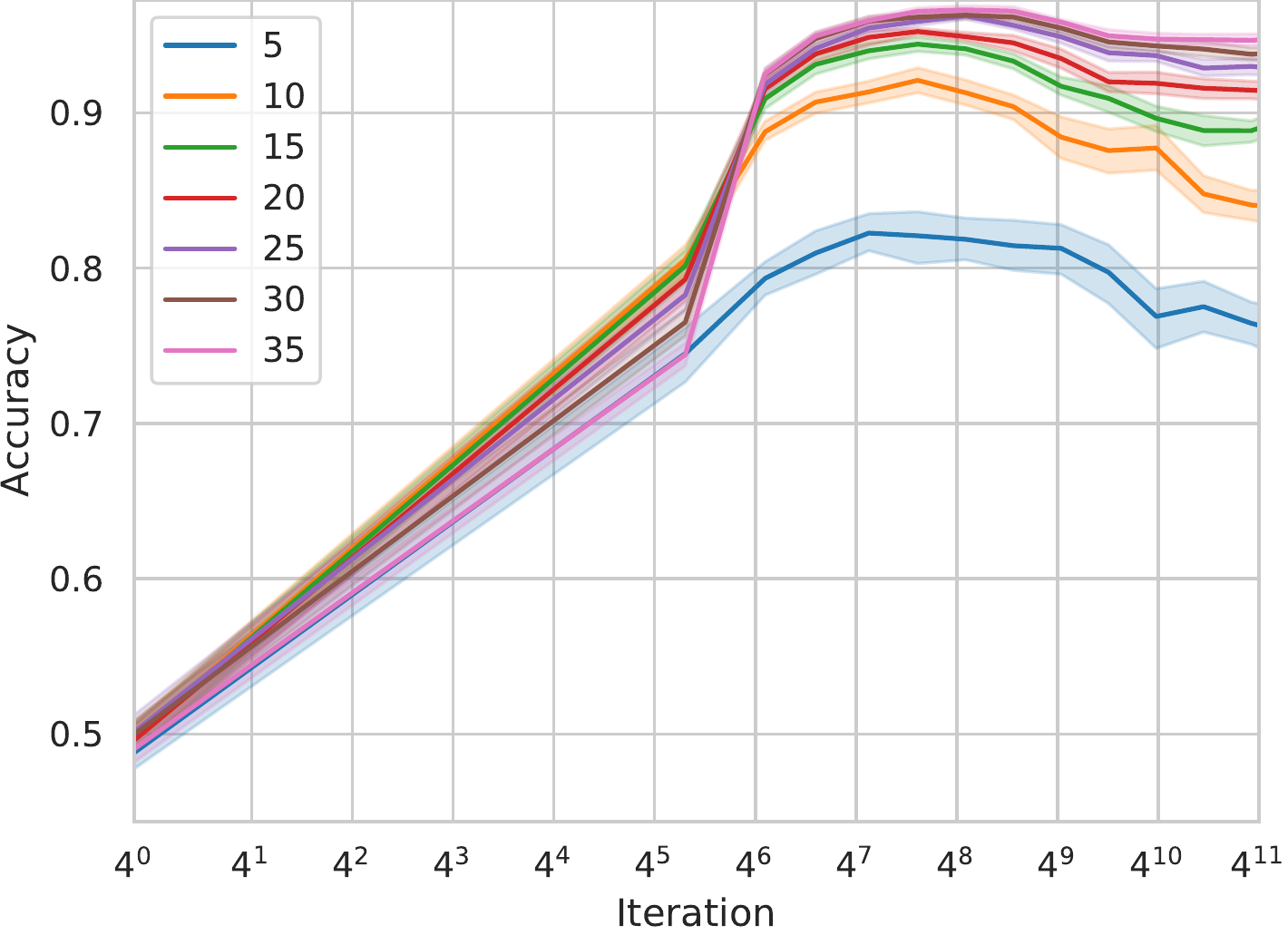}
\\
\multicolumn{4}{c}{{\bf (c)} Target classes, train data} \\[0.5em]
\end{tabular}
    \caption{{\bf Averaged class variance and accuracy rates when varying the number of source classes on EMNIST.} In {\bf (a)} we plot the CDNV and accuracy rates on the source training data (resp.), in {\bf (b)} on the source test data and in {\bf (c)} on the target test data. In each experiment we trained a WRN-28-4 with SGD on a set of $l\in \{5,10,20,30,40,50,60\}$ source classes (as indicated in the legend). The $i$'th column stands for the results when training with learning rate $\eta=2^{-2i-2}$. 
    } \label{fig:var_nc_emnist_lr}
\end{figure}

\subsection{Neural Collapse and Lower Layers}\label{sec:mid_layer}

We conducted a comparison between the behaviour of the penultimate layer (the top embedding layer) and a lower layer in the network, as the feature layer. We trained a WRN-28-4 on CIFAR-FS using the default hyperparameters as described in Section~\ref{sec:setup}, but in this experiment we used the second-to-last embedding layer of the network as the feature layer for few-shot learning.  A comparison of the behavior of this layer and the top embedding layer (that we use everywhere else in the paper) is shown in Figure~\ref{fig:mid_layer} in terms of the CDNV on the source training data, the source test data, and the new classes (the target data), as well as for the 5-shot 5-class classification accuracy. As can be seen, the few-shot performance of the top embedding layer is superior to the performance of the lower layer. This is in agreement with the evidently smaller values of the CDNV given by the top embedding layer in comparison to the second-to-last embedding layer in all three cases (i.e., source train and test data and the target data). Nevertheless, we can see that the neural collapse phenomenon and the associated good few-shot performance is preserved in this lower layer of the network, however, less pronounced. 

\subsection{Dynamics of the Class-Embedding Distances}\label{sec:distances}

In Section~\ref{sec:generalization_test} we argued that the generalization bound in Proposition~\ref{prop:generalizationInner} is meaningful when the minimal distances $\min_{i\neq j}\|\mu_f(\tS_i) - \mu_f(\tS_j)\|^2$ (between the empirical class means) and $\min_{i\neq j}\|\mu_f(\tP_i) - \mu_f(\tP_j)\|^2$ (between the true class means) are not too small.
Therefore, we empirically investigated their dynamics during training in our standard setting (WRN-28-4 with the default hyperparameters, see Section~\ref{sec:setup}) on CIFAR-FS, considering a varying number $l\in \{5,10,20,30,40,50,60\}$ of source classes and learning rates $\eta \in \{2^{-2i-2}\}^{4}_{i=1}$. As can be seen in Figure~\ref{fig:dist_fs}, the values of $\min_{i\neq j}\|\mu_f(\tS_i) - \mu_f(\tS_j)\|^2$ and $\min_{i\neq j}\|\mu_f(\tP_i) - \mu_f(\tP_j)\|^2$ tend to increase during training. For completeness, we also plotted the values of $\min_{i\neq j}\|\mu_f(P_i) - \mu_f(P_j)\|^2$ for the target classes $\{P_c\}^{k}_{c=1}$ ($k=20$ for CIFAR-FS). Interestingly, $\min_{i\neq j}\|\mu_f(P_i) - \mu_f(P_j)\|^2$ tends to grow with the number of source classes, showing improved generalization to new classes.

\subsection{Class-Covariance Normalized Variance}
\label{sec:ccnv}

In this section we consider the formal definition of \citet{Papyan24652} for neural collapse (their first definition), which uses a normalized variance definition somewhat different from CDNV we used in our analysis. For completeness, we present some experiments which demonstrate that our findings based on CDNV also apply to their normalization.

To give the formal definition, consider a training set $\tS=\bigcup^{l}_{c=1} \tS_c = \bigcup^{l}_{c=1} \{(\tx_{cj},c)\}^{m_c}_{j=1}$ for an $l$-class classification problem, where $\tS_c$ is a collection of $m_c$ samples from class $c$. Assume that a neural network is trained to classify the samples in $\tS$, where after $t$ steps of the training we obtain the classifier $\tih_t = \tig_t \circ f_t$, where, as before, $\tig_t$ denotes the linear map of the last layer of the network, and $f_t$ is the feature map. 

{\bf Class-Covariance Normalized Variance.\enspace} 
\citet{Papyan24652} define the \emph{class-covariance normalized variance} (CCNV) to define neural collapse:
For a given feature map $f$, let $\tilde{\mu}^f_c = \mu_f(\tS_c)$ denote the mean feature value of class $c$, and let $\tilde{\mu}^f_G = \Avg^{l}_{c=1} [\tilde{\mu}^f_{c}]$ denote the global feature mean. The CCNV is defined to be $\Tr\left(\Sigma^f_W {\Sigma^f_B}^{\dagger}\right)$, where  $\Sigma^f_W$ and $\Sigma^f_B$ are the intra- and inter-class covariance matrices $\Sigma^f_W = \Avg_{\substack{c \in [l]\\ j \in [m_c]}} \left[(f(\tilde{x}_{cj}) - \tilde{\mu}^f_c)\cdot (f(\tilde{x}_{cj}) - \tilde{\mu}^f_c)^{\top} \right]$ and $\Sigma^f_B = \Avg_{c \in [l]} \left[(\tilde{\mu}^f_c - \tilde{\mu}^f_G)\cdot (\tilde{\mu}^f_c - \tilde{\mu}^f_G)^{\top} \right]$, where $A^\psinv$ stands for the Moore-Penrose inverse of a (square) matrix $A$. According to \citet{Papyan24652}, (their first version of) neural collapse happens if $\lim_{t \to \infty} \Tr\left(\Sigma^{f_t}_W {\Sigma^{f_t}_B}^{\dagger}\right) = 0$.

In Figures~\ref{fig:ccnv_cifar_fs} and~\ref{fig:ccnv_emnist} we plot the CCNV as a function of training iterations on the source training data, source test data and target classes. As can be seen, the value of the CCNV decreases on the train and test data of the source classes. In addition, the CCNV on the target classes decreases when training with a larger set of source classes, which is due to better generalization. In contrast, the CCNV on the source classes increases when training with a larger set of source classes, since it increases the complexity of the optimization problem. These results are qualitatively very similar to the once presented for CDNV, showing that both definitions of neural collapse behaves essentially the same way. The corresponding train/test/few-shot accuracy rates are provided in Figures~\ref{fig:var_nc_cifar_fs_lr}-\ref{fig:var_nc_emnist_lr}. In Figures~\ref{fig:results_Mini-ImageNet}-\ref{fig:results_cifar_fs} we provide the CDNV, CCNV and accuracy rates of training WRN-28-4 and Conv-28-4 on Mini-ImageNet and CIFAR-FS  (resp.) with a varying number of source classes.

\begin{figure}[ht]
    \centering
    \begin{tabular}{@{}c@{~}c@{~}c@{~}c}
\includegraphics[width=.252\linewidth]{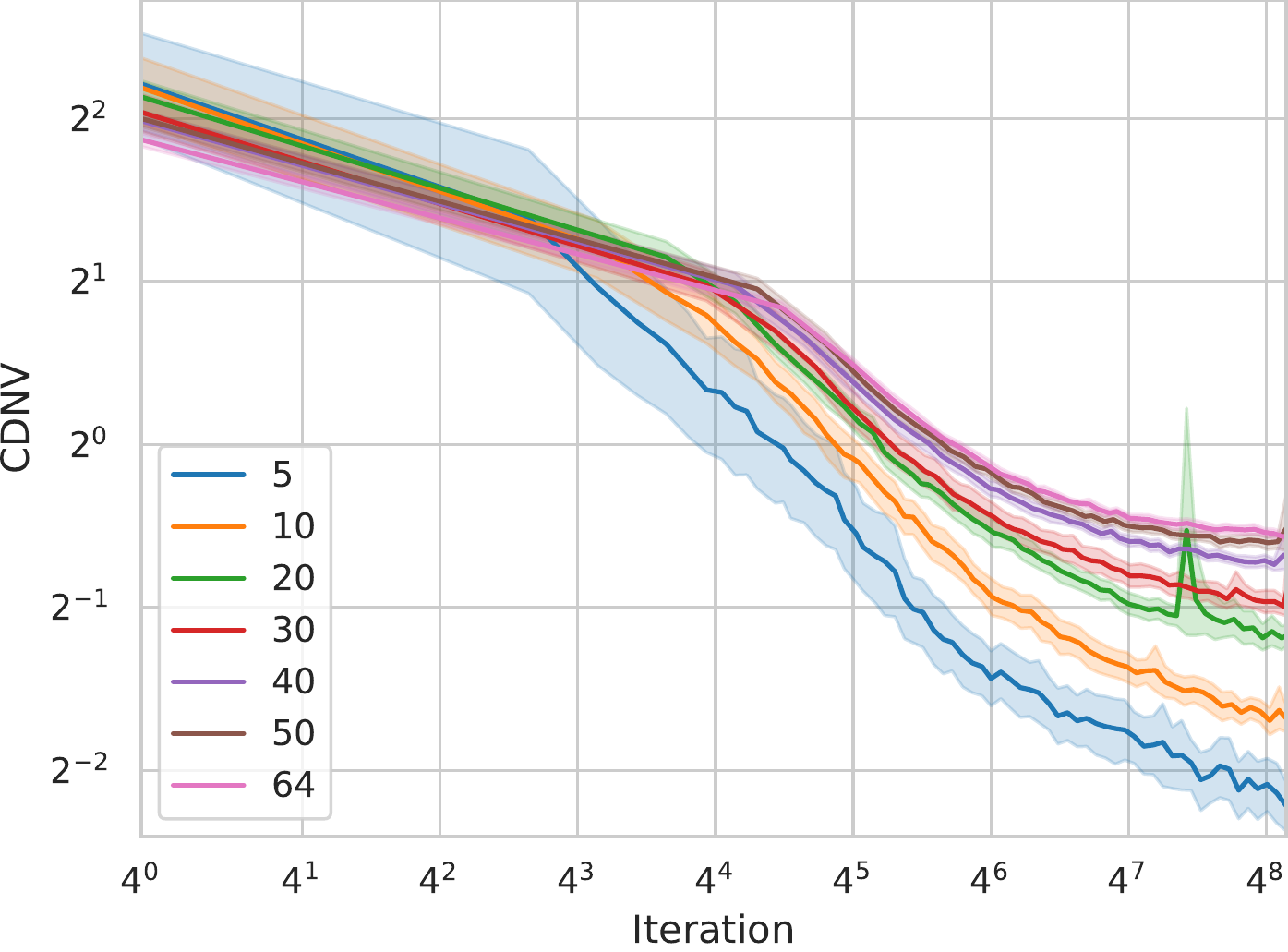}&
\includegraphics[width=.252\linewidth]{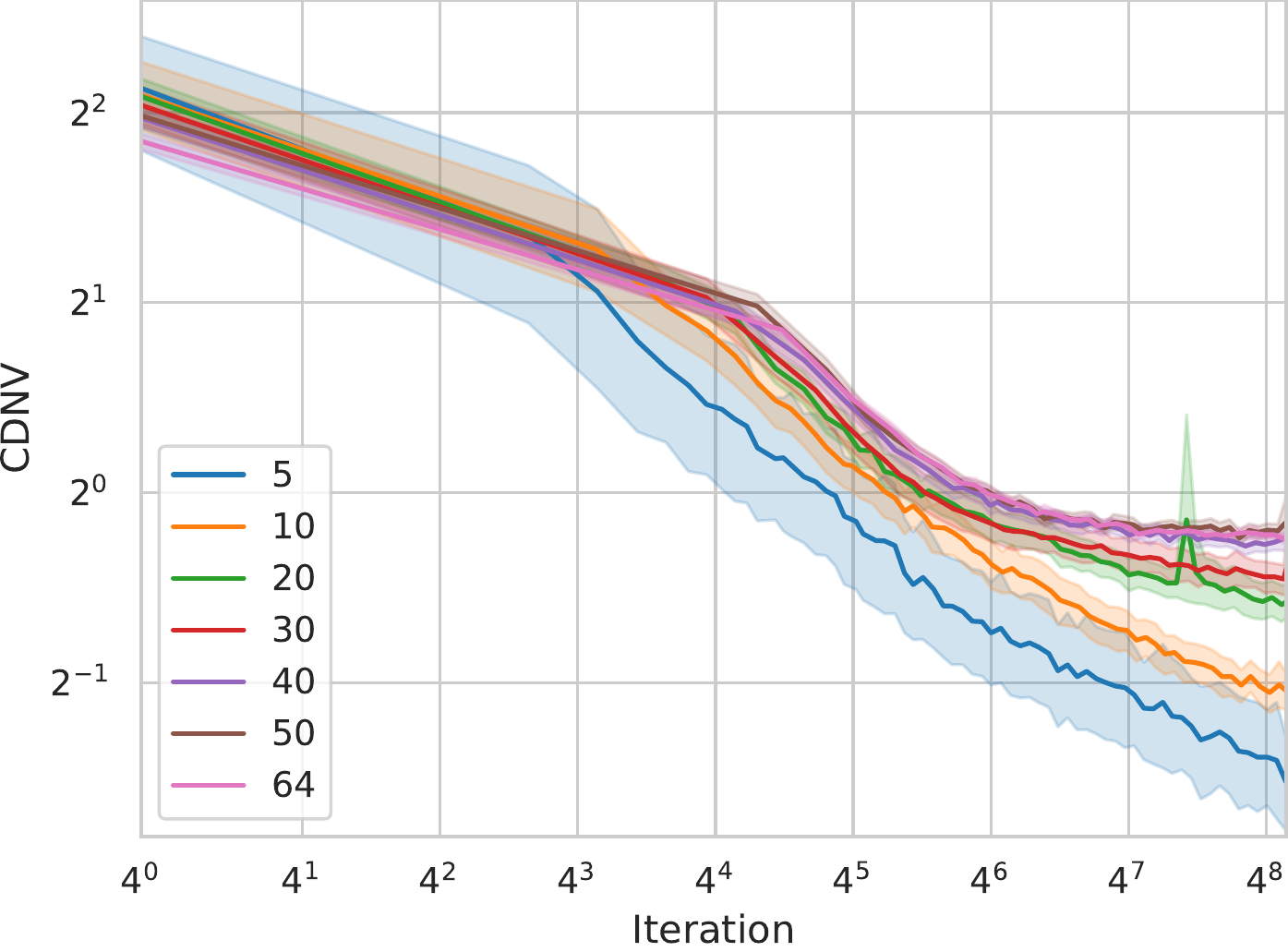}&
\includegraphics[width=.252\linewidth]{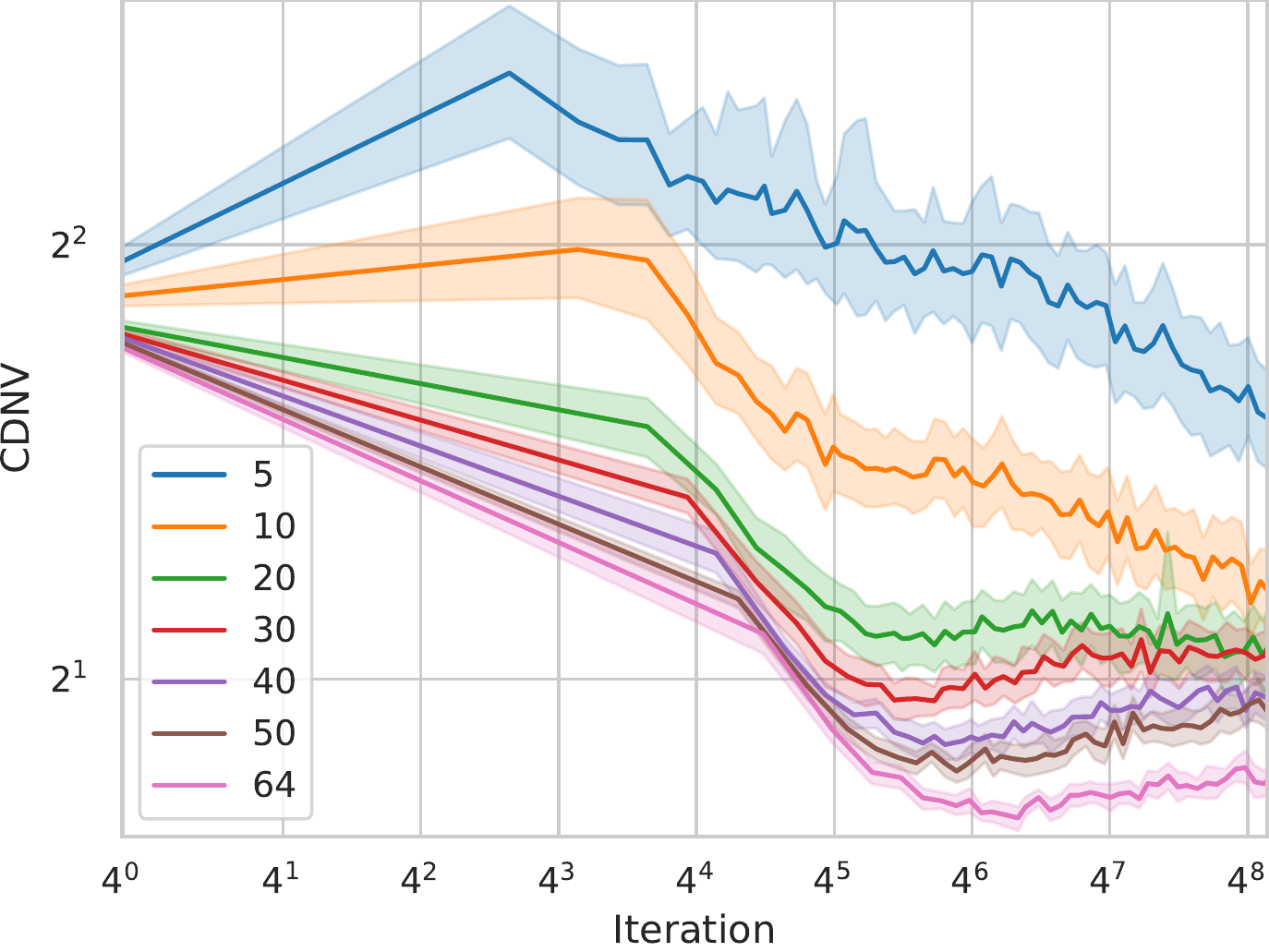}\\
{\bf (a)} CDNV, train & {\bf(b)} CDNV, test & {\bf(c)} CDNV, target \\ 
\includegraphics[width=.252\linewidth]{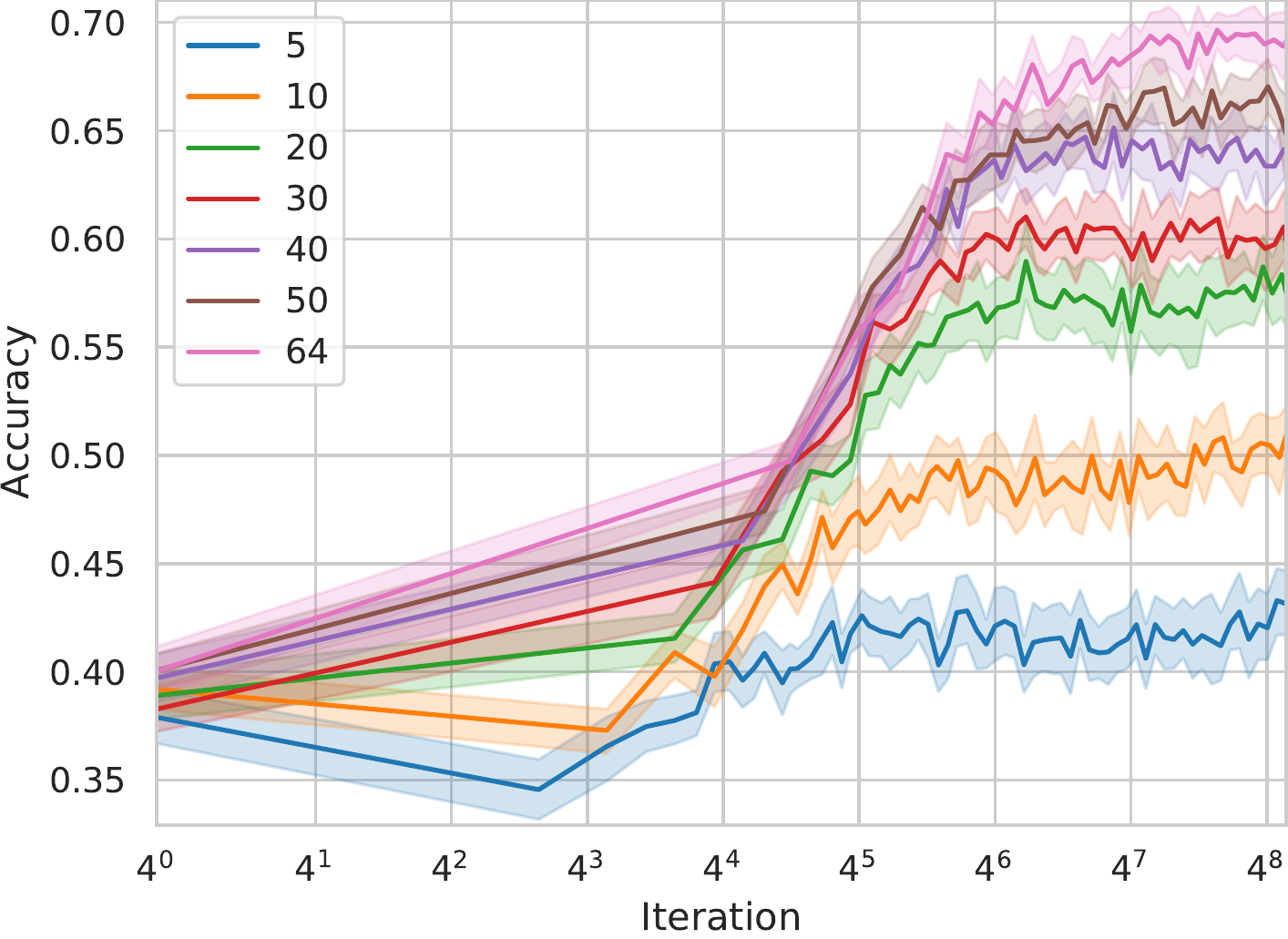}& 
\includegraphics[width=.252\linewidth]{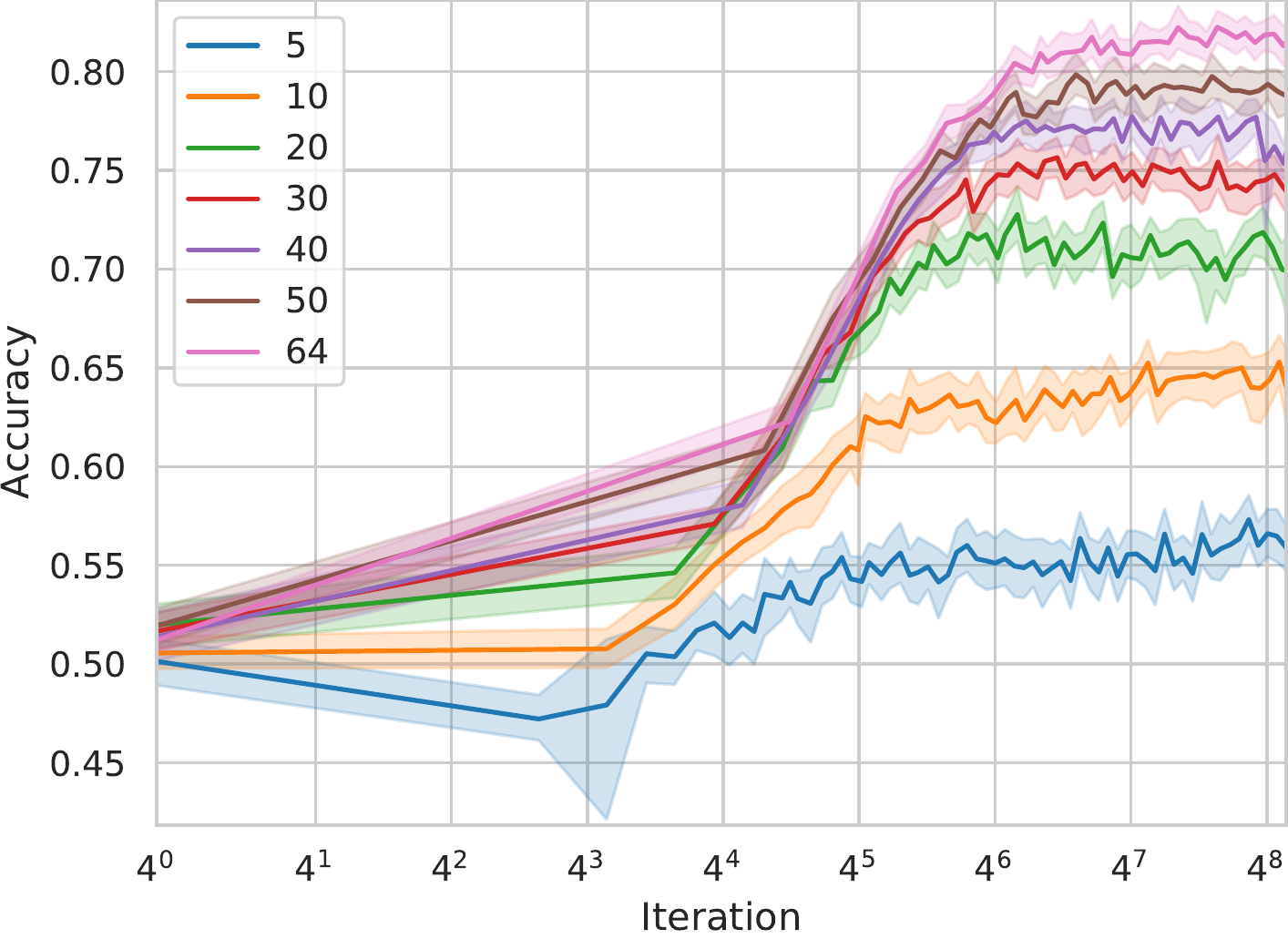}& 
\includegraphics[width=.252\linewidth]{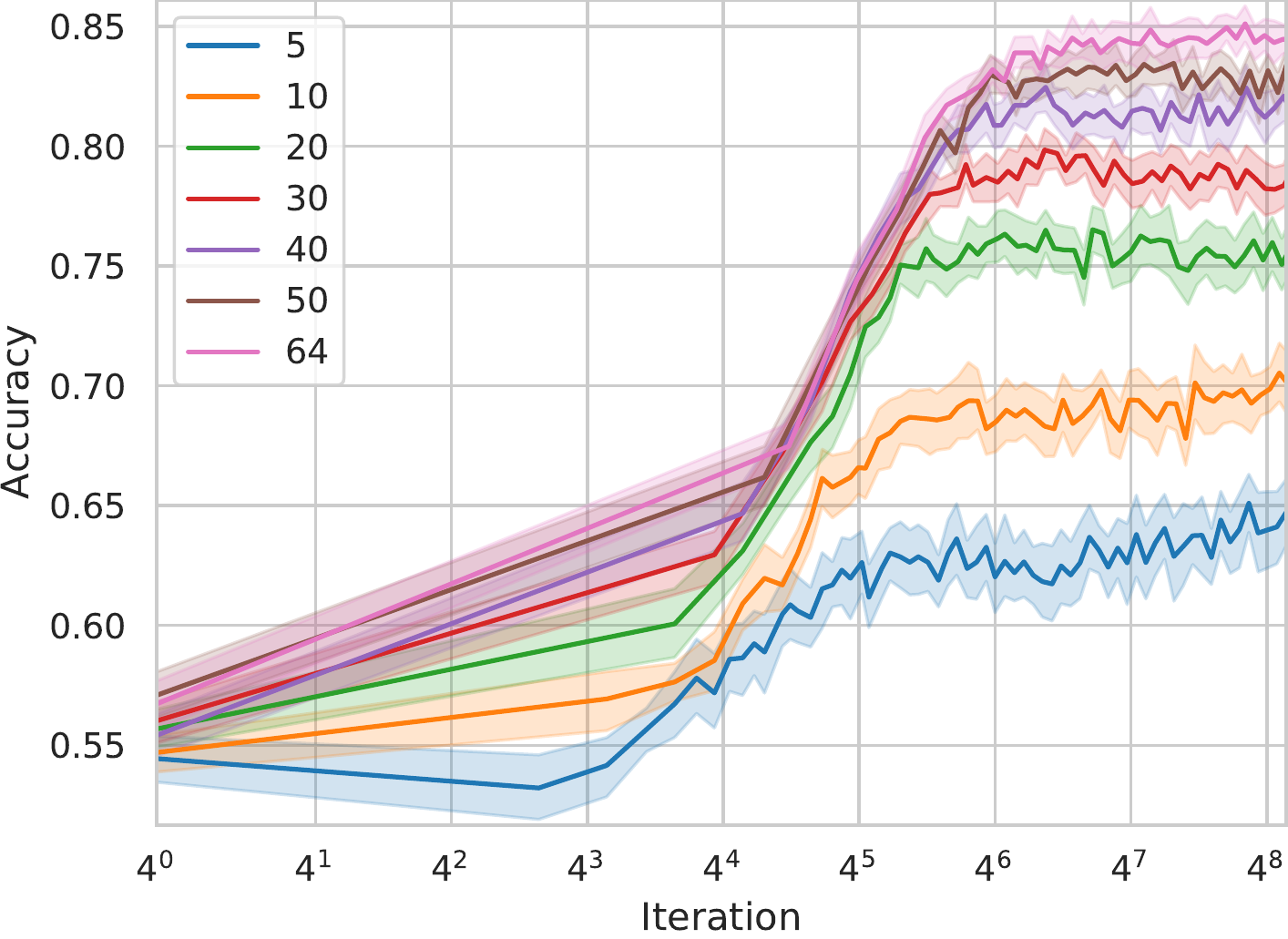}&
\includegraphics[width=.252\linewidth]{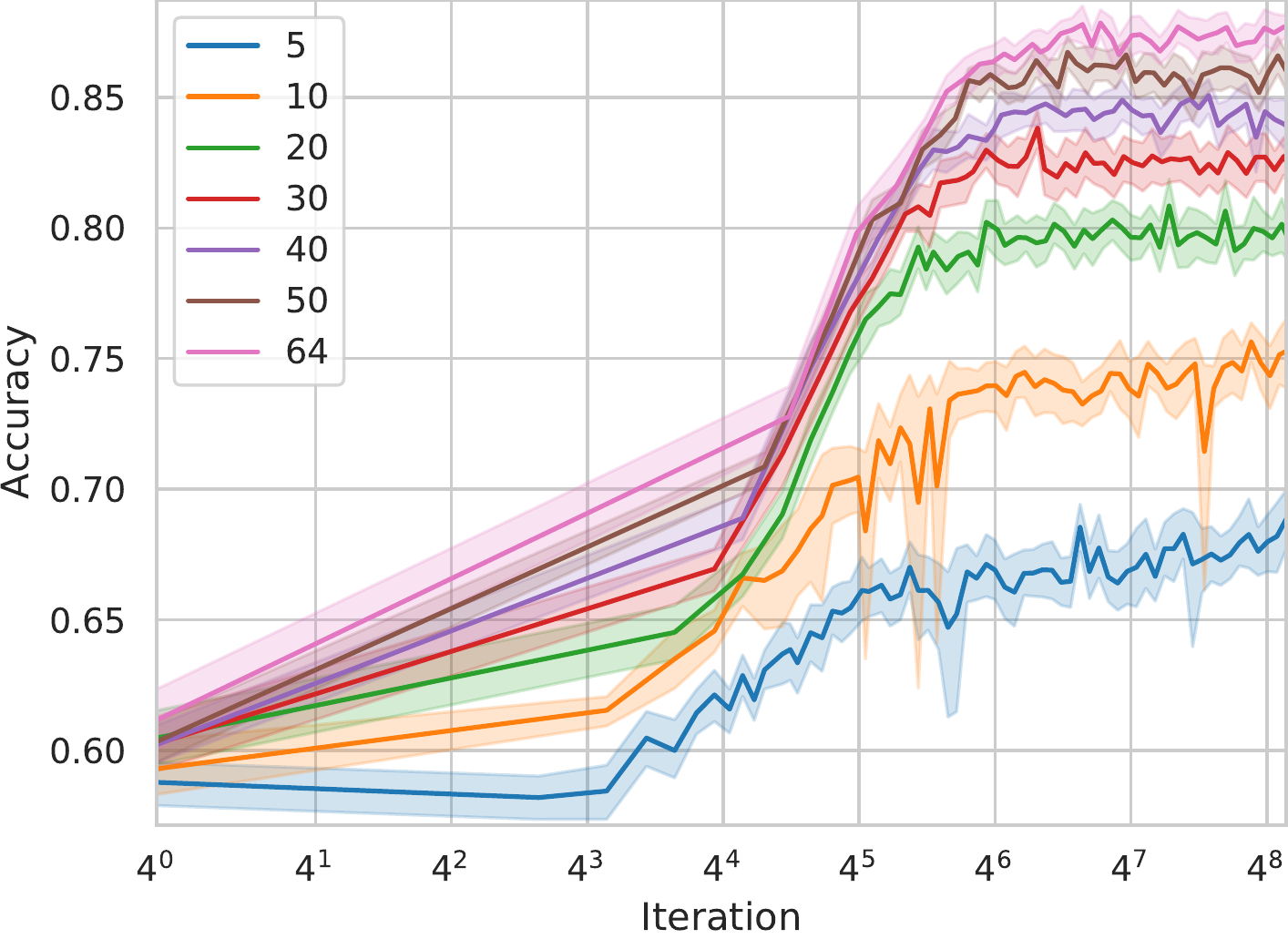}\\
{\bf(d)} 1-shot accuracy & {\bf(e)} 5-shot accuracy & {\bf(f)} 10-shot acc & {\bf(g)} 20-shot acc \\
\multicolumn{4}{c}{CIFAR-FS} \\[0.5em]
\includegraphics[width=.252\linewidth]{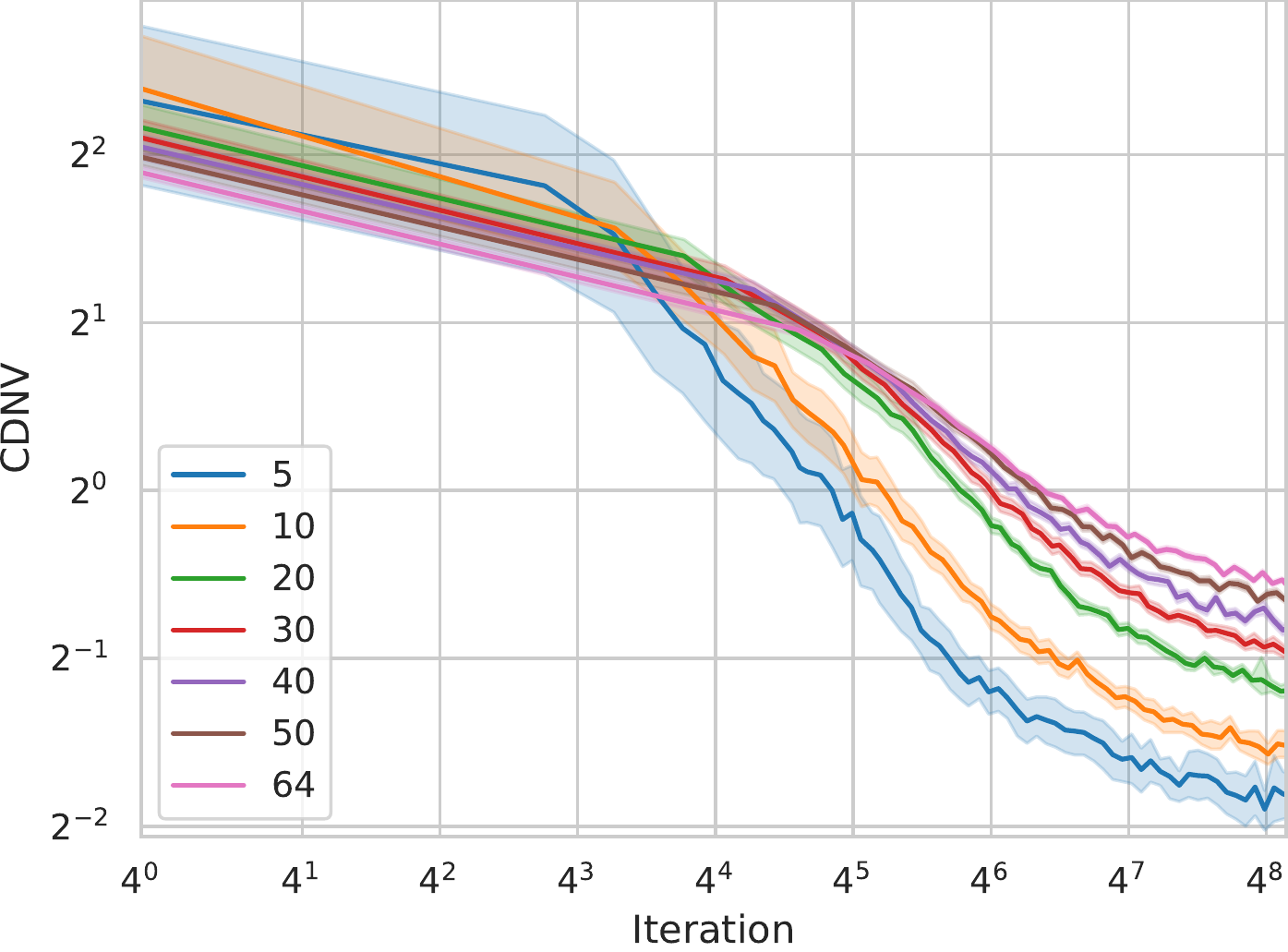}&
\includegraphics[width=.252\linewidth]{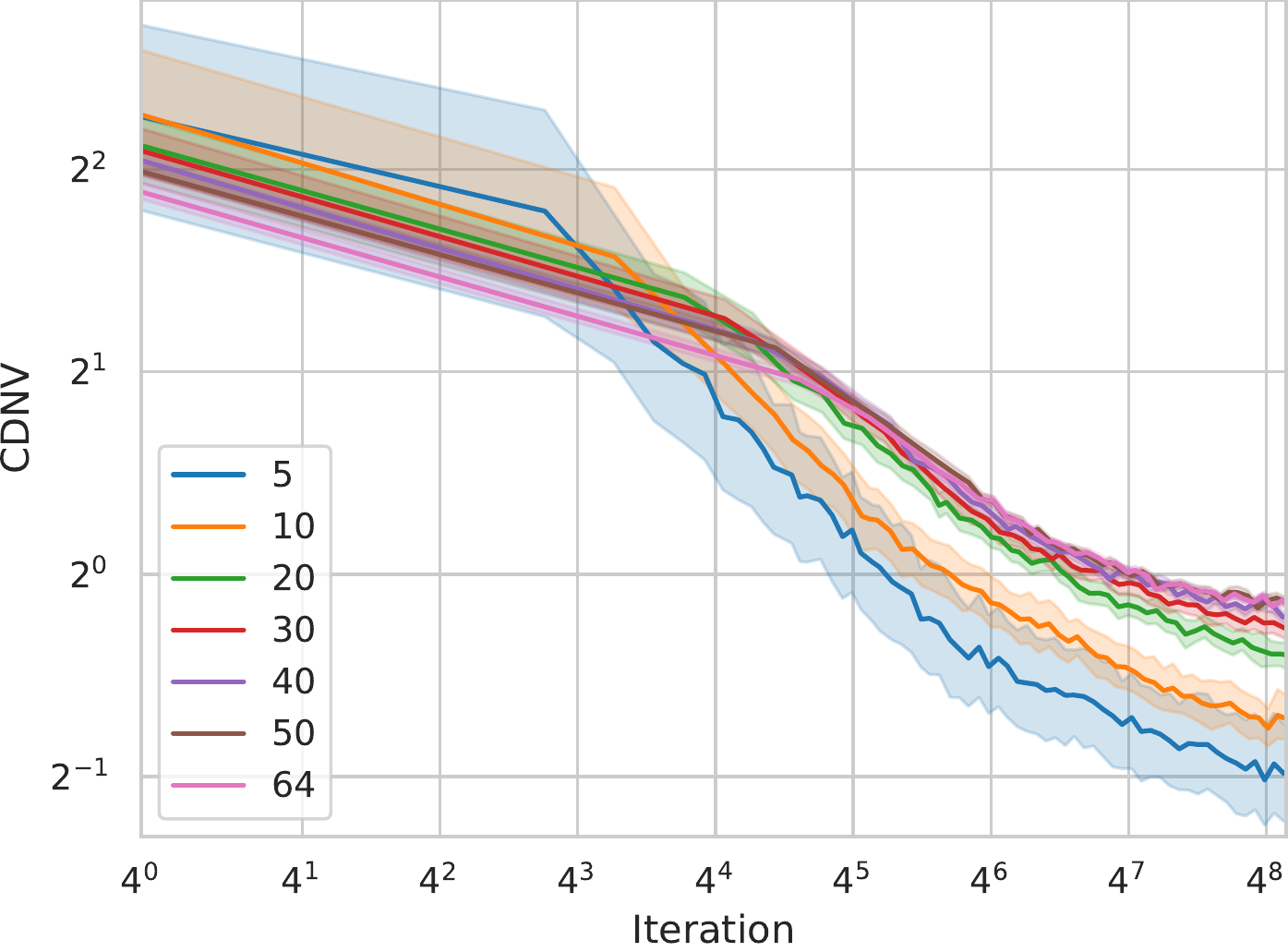}&
\includegraphics[width=.252\linewidth]{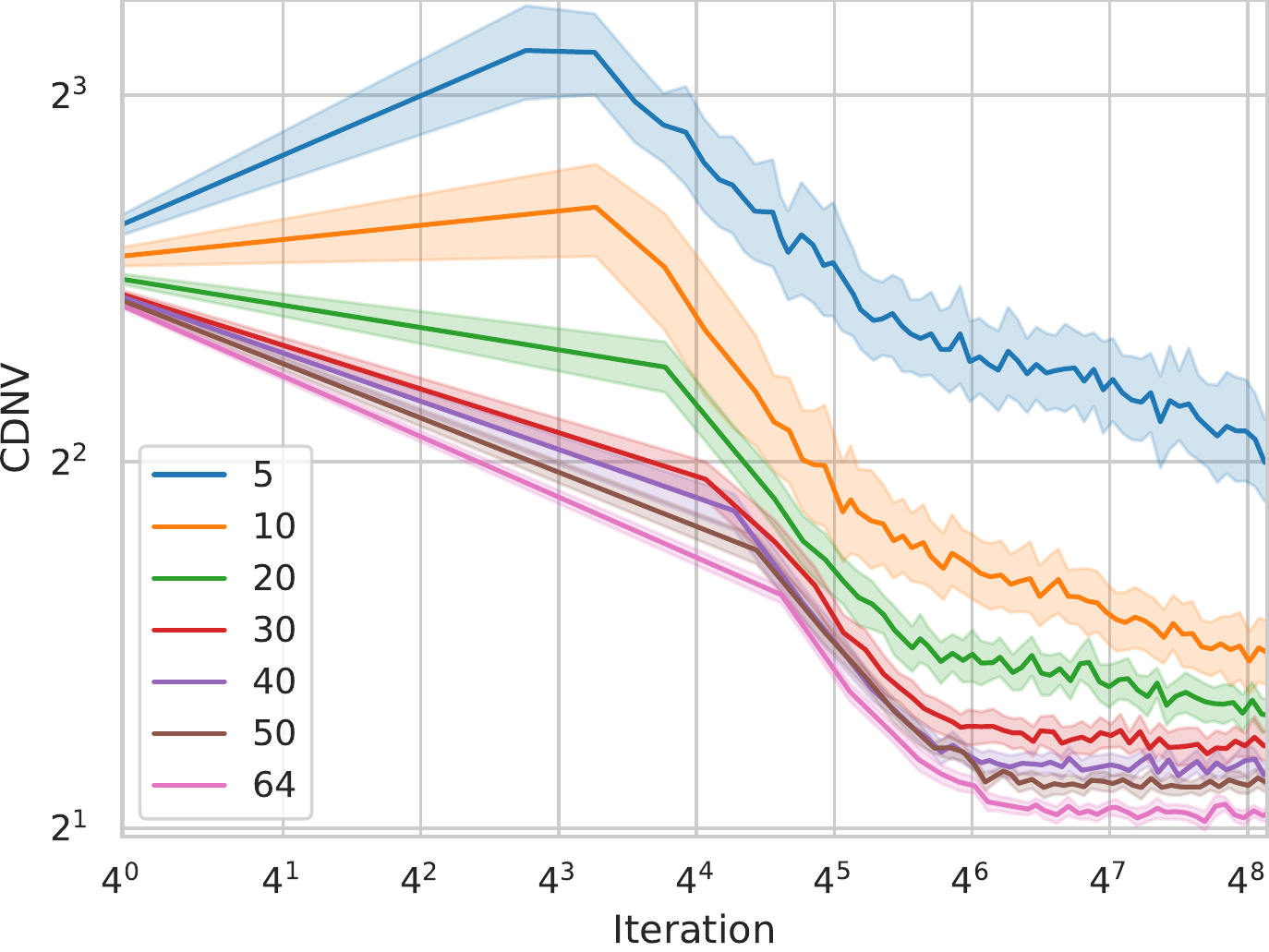}\\
{\bf (a)} CDNV, train & {\bf(b)} CDNV, test & {\bf(c)} CDNV, target \\
\includegraphics[width=.252\linewidth]{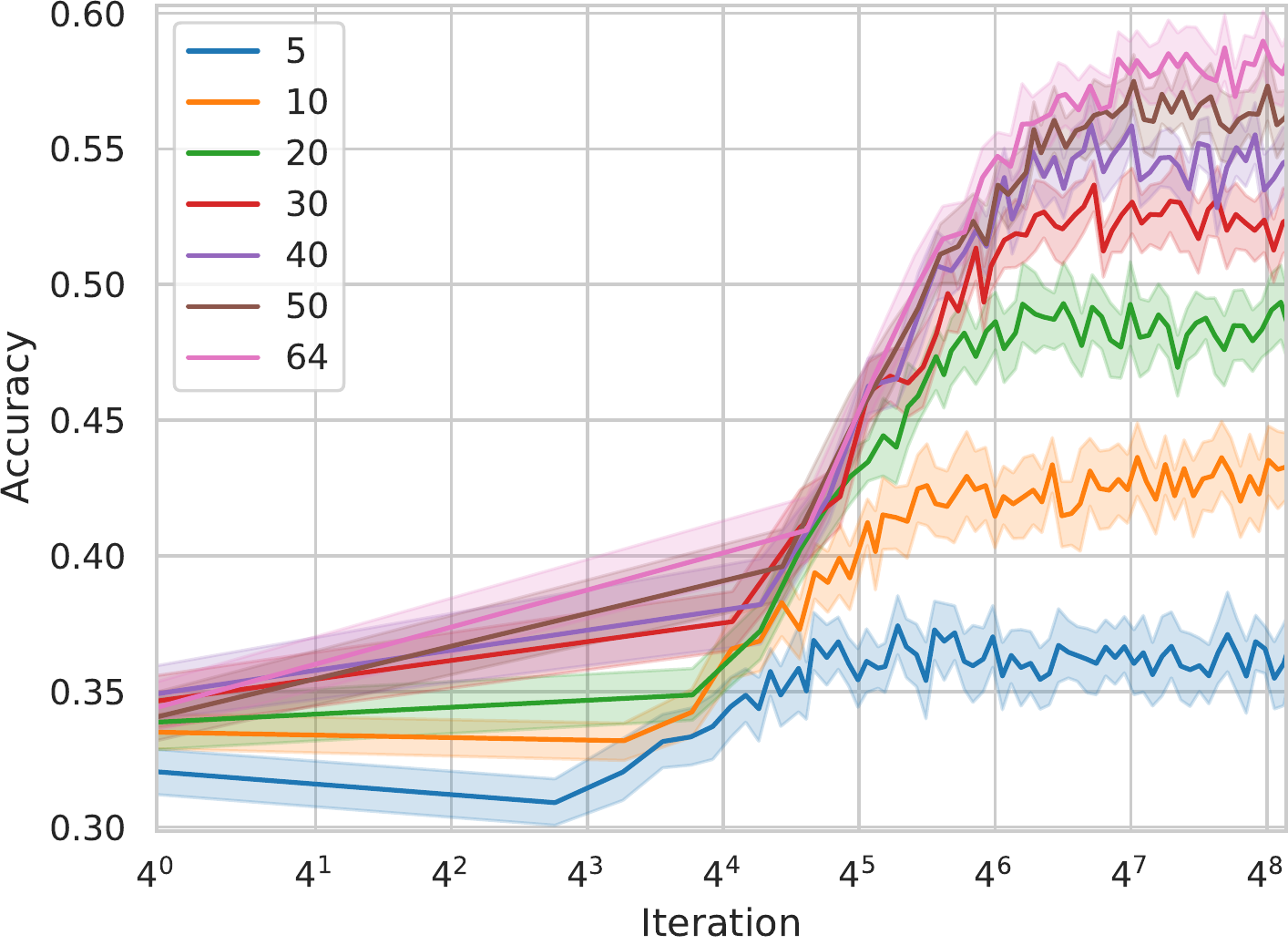}& 
\includegraphics[width=.252\linewidth]{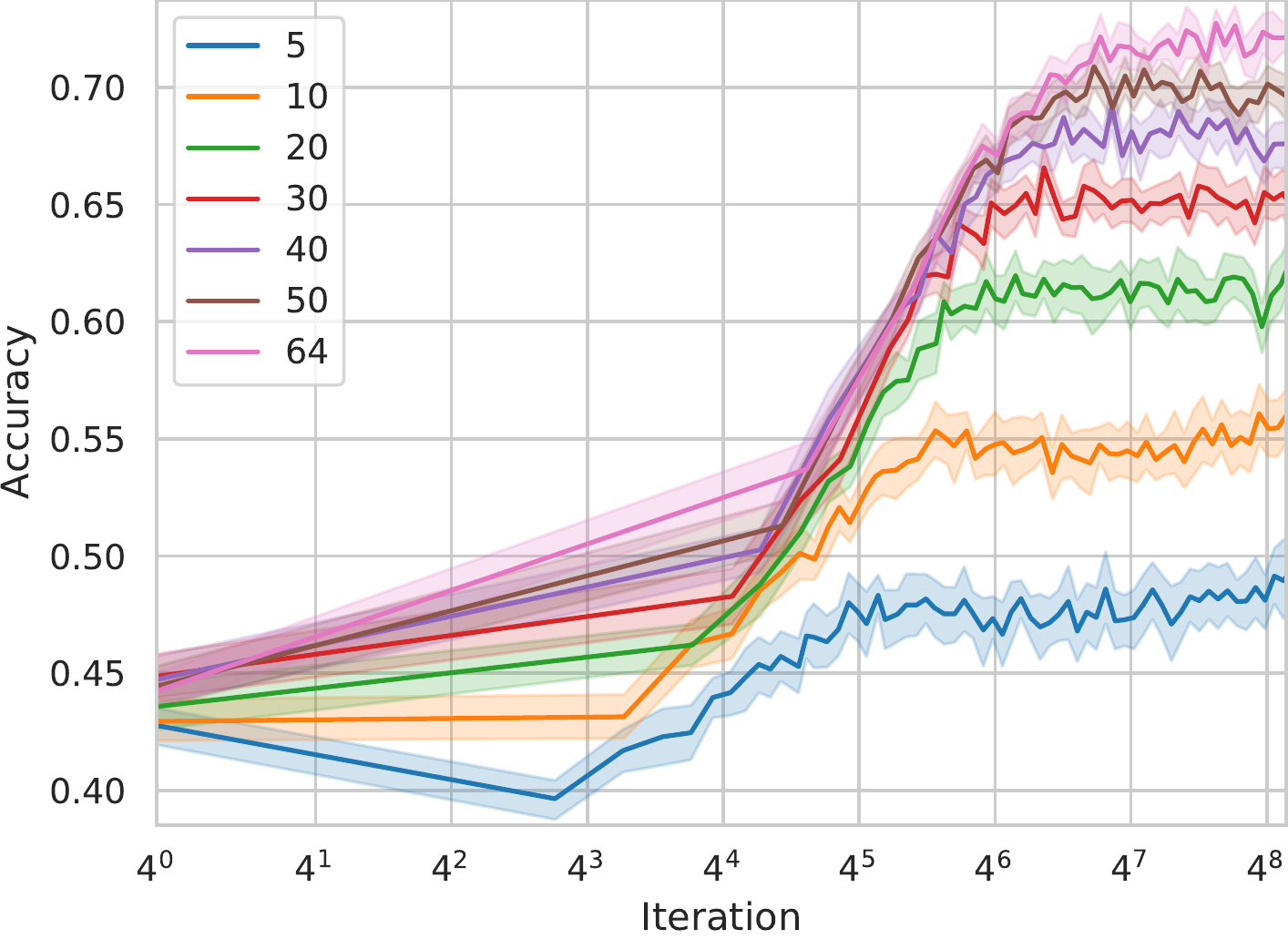}& 
\includegraphics[width=.252\linewidth]{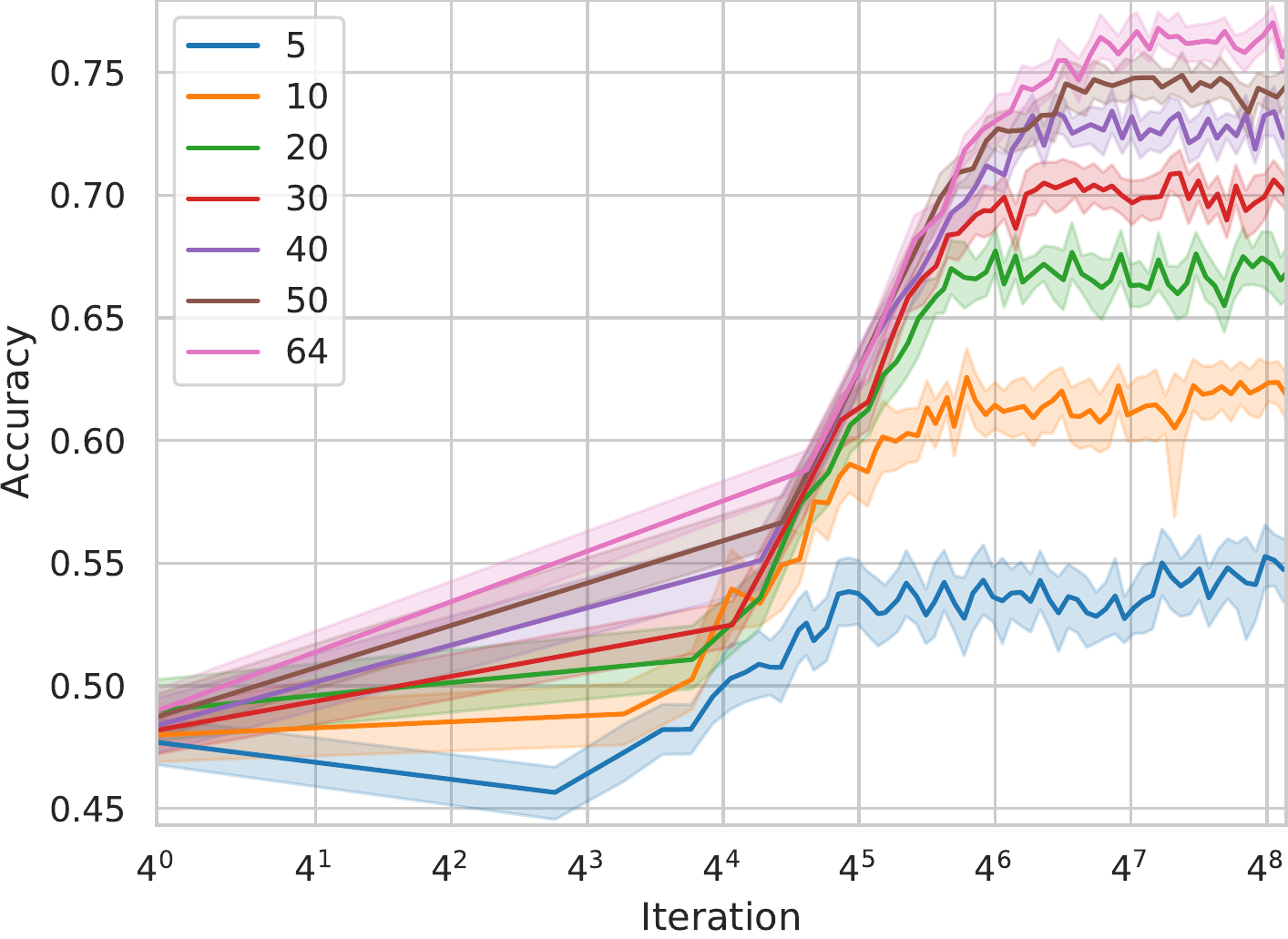}&
\includegraphics[width=.252\linewidth]{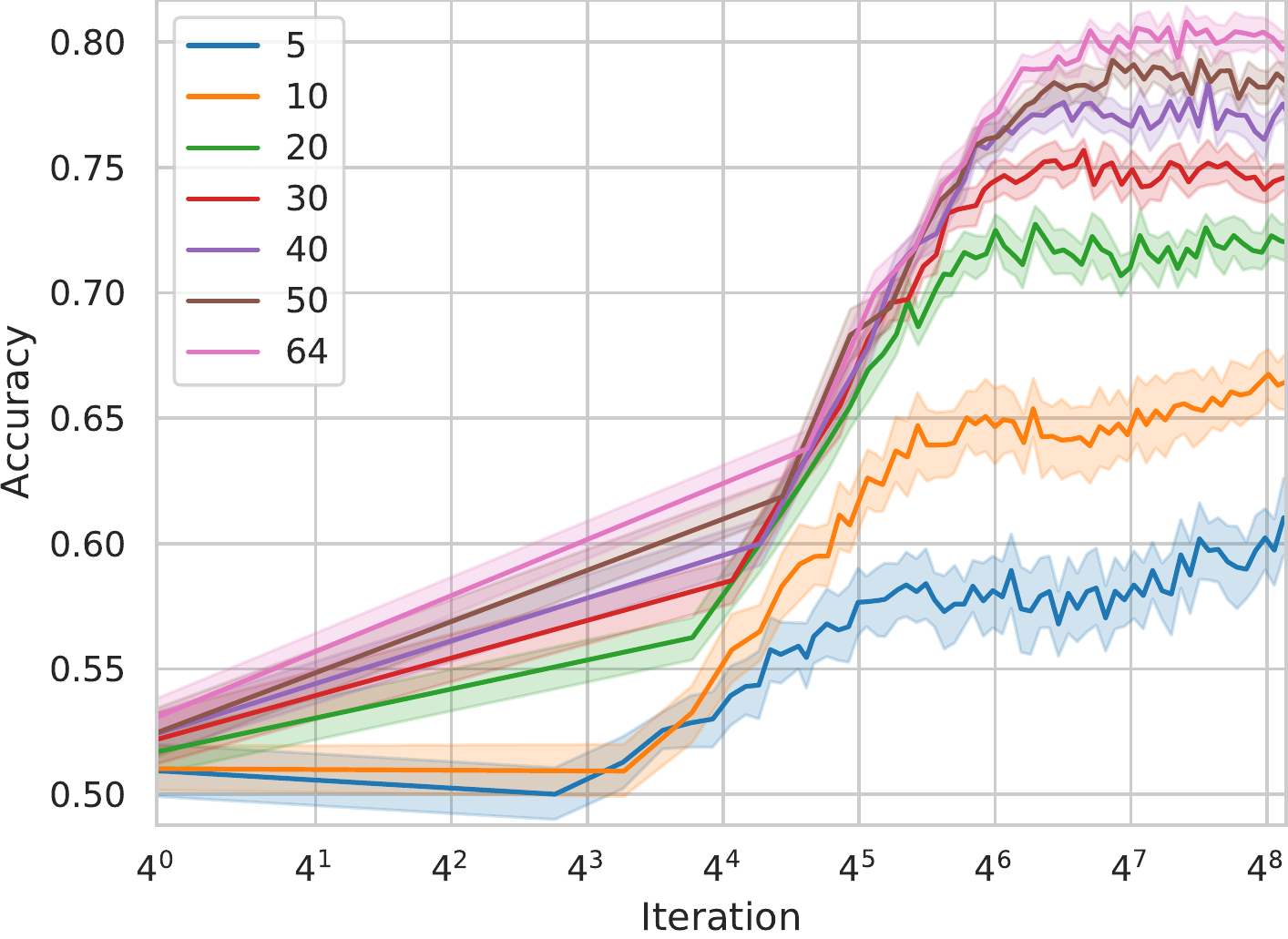}\\
{\bf(d)} 1-shot accuracy & {\bf(e)} 5-shot accuracy & {\bf(f)} 10-shot acc & {\bf(g)} 20-shot acc \\
\multicolumn{4}{c}{Mini-ImageNet} \\[0.5em]
\end{tabular}
    \caption{ {\bf Within-class variation and few-shot performance with learning rate $\eta=2^{-4}$.} We trained WRN-28-4 using SGD with learning rate scheduling on $l\in \{5,10,20,30,40,50,64\}$ source classes (as indicated in the legend). For each dataset, in {\bf(a-c)} we plot the CDNV on the train and test data and the target classes (resp.). In {\bf (d-g)} we plot the 1,5,10 and 20-shot accuracy rates (resp.). 
    } \label{fig:multi_shots}
\end{figure}

\begin{figure}[ht]
    \centering
    \begin{tabular}{@{}c@{~}c@{~}c@{~}c}
\includegraphics[width=.252\linewidth]{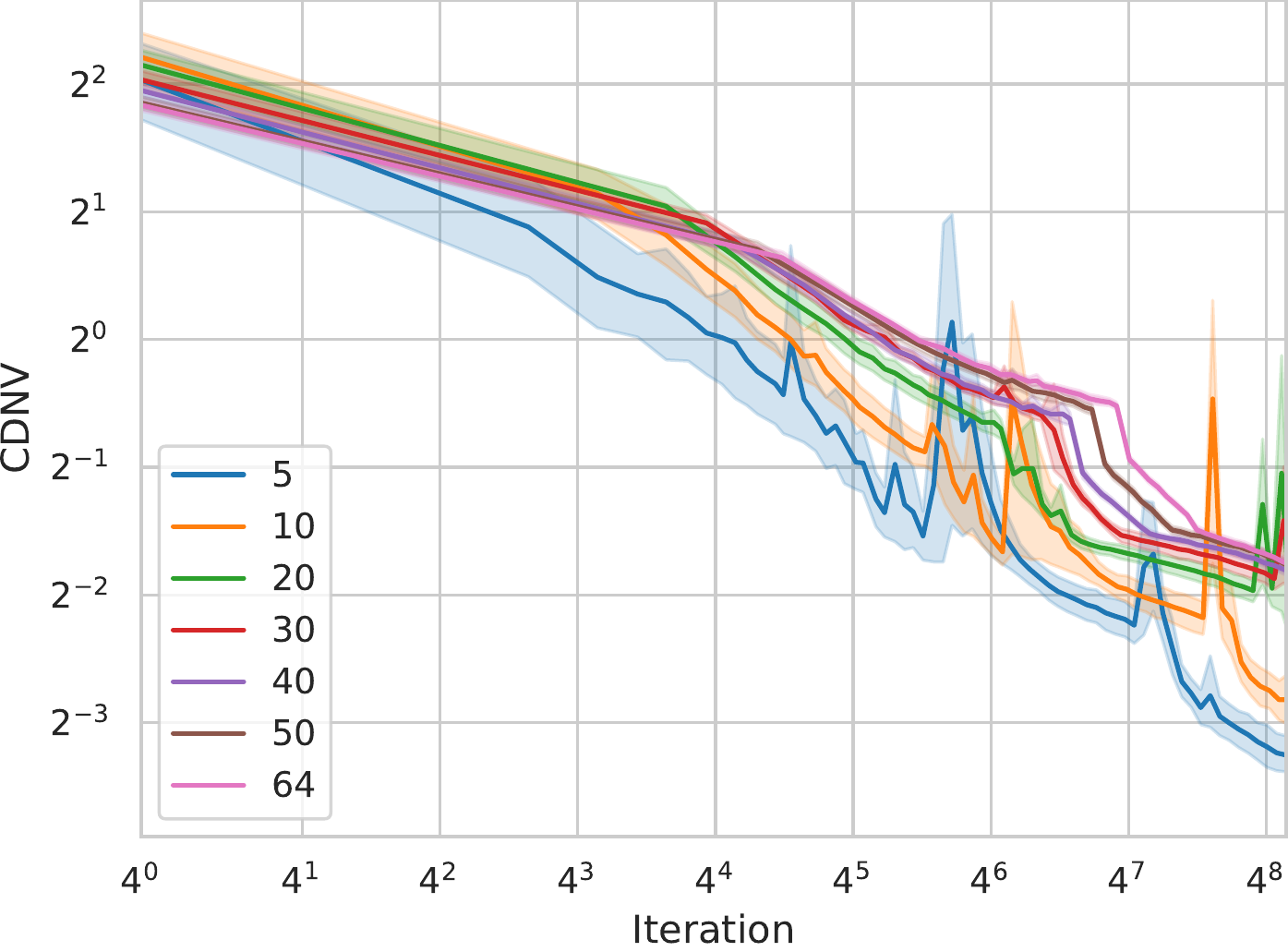}&
\includegraphics[width=.252\linewidth]{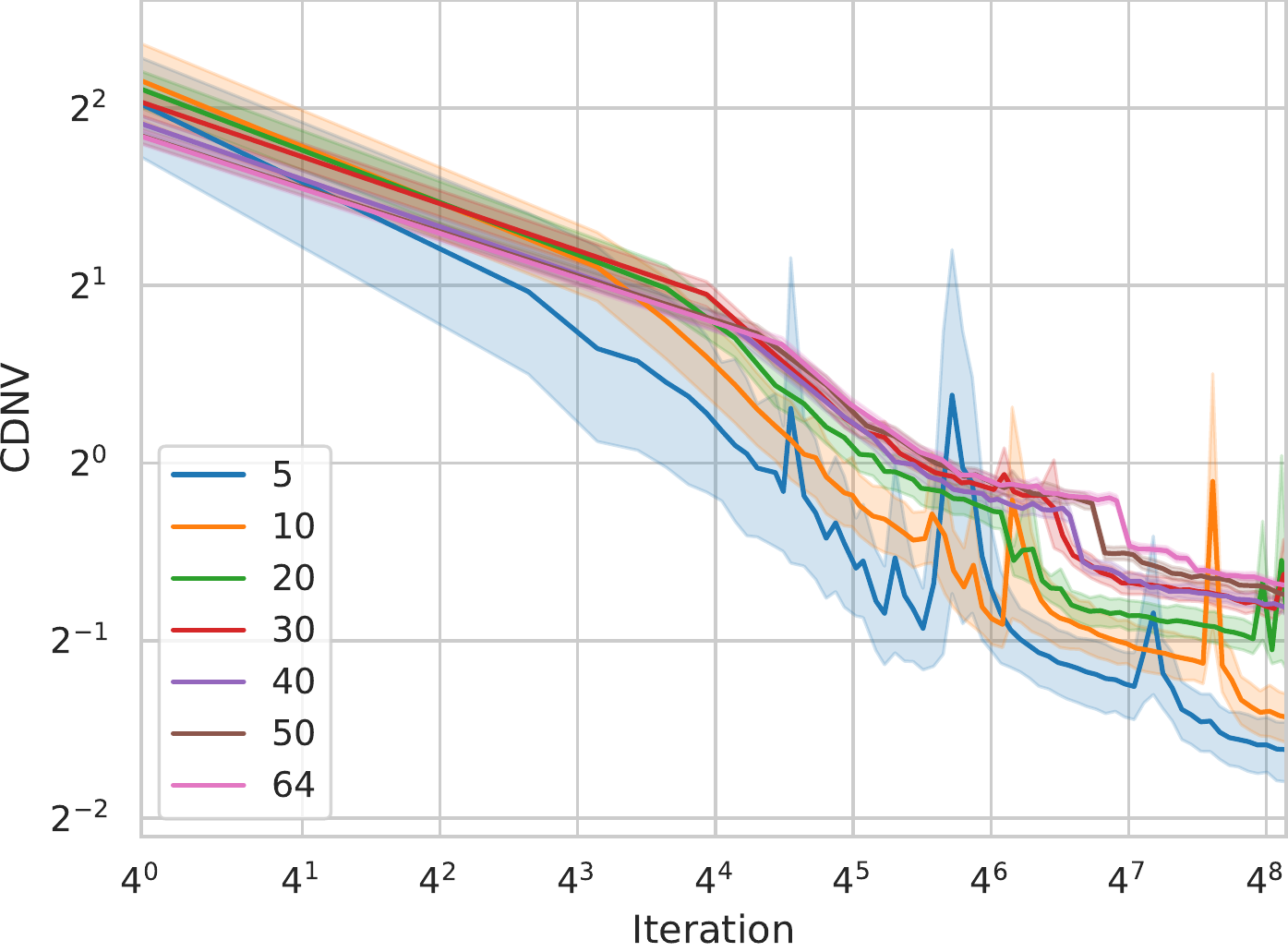}&
\includegraphics[width=.252\linewidth]{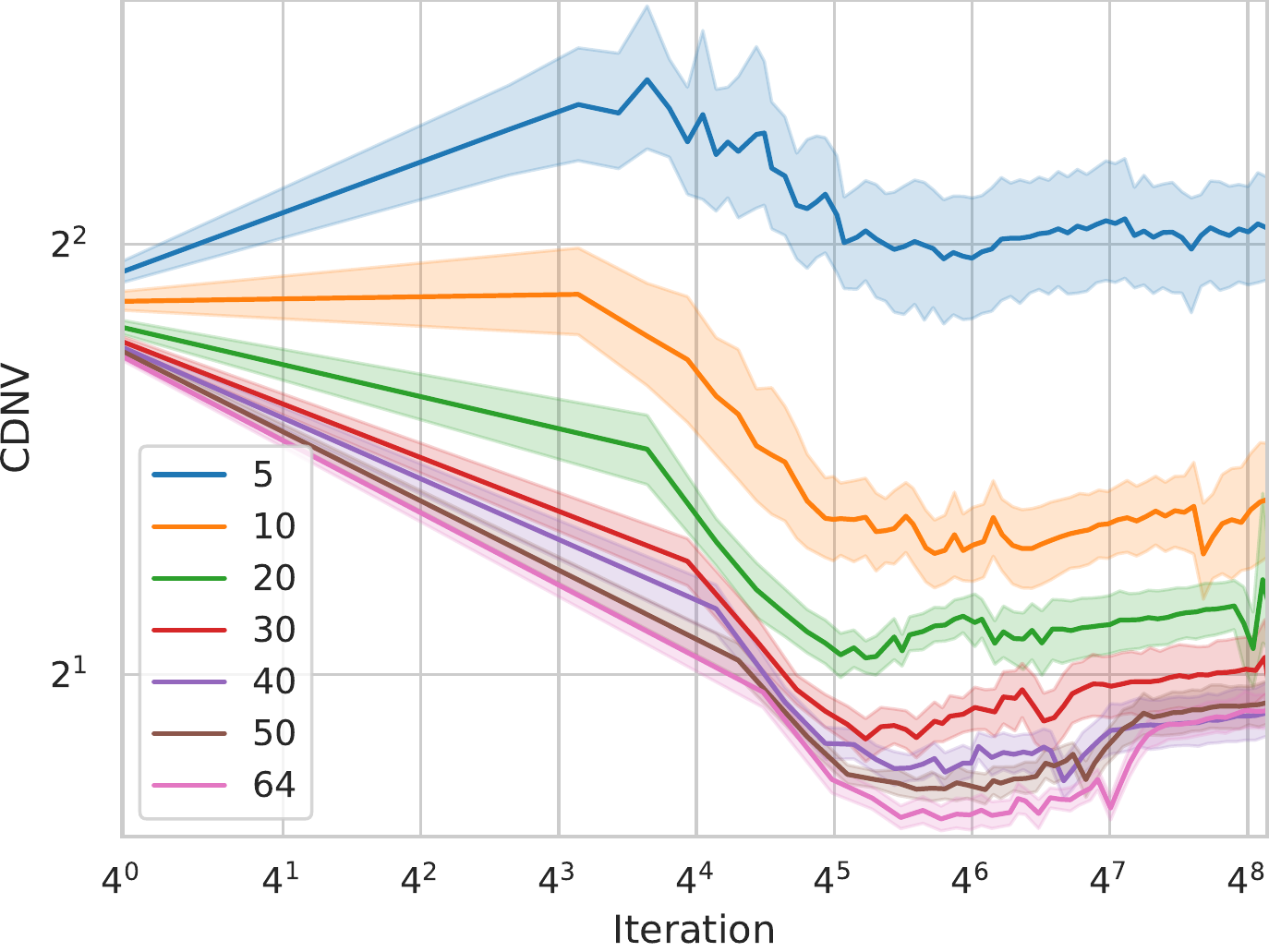}\\
{\bf (a)} CDNV, train & {\bf(b)} CDNV, test & {\bf(c)} CDNV, target \\ 
\includegraphics[width=.252\linewidth]{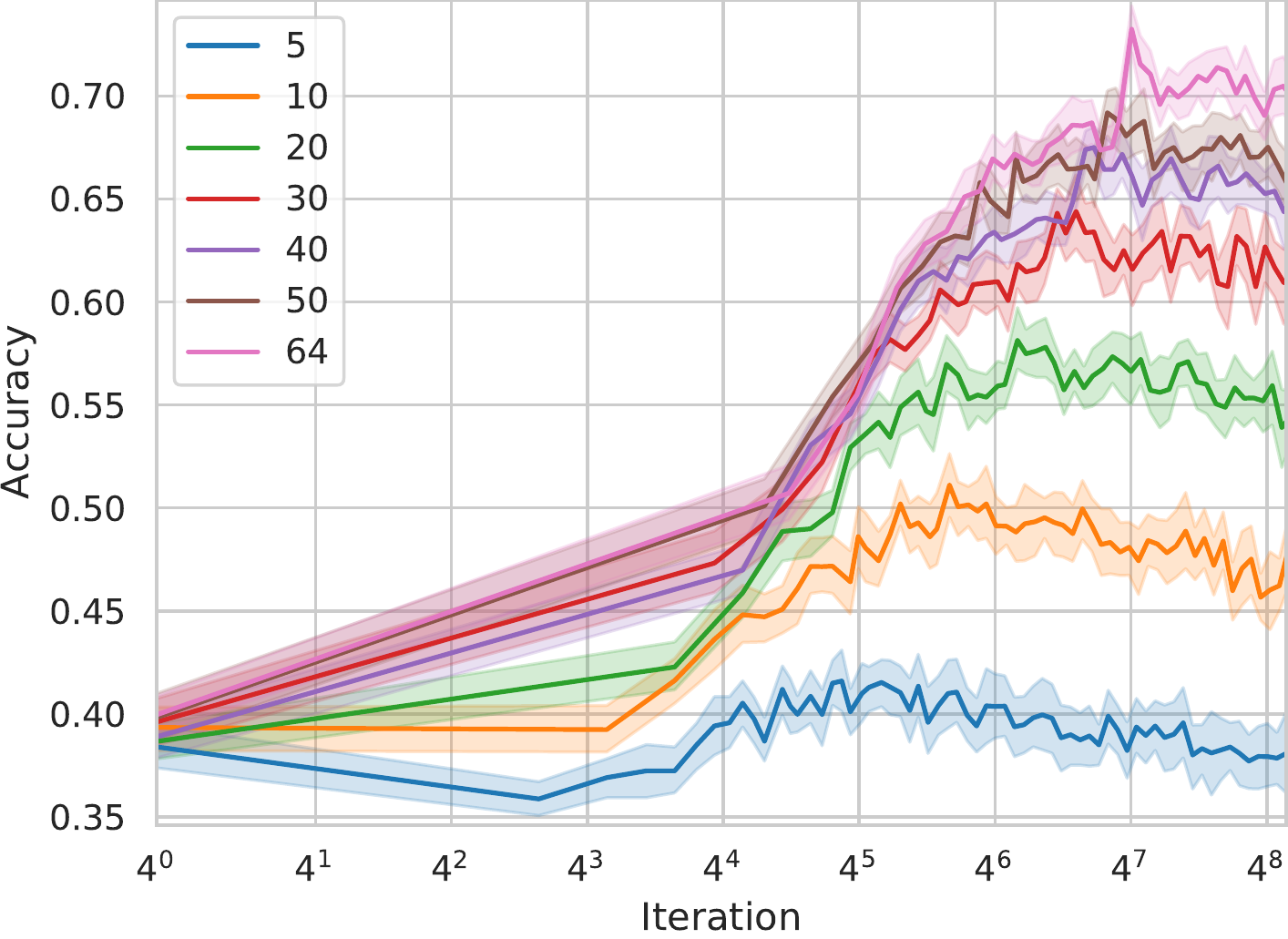}& 
\includegraphics[width=.252\linewidth]{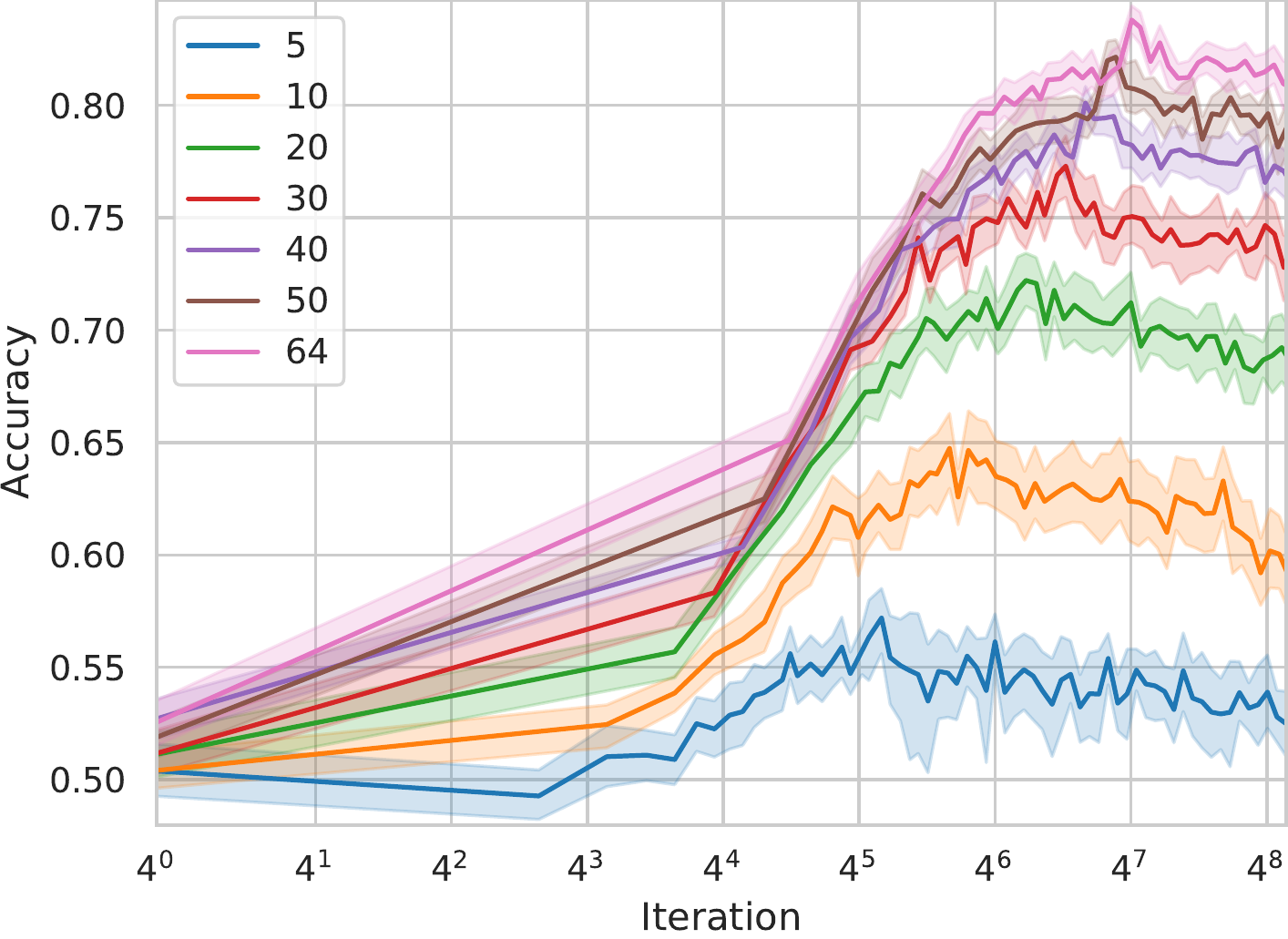}& 
\includegraphics[width=.252\linewidth]{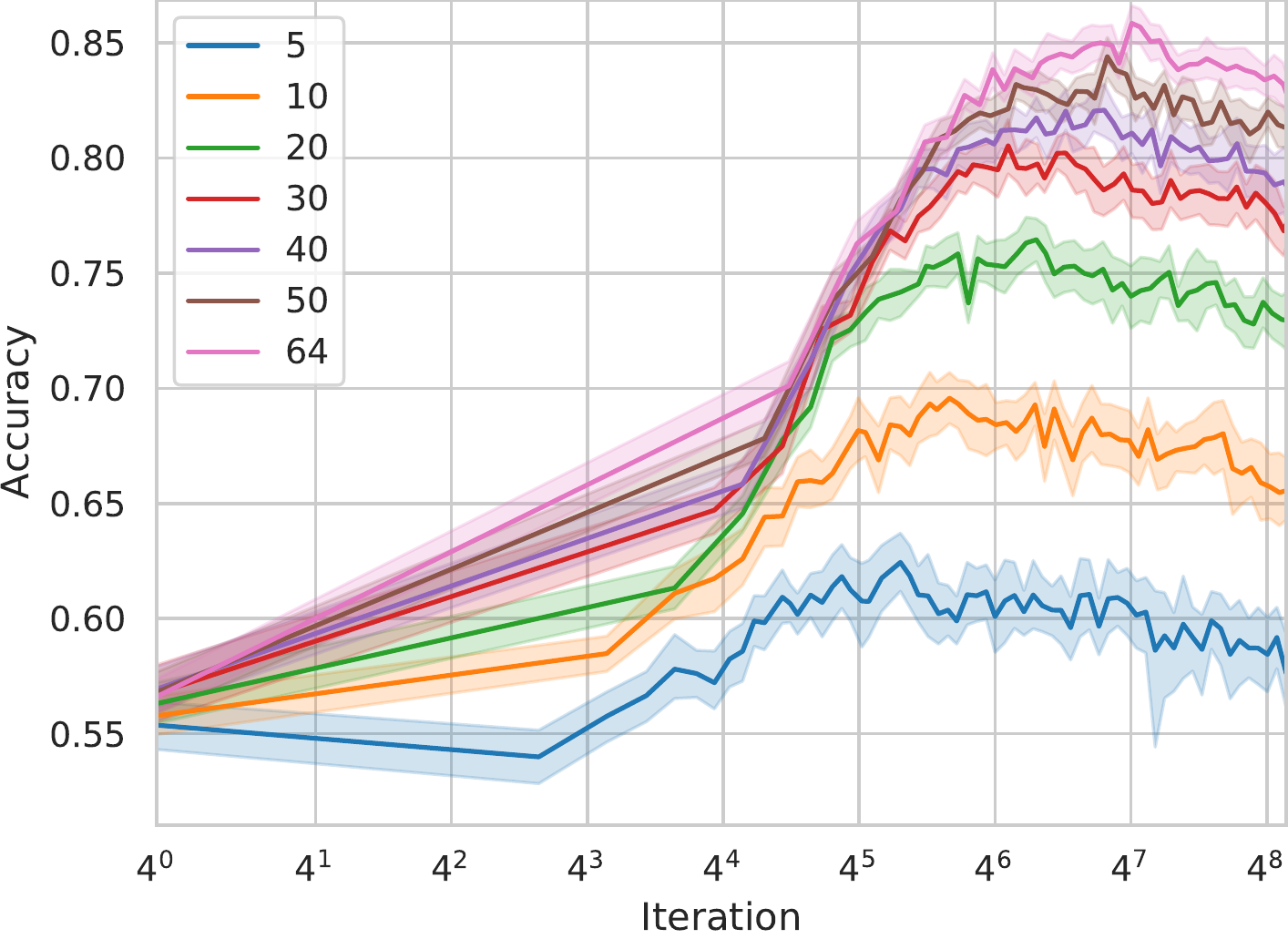}&
\includegraphics[width=.252\linewidth]{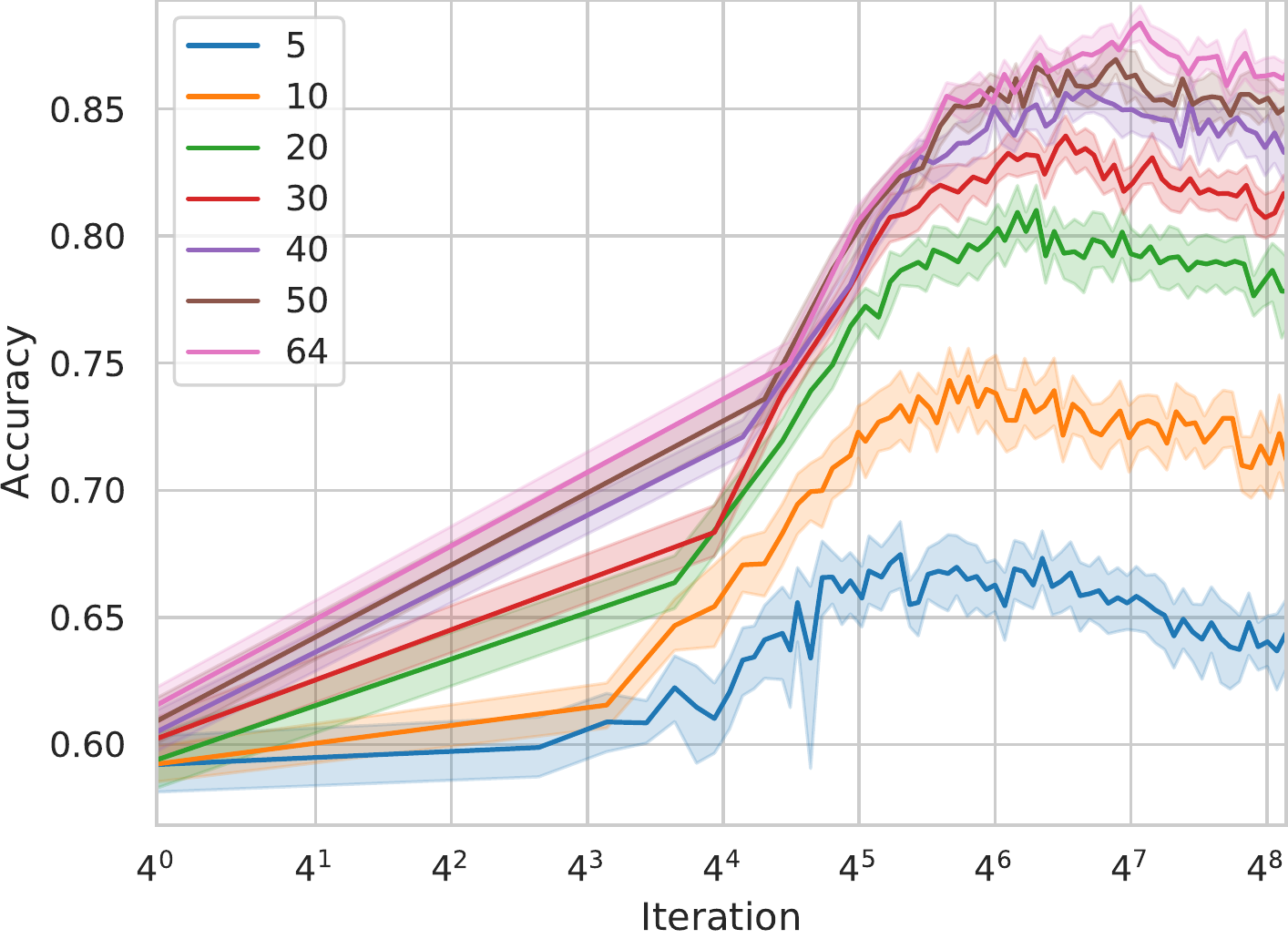}\\
{\bf(d)} 1-shot accuracy & {\bf(e)} 5-shot accuracy & {\bf(f)} 10-shot acc & {\bf(g)} 20-shot acc \\
\multicolumn{4}{c}{CIFAR-FS} \\[0.5em]
\includegraphics[width=.252\linewidth]{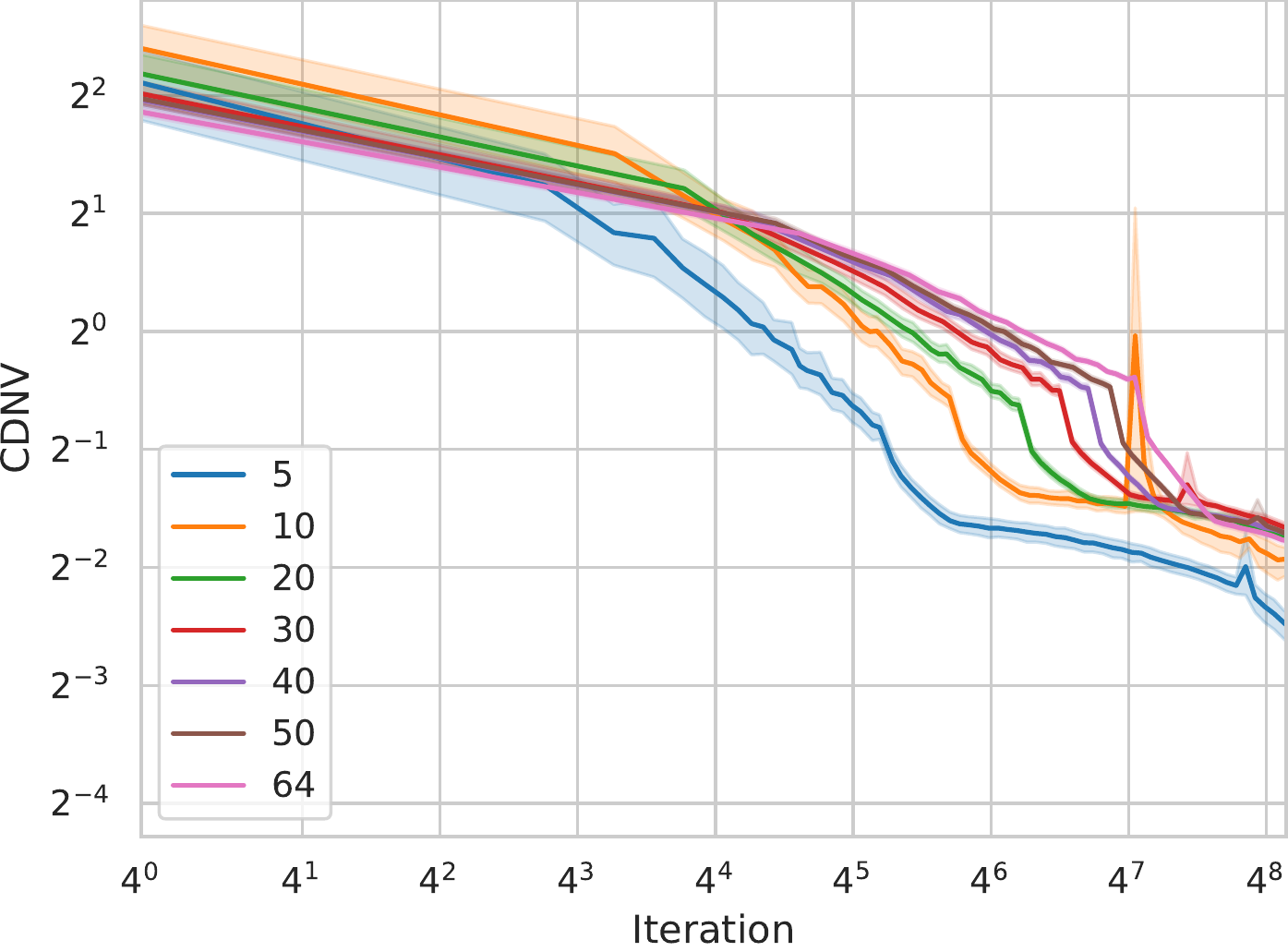}&
\includegraphics[width=.252\linewidth]{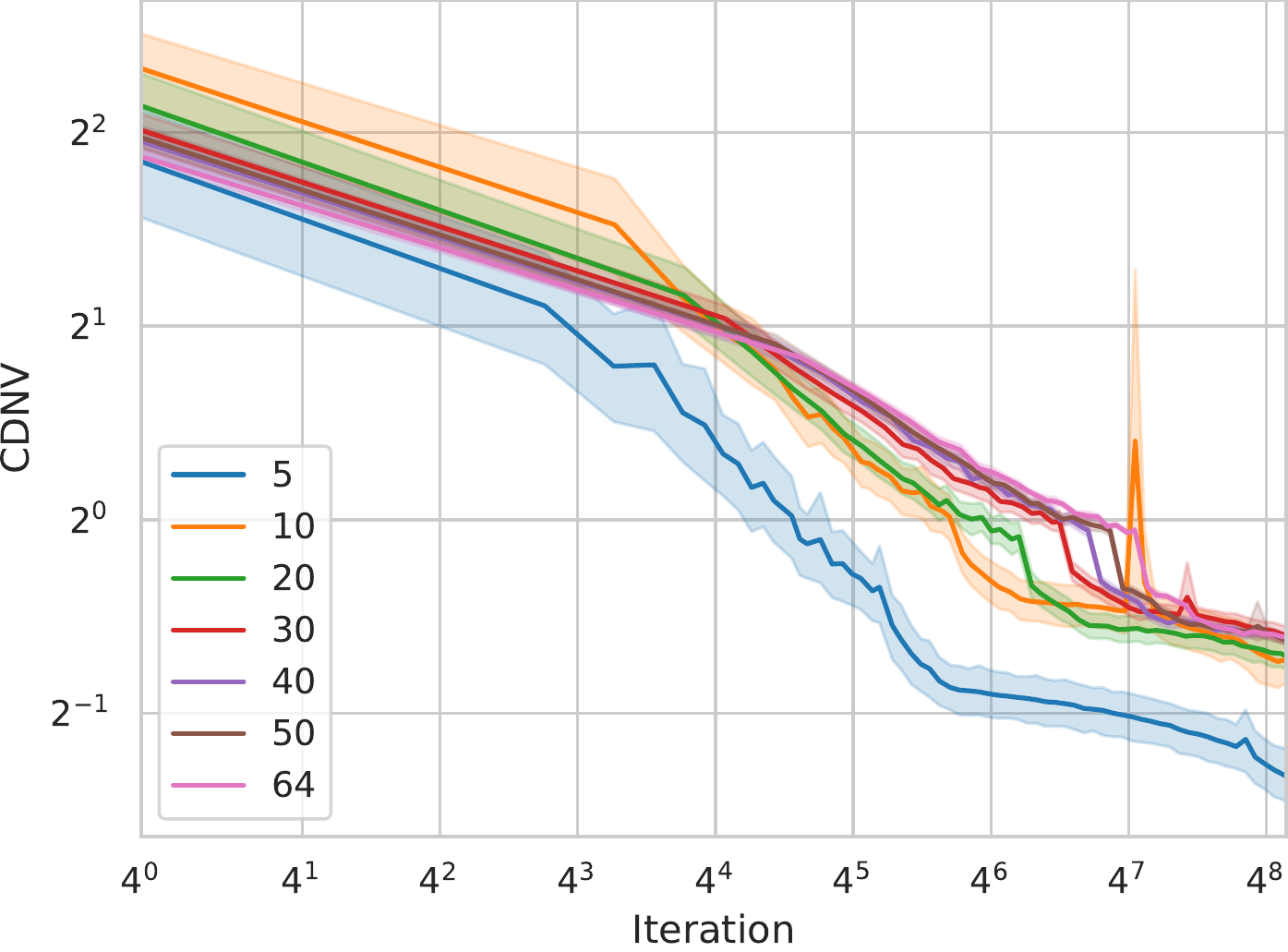}&
\includegraphics[width=.252\linewidth]{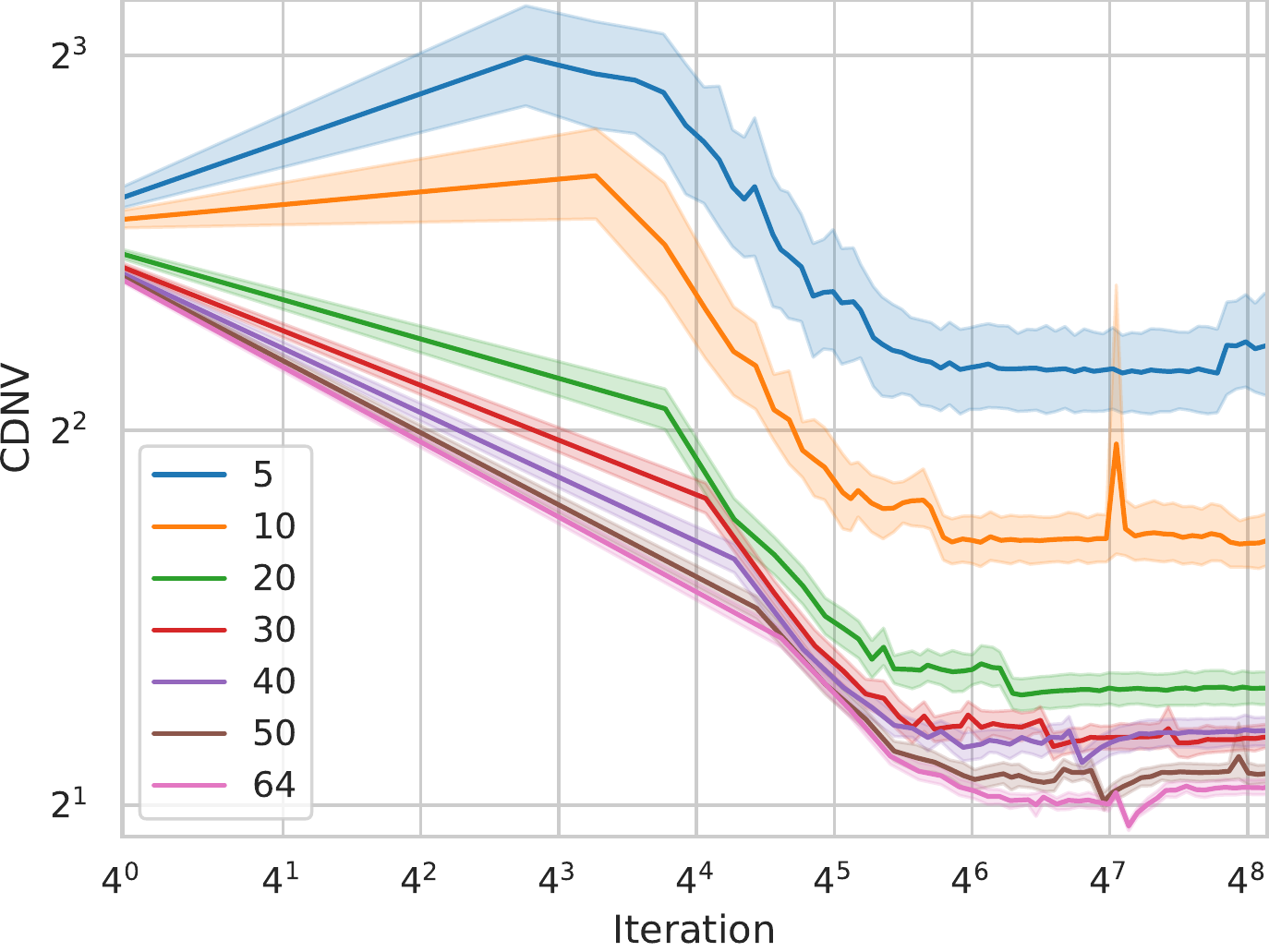}\\
{\bf (a)} CDNV, train & {\bf(b)} CDNV, test & {\bf(c)} CDNV, target \\
\includegraphics[width=.252\linewidth]{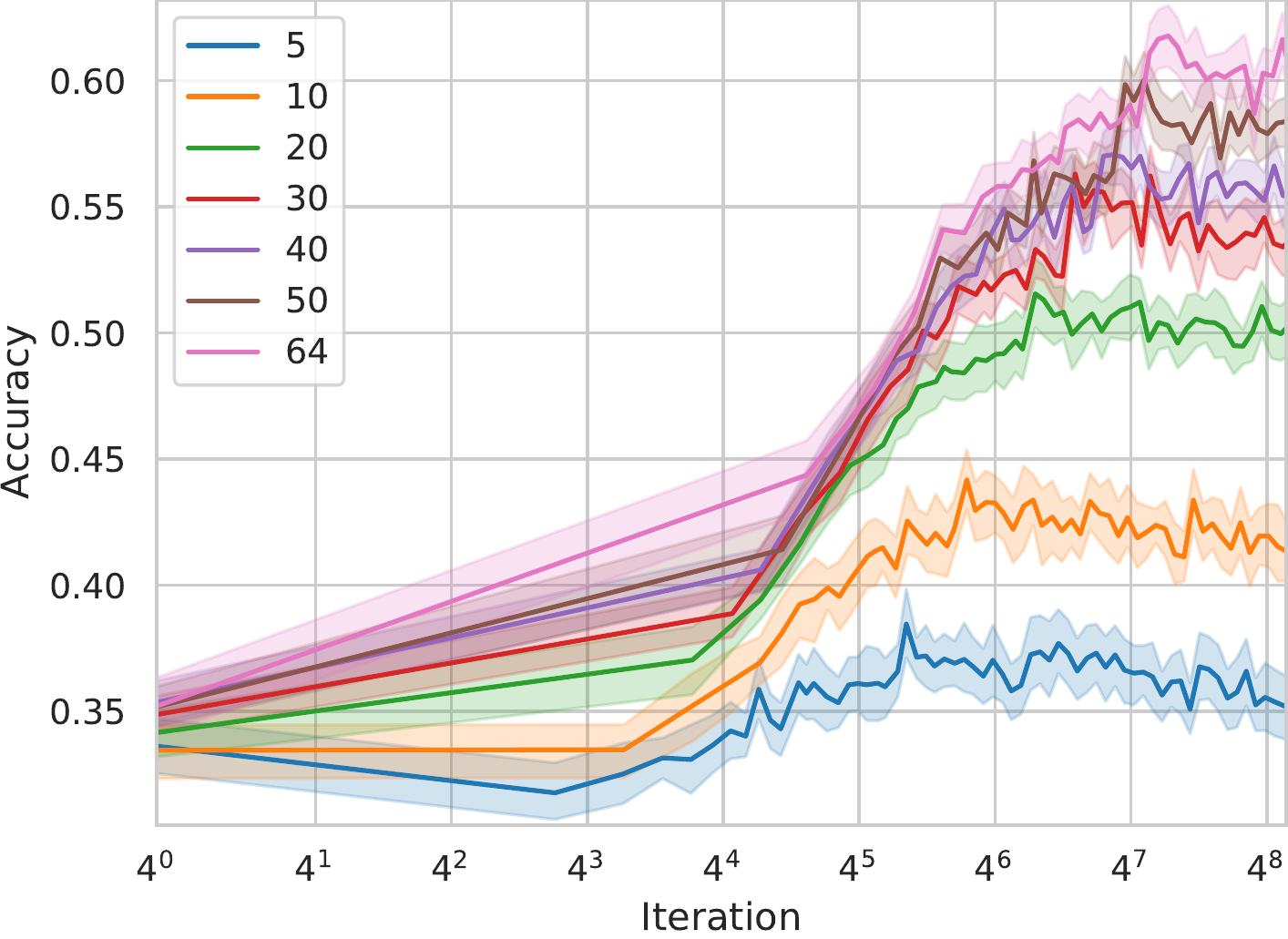}& 
\includegraphics[width=.252\linewidth]{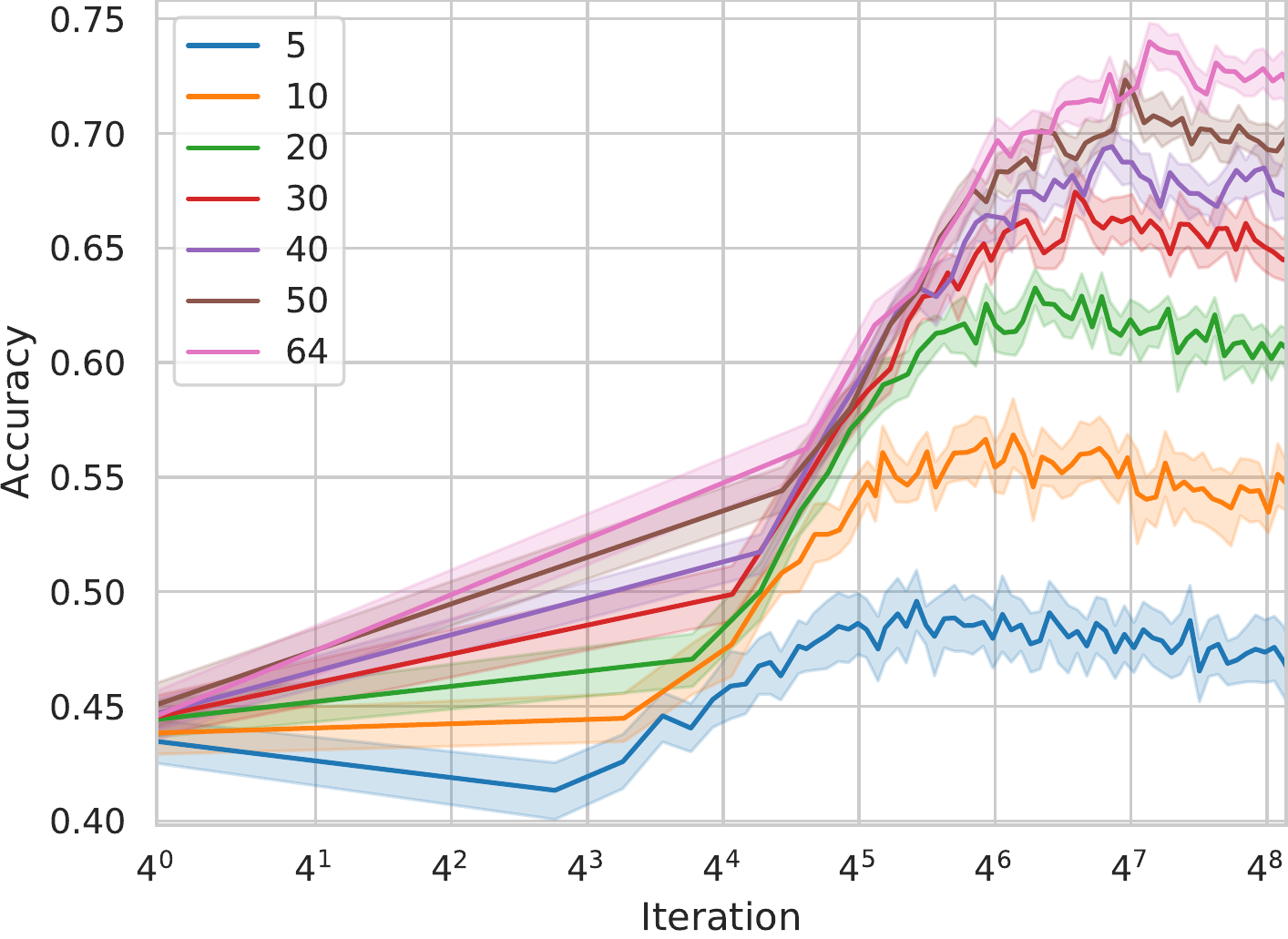}& 
\includegraphics[width=.252\linewidth]{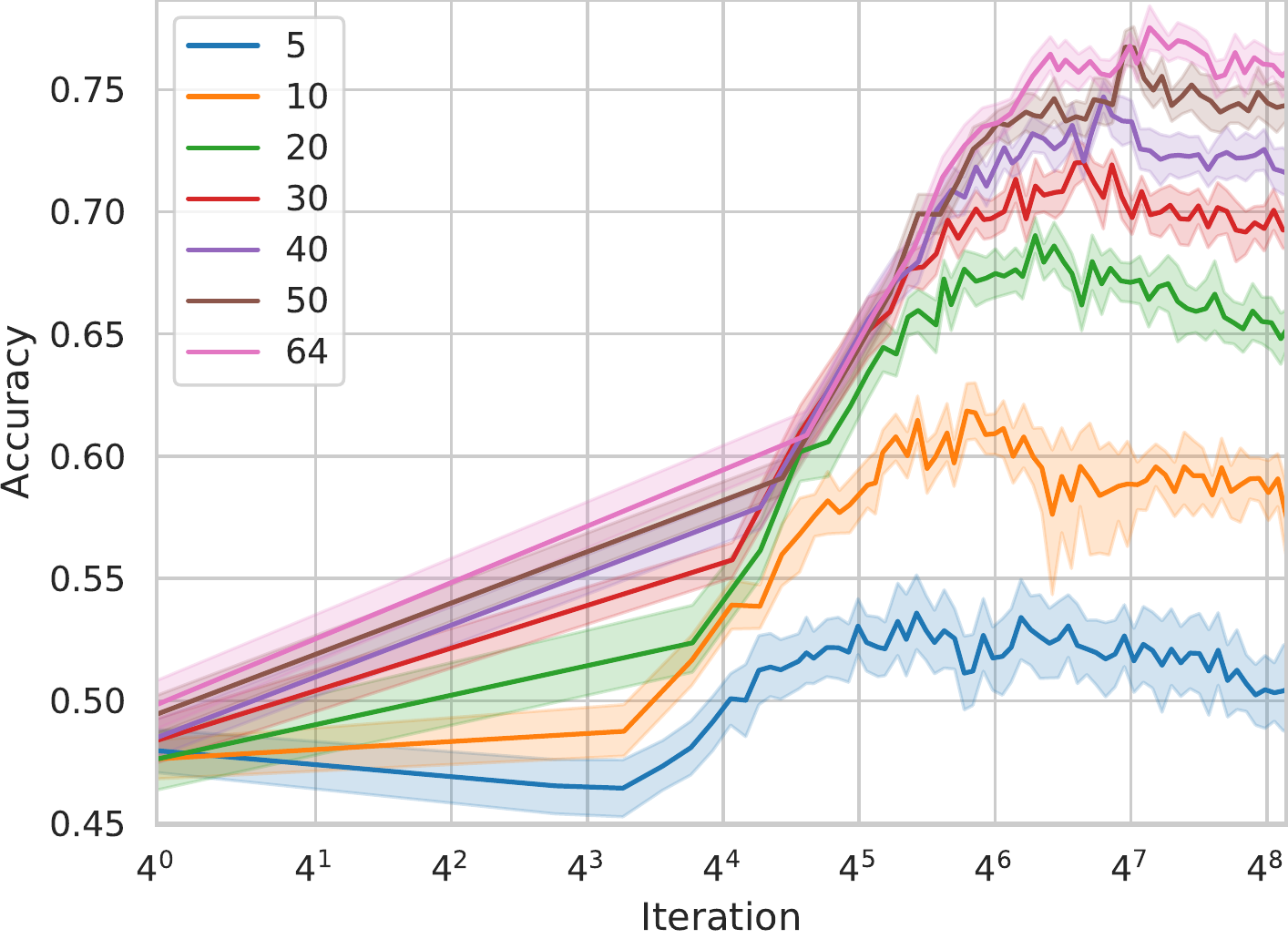}&
\includegraphics[width=.252\linewidth]{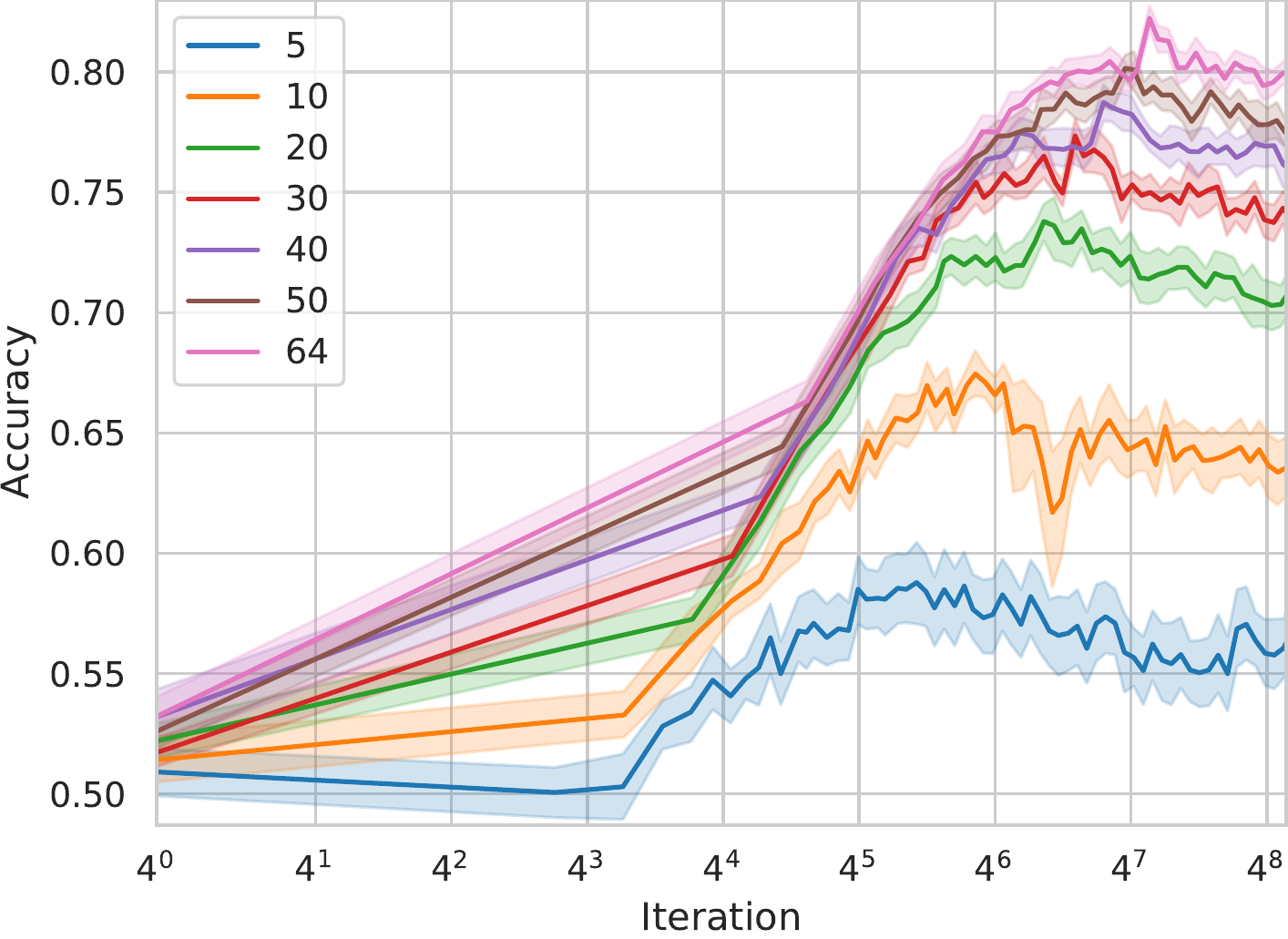}\\
{\bf(d)} 1-shot accuracy & {\bf(e)} 5-shot accuracy & {\bf(f)} 10-shot acc & {\bf(g)} 20-shot acc \\
\multicolumn{4}{c}{Mini-ImageNet} \\[0.5em]
\end{tabular}
    \caption{ {\bf Within-class variation and few-shot performance with learning rate scheduling.} We trained WRN-28-4 using SGD with learning rate scheduling on $l\in \{5,10,20,30,40,50,64\}$ source classes (as indicated in the legend). For each dataset, in {\bf(a-c)} we plot the CDNV on the train and test data and the target classes (resp.). In {\bf (d-g)} we plot the 1,5,10 and 20-shot accuracy rates (resp.).
    } \label{fig:lr_scheduling}
\end{figure}

\begin{figure}[ht]
    \centering
    \begin{tabular}{@{}c@{~}c@{~}c@{~}c}
\includegraphics[width=.252\linewidth]{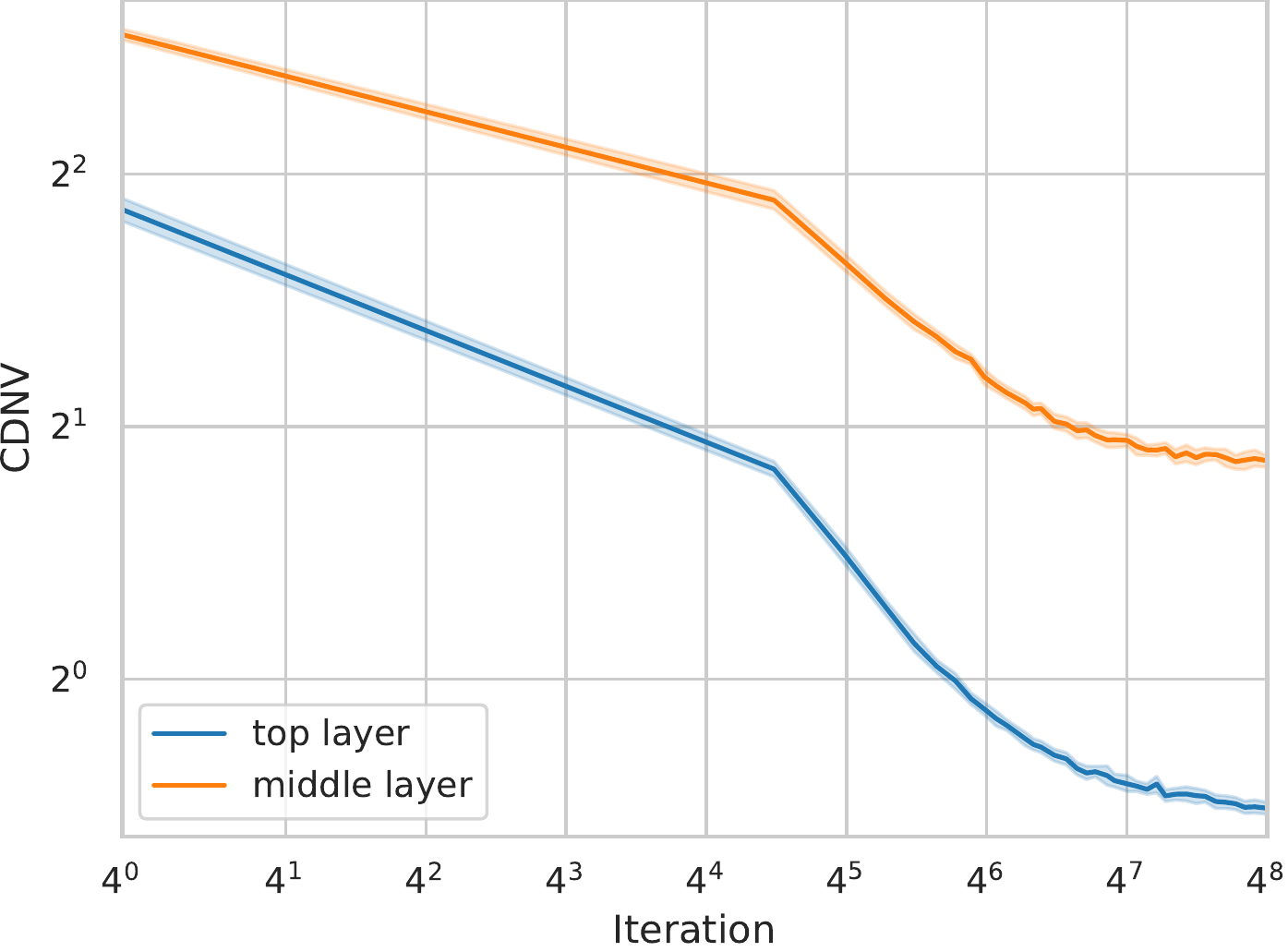}&
\includegraphics[width=.252\linewidth]{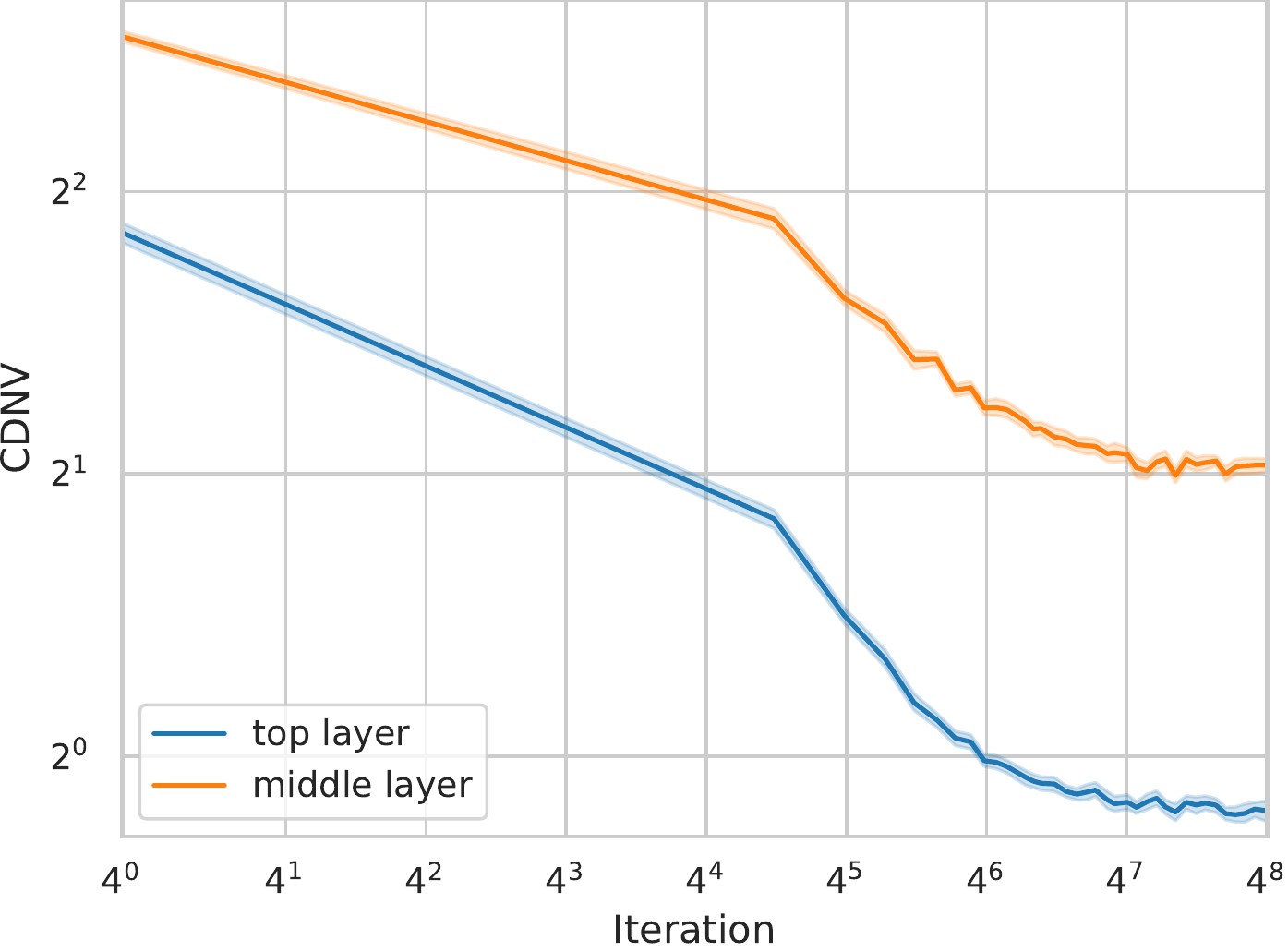}&
\includegraphics[width=.252\linewidth]{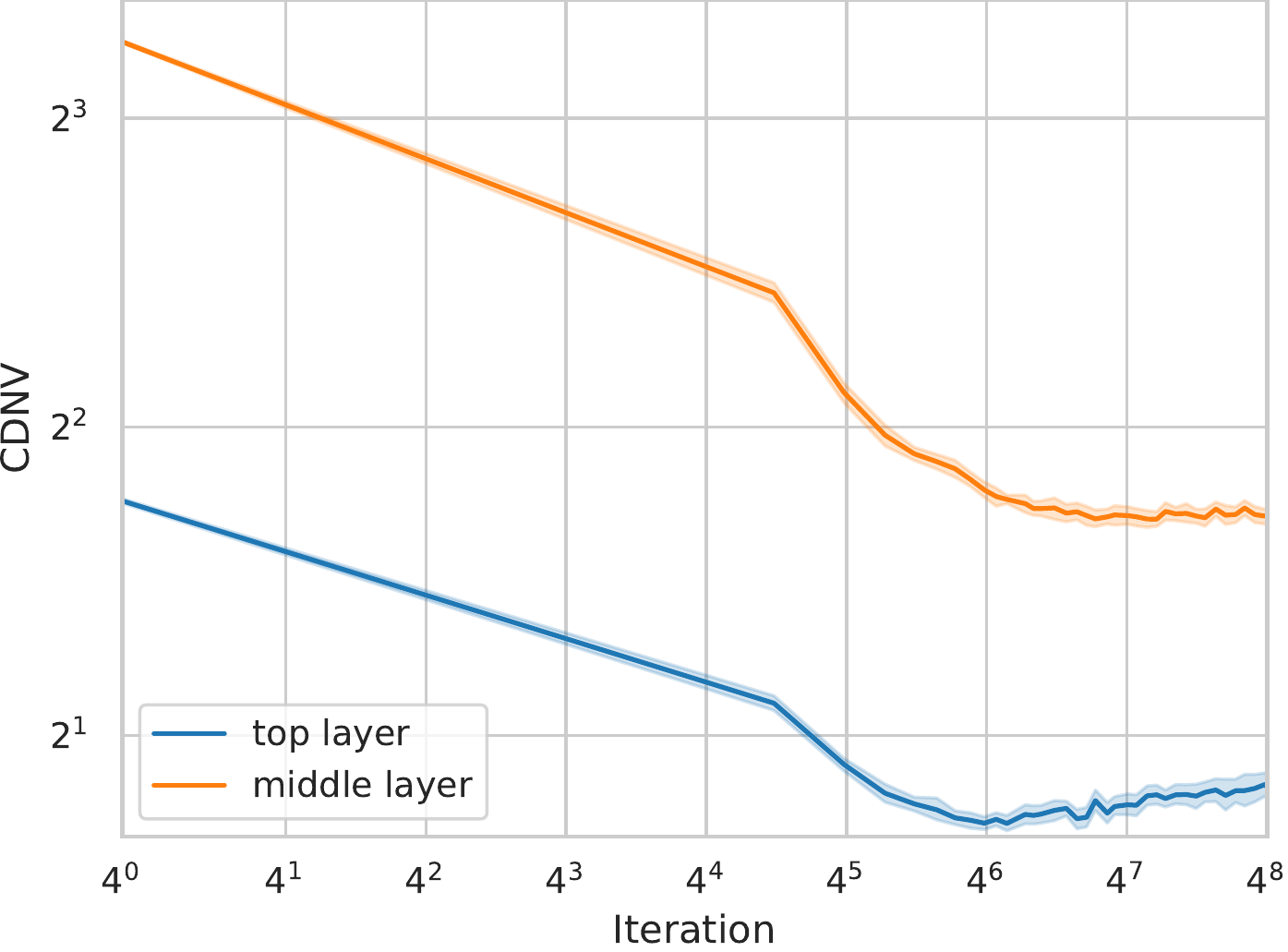}&
\includegraphics[width=.252\linewidth]{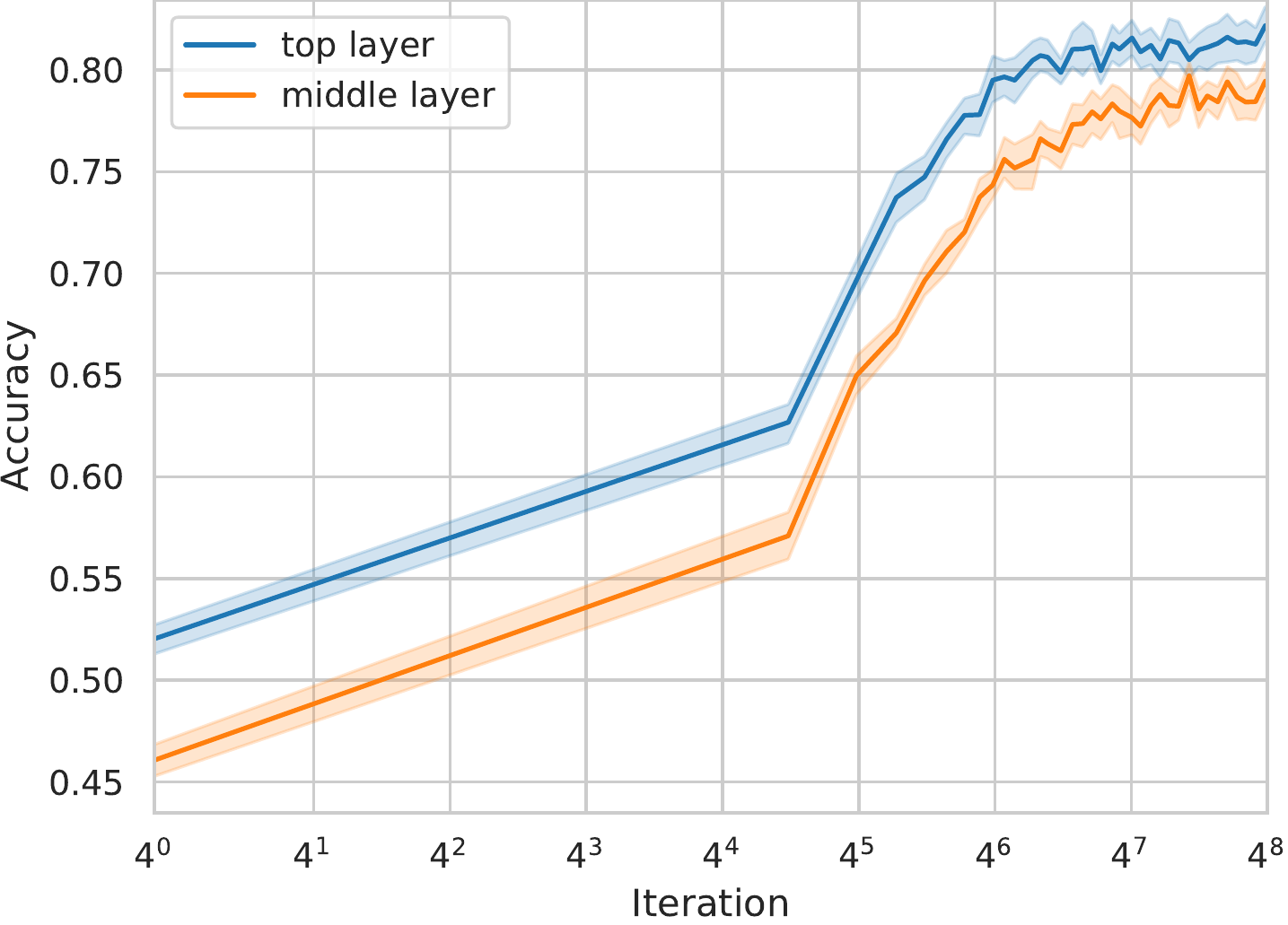}\\
{\bf (a)} train & {\bf (b)} test & {\bf (c)} target & {\bf (d)} accuracy
\end{tabular}
    \caption{{\bf Within-class variation collapse of the second-to-last embedding layer.} In each experiment we trained a WRN-28-4 using SGD with $\eta=2^{-4}$ on a set of $l=64$ source classes on CIFAR-FS. We compare the CDNVs and 5-shot accuracy rates of the second-to-last embedding layer and the top embedding layer of the network. In {\bf (a)} we compare the CDNV on the source train data, in {\bf (b)} the CDNV on the source test data, {\bf (c)} the CDNV on the target classes and in {\bf (d)} the 5-shot accuracy rates. 
    } \label{fig:mid_layer}
\end{figure}

\begin{figure}[ht]
    \centering
    \begin{tabular}{@{}c@{~}c@{~}c@{~}c}
\includegraphics[width=.252\linewidth]{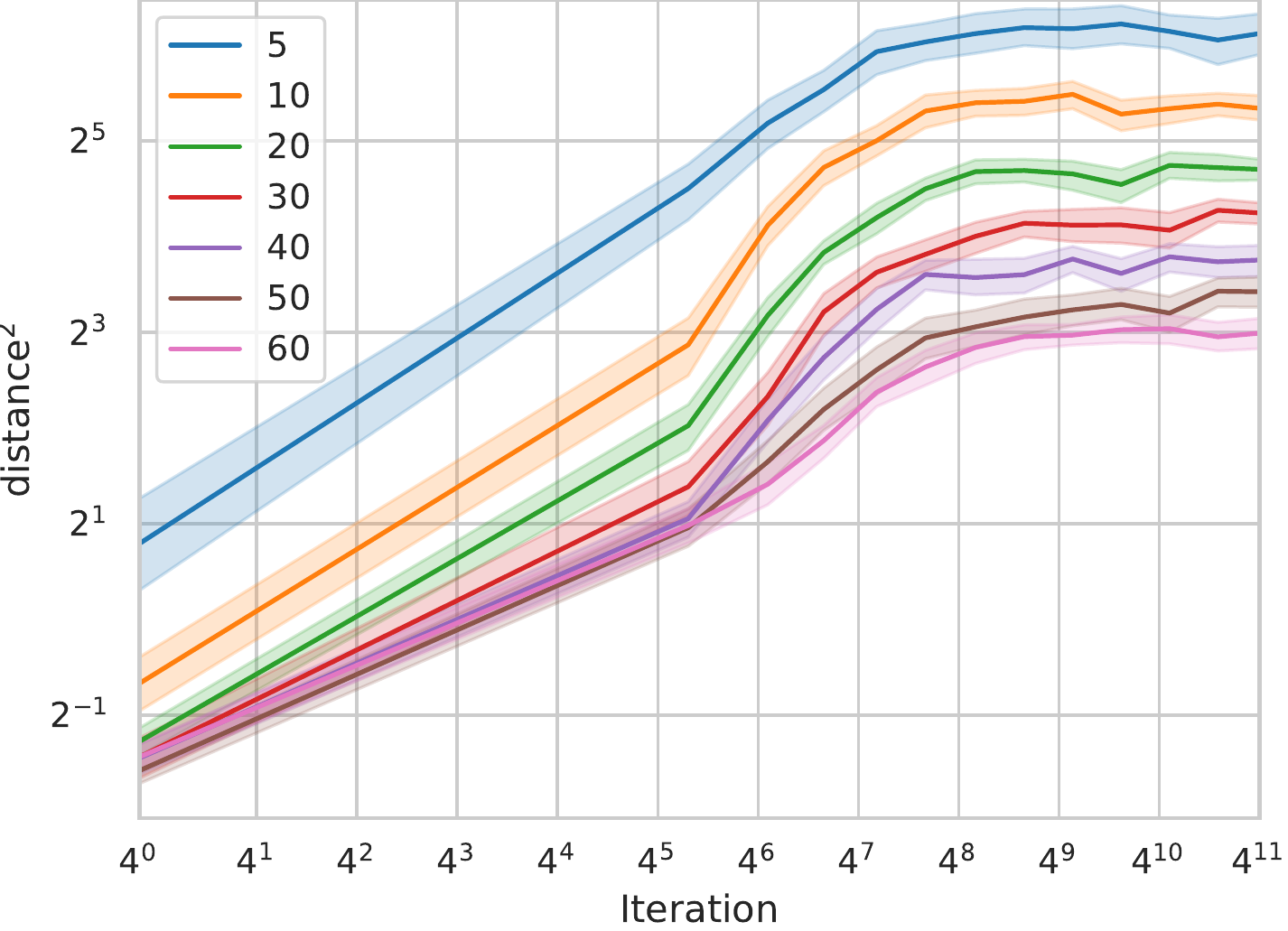}&
\includegraphics[width=.252\linewidth]{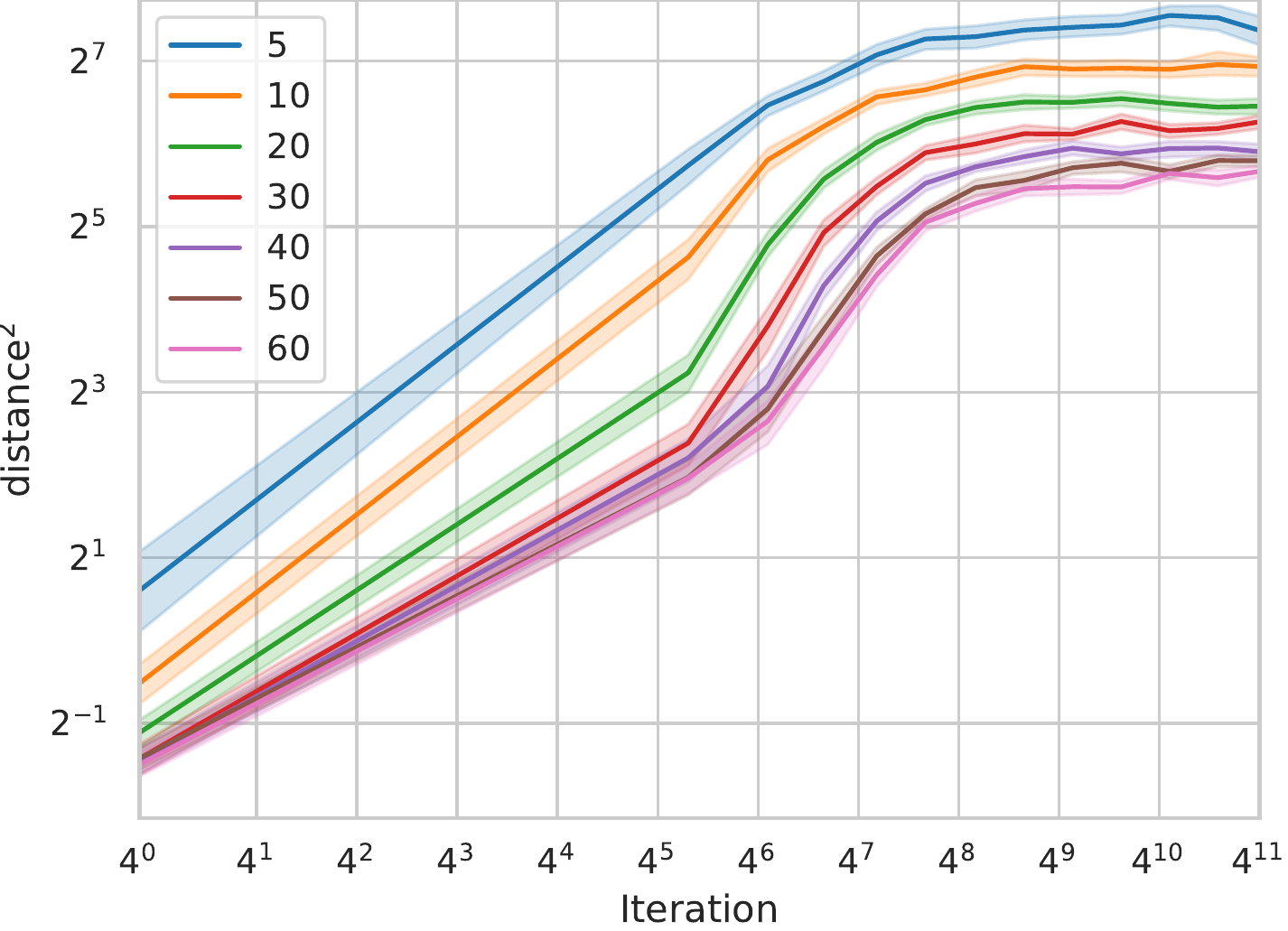}&
\includegraphics[width=.252\linewidth]{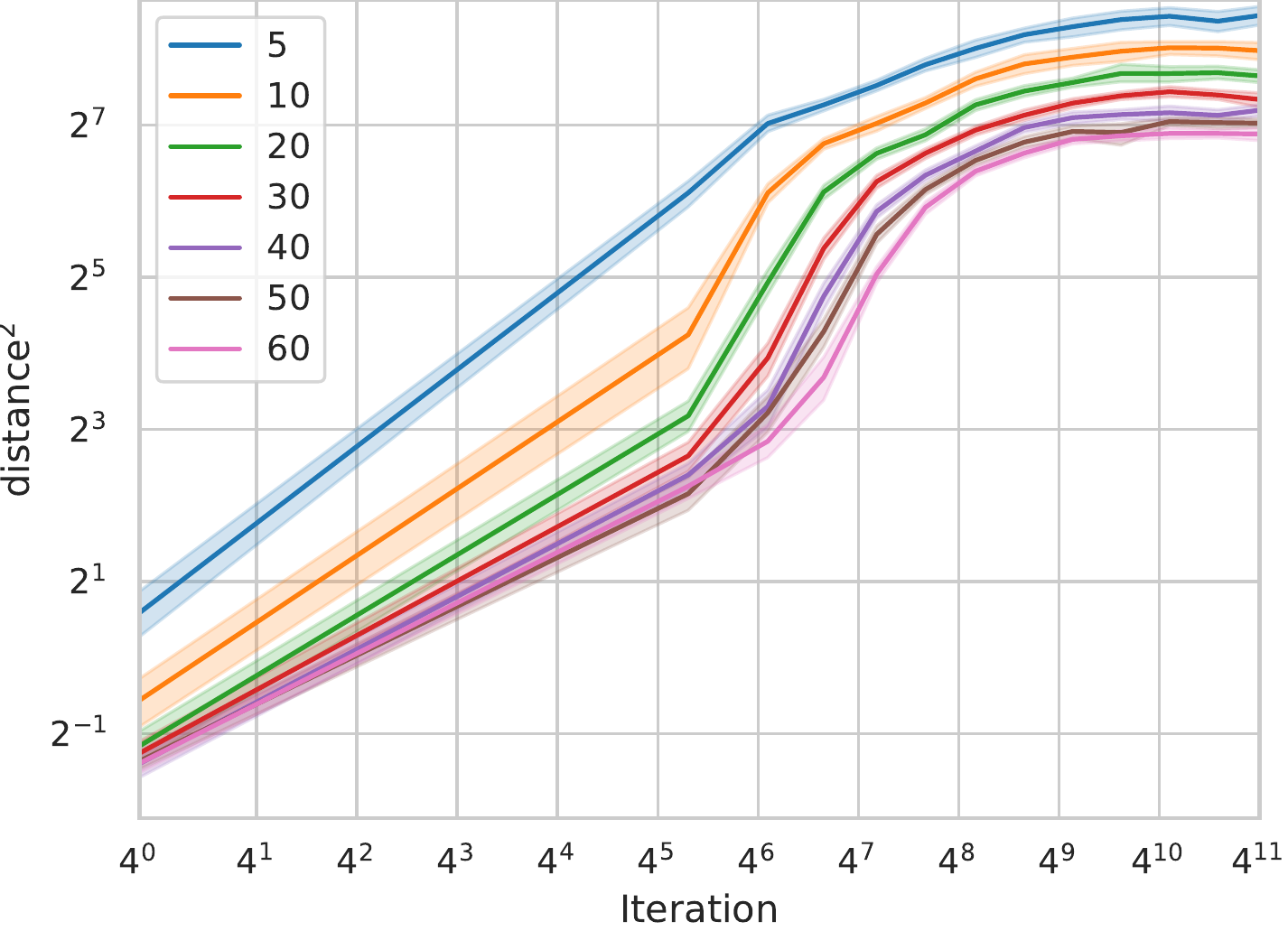}&
\includegraphics[width=.252\linewidth]{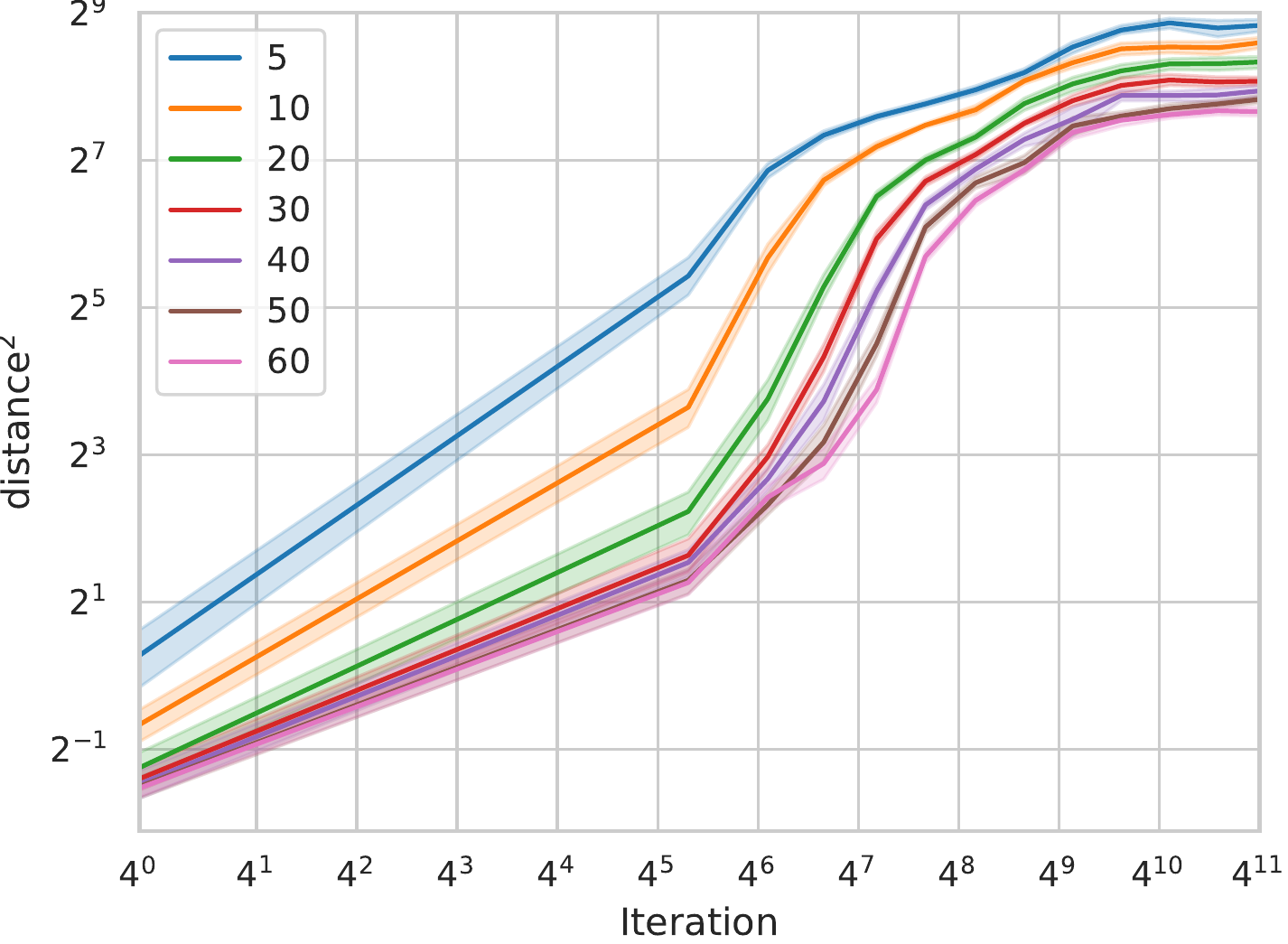}\\
\multicolumn{4}{c}{{\bf (a)} Source classes, train data} \\[0.5em]
\includegraphics[width=.252\linewidth]{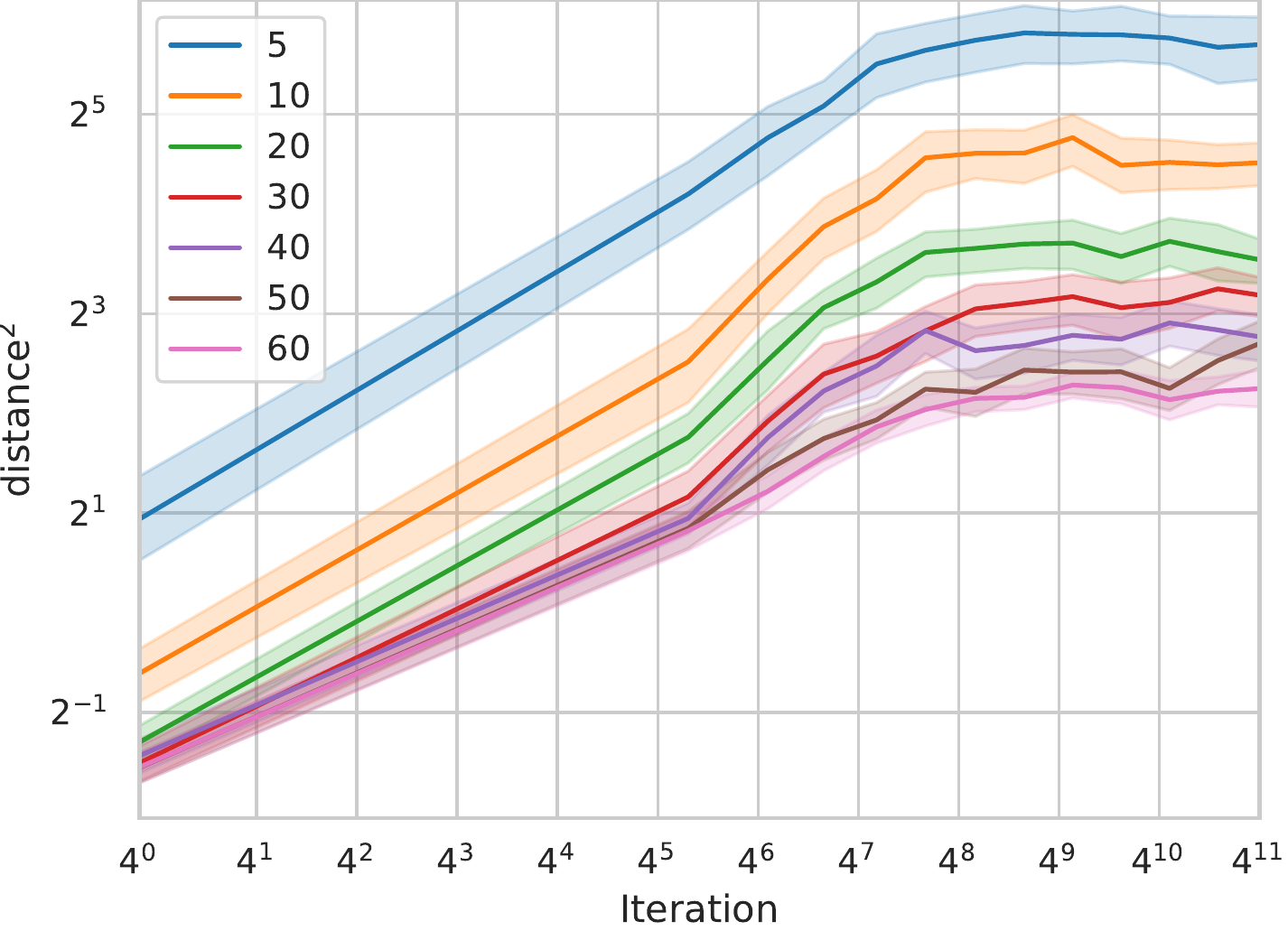}&
\includegraphics[width=.252\linewidth]{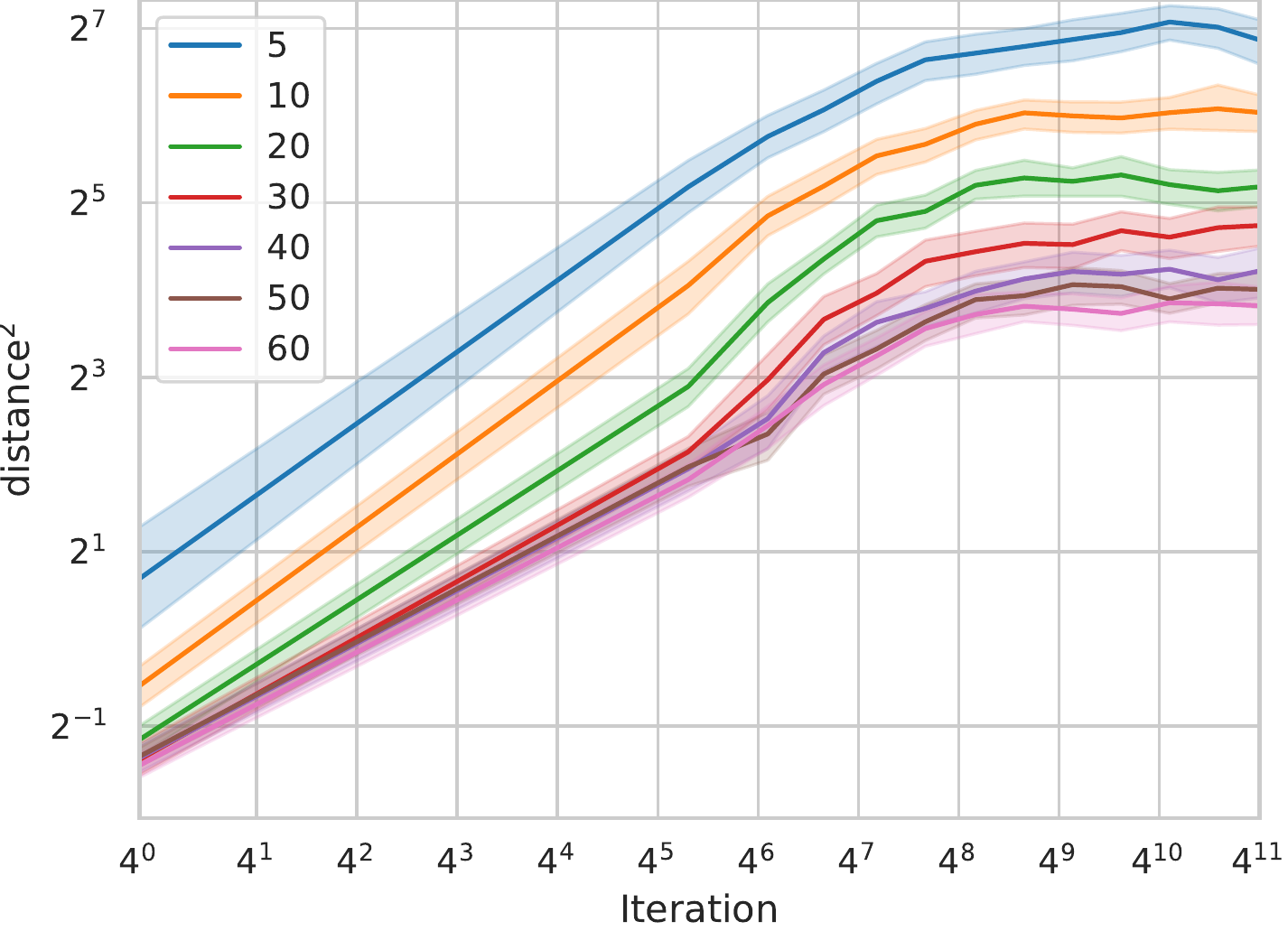}&
\includegraphics[width=.252\linewidth]{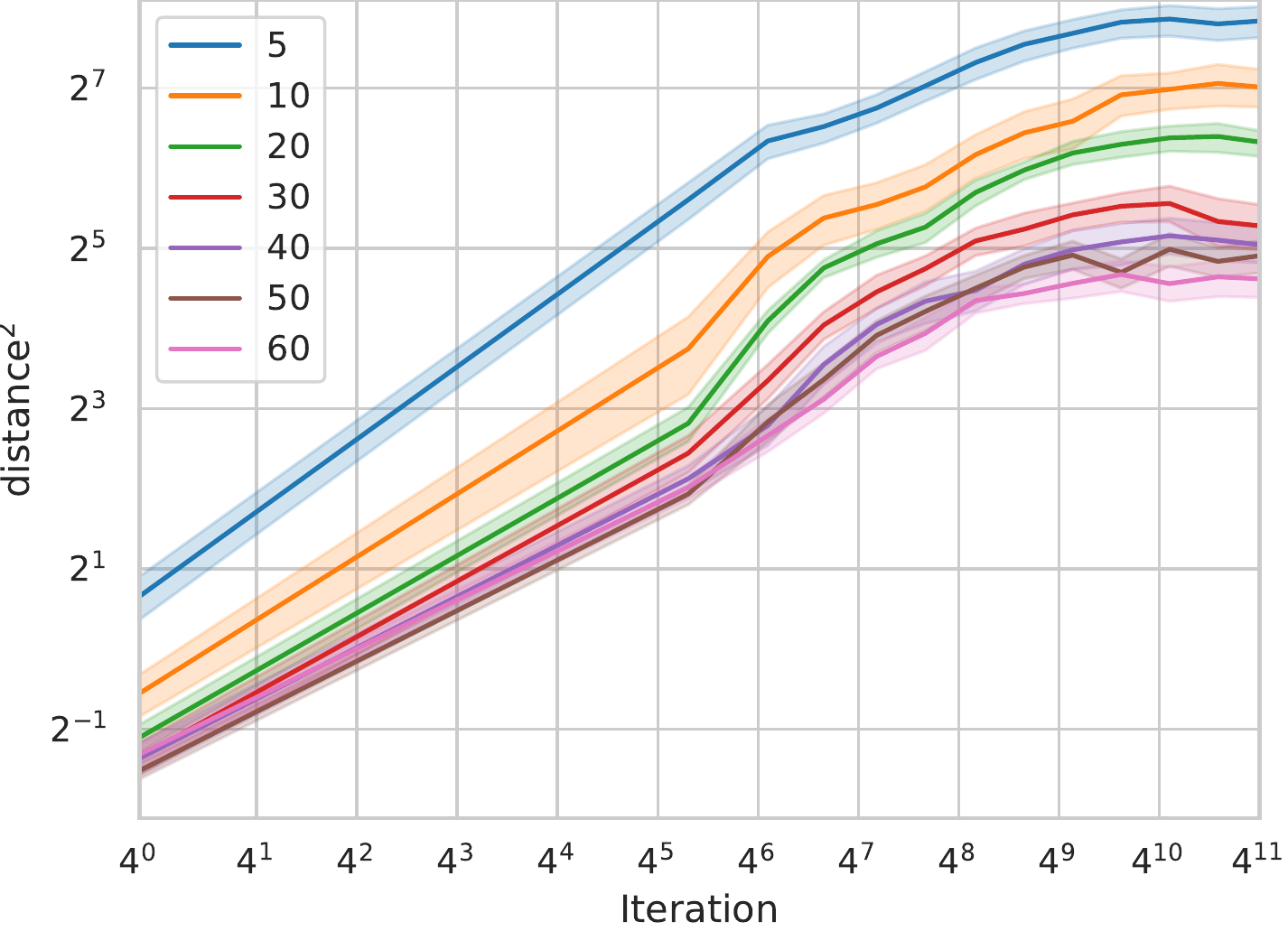}&
\includegraphics[width=.252\linewidth]{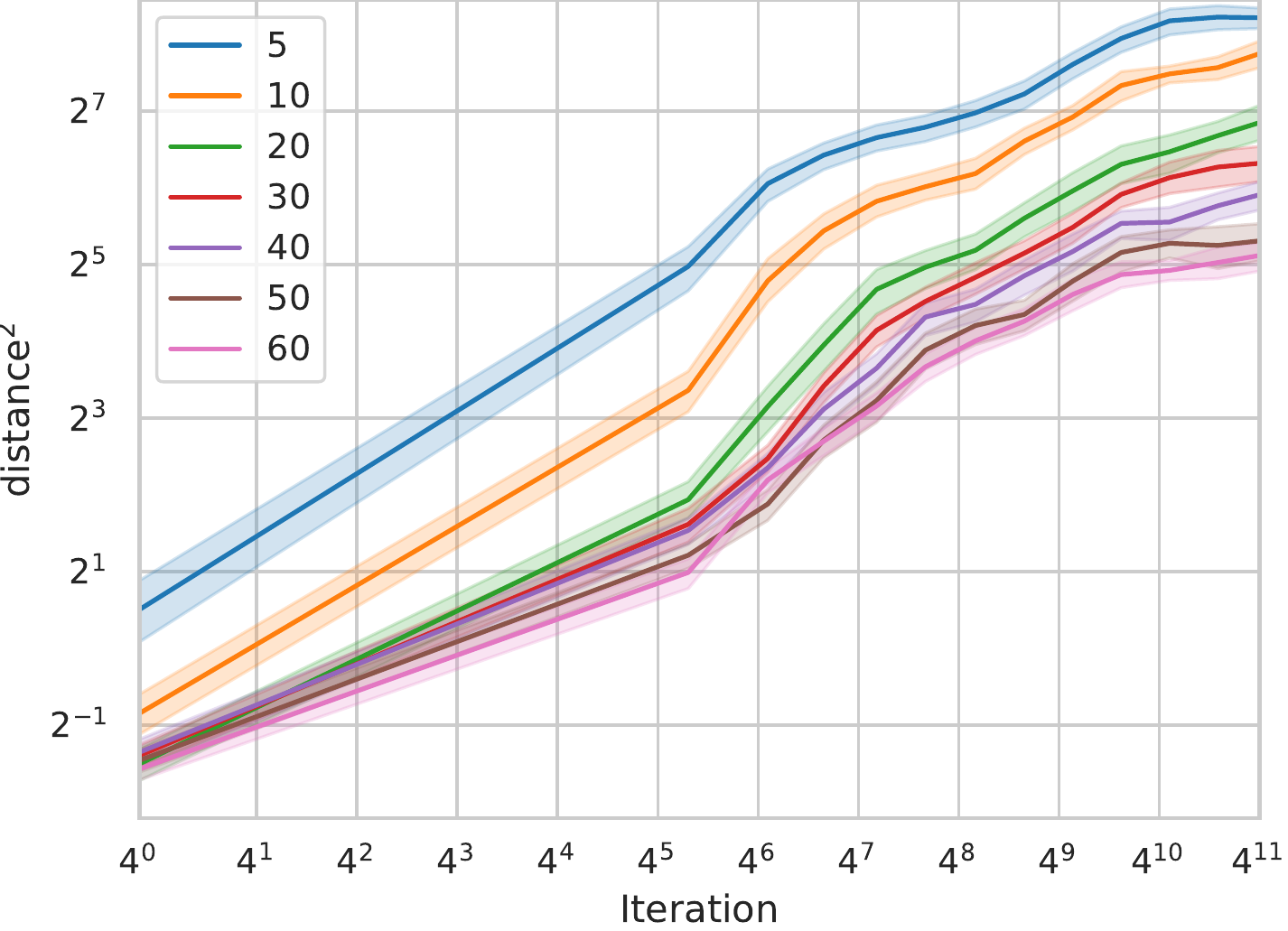}\\
\multicolumn{4}{c}{{\bf (b)} Source classes, test data} \\[0.5em]
\includegraphics[width=.252\linewidth]{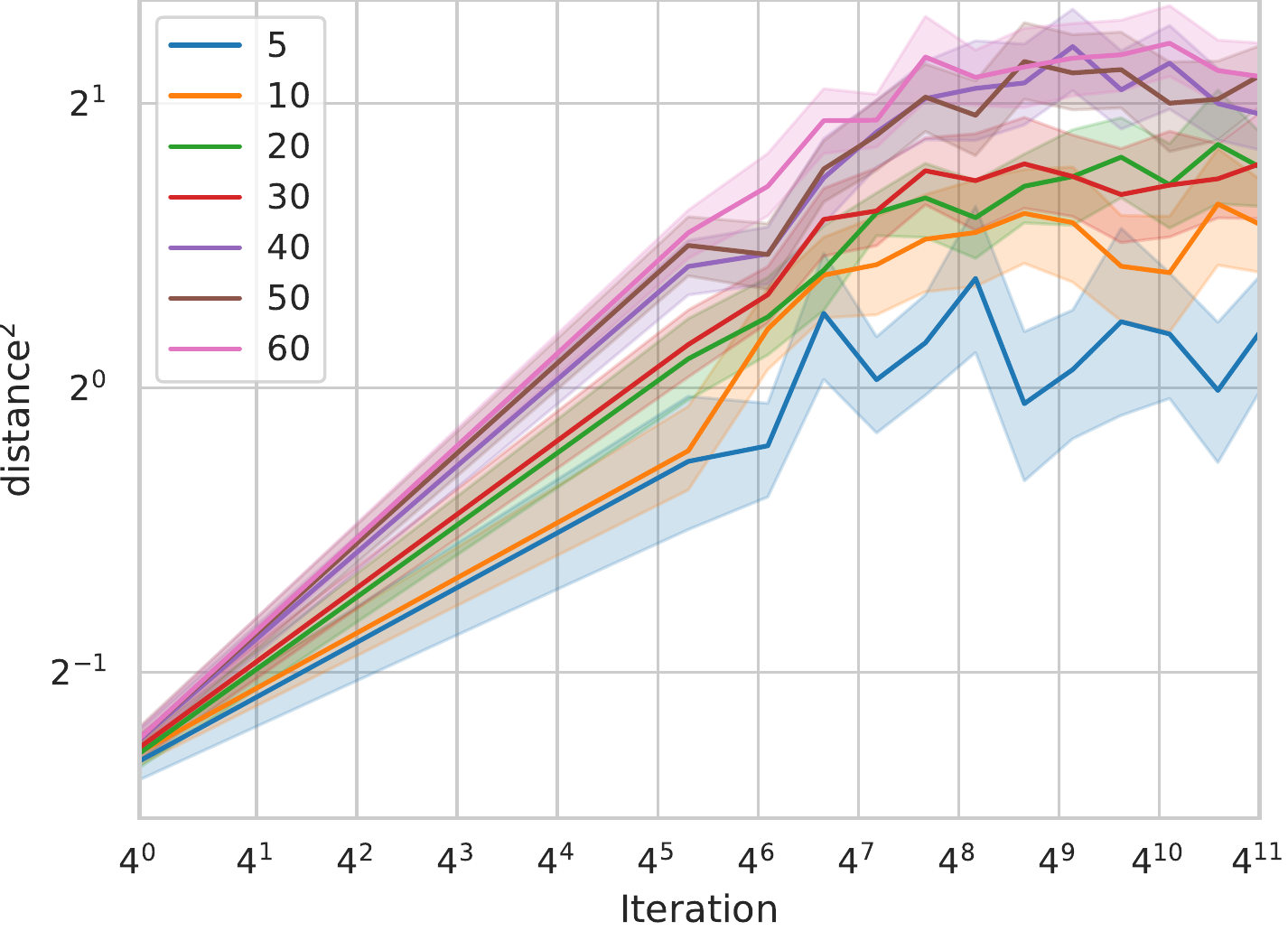}&
\includegraphics[width=.252\linewidth]{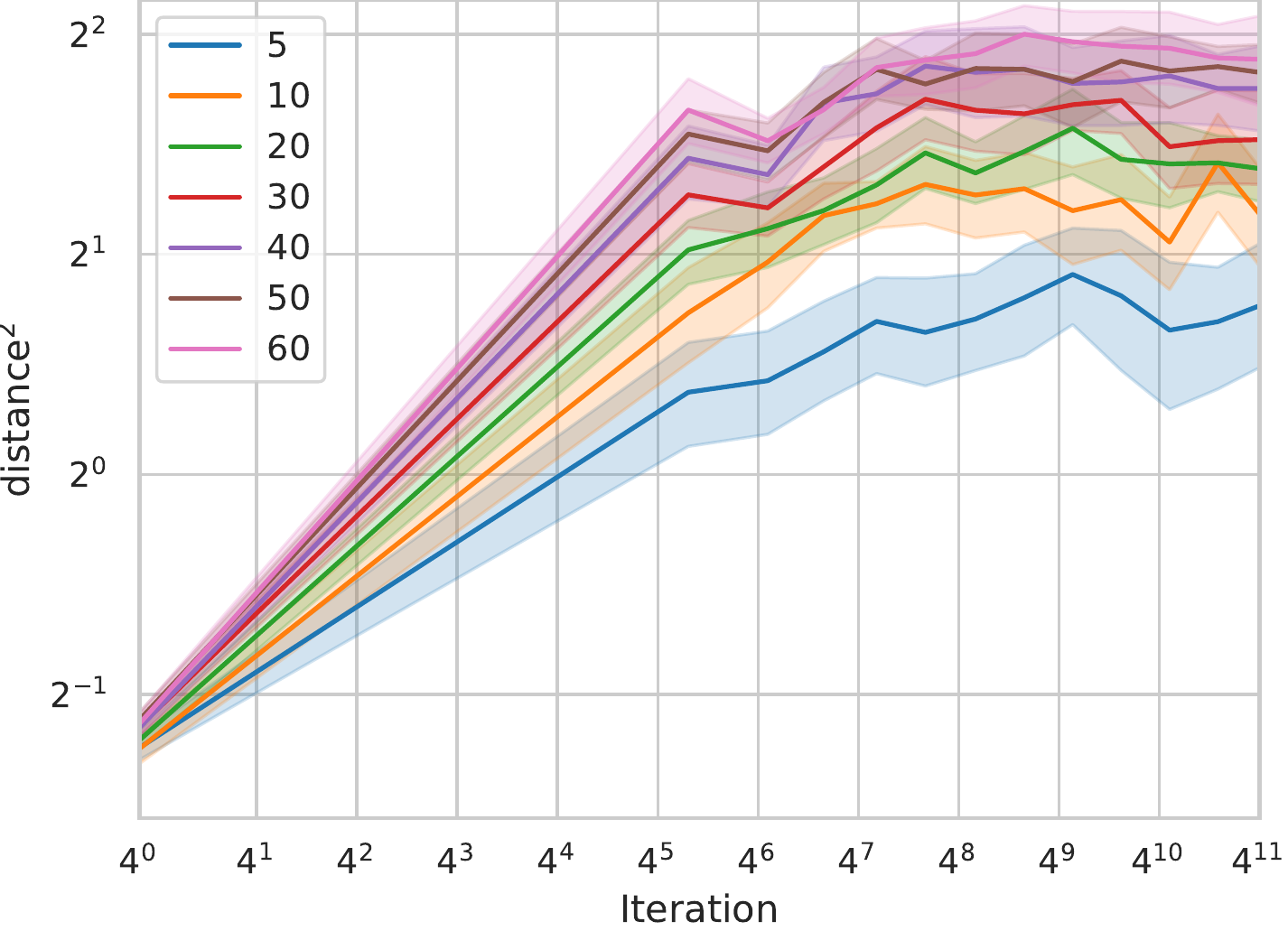}&
\includegraphics[width=.252\linewidth]{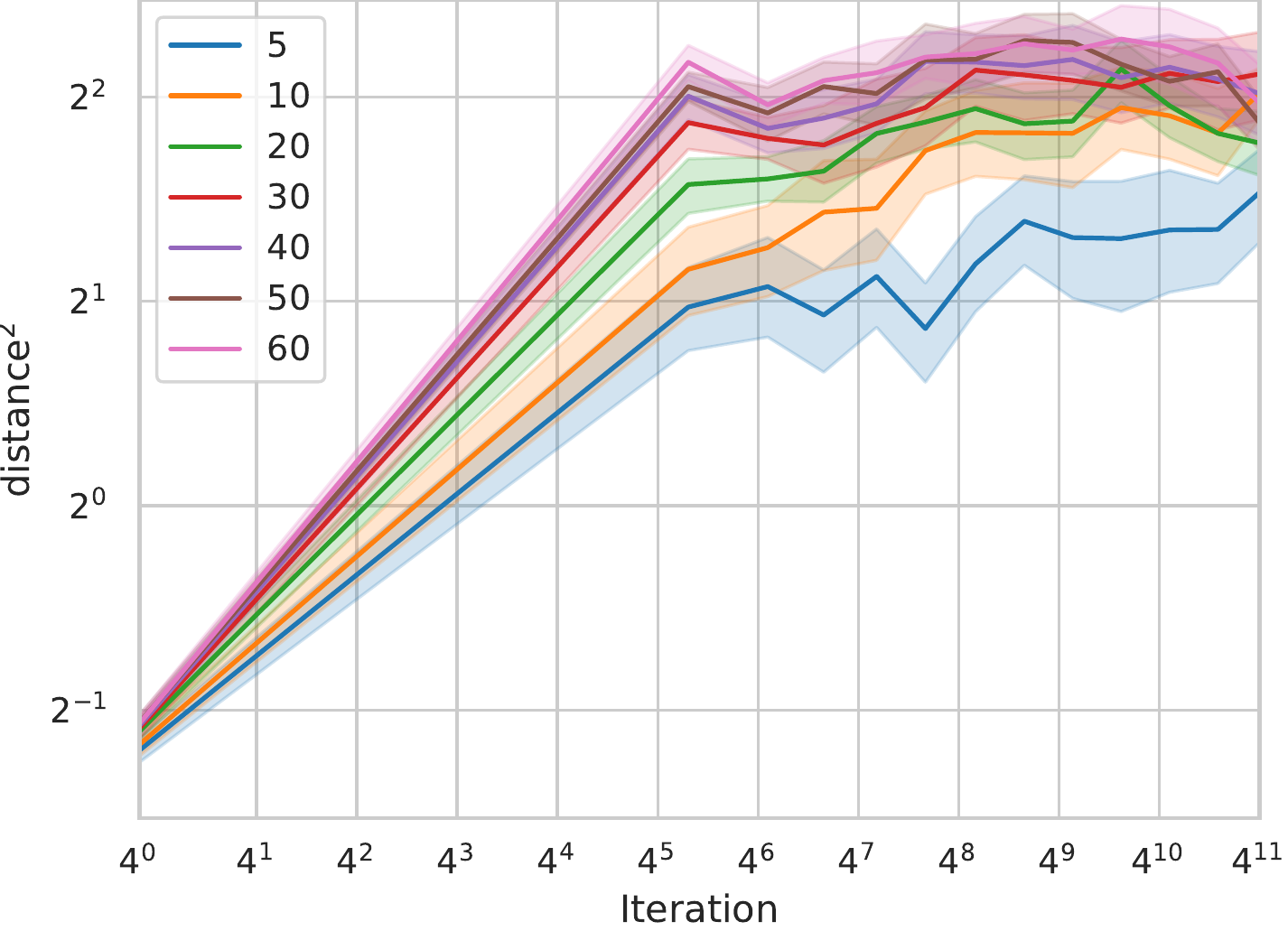}&
\includegraphics[width=.252\linewidth]{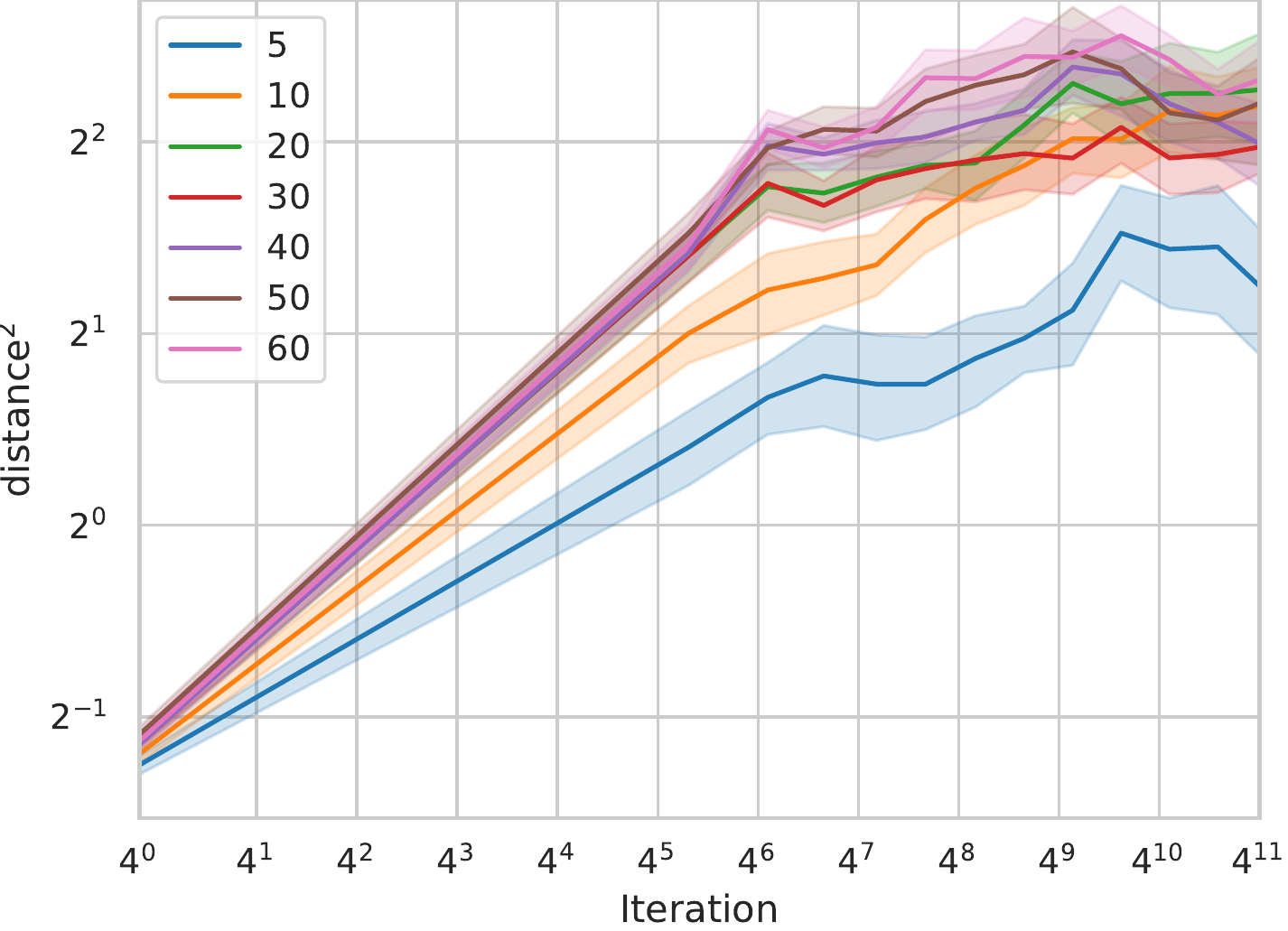}\\
\multicolumn{4}{c}{{\bf (c)} Target classes, test data} \\[0.5em]
\end{tabular}
    \caption{{\bf Dynamics of minimal class-means distance.} In {\bf (a)} we plot $\min_{i\neq j} \|\mu_f(\tS_i)-\mu_f(\tS_j)\|$, in {\bf (b)} we plot $\min_{i\neq j} \|\mu_f(\tP_i)-\mu_f(\tP_j)\|$ and in {\bf (c)} we plot $\min_{i\neq j} \|\mu_f(P_i)-\mu_f(P_j)\|$ as a function of the number of training iterations. In each experiment we trained a WRN-28-4 using SGD on a set of $l\in \{5,10,20,30,40,50,60\}$ source classes on CIFAR-FS (as indicated in the legend). The $i$'th column corresponds to the results of training with SGD with $\eta = 2^{-2i-2}$. 
    } \label{fig:dist_fs}
\end{figure}

\begin{figure}[ht]
    \centering
    \begin{tabular}{@{}c@{~}c@{~}c}
\includegraphics[width=.252\linewidth]{figures/multiclass/mini_imagenet/variance_plots/relative_variance_train/lr=0.0625.pdf}&
\includegraphics[width=.252\linewidth]{figures/multiclass/mini_imagenet/variance_plots/relative_variance_test/lr=0.0625.pdf}&
\includegraphics[width=.252\linewidth]{figures/multiclass/mini_imagenet/variance_plots/relative_variance_tr/lr=0.0625.pdf}\\
\includegraphics[width=.252\linewidth]{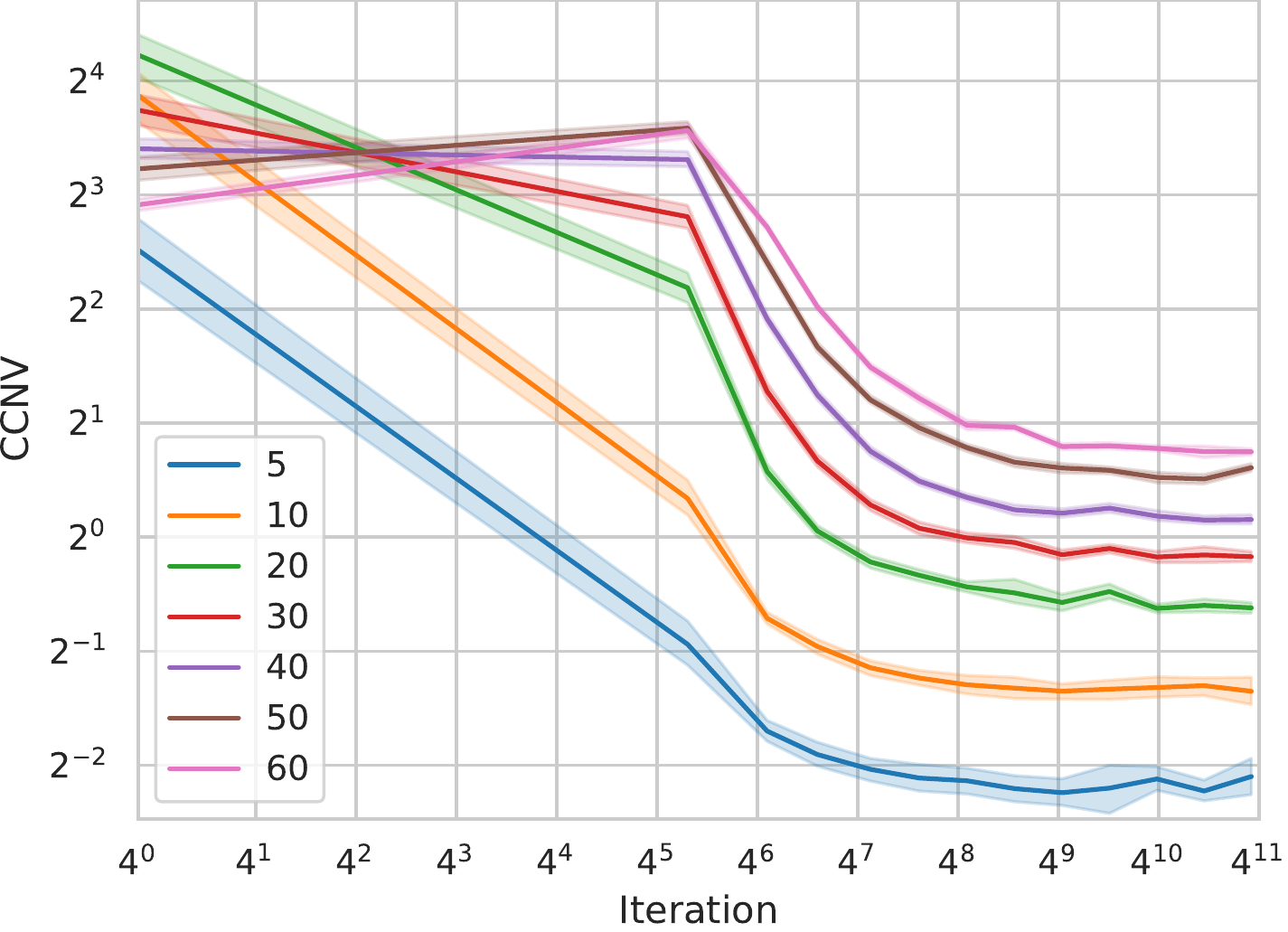}&
\includegraphics[width=.252\linewidth]{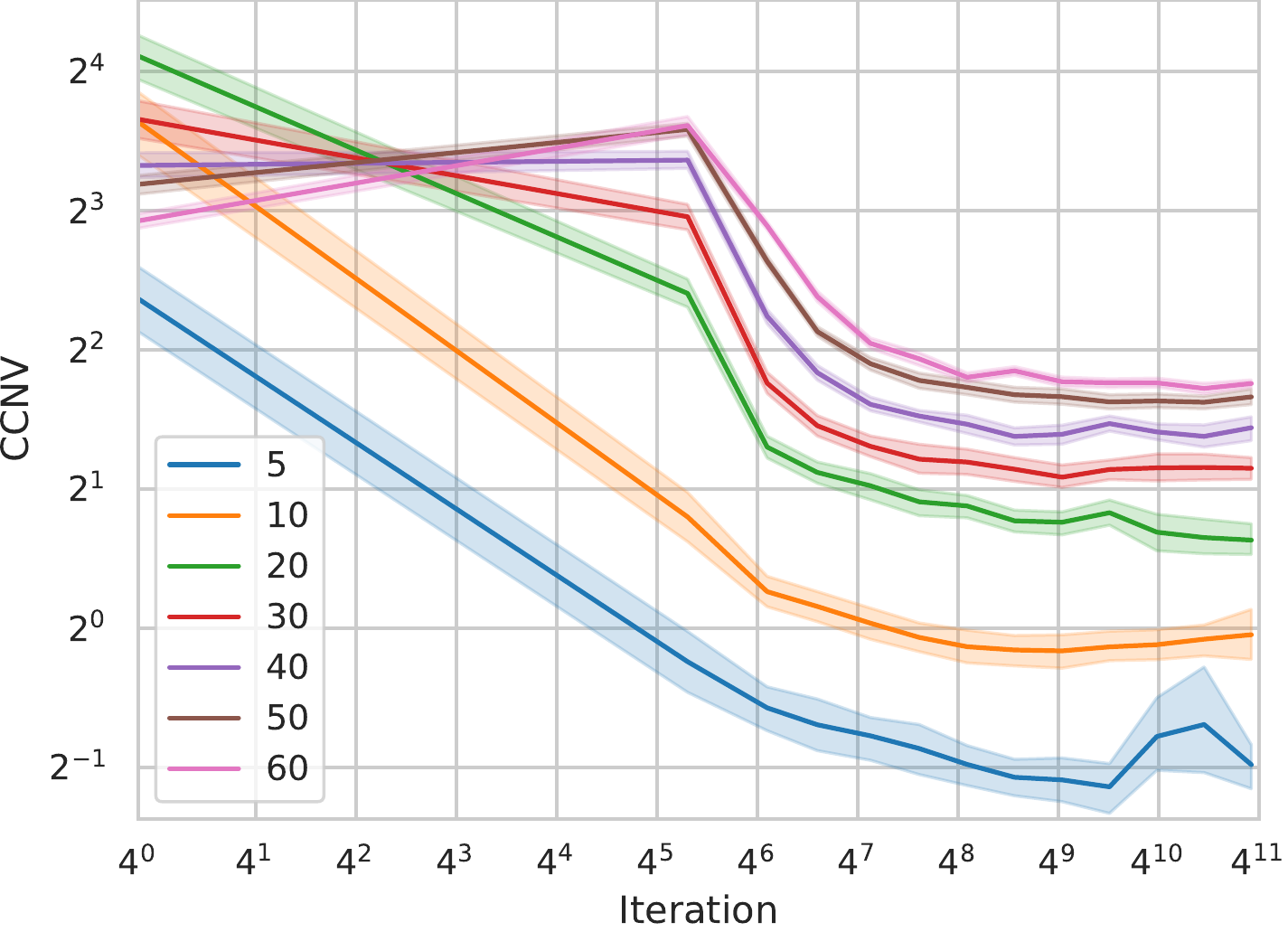}&
\includegraphics[width=.252\linewidth]{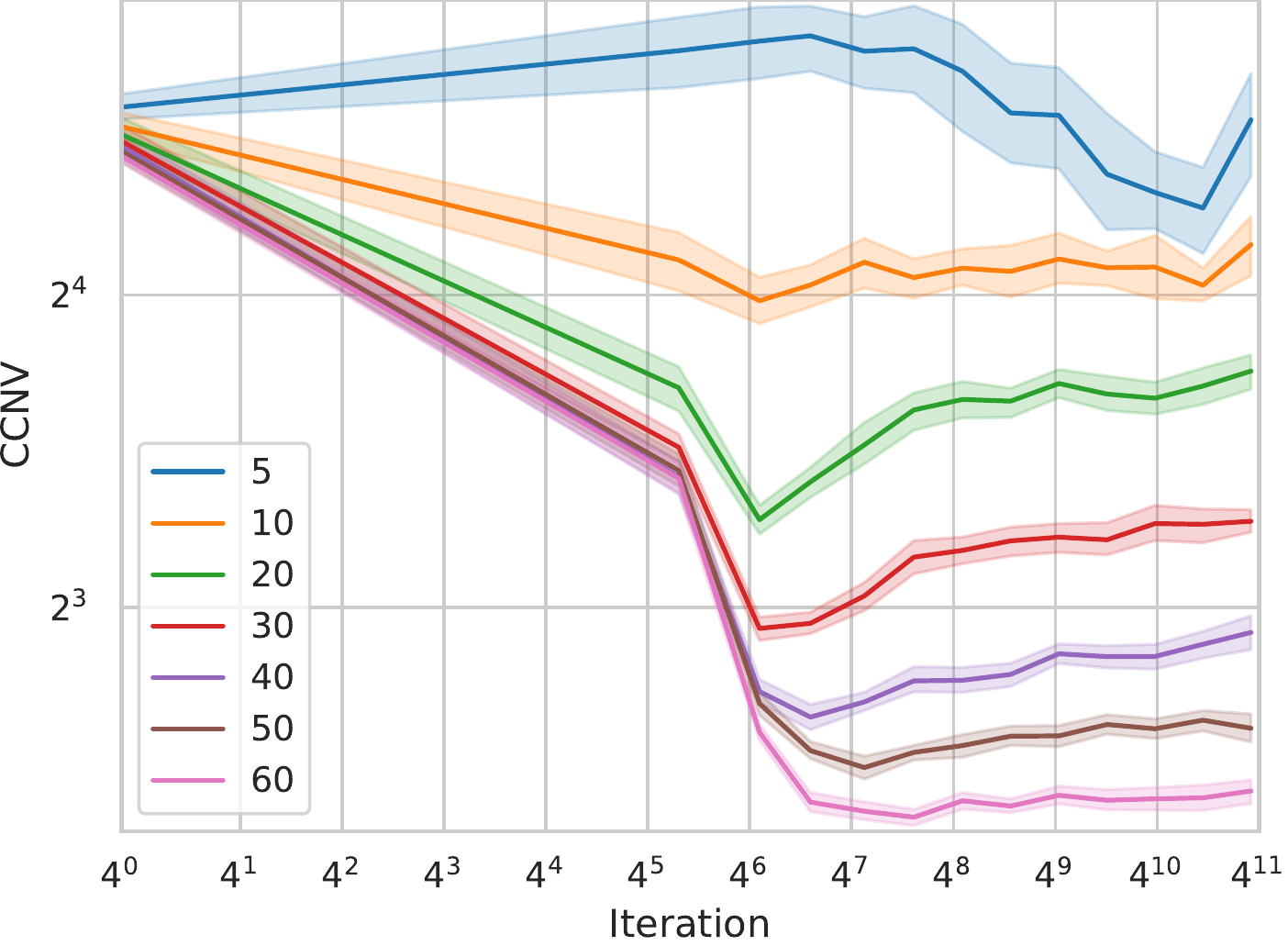}\\
\includegraphics[width=.252\linewidth]{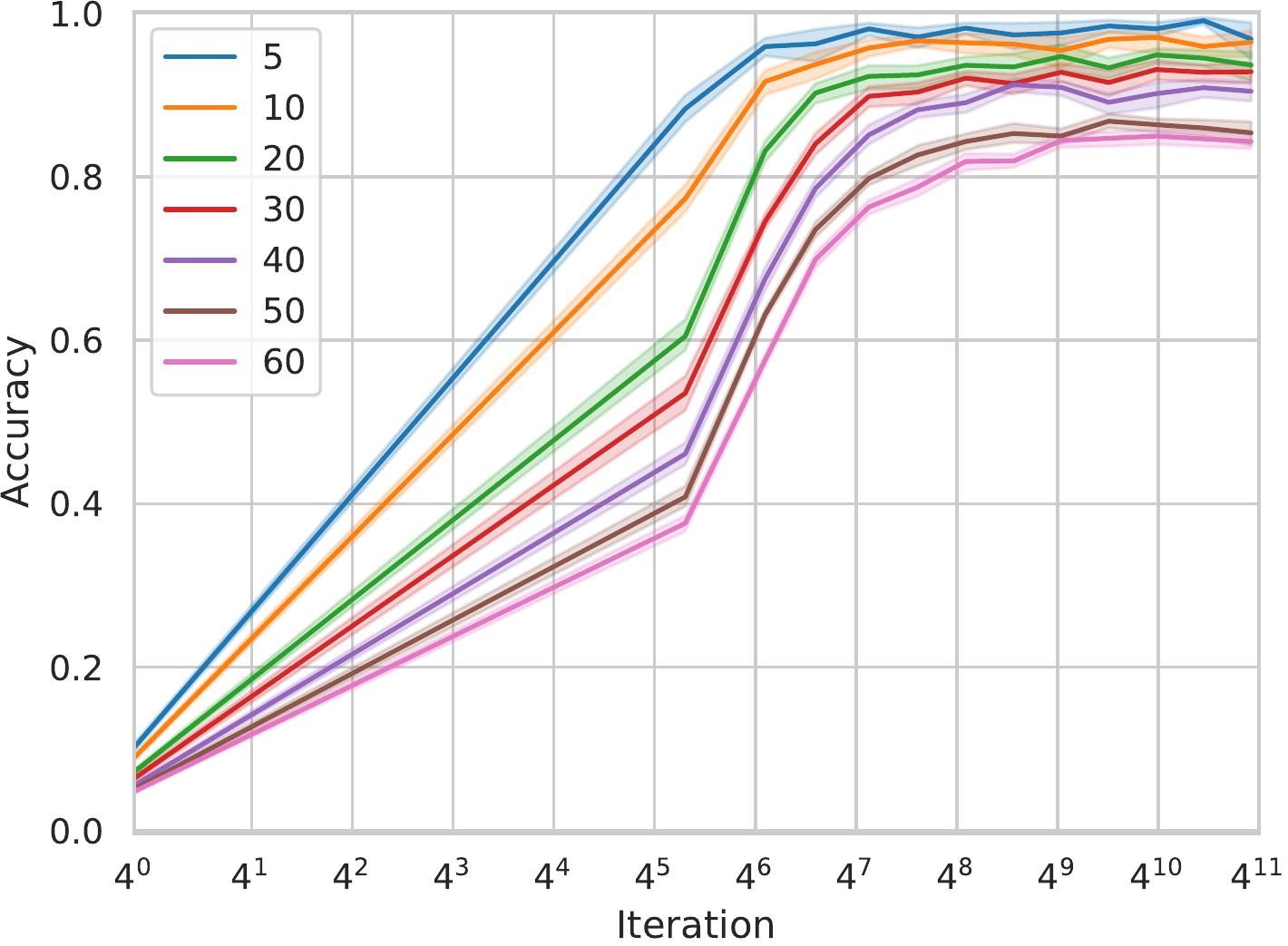}&
\includegraphics[width=.252\linewidth]{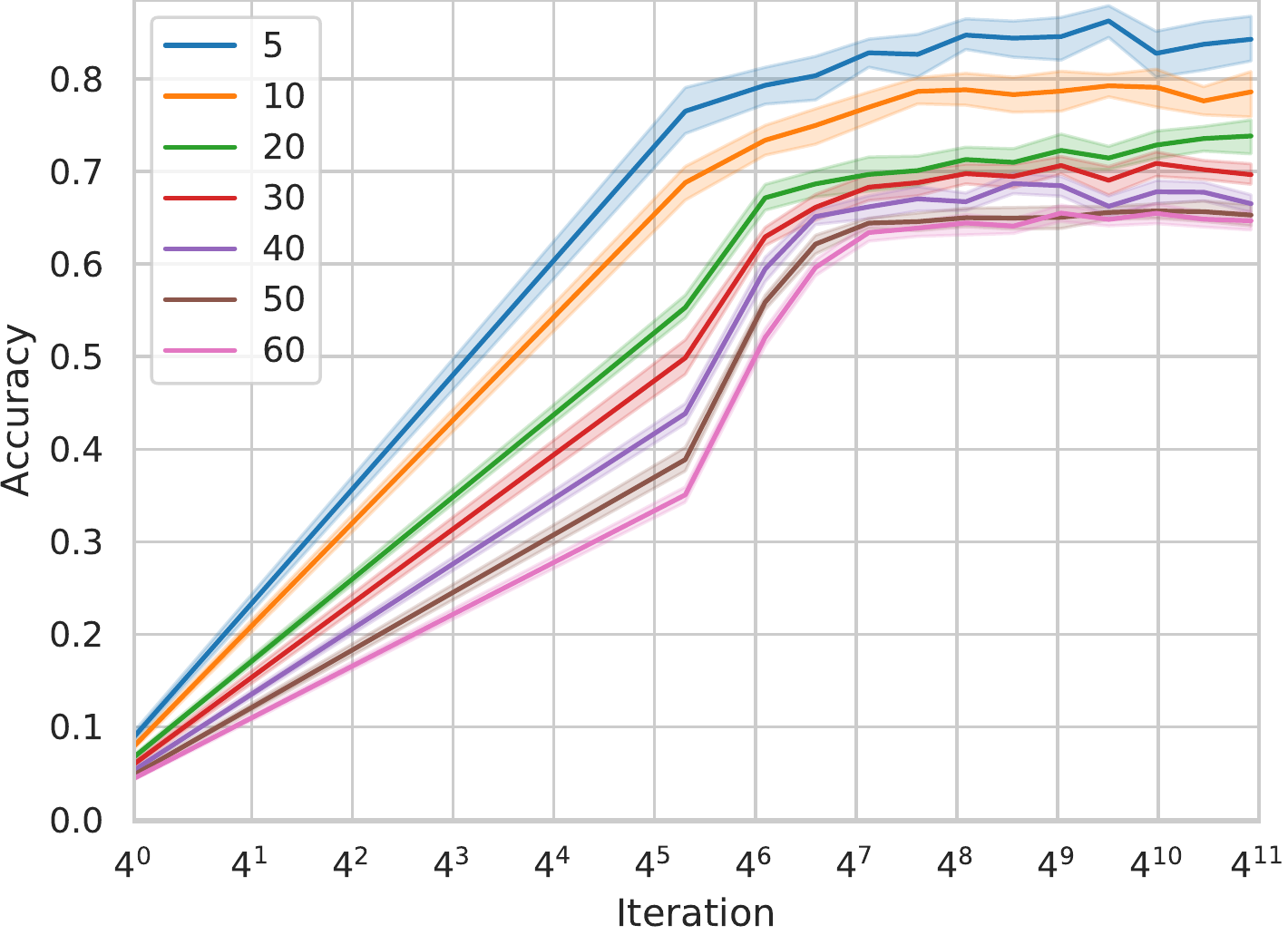}&
\includegraphics[width=.252\linewidth]{figures/multiclass/mini_imagenet/accuracy_plots/tr_accs/lr=0.0625.pdf}
\\
{\bf (a)} train & {\bf (b)} test & {\bf (c)} target
\end{tabular}
    \caption{{\bf Results of WRN-28-4 on Mini-ImageNet.} {\bf (row 1)} CDNV, {\bf (row 2)} CCNV, and {\bf (row 3)} accuracy on the source train, test, and the target data (columns a,b,c, resp.). Each model was trained using SGD with $\eta=2^{-4}$ on a set of $l\in \{5,10,20,30,40,50,60\}$ source classes (as indicated in the legend). 
    } \label{fig:results_Mini-ImageNet}
\end{figure}

\begin{figure}[ht]
    \centering
    \begin{tabular}{@{}c@{~}c@{~}c}
\includegraphics[width=.252\linewidth]{figures/multiclass/cifar_fs_conv/variance_plots/relative_variance_train/lr=0.0625.pdf}&
\includegraphics[width=.252\linewidth]{figures/multiclass/cifar_fs_conv/variance_plots/relative_variance_test/lr=0.0625.pdf}&
\includegraphics[width=.252\linewidth]{figures/multiclass/cifar_fs_conv/variance_plots/relative_variance_tr/lr=0.0625.pdf}\\
\includegraphics[width=.252\linewidth]{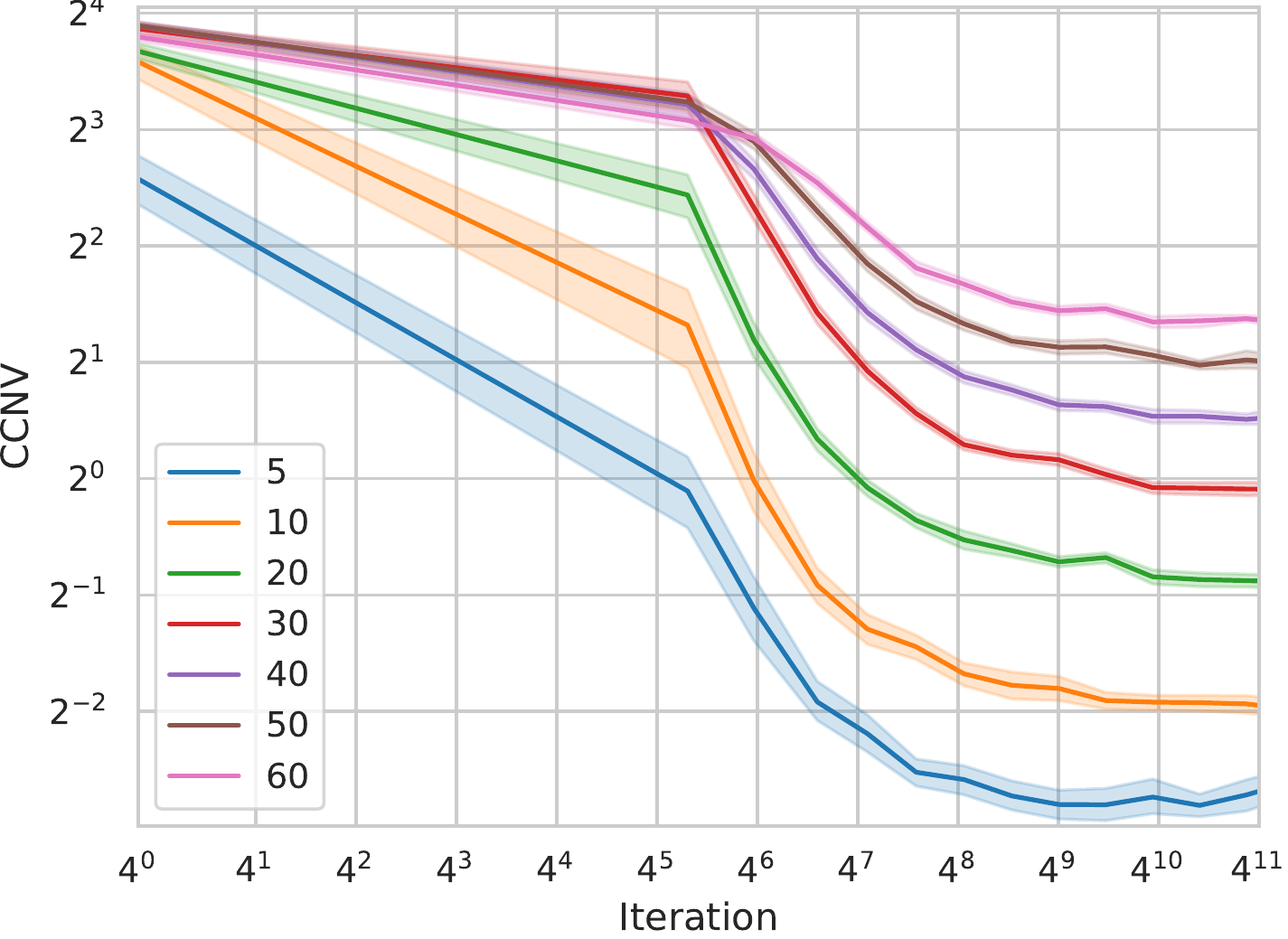}&
\includegraphics[width=.252\linewidth]{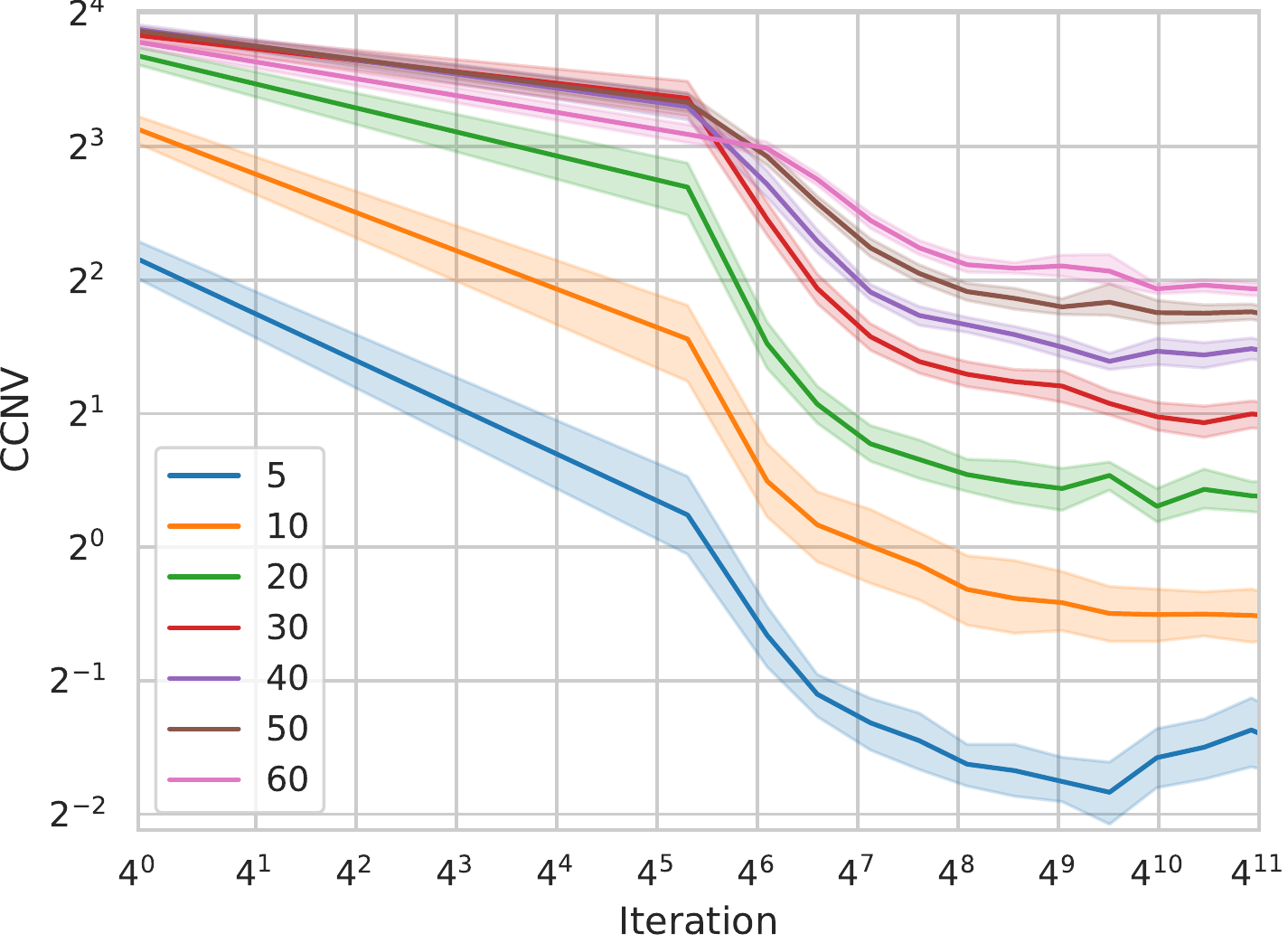}&
\includegraphics[width=.252\linewidth]{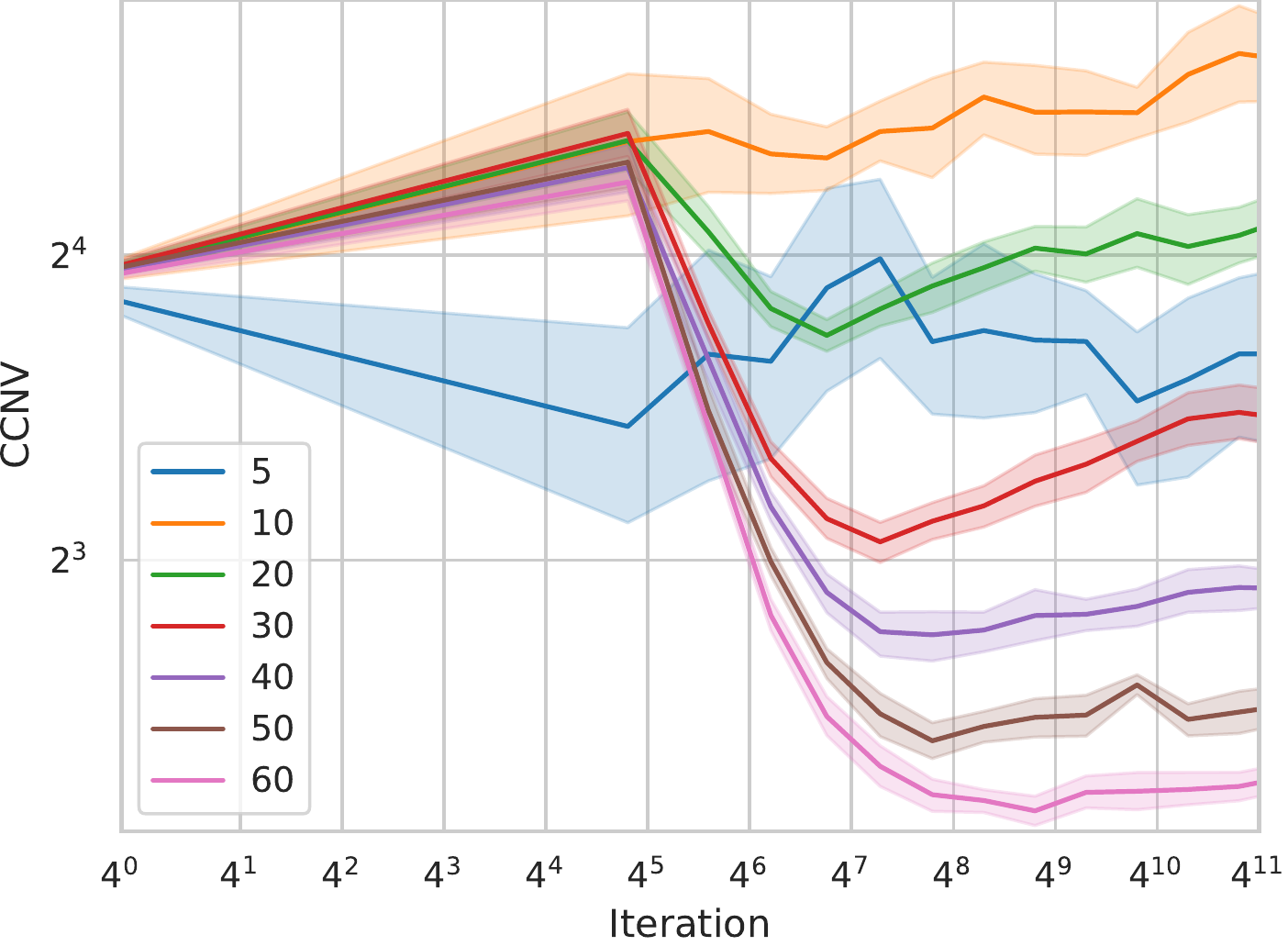}\\
\includegraphics[width=.252\linewidth]{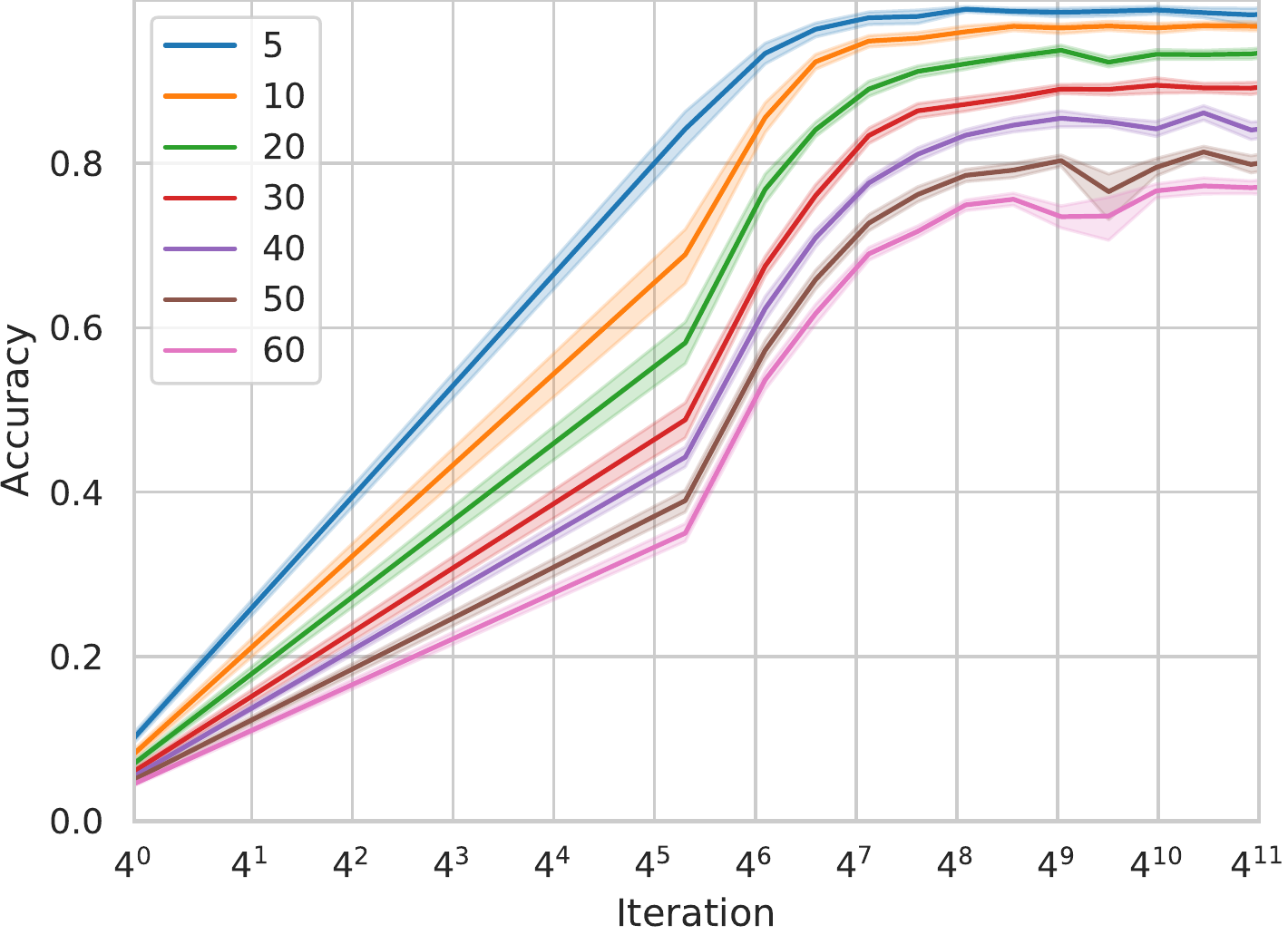}&
\includegraphics[width=.252\linewidth]{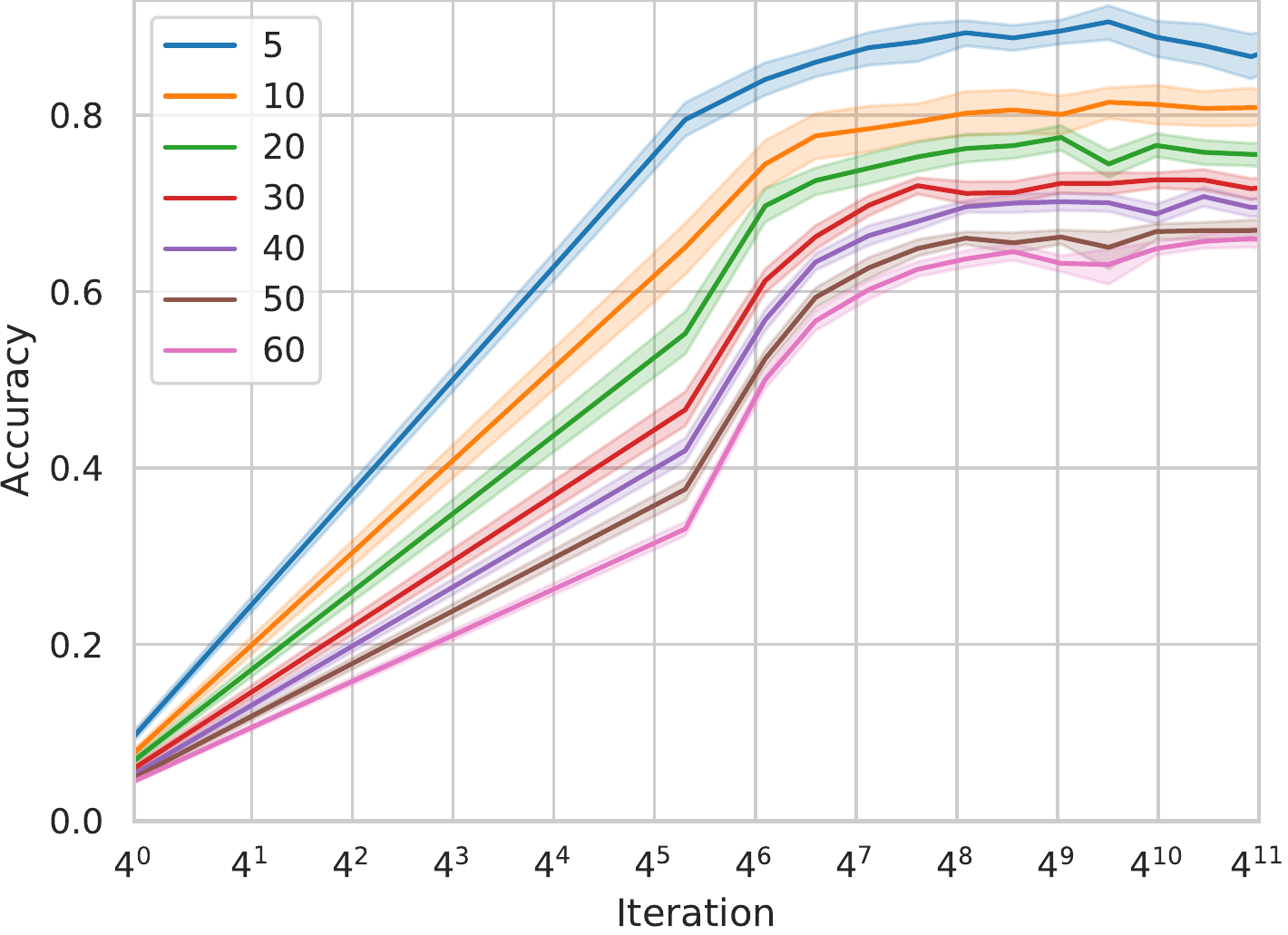}&
\includegraphics[width=.252\linewidth]{figures/multiclass/cifar_fs_conv/accuracy_plots/tr_accs/lr=0.0625.pdf}
\\
{\bf (a)} train & {\bf (b)} test & {\bf(c)} target
\end{tabular}
    \caption{{\bf Results of Conv-28-4 on CIFAR-FS.} {\bf (row 1)} CDNV, {\bf (row 2)} CCNV, and {\bf (row 3)} accuracy on the source train, test, and the target data (columns a,b,c, resp.). Each model was trained using SGD with $\eta=2^{-4}$ on a set of $l\in \{5,10,20,30,40,50,60\}$ source classes (as indicated in the legend). 
    } \label{fig:results_cifar_fs}
\end{figure}

\begin{figure}[ht]
\centering
\begin{tabular}{@{}c@{~}c@{~}c@{~}c}
\includegraphics[width=.252\linewidth]{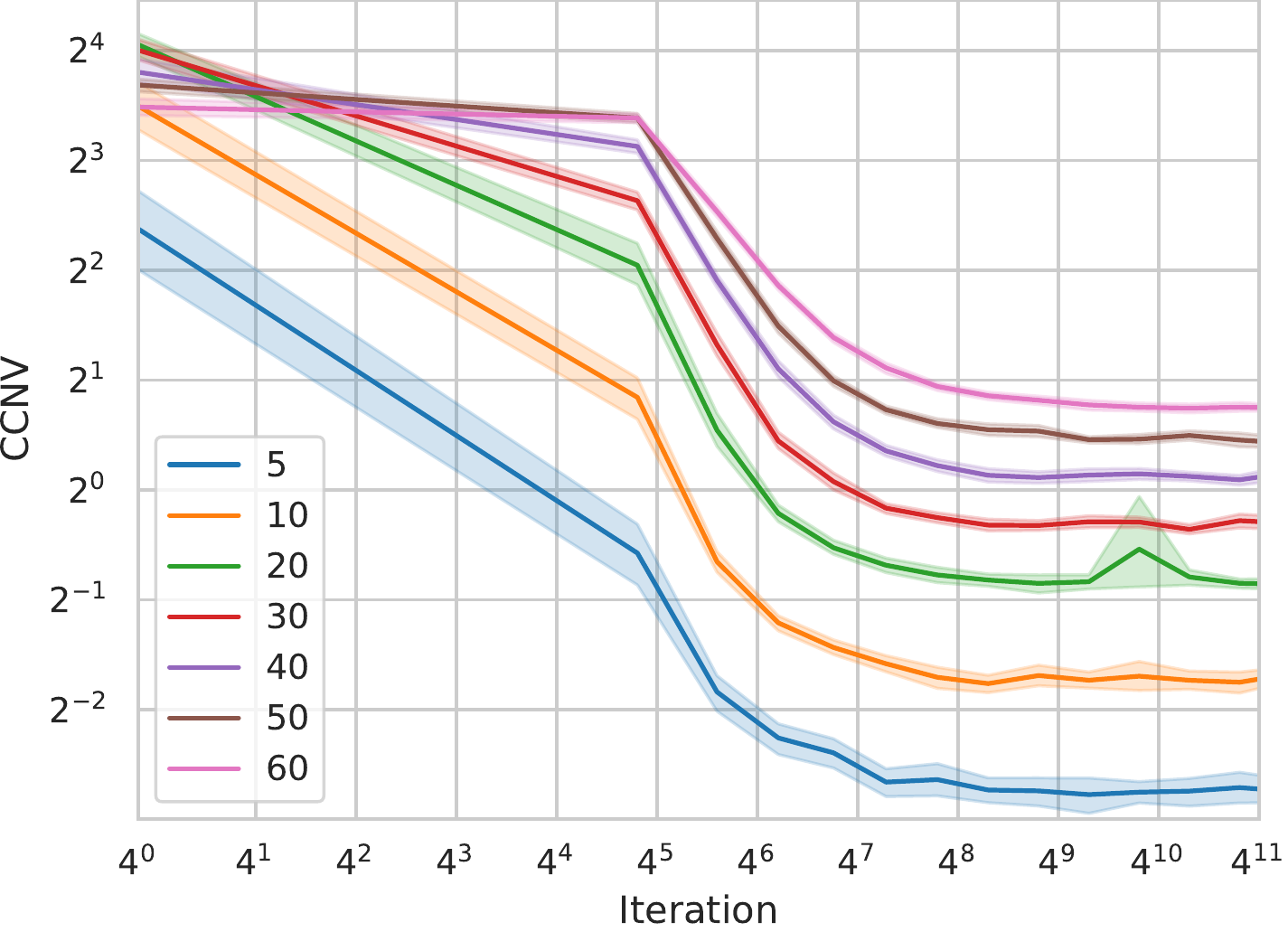}&
\includegraphics[width=.252\linewidth]{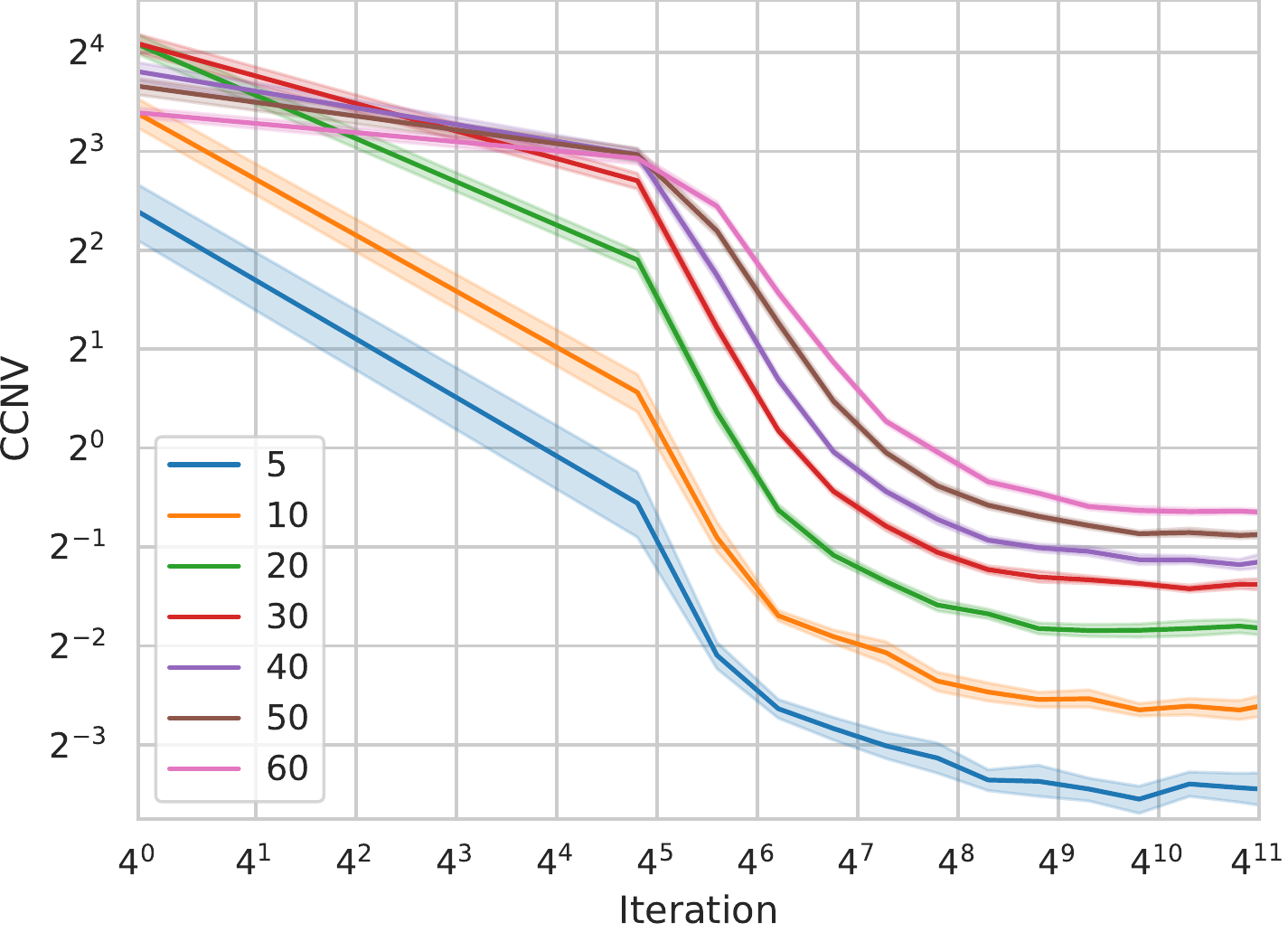}&
\includegraphics[width=.252\linewidth]{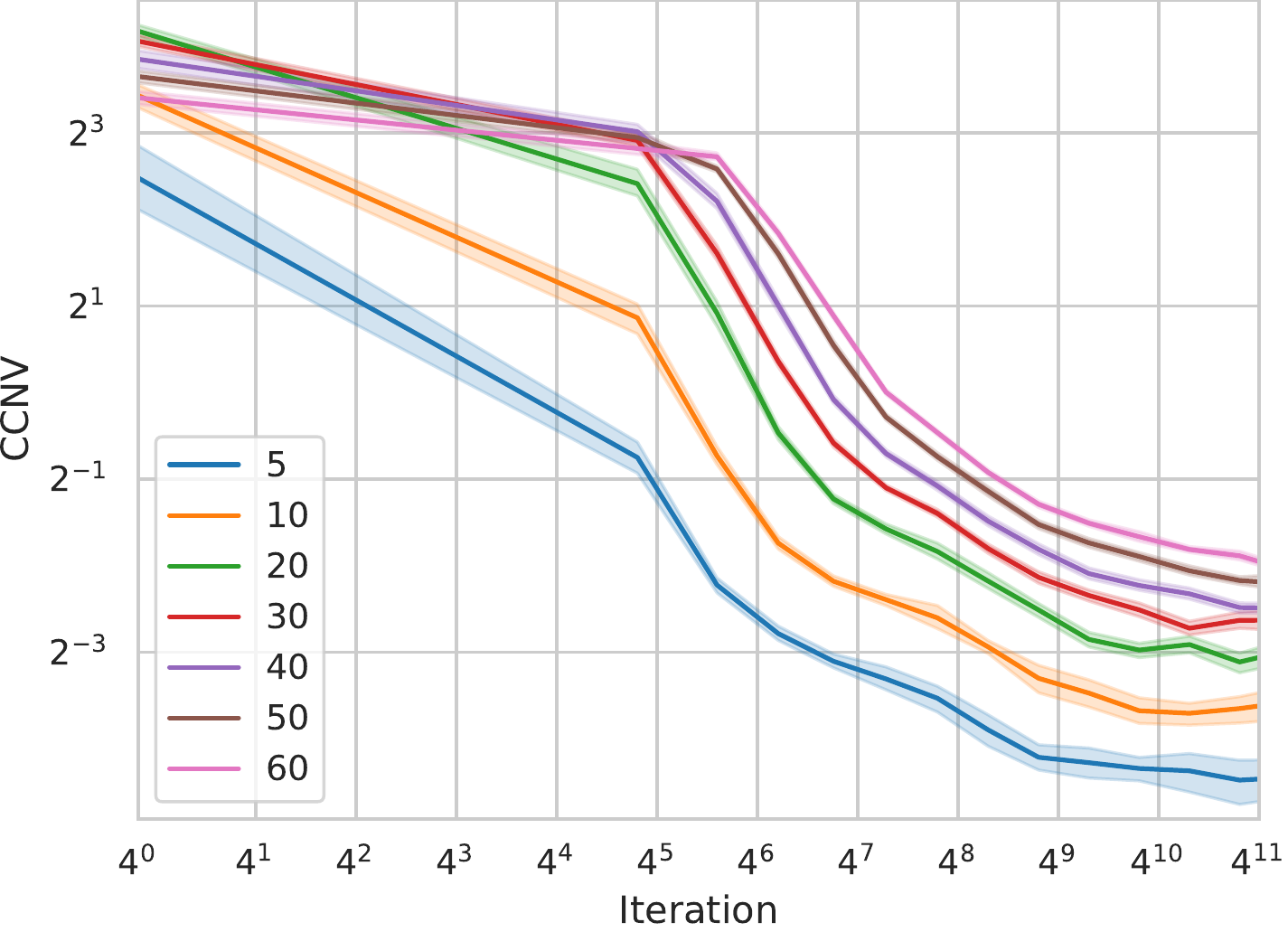}&
\includegraphics[width=.252\linewidth]{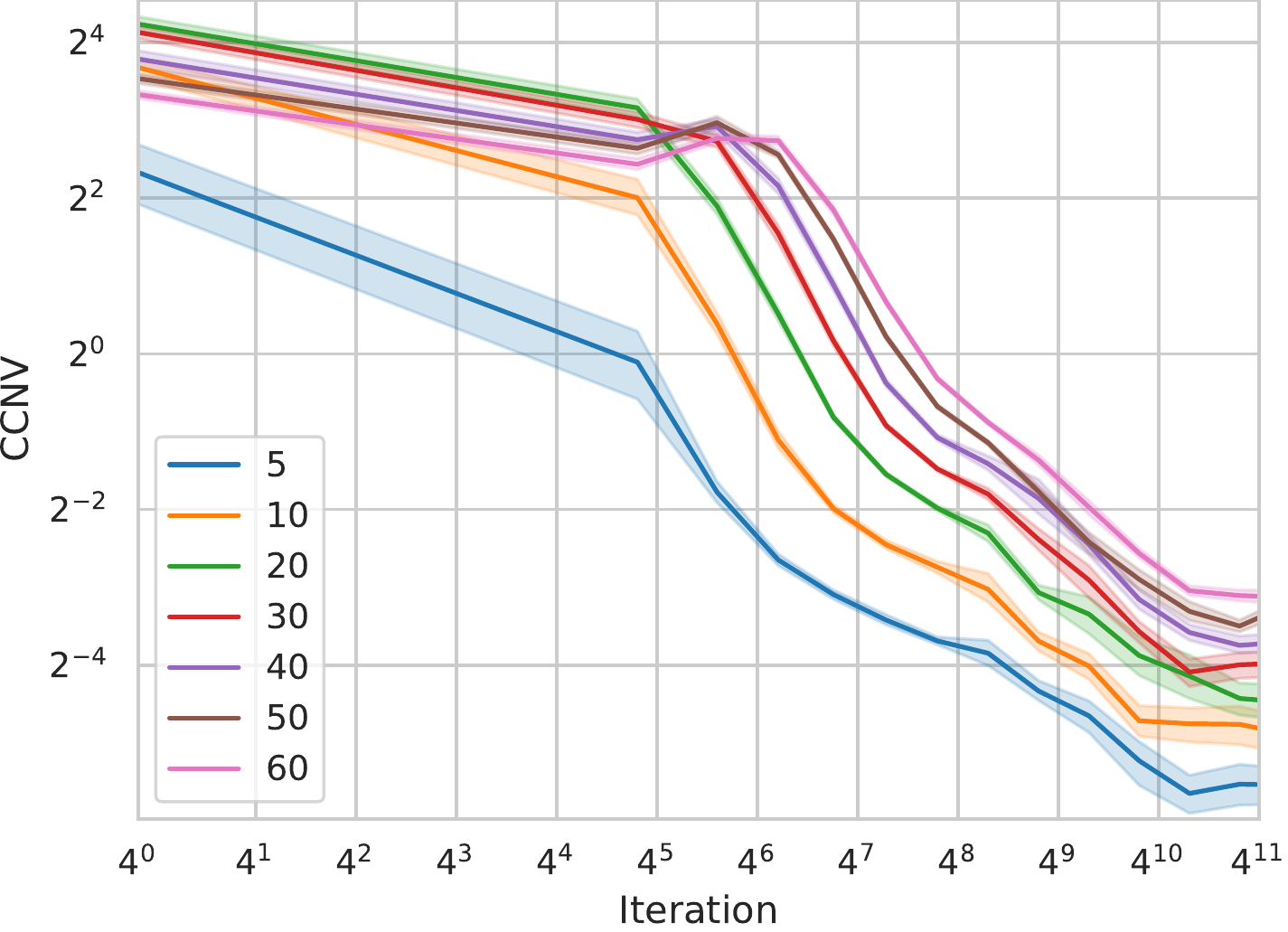}
\\
\multicolumn{4}{c}{{\bf (a)} Source classes, train data} \\[0.5em]
\includegraphics[width=.252\linewidth]{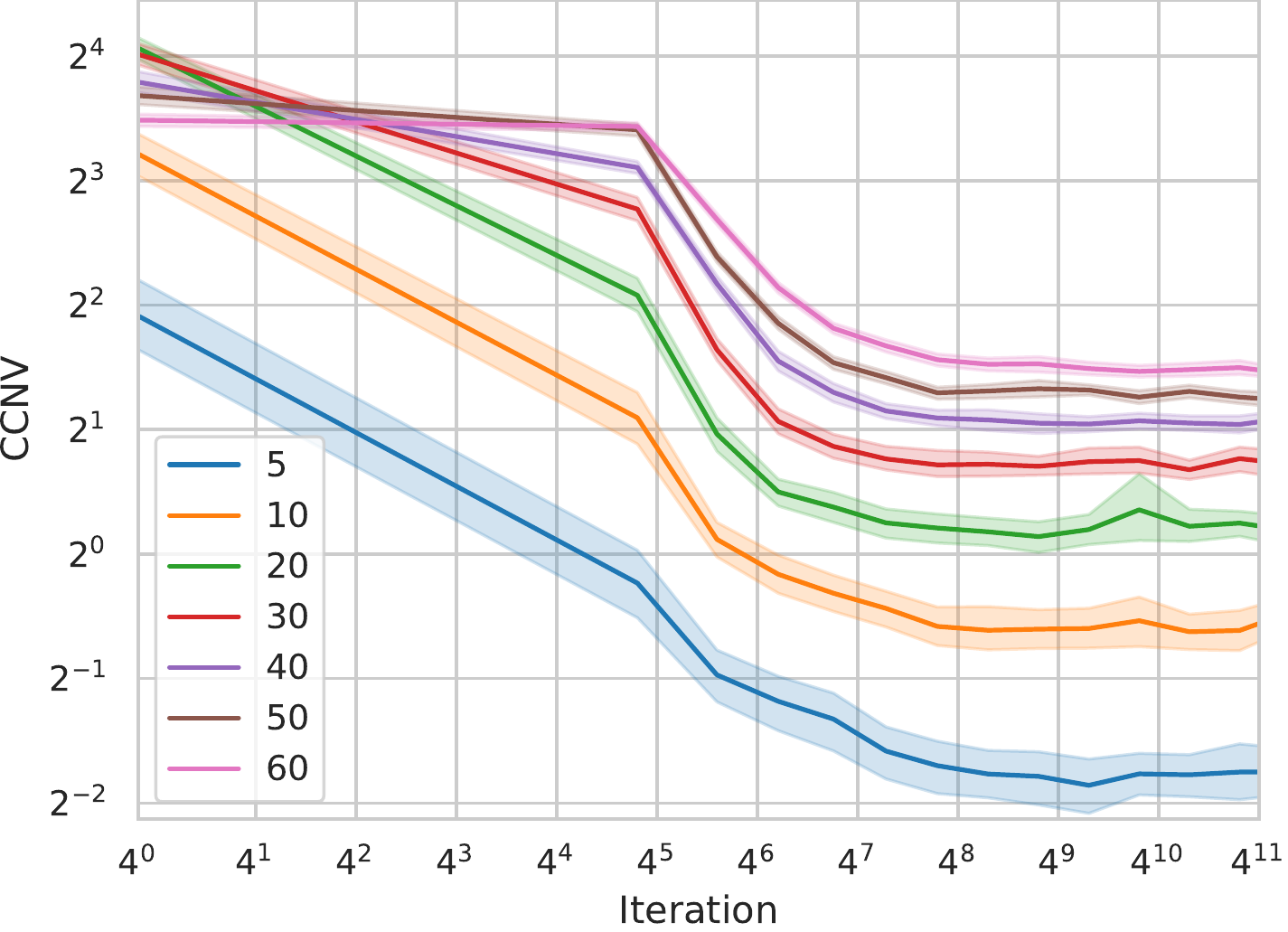}&
\includegraphics[width=.252\linewidth]{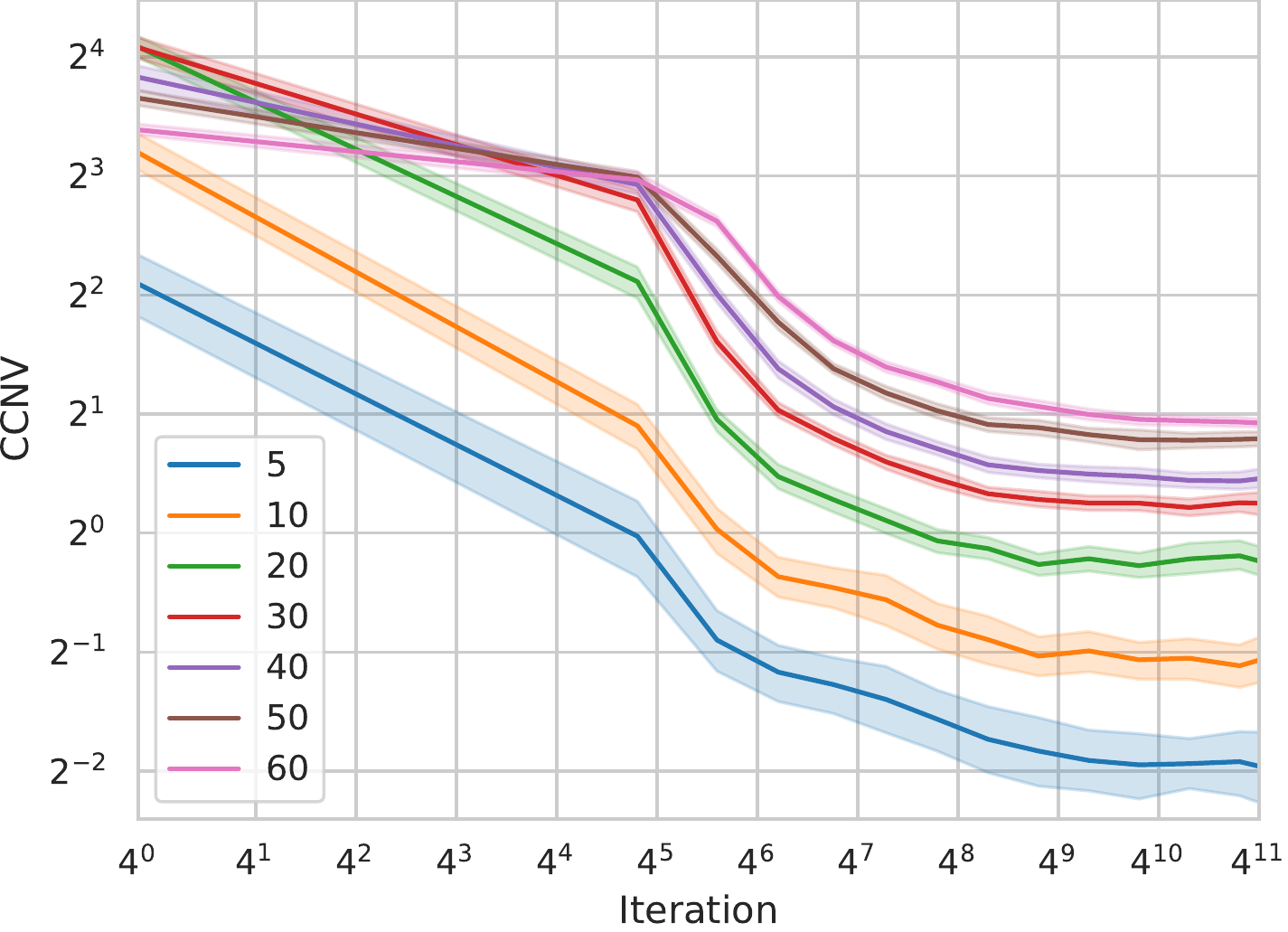}&
\includegraphics[width=.252\linewidth]{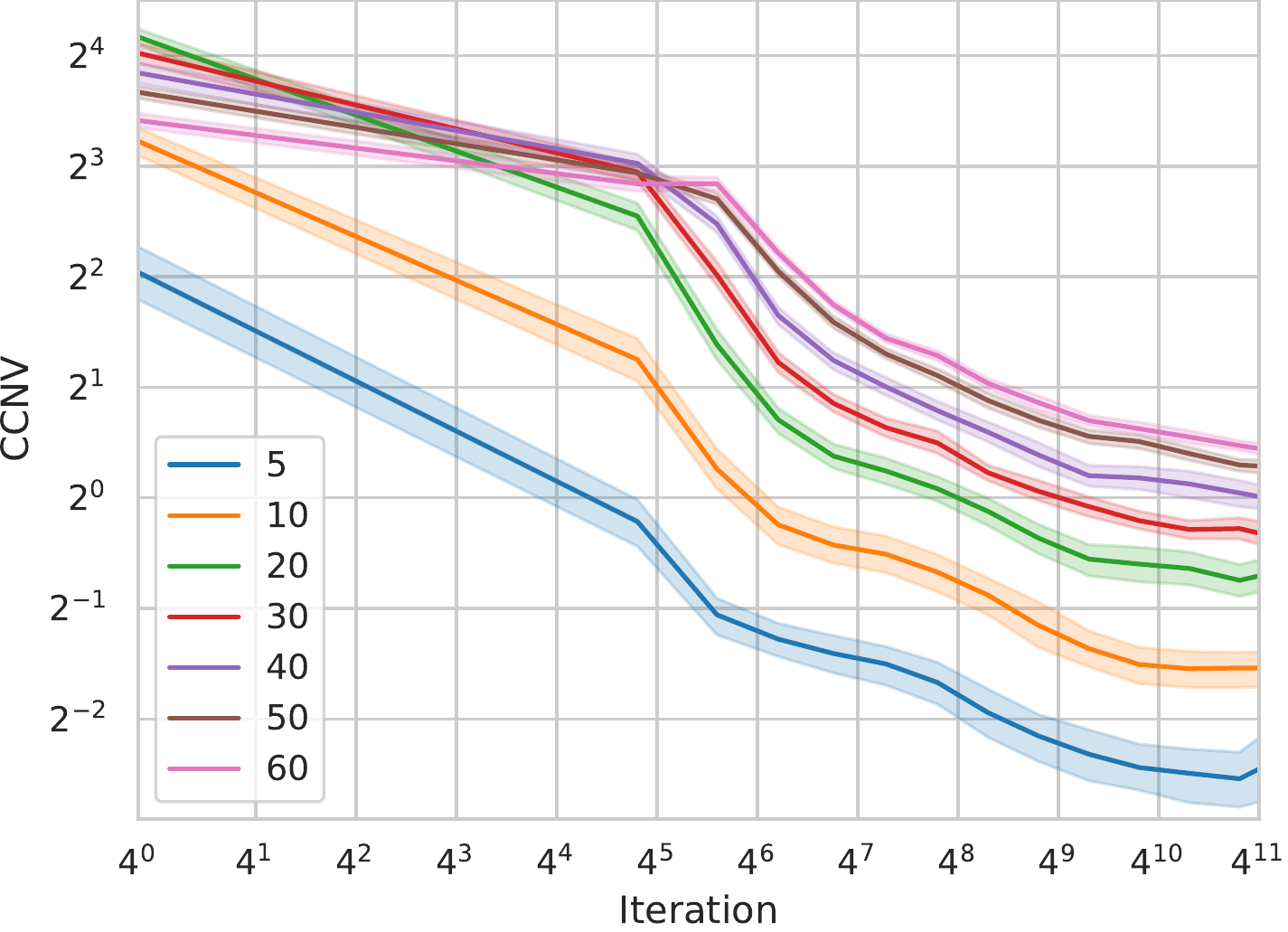}&
\includegraphics[width=.252\linewidth]{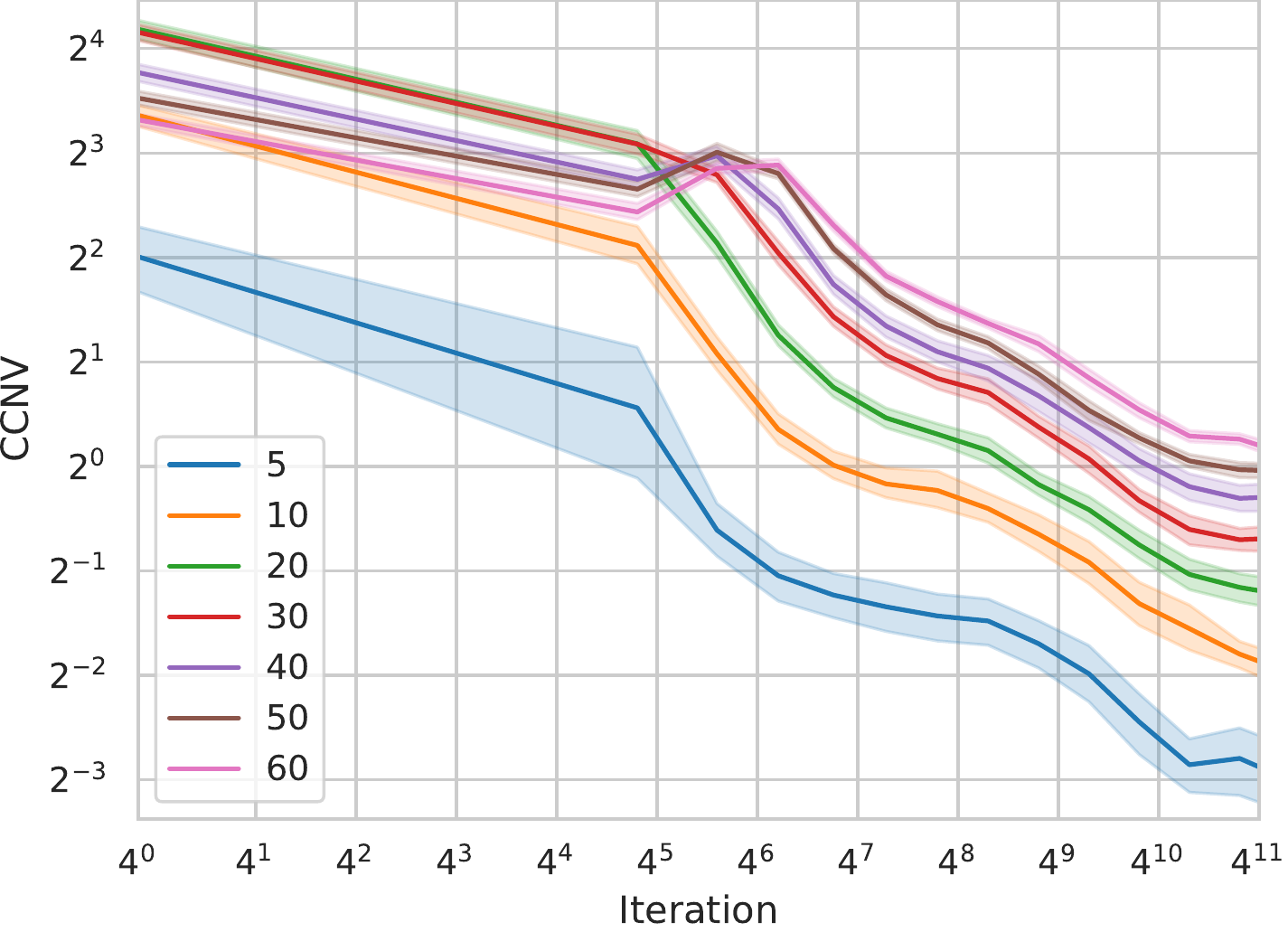}
\\
\multicolumn{4}{c}{{\bf (b)} Source classes, test data} \\[0.5em]
\includegraphics[width=.252\linewidth]{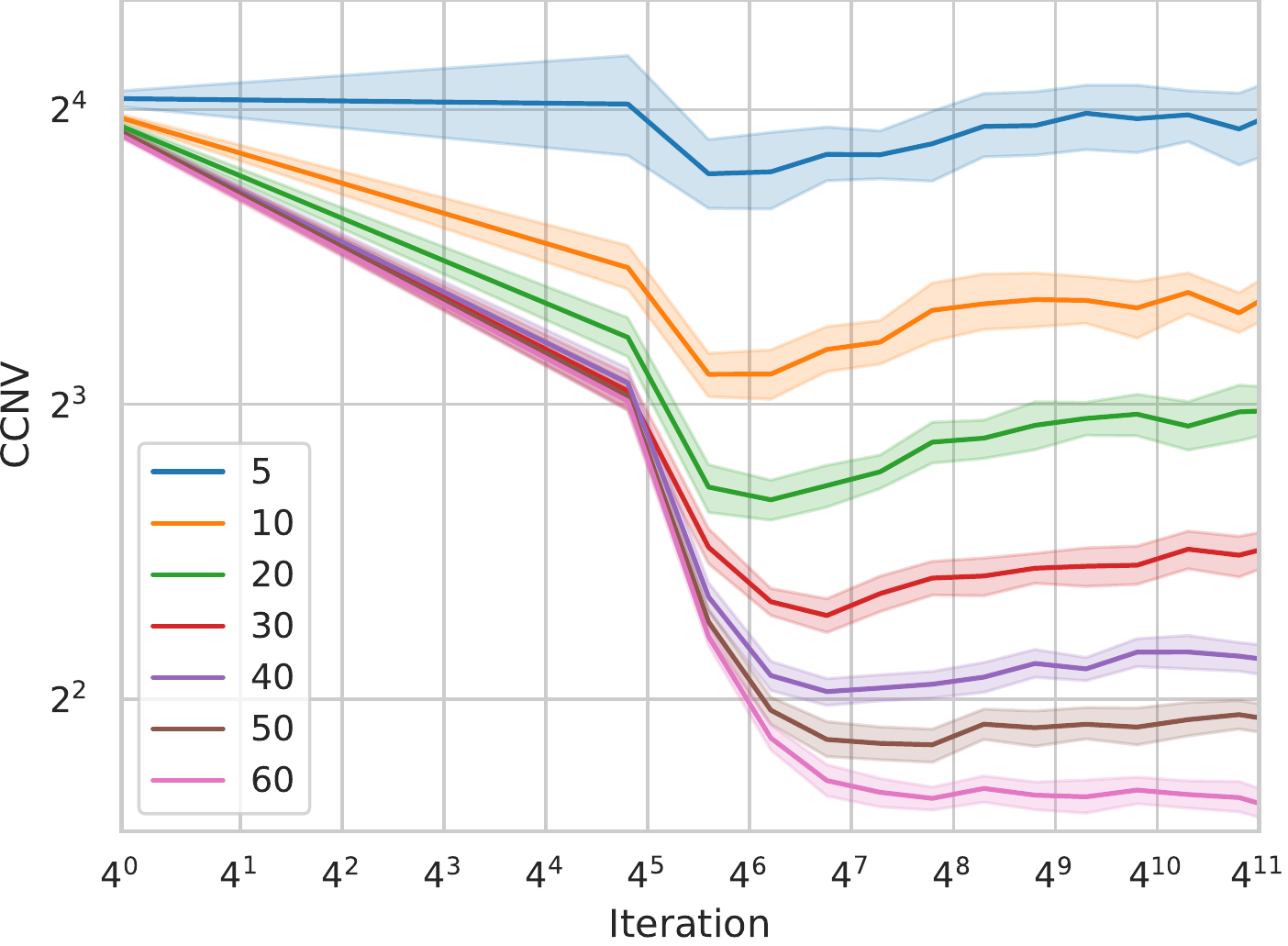}&
\includegraphics[width=.252\linewidth]{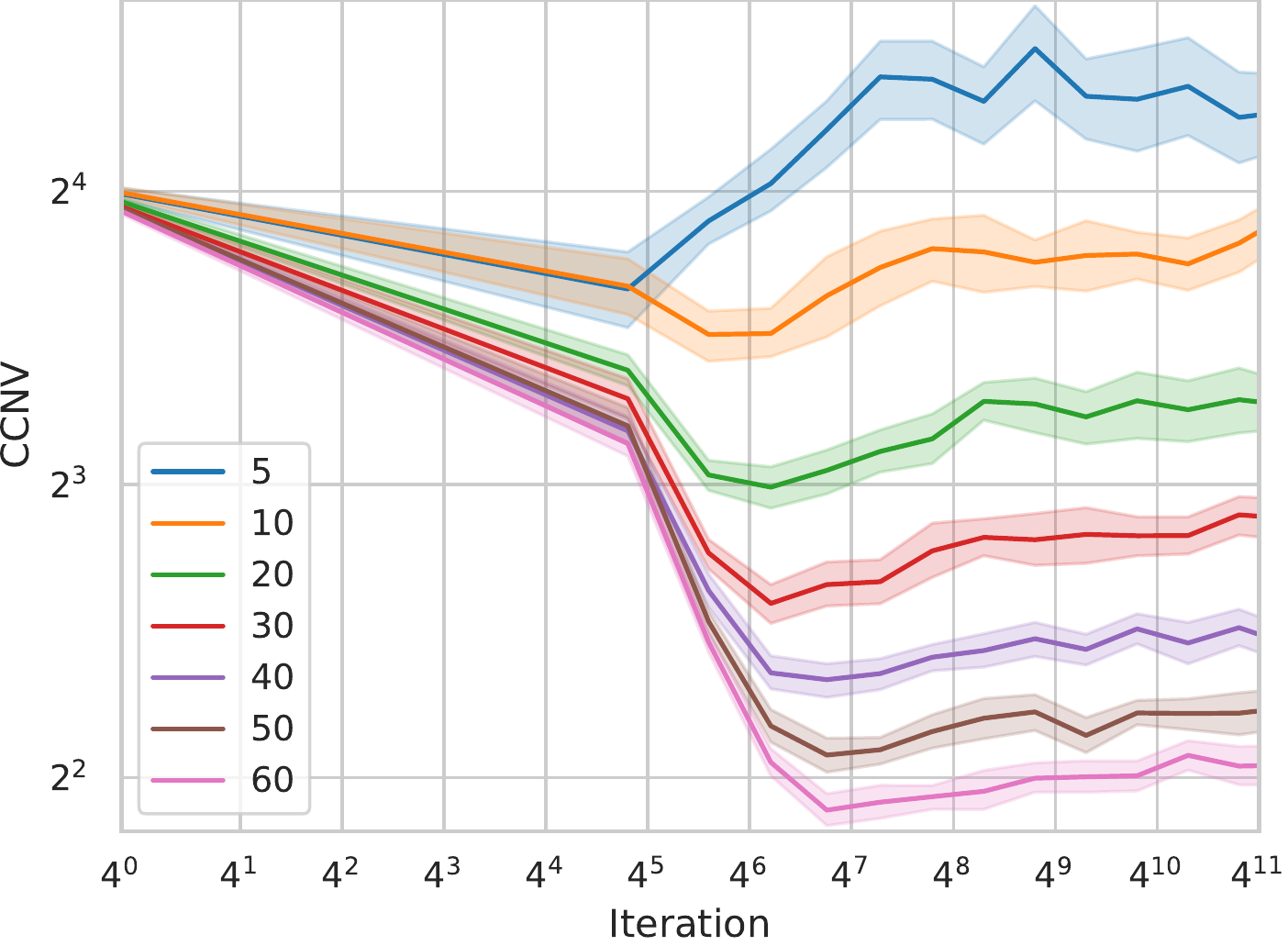}&
\includegraphics[width=.252\linewidth]{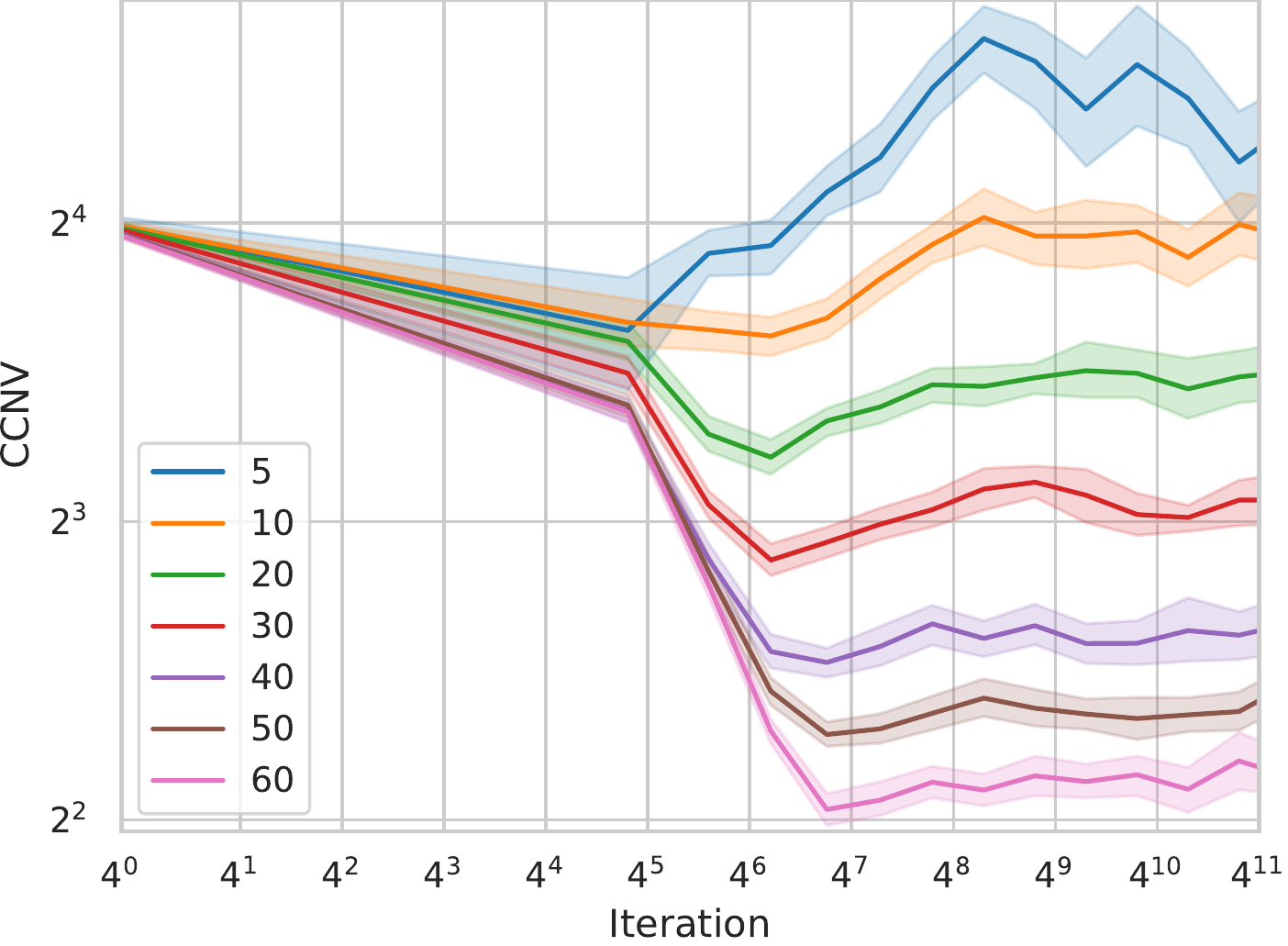}&
\includegraphics[width=.252\linewidth]{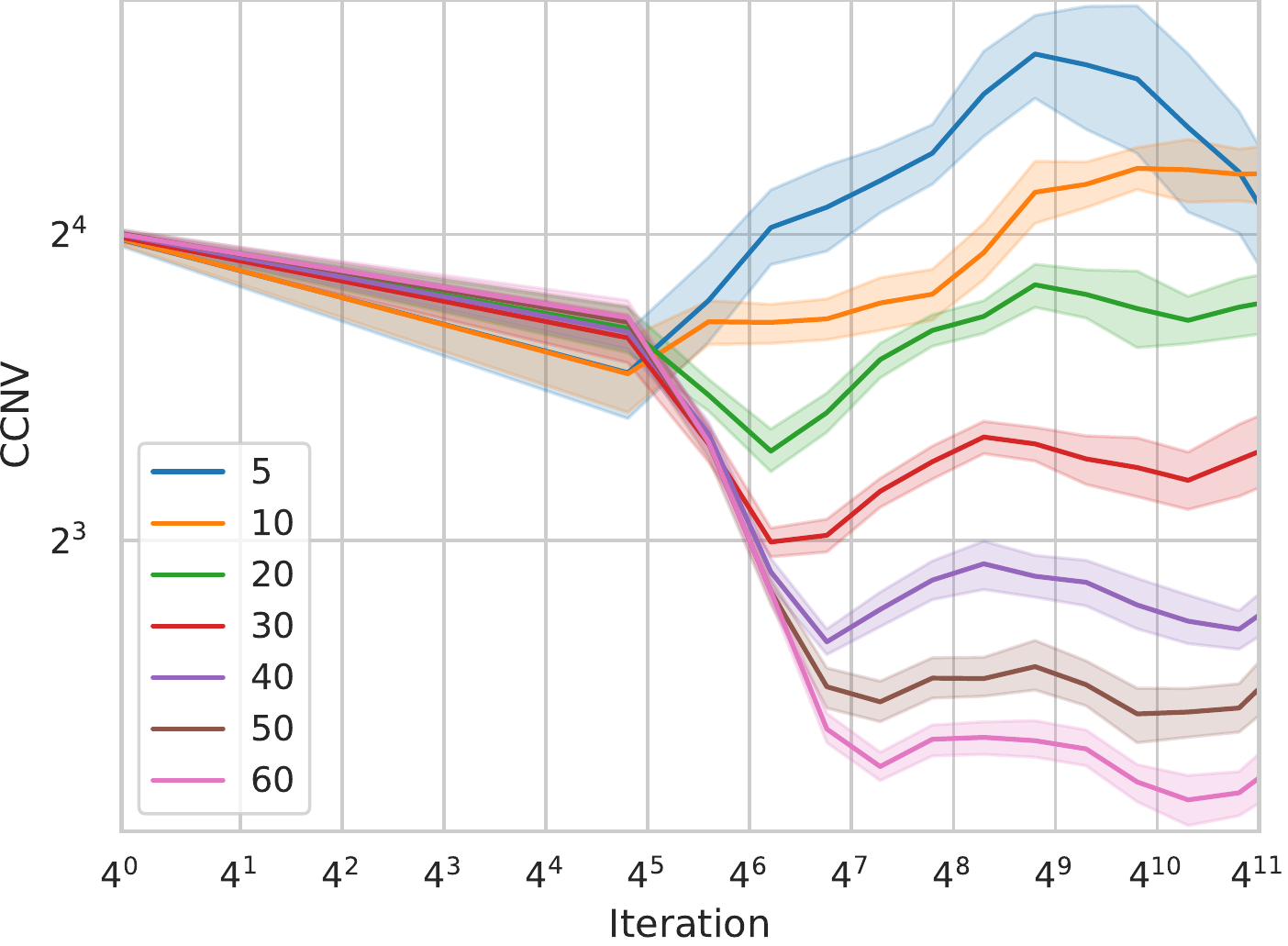}
\\
\multicolumn{4}{c}{{\bf (c)} Target classes, test data} \\[0.5em]
\end{tabular}
    \caption{{\bf Within-class variation collapse on CIFAR-FS.} In {\bf (a)} We plot the CCNV on the source training data, in {\bf (b)} on the source test data and in {\bf (c)} on the target test data. In each experiment we trained a WRN-28-4 using SGD on a set of $l\in \{5,10,20,30,40,50,60\}$ source classes (as indicated in the legend). The $i$'th column stands for the results when training with learning rate $\eta=2^{-2i-2}$. 
    } \label{fig:ccnv_cifar_fs}
\end{figure}

\begin{figure}[ht]
    \centering
    \begin{tabular}{@{}c@{~}c@{~}c@{~}c}
\includegraphics[width=.252\linewidth]{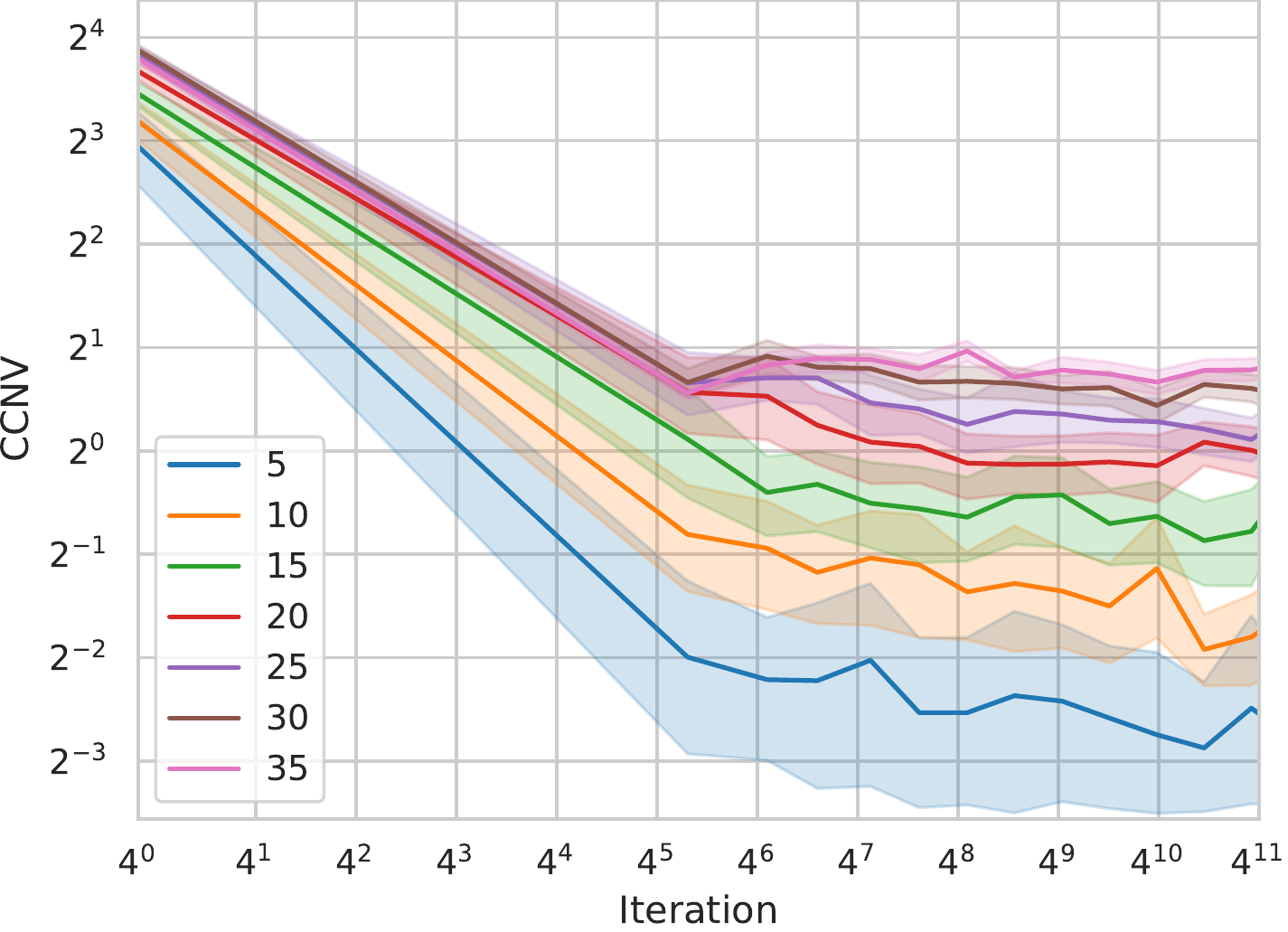}&
\includegraphics[width=.252\linewidth]{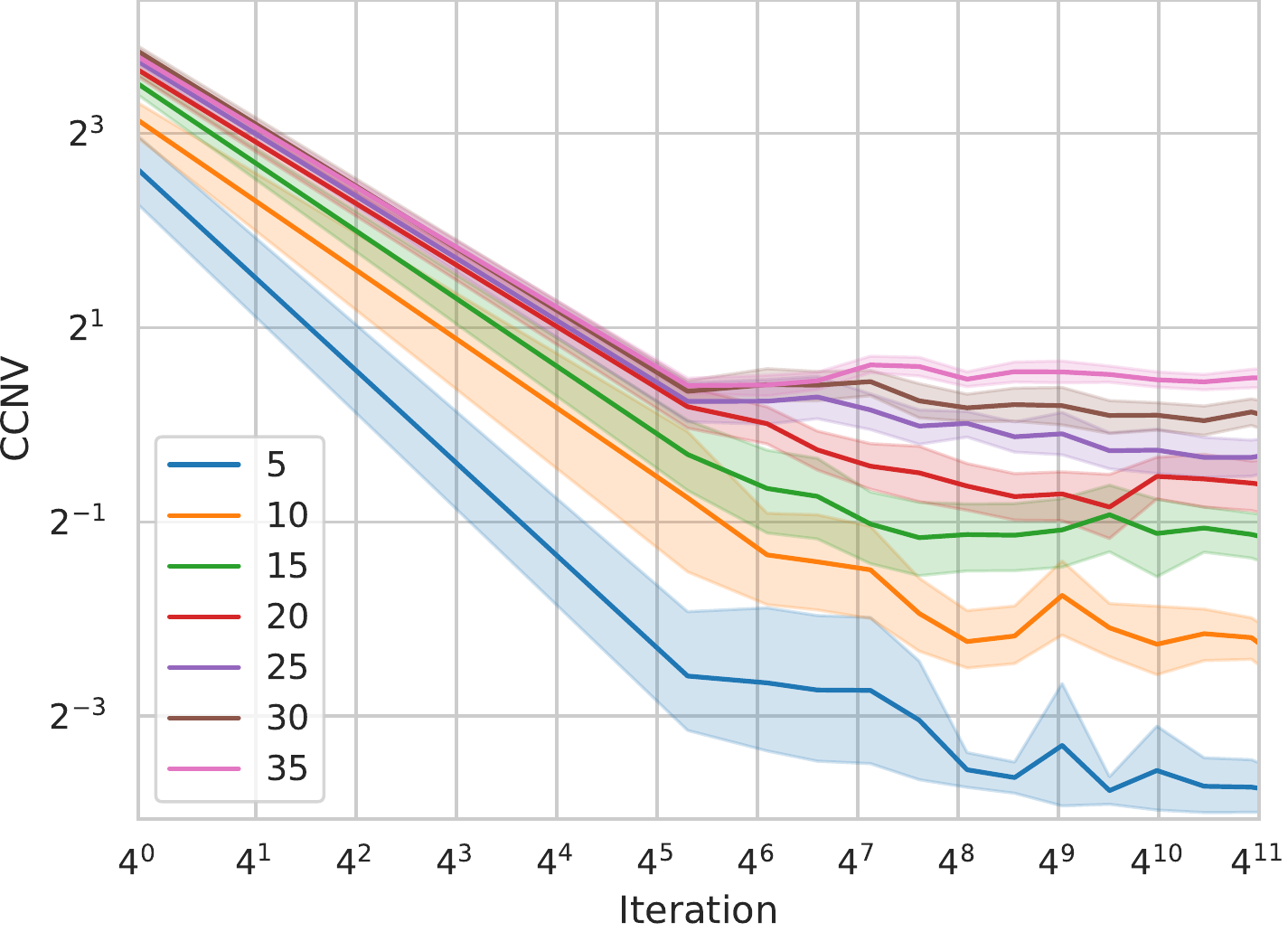}&
\includegraphics[width=.252\linewidth]{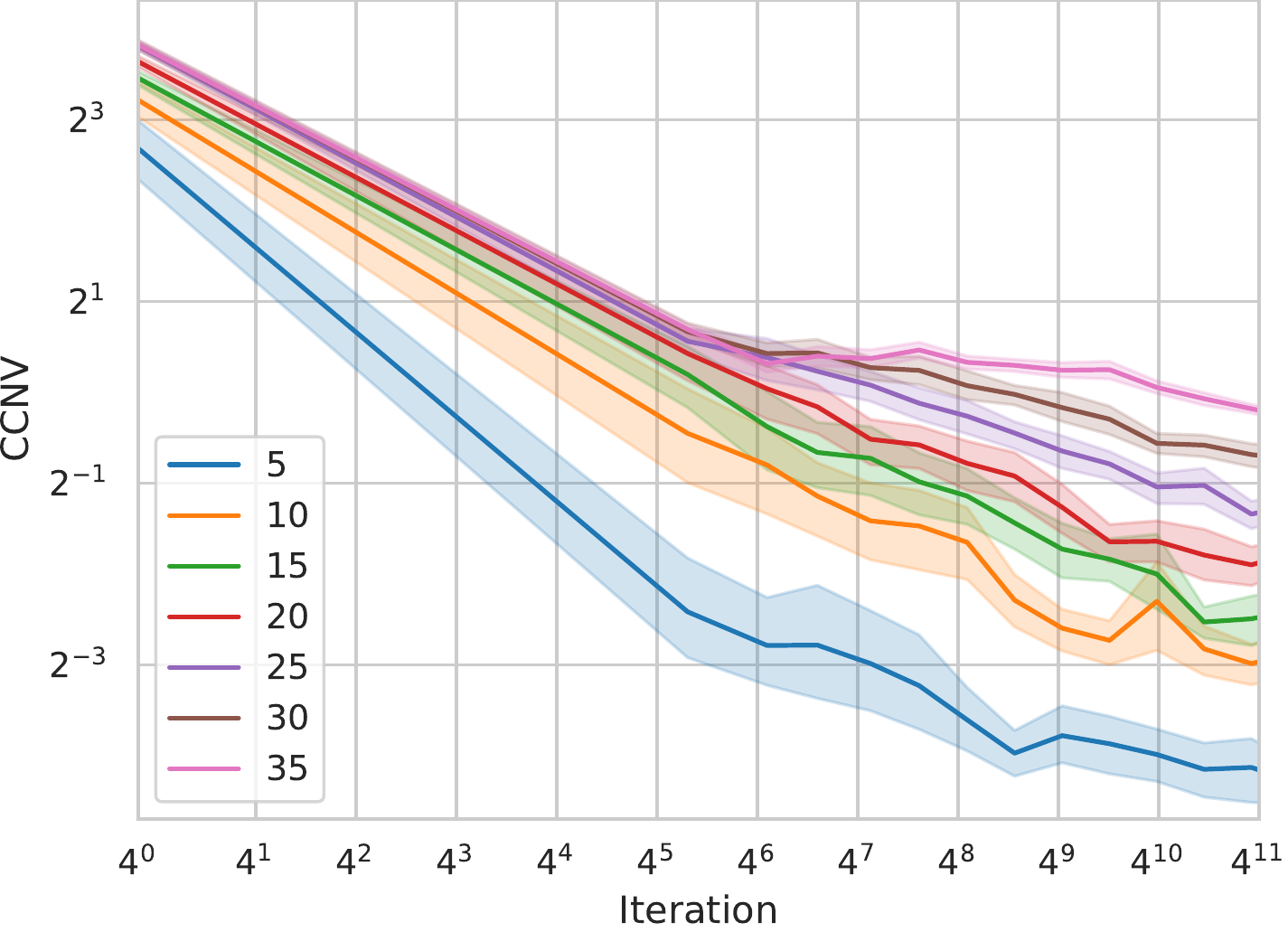}&
\includegraphics[width=.252\linewidth]{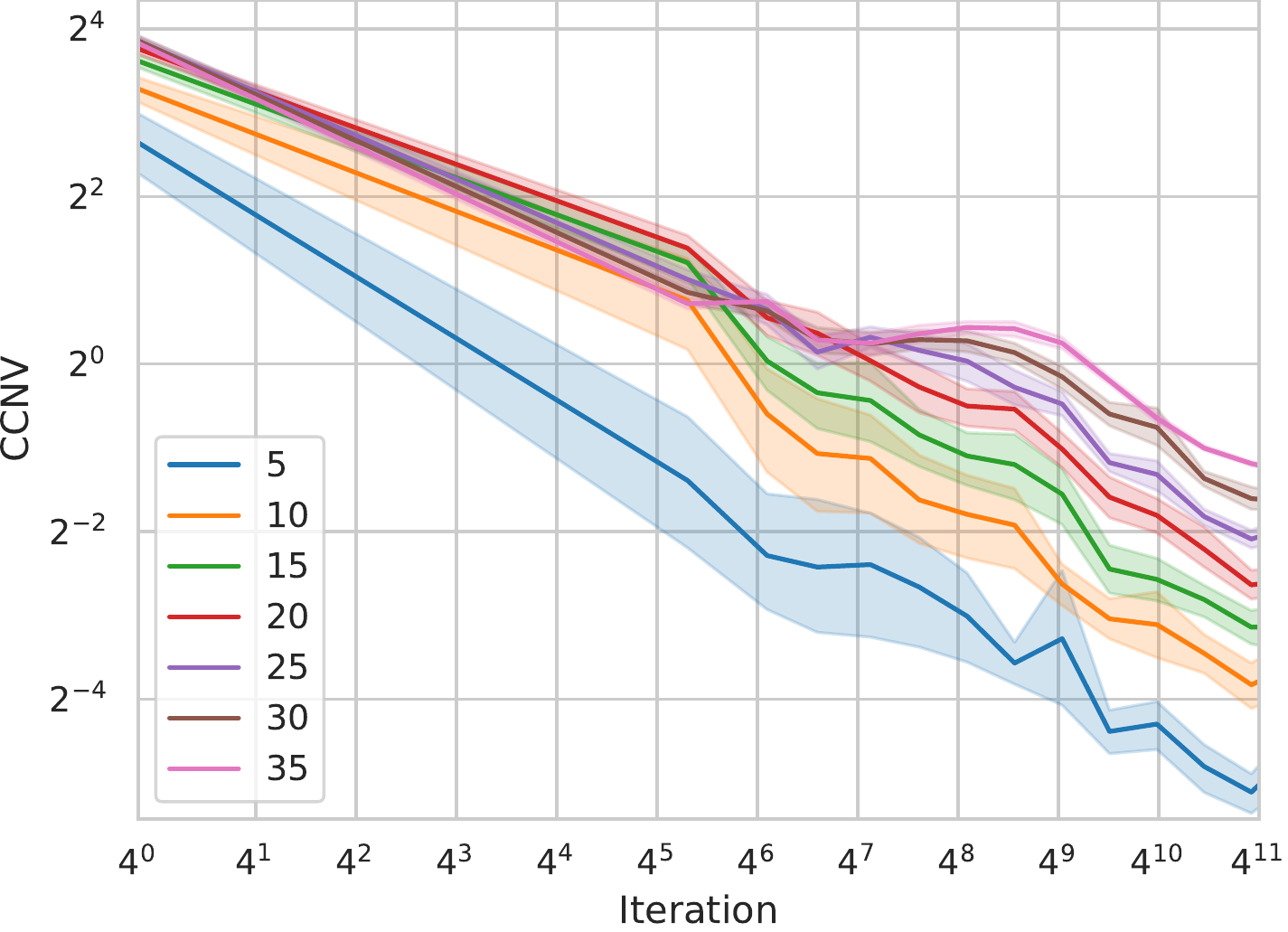}
\\
\multicolumn{4}{c}{{\bf (a)} Source classes, train data} \\[0.5em]
\includegraphics[width=.252\linewidth]{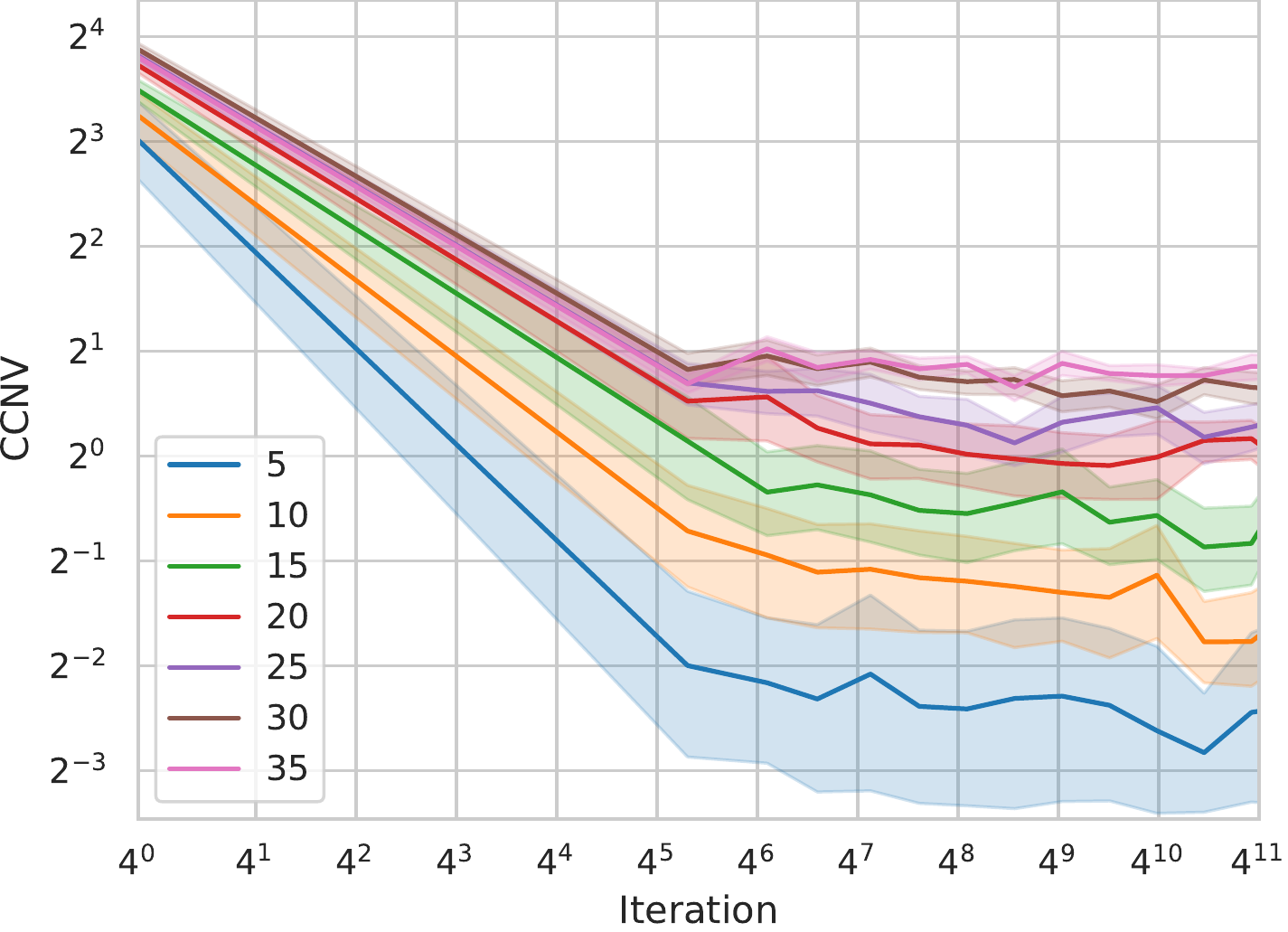}&
\includegraphics[width=.252\linewidth]{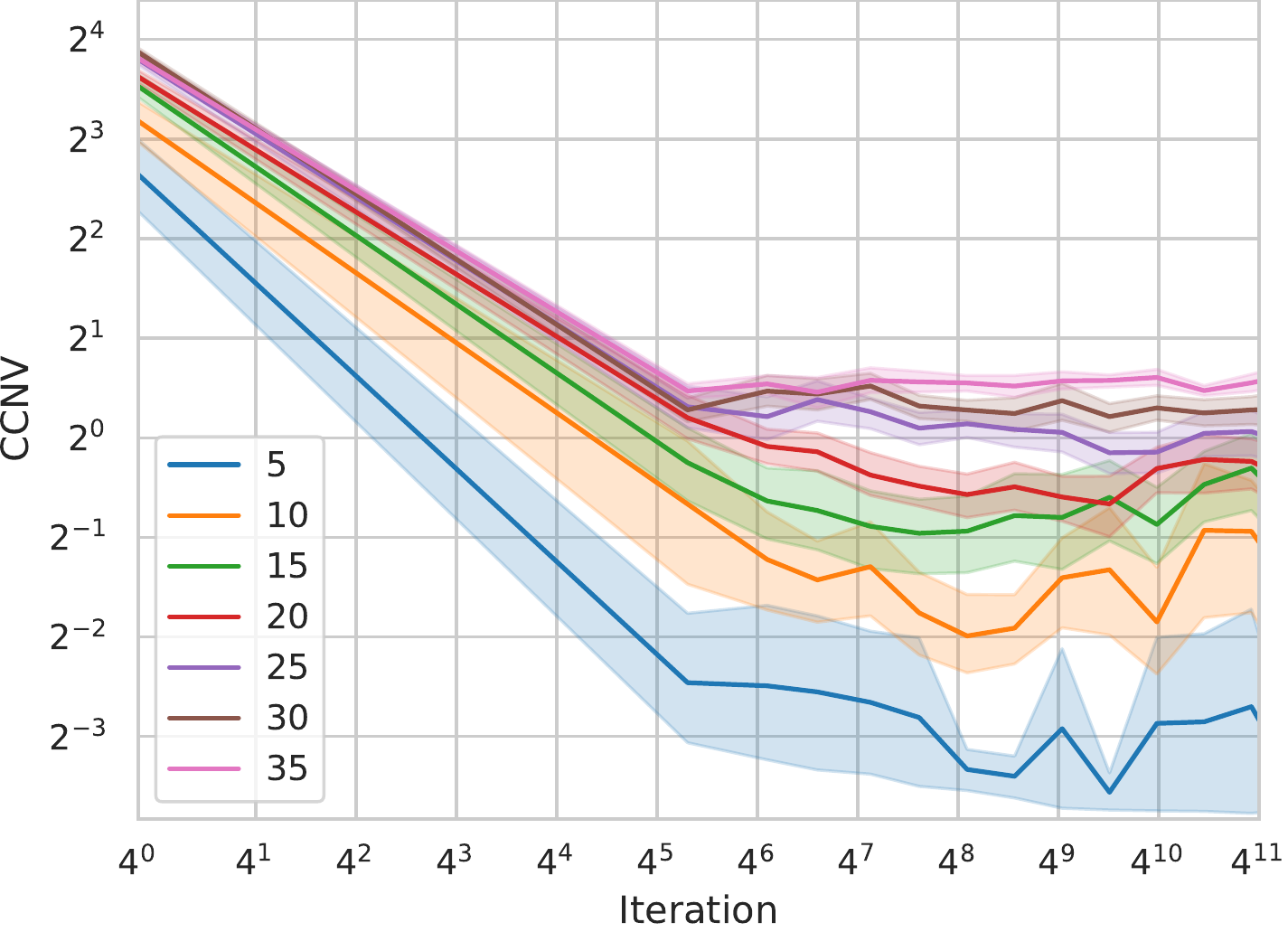}&
\includegraphics[width=.252\linewidth]{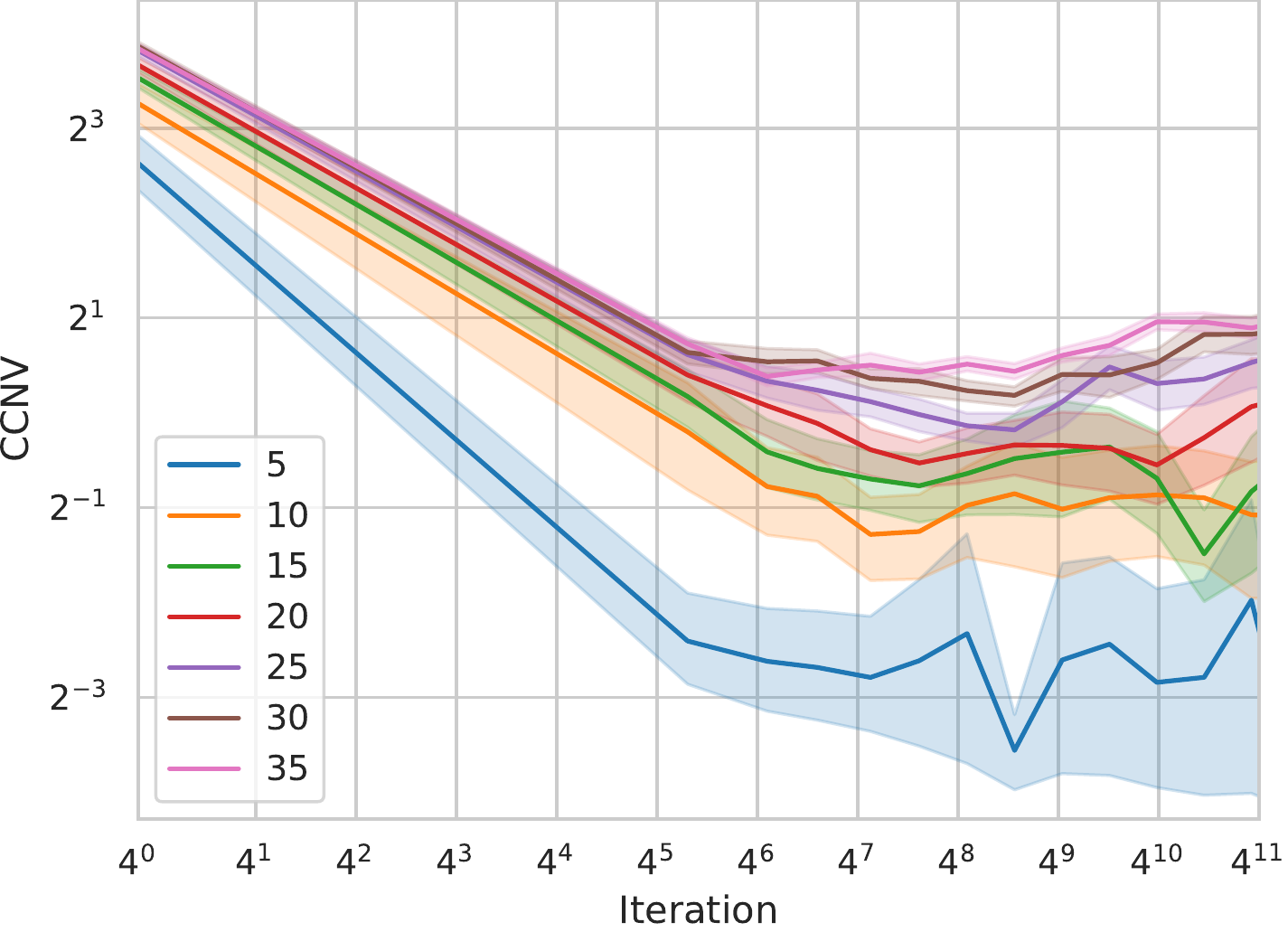}&
\includegraphics[width=.252\linewidth]{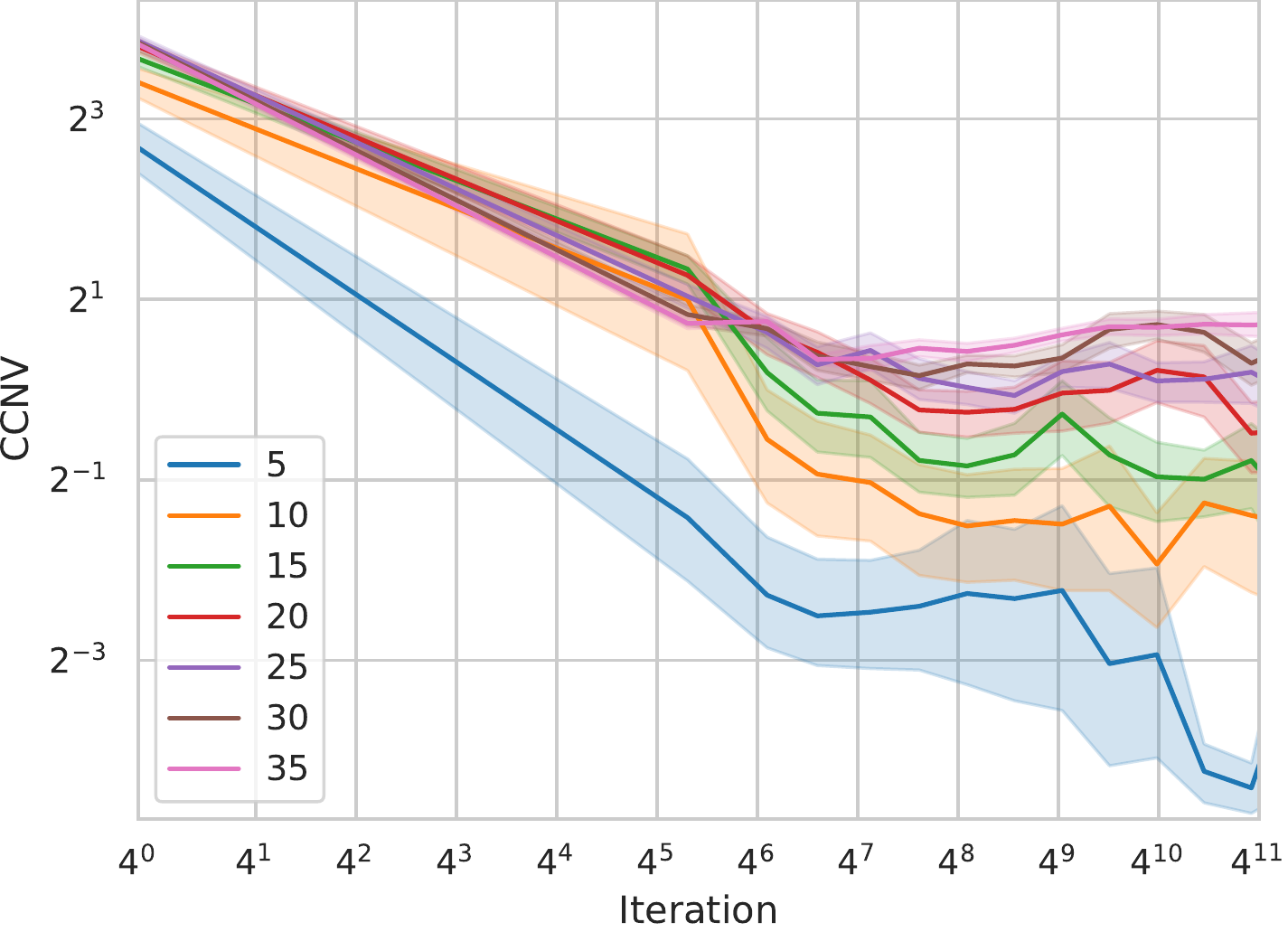}
\\
\multicolumn{4}{c}{{\bf (b)} Source classes, test data} \\[0.5em]
\includegraphics[width=.252\linewidth]{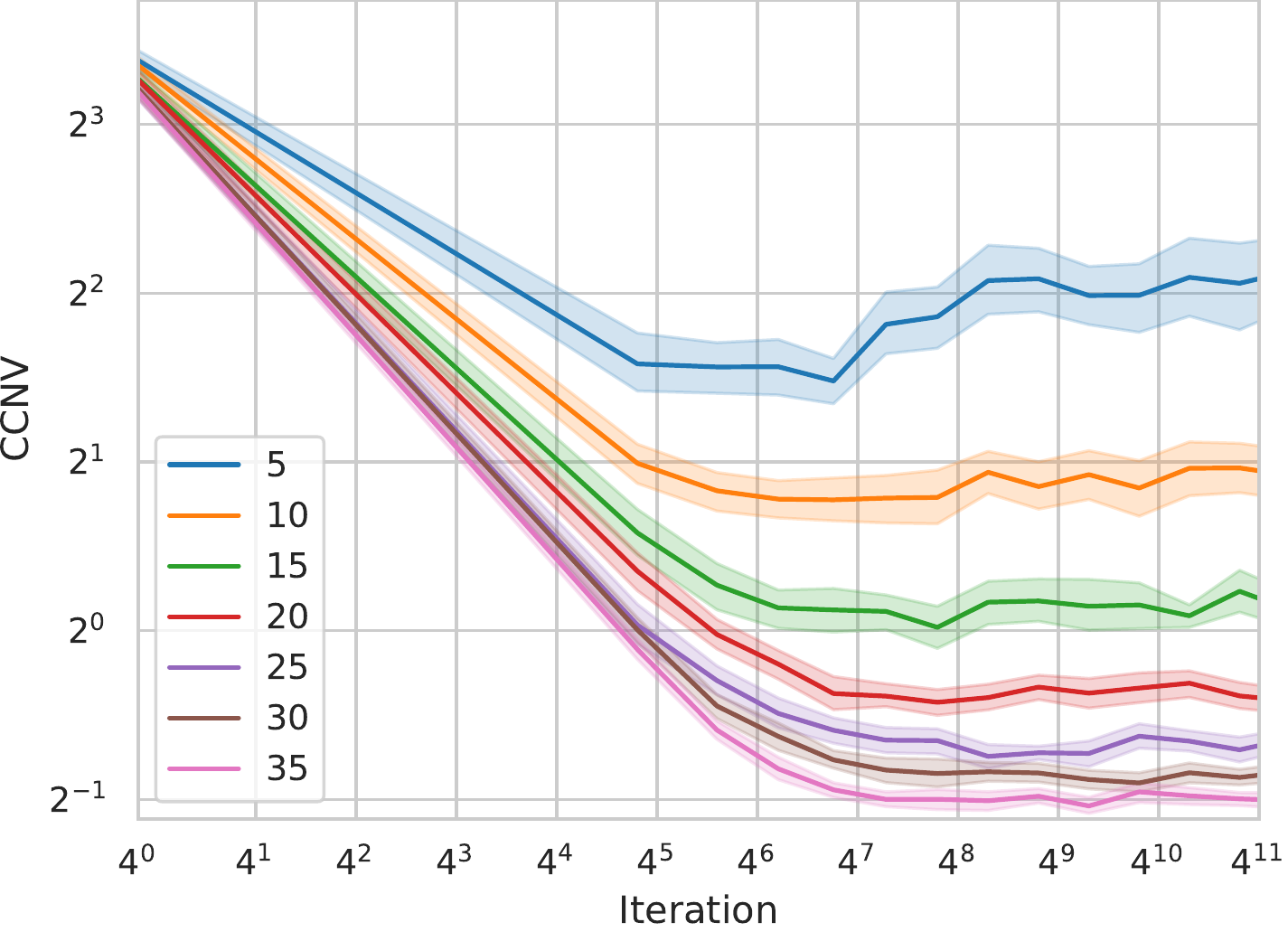}&
\includegraphics[width=.252\linewidth]{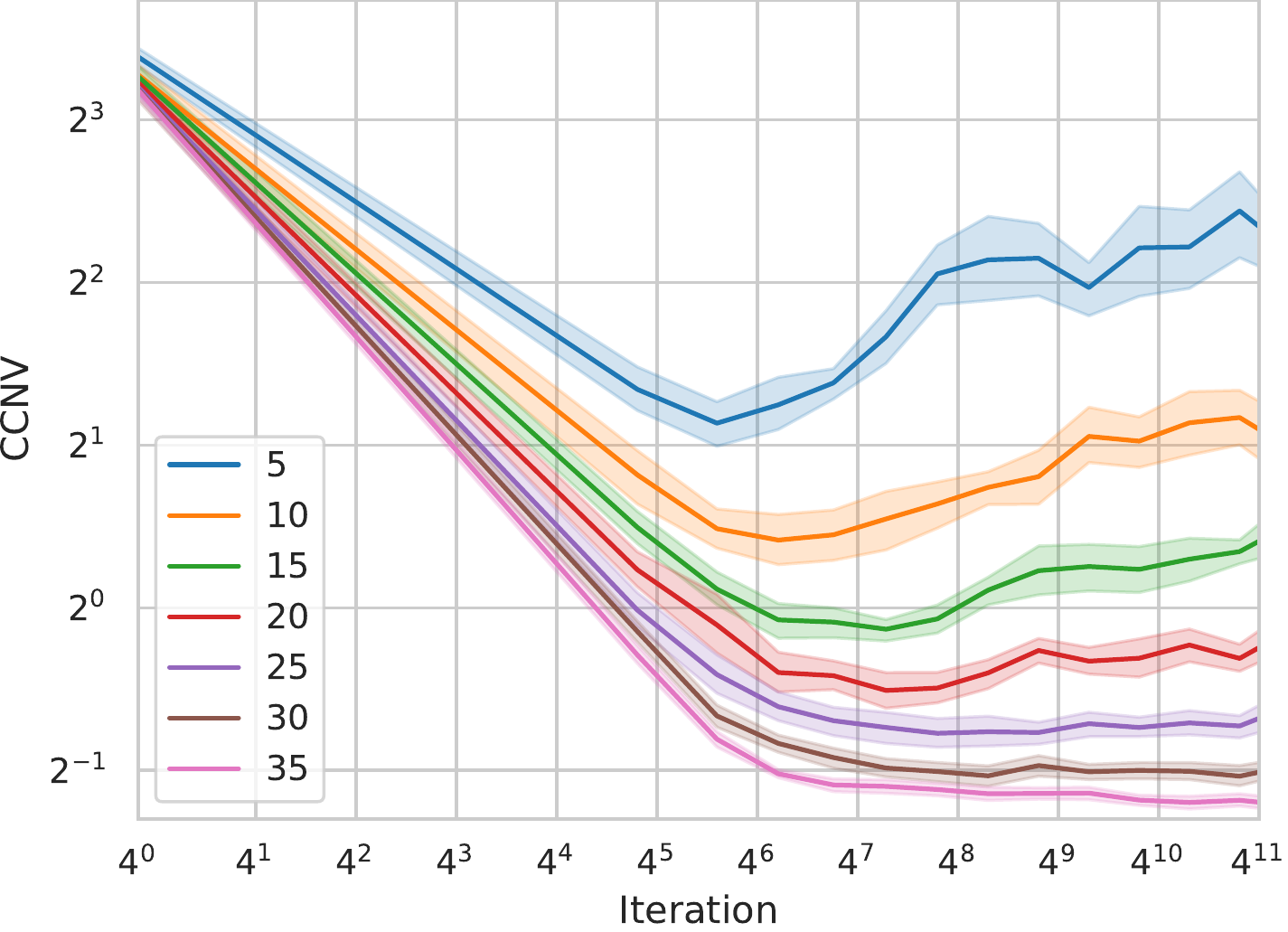}&
\includegraphics[width=.252\linewidth]{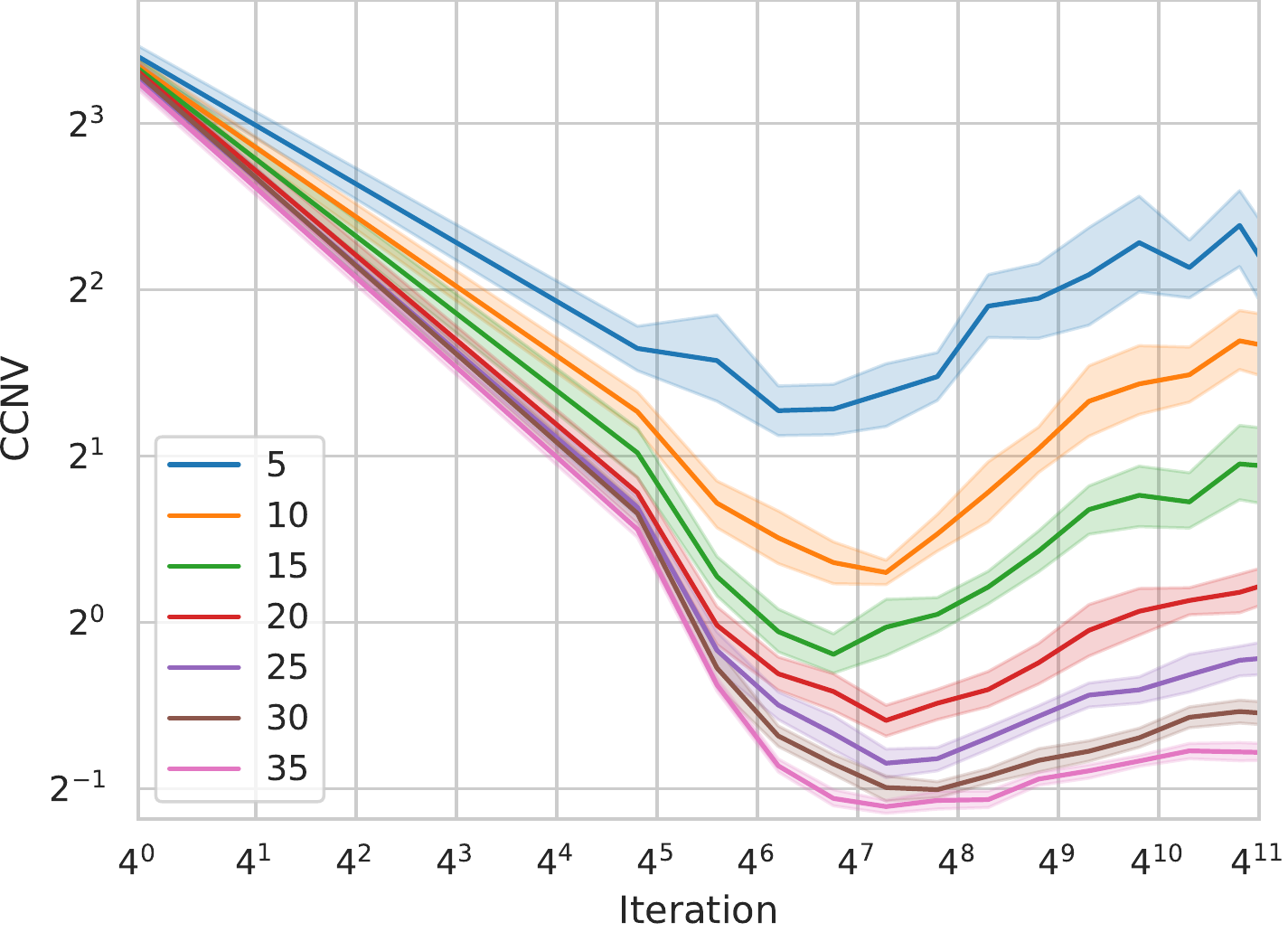}&
\includegraphics[width=.252\linewidth]{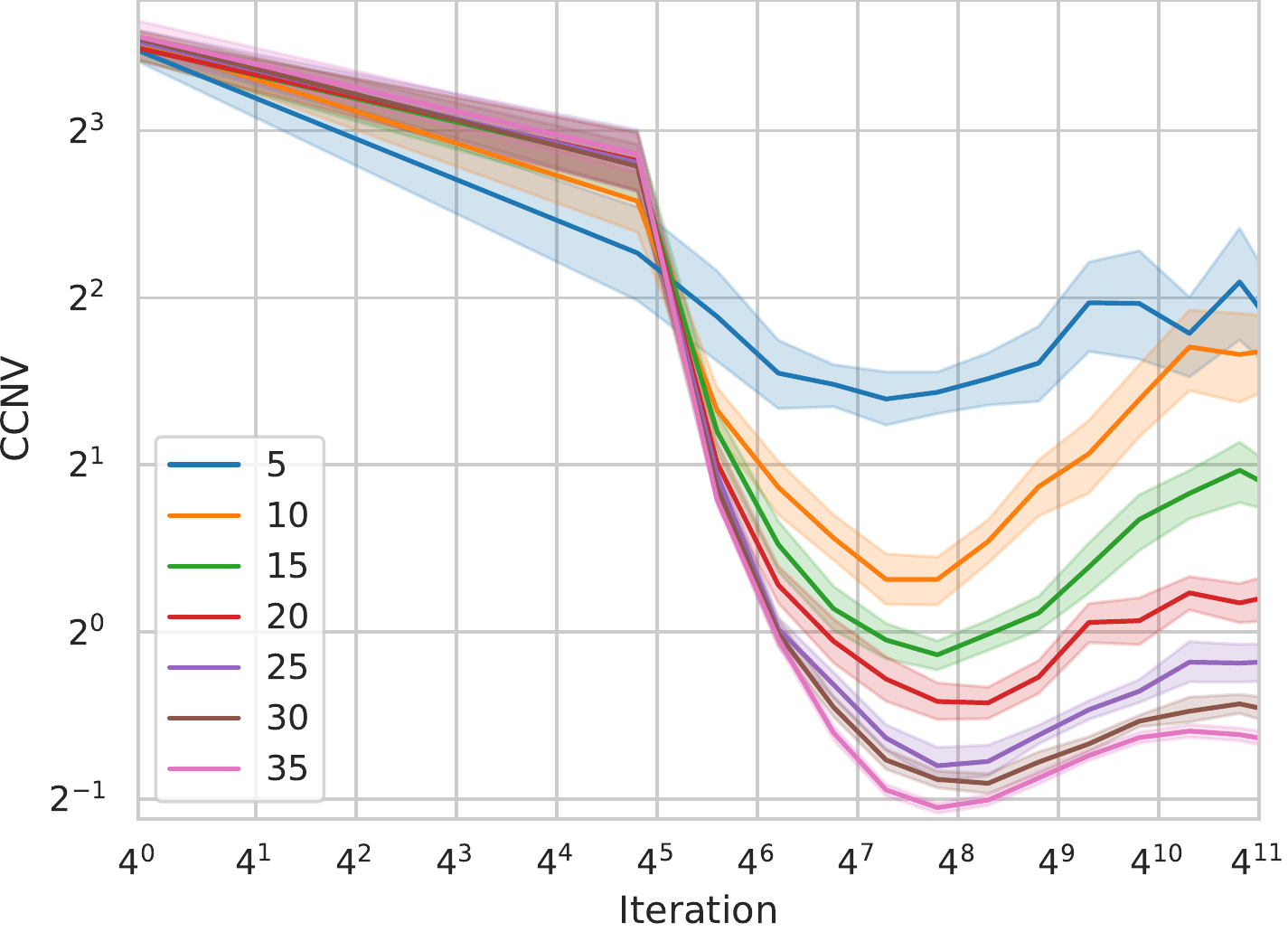}
\\
\multicolumn{4}{c}{{\bf (c)} Target classes, test data} \\[0.5em]
\end{tabular}
    \caption{{\bf Within-class variation collapse on EMNIST.} In {\bf (a)} We plot the CCNV on the source training data, in {\bf (b)} on the source test data and in {\bf (c)} on the target test data. We trained a WRN-28-4 with SGD on a set of $l\in \{5,10,15,20,25,30,35\}$ source classes (as indicated in the legend). The $i$'th column shows results for learning rate $\eta=2^{-2i-2}$.
    } \label{fig:ccnv_emnist}
\end{figure}

\clearpage

\section{Proof of Proposition~\ref{prop:generalizationInner}}\label{sec:proof1}

\begin{lemma}\label{lem:part2}
Let $\tP_c$ be a class distribution, $\delta \in (0,1)$ and $\tS_c = \{\tx_{cj}\}^{m_c}_{j=1} \sim \tP^{m_c}_c$ be a dataset of $m_c$ samples. Let $f$ be the output of the learning algorithm. If
\begin{equation}\label{eq:conditions}
\begin{aligned}
\left\| \E_{x \sim \tP_c}[f(x)] - \Avg_{x\in \tS_c}[f(x)] \right\| ~&\le~ \epsilon^c_{1}(\delta); \\
\left| \E_{x \sim \tP_c}[\|f(x)\|^2] - \Avg_{x\in \tS_c}[\|f(x)\|^2] \right| ~&\le~ \epsilon^c_{2}(\delta),
\end{aligned}
\end{equation}
then we have
\begin{equation}
\begin{aligned}
\Var_f(\tP_c) ~&\leq~ \Var_f(\tS_c) + \epsilon^c_{2}(\delta) + 2\|\mu_{f}(\tP_c)\| \cdot \epsilon^c_{1}(\delta) + \epsilon^c_{1}(\delta)^2~.
\end{aligned}
\end{equation}
\end{lemma}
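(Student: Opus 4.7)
The plan is to exploit the variance decomposition
\[
\Var_f(Q) \;=\; \E_{x \sim Q}\bigl[\|f(x)\|^2\bigr] \;-\; \|\mu_f(Q)\|^2,
\]
applied to both $Q=\tP_c$ and $Q=U[\tS_c]$ (noting that $\Var_f(\tS_c)=\Var_f(U[\tS_c])$ and $\Avg_{x\in \tS_c}[\cdot] = \E_{x\sim U[\tS_c]}[\cdot]$). Subtracting the two identities and rearranging gives
\[
\Var_f(\tP_c) - \Var_f(\tS_c)
\;=\; \Bigl(\E_{x\sim \tP_c}[\|f(x)\|^2] - \Avg_{x\in \tS_c}[\|f(x)\|^2]\Bigr)
\;-\; \Bigl(\|\mu_f(\tP_c)\|^2 - \|\mu_f(\tS_c)\|^2\Bigr).
\]
The first parenthesis is bounded in absolute value by $\epsilon^c_2(\delta)$ directly from the hypothesis \eqref{eq:conditions}.

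For the second parenthesis I would write $a=\mu_f(\tP_c)$ and $b=\mu_f(\tS_c)$, so $\|a-b\|\le \epsilon^c_1(\delta)$. The goal is an upper bound on $\|b\|^2-\|a\|^2$ (since this quantity enters with a minus sign and we want an upper bound on $\Var_f(\tP_c)-\Var_f(\tS_c)$). Factor $\|b\|^2-\|a\|^2 = (\|b\|-\|a\|)(\|b\|+\|a\|)$, apply the reverse triangle inequality $\bigl|\|b\|-\|a\|\bigr|\le \|a-b\|\le \epsilon^c_1(\delta)$, and use $\|b\|\le \|a\|+\epsilon^c_1(\delta)$ to obtain
\[
\|b\|^2-\|a\|^2 \;\le\; \epsilon^c_1(\delta)\bigl(2\|a\|+\epsilon^c_1(\delta)\bigr)
\;=\; 2\|\mu_f(\tP_c)\|\cdot\epsilon^c_1(\delta) + \epsilon^c_1(\delta)^2.
\]

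Combining the two bounds yields
\[
\Var_f(\tP_c) - \Var_f(\tS_c) \;\le\; \epsilon^c_2(\delta) + 2\|\mu_f(\tP_c)\|\cdot\epsilon^c_1(\delta) + \epsilon^c_1(\delta)^2,
\]
which is exactly the claim. There is no real obstacle in this argument; the only care needed is to track signs so that the one-sided bound $\|b\|^2-\|a\|^2 \le \ldots$ goes in the right direction to yield the desired inequality on $\Var_f(\tP_c)$ (an upper, not a two-sided, bound is all that is required, and this is what falls out of the hypothesis on $\epsilon^c_1$).
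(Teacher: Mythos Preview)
Your proof is correct and follows essentially the same approach as the paper: both use the decomposition $\Var_f(Q)=\E[\|f(x)\|^2]-\|\mu_f(Q)\|^2$, bound the second-moment term by $\epsilon^c_2(\delta)$, and control $\|\mu_f(\tS_c)\|^2-\|\mu_f(\tP_c)\|^2$ via the triangle inequality $\|\mu_f(\tS_c)\|\le\|\mu_f(\tP_c)\|+\epsilon^c_1(\delta)$. The only cosmetic difference is that the paper squares this inequality directly, whereas you factor $\|b\|^2-\|a\|^2=(\|b\|-\|a\|)(\|b\|+\|a\|)$ first; the resulting bound is identical.
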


\begin{proof}
Recall that
\begin{equation}\label{eq:variance_def}
\Var_f(\tP_c) ~=~
\E_{x\sim \tP_c}[\|f(x)\|^2] 
-\|\E_{x\sim \tP_c}[f(x)]\|^2~.
\end{equation}
We would like to upper bound $\E_{x\sim \tP_c}[\|f(x)\|^2]$ in terms of $\Avg_{x\in \tS_c}[\|f(x)\|^2]$ and to lower bound $\|\E_{x\sim \tP_c}[f(x)]\|^2$ using $\|\Avg_{x\in \tS_c}[f(x)]\|^2$. By \eqref{eq:conditions},
\begin{equation}\label{eq:low_level_gen_a}
\E_{x\sim \tP_c}[\|f(x)\|^2] ~\leq~ \Avg^{m_c}_{j=1}[\|f(\tilde{x}_{cj})\|^2] + \epsilon^c_{2}(\delta)~.
\end{equation}
By the triangle inequality, 
\[
\|\E_{x\sim \tP_c}[f(x)]\| + \Big\|\E_{x\sim \tP_c}[f(x)] - \Avg^{m_c}_{j=1}[f(\tilde{x}_{cj})] \Big\| 
~\geq~ \|\Avg^{m_c}_{j=1}[f(\tilde{x}_{cj})]\|~.
\]
By \eqref{eq:conditions},
\[
\Big\|\E_{x\sim \tP_c}[f(x)] - \Avg^{m_c}_{j=1}[f(\tilde{x}_{cj})] \Big\| ~\leq~ \epsilon^c_{1}(\delta)~.
\]
Hence,
\[
\|\E_{x\sim \tP_c}[f(x)]\| + \epsilon^c_{1}(\delta) ~\geq~ \|\Avg^{m_c}_{j=1}[f(\tilde{x}_{cj})]\|~,
\]
which implies
\begin{equation}\label{eq:low_level_gen_b}
\begin{aligned}
\left\|\Avg^{m_c}_{j=1}[f(\tilde{x}_{cj})]\right\|^2 ~\le~ \left\|\E_{x\sim \tP_c}[f(x)]\right\|^2 + 2 \epsilon^c_{1}(\delta) \cdot \|\E_{x\sim \tP_c}[f(x)]\| + \epsilon^c_{1}(\delta)^2~.
\end{aligned}
\end{equation}
Combining \eqref{eq:low_level_gen_a} and \eqref{eq:low_level_gen_b}, we obtain
\begin{align*}
\Var_f(\tP_c) ~&\leq~ \Var_f(\tS_c) + \epsilon^c_{2}(\delta) 
+ 2\left\|\E_{x\sim \tP_c}[f(x)]\right\| \cdot \epsilon^c_{1}(\delta) + \epsilon^c_{1}(\delta)^2\\
~&=~ \Var_f(\tS_c) + \epsilon^c_{2}(\delta) 
+ 2\|\mu_f(\tP_c)\| \cdot \epsilon^c_{1}(\delta) + \epsilon^c_{1}(\delta)^2~,
\end{align*}
completing the proof.
\end{proof}

\generalizationInner*

\begin{proof}
By definition and the union bound, with probability at least $1-\delta$, the following inequalities hold simultaneously for all $c \in \{i,j$\}:
\begin{equation}\label{eq:conditions_p}
\begin{aligned}
\left\| \E_{x \sim \tP_c}[f(x)] - \Avg_{x\in \tS_c}[f(x)] \right\| ~&\le~ \epsilon^c_{1}(\delta/4); \\
\left| \E_{x \sim \tP_c}[\|f(x)\|^2] - \Avg_{x\in \tS_c}[\|f(x)\|^2] \right| ~&\le~ \epsilon^c_{2}(\delta/4)~.
\end{aligned}
\end{equation}
In the rest of the proof, we assume that the above inequalities hold.
Let $\Delta = \Big| \|\mu_f(\tP_i)-\mu_f(\tP_j)\|^2
- \|\mu_f(\tS_i)-\mu_f(\tS_j)\|^2 \Big|$. By simple algebraic manipulations,
\begin{equation}\label{eq:part1}
\begin{aligned}
\hspace{-0.3cm}\frac{1}{\|\mu_f(\tP_i)\!-\!\mu_f(\tP_j)\|^2} 
~&=~ \frac{1}{\|\mu_f(\tS_i)-\mu_f(\tS_j)\|^2} - \frac{\|\mu_f(\tP_i)\!-\!\mu_f(\tP_j)\|^2 - \|\mu_f(\tS_i)\!-\!\mu_f(\tS_j)\|^2}{\|\mu_f(\tP_i)\!-\!\mu_f(\tP_j)\|^2 \cdot \|\mu_f(\tS_i)\!-\!\mu_f(\tS_j)\|^2} \\
~&\leq~ \frac{1}{\|\mu_f(\tS_i)\!-\!\mu_f(\tS_j)\|^2} + \frac{\Delta}{\|\mu_f(\tP_i)\!-\!\mu_f(\tP_j)\|^2 \cdot \|\mu_f(\tS_i)\!-\!\mu_f(\tS_j)\|^2}~.
\end{aligned}
\end{equation}
By \eqref{eq:conditions_p} and Lemma~\ref{lem:part2} we have
\begin{equation}\label{eq:part2}
\begin{aligned}
\forall c\in \{i,j\}:~\Var_{f}(\tP_c) ~&\leq~ \Var_f(\tS_c) + \epsilon^{c}_{2}(\delta/4) + 2\|\mu_f(\tP_c)\| \cdot \epsilon^{c}_{1}(\delta/4) + \epsilon^{c}_{1}(\delta/4)^2~.
\end{aligned}
\end{equation}
Multiplying both sides of \eqref{eq:part1} by $\Var_f(\tP_c)$, and using \eqref{eq:part2}, 
combining the above inequality with \eqref{eq:part1}, we obtain
\begin{align}
\frac{\Var_f(\tP_c)}{\|\mu_f(\tP_i)-\mu_f(\tP_j)\|^2} 
~&\leq~ \frac{\Var_f(\tS_c)}{\|\mu_f(\tS_i)-\mu_f(\tS_j)\|^2} 
+ \frac{\Var_f(\tS_c) \cdot \Delta}{\|\mu_f(\tP_i)-\mu_f(\tP_j)\|^2 \cdot \|\mu_f(\tS_i)-\mu_f(\tS_j)\|^2} \nonumber \\
~&\quad + \frac{\left( \epsilon^c_{2}(\delta/4)  + 2\|\mu_f(\tP_c)\| \cdot \epsilon^c_{1}(\delta/4) + \epsilon^c_{1}(\delta/4)^2 \right) \cdot \Delta}{\|\mu_f(\tP_i)-\mu_f(\tP_j)\|^2 \cdot \|\mu_f(\tS_i)-\mu_f(\tS_j)\|^2} \nonumber \\
&\quad+ \frac{\epsilon^c_{2}(\delta/4) + 2\|\mu_f(\tP_c)\| \cdot \epsilon^c_{1}(\delta/4) + \epsilon^c_{1}(\delta/4)^2 }{\|\mu_f(\tS_i)-\mu_f(\tS_j)\|^2}~. \label{eq:combined}
\end{align}
Averaging \eqref{eq:combined} over $c\in \{i,j\}$ gives
\begin{equation}
\label{eq:Vf1}
\begin{aligned}
V_f(\tP_i,\tP_j) ~&\leq~ V_f(\tS_i,\tS_j) \left(1+ \frac{\Delta}{\|\mu_f(\tP_i)-\mu_f(\tP_j)\|^2}\right) \\
&\quad+ \frac{\Avg_{c\in \{i,j\}}\left[ \epsilon^c_{2}(\delta/4)  + 2\|\mu_f(\tP_c)\| \cdot \epsilon^c_{1}(\delta/4) + \epsilon^c_{1}(\delta/4)^2 \right] \cdot \Delta}{\|\mu_f(\tP_i)-\mu_f(\tP_j)\|^2 \cdot \|\mu_f(\tS_i)-\mu_f(\tS_j)\|^2} \\
~&\quad+ \frac{\Avg_{c\in \{i,j\}} \left[\epsilon^c_{2}(\delta/4) + 2\|\mu_f(\tP_c)\| \cdot \epsilon^c_{1}(\delta/4) + \epsilon^c_{1}(\delta/4)^2\right] }{\|\mu_f(\tS_i)-\mu_f(\tS_j)\|^2}~.
\end{aligned}
\end{equation}
By the triangle inequality and \eqref{eq:conditions_p},
\begin{align*}
\|\mu_f(\tS_i)-\mu_f(\tS_j)\| 
~&\leq~ \|\mu_f(\tP_i)-\mu_f(\tP_j)\| + \|\mu_f(\tS_i)-\mu_f(\tP_i)\| + \|\mu_f(\tS_j)-\mu_f(\tP_j)\|\\
~&\leq~ \|\mu_f(\tP_i)-\mu_f(\tP_j)\| + \epsilon^i_{1}(\delta/4) + \epsilon^j_{1}(\delta/4),
\end{align*}
and by a symmetric argument,
\[
\Big| \|\mu_f(\tP_i)-\mu_f(\tP_j)\|
- \|\mu_f(\tS_i)-\mu_f(\tS_j)\| \Big|
~\le~ \epsilon^i_{1}(\delta/4) + \epsilon^j_{1}(\delta/4)~.
\]
This and \eqref{eq:conditions_p} imply
\begin{align*}
\Delta ~&=~ \Big| \|\mu_f(\tP_i)-\mu_f(\tP_j)\|^2 
- \|\mu_f(\tS_i)-\mu_f(\tS_j)\|^2 \Big| \\
~&=~ \Big| \|\mu_f(\tP_i)-\mu_f(\tP_j)\| 
- \|\mu_f(\tS_i)-\mu_f(\tS_j)\| \Big| 
 \left( \|\mu_f(\tP_i)-\mu_f(\tP_j)\| 
+ \|\mu_f(\tS_i)-\mu_f(\tS_j)\| \right)\\
~&\leq~ \left(\epsilon^i_{1}(\delta/4) + \epsilon^j_{1}(\delta/4) \right) \cdot \left(2\|\mu_f(\tP_i)-\mu_f(\tP_j)\| + \epsilon^i_{1}(\delta/4) + \epsilon^j_{1}(\delta/4)\right)~.
\end{align*}
Plugging the above bound into \eqref{eq:Vf1} shows that, 
with probability at least $1-\delta$,
\begin{align*}
V_f(\tP_i,\tP_j) ~&\leq~ V_f(\tS_i,\tS_j) \left(1+ \frac{\epsilon^i_{1}(\delta/4) + \epsilon^j_{1}(\delta/4) }{\|\mu_f(\tP_i)-\mu_f(\tP_j)\|} + \left[\frac{\epsilon^i_{1}(\delta/4) + \epsilon^j_{1}(\delta/4)}{\|\mu_f(\tP_i)-\mu_f(\tP_j)\|} \right]^2 \right) \\
~&\quad+ \frac{2\Avg_{c\in \{i,j\}}\left[ \epsilon^c_{2}(\delta/4)  + 2\|\mu_f(\tP_c)\| \cdot \epsilon^c_{1}(\delta/4) + \epsilon^c_{1}(\delta/4)^2 \right] \cdot \left(\epsilon^i_{1}(\delta/4) + \epsilon^j_{1}(\delta/4) \right)}{\|\mu_f(\tP_i)-\mu_f(\tP_j)\| \cdot \|\mu_f(\tS_i)-\mu_f(\tS_j)\|^2} \\
~&\quad+ \frac{\Avg_{c\in \{i,j\}}\left[ \epsilon^c_{2}(\delta/4)  + 2\|\mu_f(\tP_c)\| \cdot \epsilon^c_{1}(\delta/4) + \epsilon^c_{1}(\delta/4)^2 \right] \cdot \left(\epsilon^i_{1}(\delta/4) + \epsilon^j_{1}(\delta/4) \right)^2}{\|\mu_f(\tP_i)-\mu_f(\tP_j)\|^2 \cdot \|\mu_f(\tS_i)-\mu_f(\tS_j)\|^2} \\
~&\quad+ \frac{\Avg_{c\in \{i,j\}} \left[\epsilon^c_{2}(\delta/4) + 2\|\mu_f(\tP_c)\| \cdot \epsilon^c_{1}(\delta/4) + \epsilon^c_{1}(\delta/4)^2\right] }{\|\mu_f(\tS_i)-\mu_f(\tS_j)\|^2} \\
~&\leq~ (V_f(\tS_i,\tS_j) + B)\left(1 + A \right)^2,
\end{align*}
completing the proof.
\end{proof}

\section{Proof of Propositions~\ref{prop:newclasses}} 
\label{sec:proof2}

\newclasses*

To prove this proposition, we apply Theorem~2 of \citet{Maurer2019UniformCA}. In order to do so, we need the following definitions (their Definition~1):
Let $\cU$ be any set and $A:\cU^l \to \mathbb{R}$. For $\bz = (z_1,\dots,z_l)$, $j \in \{1, \dots, l\}$ and $z'_j \in \cU$, we define the $j$-th partial difference operator on $A$ as
\begin{equation*}
D^{j}_{z'_j}A(\bz) ~=~ A(\dots,z_{j-1},z_j,z_{j+1}\dots) - A(\dots,z_{j-1},z'_j,z_{j+1},\dots)~.
\end{equation*}
In addition, we denote
\begin{equation*}
M(A) ~=~ \max_{j \in [l]} \sup_{\substack{\bz \in \cU^{l} \\ z'_j \in \cU:~z'_j\neq z_j}} \frac{D^{j}_{z'_j}A(\bz)}{\|z_j-z'_j\|}~,
\end{equation*}
and 
\begin{equation*}
J(A) ~=~ l\cdot \max_{r\neq j \in [l]}\sup_{\substack{\bz \in \cU^l\\ z'_j\in \cU:~ z'_j\neq z_j \\ z'_r \in \cU:~ z'_r \neq z_r}}  \frac{D^{r}_{z'_r}D^{j}_{z'_j} A(\bz)}{\|z_j-z'_j\|}~,
\end{equation*}
and the maximal difference
\begin{equation*}
K(A) ~=~ \max_{j \in [l]} \sup_{\substack{\bz \in \cU^{l} \\ z'_j \in \cU:~z'_j\neq z_j}} D^{j}_{z'_j}A(\bz)~.
\end{equation*}
Now we are ready to proceed with the proof.

\begin{proof}
Let $\cU = H_{\cF^*}(\mathcal{C})$, $\bz = (u_i,v_i)^{l}_{i=1} \in \cU^l$, $\bI_{a\neq b} = \bI[a\neq b]$ and 
\[
A(\bz) ~=~ \Avg_{i\neq j} \left[\bI_{u_i \neq u_j} \cdot \frac{v_i+v_j}{\|u_i-u_j\|^2} \right].
\]
Furthermore, assume that $\cP = \{P_1,\dots,P_l\} \sim \clD(P_1,\dots,P_l \mid \forall i\neq j \in [l]: P_i\neq P_j)$ is an independent and identical copy of $\tP$. Then, by Corollary~3 of~\citet{Maurer2019UniformCA} and the fact that the Gaussian complexity of any set $A\subset\R^l$ is at most $2\sqrt{\log(l)}$ times its Rademacher complexity, with probability at least $1-\delta$ over the selection of $\tcP$,
\begin{equation}\label{eq:maurer}
\begin{aligned}
&\max_{f\in \cF^*} \left[\E_{\cP}\left[A(H_{f}(\cP))\right] - A(H_{f}(\tcP)) \right] \\
~&\leq~ 2\sqrt{2\pi\log(l)} \left(2M(A) + J(A) \right)\cdot \E[R(H_{\cF^*}(\tilde{\mathcal{P}}))] + K(A) \cdot \sqrt{l \log(1/\delta)}~.
\end{aligned}
\end{equation}
Since $\tP_1,\dots,\tP_l$ are sampled such that $\tP_i\neq \tP_j$ (for all $i\neq j \in [l]$) and $\Delta(\cF^*) > 0$, we have $\bI[\mu_f(\tP_i) \neq \mu_f(\tP_j)]=1$ for all $i \neq j$, and so
\begin{equation*}
A((H_{f}(\tcP)) ~=~ \Avg_{i\neq j} \left[\bI[\mu_f(\tP_i) \neq \mu_f(P_j)] \cdot V_f(\tP_i,\tP_j) \right] ~=~ \Avg_{i \neq j}[V_f(\tP_i,\tP_j)]
\end{equation*}
Similarly, we have $A((H_{f}(\cP))=\Avg_{i \neq j}[V_f(P_i,P_j)]$, and by the linearity of expectation,  $\E_{\cP}\left[A(H_{f}(\cP))\right] = \E_{P_c\neq P_{c'}}\left[V_f(P_c,P_{c'})\right]$. By substituting these into \eqref{eq:maurer}, we obtain that for any $f\in \cF^*$, we have
\begin{equation}\label{eq:maurer_refined}
\begin{aligned}
\E_{P_c \neq P_{c'}}\left[V_f(P_c,P_{c'}) \right] ~&\leq~ \Avg_{i\neq j}V_f(\tP_i,\tP_j) \\
~&\quad + 2\sqrt{2\pi\log(l)} \left(2M(A) + J(A) \right)\cdot \E[R(H_{\cF^*}(\tilde{\mathcal{P}}))] \\
~&\quad + K(A) \cdot \sqrt{l \log(1/\delta)}~.
\end{aligned}
\end{equation}
To finish the proof, we bound $M(A)$, $J(A)$ and $K(A)$. Let $j \in [l]$ be some index and $\bz' = (z_i)^{l}_{i=1} = (u'_i,v'_i)^{l}_{i=1}\in \cU^l$ be a vector, such that, $z'_i=z_i$ for all $i\neq j$ and $z'_j \neq z_j$. Then, 
\begin{align}
D^{j}_{z'_j}A(\bz)
~&=~ A(\bz) - A(\bz') \nonumber \\
~&=~ \frac{1}{l(l-l)}\sum_{i\in [l]:~i\neq j} \left[ \bI_{u_i\neq u_j} \frac{v_i+v_j}{\|u_i-u_j\|^2} - \bI_{u_i \neq u'_j} \frac{v_i+v'_j}{\|u_i-u'_j\|^2} \right]~, \label{eq:diffrep}
\end{align}
since only pairs involving $j$ are non-zero in the average defining $A$. To simplify notation,
let $a = \|u_i-u_j\|$ and $b = \|u_i-u'_j\|$ and $c = \|z_j-z'_j\| = \|(u_j,v_j)-(u'_j,v'_j)\|$. Next we bound the terms in the sum in \eqref{eq:diffrep} individually. We divide the analysis into three cases:

{\bf Case 1.\enspace} Assume $u_i = u_j = u'_j$. In this case we simply have
\begin{equation*}
\left(\bI_{u_i\neq u_j} \frac{v_i+v_j}{\|u_i-u_j\|^2} - \bI_{u_i\neq u'_j}  \frac{v_i+v'_j}{\|u_i-u'_j\|^2} \right) = 0~.
\end{equation*}

{\bf Case 2.\enspace} Assume $u_i\neq u_j$ and $u_i = u'_j$ (a bound can be obtained similarly for $u_i = u_j$ and $u_i \neq u'_j$). In this case, since $z_i,z_j \in \cU$, we have
$v_i,v_j \le \sup\limits_{f\in \cF^*, P' \in \mathcal{C}} \Var_f(P')$, implying
\begin{equation}
\left| \bI_{u_i \neq u_j} \frac{v_i+v_j}{\|u_i-u_j\|^2} - \bI_{u_i \neq u'_j}  \frac{v_i+v'_j}{\|u_i-u'_j\|^2} \right| ~\leq~ \frac{v_i+v_j}{\|u_i-u_j\|^2}
~\leq~ \frac{2\sup\limits_{f\in \cF^*, P' \in \mathcal{C}} \Var_f(P')}{\Delta(\cF^*)^2}~.
\end{equation}
In addition, since $c \ge \|u_i-u_j\| \ge \Delta(\cF^*)$, we have
\begin{equation}
\label{eq:maurer_c2}
c^{-1}\left| \bI_{u_i \neq u_j} \frac{v_i+v_j}{\|u_i-u_j\|^2} - \bI_{u_i \neq u'_j}  \frac{v_i+v'_j}{\|u_i-u'_j\|^2} \right| \le \frac{2\sup\limits_{f\in \cF^*, P' \in \mathcal{C}} \Var_f(P')}{\Delta(\cF^*)^3}~.
\end{equation}

{\bf Case 3.\enspace} Assume $u_i\neq u_j$ and $u_i \neq u'_j$. By simple algebraic manipulations, we have
\begin{align*}
&\left| \bI_{u_i \neq u_j} \frac{v_i+v_j}{\|u_i-u_j\|^2} - \bI_{u_i \neq u'_j}  \frac{v_i+v'_j}{\|u_i-u'_j\|^2}\right|\\
~&=~\left| \frac{v_i+v_j}{\|u_i-u_j\|^2} -  \frac{v_i+v'_j}{\|u_i-u'_j\|^2}\right|\\
~&=~\left|  (v_i+v_j) \frac{(b-a)(b+a)}{a^2b^2} + \frac{v_j-v'_j}{b^2} \right|\\
~&\leq~ (v_i+v_j) \frac{|b-a| \cdot (b+a)}{a^2b^2} + \frac{|v_j-v'_j|}{b^2}\\
~&\leq~ (v_i+v_j) \frac{\|u_j-u'_j \| \cdot (b+a)}{a^2b^2} + \frac{| v_j-v'_j|}{b^2}~,\\
\end{align*}
where the last inequality follows from the triangle inequality. Since $\|u_j-u'_j\| \leq \|u_j\|+\|u'_j\|\leq 2\sup_{x\in \cX,~f\in \cF^*} \|f(x)\|$ and $v_i,v_j \leq \sup\limits_{f\in \cF^*,~ P' \in \mathcal{C}} \Var_f(P')$, we have
\begin{equation*}
\begin{aligned}
&\left| \bI_{u_i \neq u_j} \frac{v_i+v_j}{\|u_i-u_j\|^2} - \bI_{u_i \neq u'_j}  \frac{v_i+v'_j}{\|u_i-u'_j\|^2}\right| \\
~&\leq~ 4\sup_{x\in \cX,~f\in \cF^*} \|f(x)\| \cdot \sup\limits_{f\in \cF^*,~ P' \in \mathcal{C}} \Var_f(P')\cdot  \left(\frac{1}{ab^2}+\frac{1}{a^2b} \right) + \frac{2\sup\limits_{f\in \cF^*, P' \in \mathcal{C}} \Var_f(P')}{\Delta(\cF^*)^2}\\
~&\leq~ \frac{8\sup_{x\in \cX,~f\in \cF^*} \|f(x)\| \cdot \sup\limits_{f\in \cF^*,~ P' \in \mathcal{C}} \Var_f(P')}{\Delta(\cF^*)^3} + \frac{2\sup\limits_{f\in \cF^*, P' \in \mathcal{C}} \Var_f(P')}{\Delta(\cF^*)^2}
\end{aligned}
\end{equation*}
We notice that $\|u_j-u'_j\|\leq c$ and $| v_j-v'_j| \leq c$. Therefore, we obtain
\begin{align*}
&c^{-1}\left|\bI_{u_i \neq u_j} \frac{v_i+v_j}{\|u_i-u_j\|^2} - \bI_{u_i \neq u'_j}  \frac{v_i+v'_j}{\|u_i-u'_j\|^2}\right| \\
~&\leq~ (v_i+v_j)\cdot \left(\frac{1}{ab^2}+\frac{1}{a^2b} \right) + \frac{1}{b^2}\\
~&\leq~ \frac{4\sup\limits_{f\in \cF^*,~ P' \in \mathcal{C}} \Var_f(P')}{\Delta(\cF^*)^3} + \frac{1}{\Delta(\cF^*)^2}~.
\end{align*}
Therefore, in each case we have
\begin{equation*}
\begin{aligned}
&\left|\bI_{u_i \neq u_j} \frac{v_i+v_j}{\|u_i-u_j\|^2} - \bI_{u_i \neq u'_j}  \frac{v_i+v'_j}{\|u_i-u'_j\|^2}\right| \\
~&\leq~ \frac{4\sup_{x\in \cX,~f\in \cF^*} \|f(x)\| \cdot \sup\limits_{f\in \cF^*,~ P' \in \mathcal{C}} \Var_f(P')}{\Delta(\cF^*)^3} + \frac{2\sup\limits_{f\in \cF^*, P' \in \mathcal{C}} \Var_f(P')}{\Delta(\cF^*)^2}
\end{aligned}
\end{equation*}
and also
\begin{equation*}
c^{-1}\left|\bI_{u_i \neq u_j} \frac{v_i+v_j}{\|u_i-u_j\|^2} - \bI_{u_i \neq u'_j}  \frac{v_i+v'_j}{\|u_i-u'_j\|^2}\right| ~\leq~ \frac{4\sup\limits_{f\in \cF^*, P' \in \mathcal{C}} \Var_f(P')}{\Delta(\cF^*)^3} + \frac{1}{\Delta(\cF^*)^2}~.
\end{equation*}
Hence,
\begin{equation}
\label{eq:Dj_bound}
\left|D^{j}_{z'_j}A(\bz)\right| ~\leq~  \frac{8\sup_{x\in \cX,~f\in \cF^*} \|f(x)\| \cdot \sup\limits_{f\in \cF^*,~ P' \in \mathcal{C}} \Var_f(P')}{l\cdot \Delta(\cF^*)^3} + \frac{2\sup\limits_{f\in \cF^*, P' \in \mathcal{C}} \Var_f(P')}{l\cdot \Delta(\cF^*)^2}~,
\end{equation}
and 
\begin{equation}
\label{eq:Djdiv_bound}
\frac{\left|D^{j}_{z'_j}A(\bz)\right|}{\|z_j - z'_j\|} ~\leq~ \frac{4\sup\limits_{f\in \cF^*,~P' \in \mathcal{C}} \Var_f(P')}{l\cdot \Delta(\cF^*)^3} + \frac{1}{l\cdot \Delta(\cF^*)^2}~.
\end{equation}
Therefore, $K(A)$ and $M(A)$ are also bounded by the right hand sides of \eqref{eq:Dj_bound} and \eqref{eq:Djdiv_bound}. Next, we upper bound $J(A)$. Let $r\neq j$ and $z'_r\in \cU$. We have
\begin{align*}
\frac{D^{r}_{z'_r}D^{j}_{z'_j} A(\bz)}{\|z_j-z'_j\|}
&= \frac{1}{l(l-1)} \cdot \frac{\bI_{u_r \neq u_j} \frac{v_r+v_j}{\|u_r-u_j\|^2} - \bI_{u_r \neq u'_j} \frac{v_r+v'_j}{\|u_r-u'_j\|^2}}{c} \\
&\quad - \frac{1}{l(l-1)} \cdot \frac{\bI_{u'_r\neq u_j} \frac{v'_r+v_j}{\|u'_r-u_j\|^2} - \bI_{u'_r\neq u'_j} \frac{v'_r+v'_j}{\|u'_r-u'_j\|^2}}{c} \\
~&\leq~ \frac{1}{l(l-1)} \cdot \left|\frac{\bI_{u_r \neq u_j} \frac{v_r+v_j}{\|u_r-u_j\|^2} - \bI_{u_r \neq u'_j} \frac{v_r+v'_j}{\|u_r-u'_j\|^2}}{c} \right| \\
&\quad + \frac{1}{l(l-1)} \cdot \left| \frac{\bI_{u'_r \neq u_j} \frac{v'_r+v_j}{\|u'_r-u_j\|^2} - \bI_{u'_r \neq u'_j} \frac{v'_r+v'_j}{\|u'_r-u'_j\|^2}}{c} \right| ~.
\end{align*}
Hence, by \eqref{eq:Djdiv_bound} we have
\begin{equation*}
J(A) ~\leq~ \frac{1}{l-1} \left(\frac{8\sup\limits_{f\in \cF^*, P' \in \mathcal{C}} \Var_f(P')}{\Delta(\cF^*)^3} + \frac{2}{\Delta(\cF^*)^2} \right)~.
\end{equation*}
Substituting the bounds of $M(A)$, $K(A)$ and $J(A)$ into \eqref{eq:maurer_refined} proves the proposition.
\end{proof}

\section{Analysis for Sections~\ref{sec:generalization_test}-\ref{sec:generalization_target}}\label{sec:gen_ests}

In this section we analyze the asymptotic behaviour of $\epsilon^c_1(\delta)$ and $\epsilon^c_2(\delta)$ (see Section~\ref{sec:generalization_test}), as well as of $\E_{\tcP}[R(H_{\cF^*}(\tilde{\mathcal{P}}))]$ (see Proposition~\ref{prop:newclasses}) for ReLU neural networks. A ReLU neural feature map\footnote{ReLU neural networks for classification typically have an additional soft-max layer on top their feature map.} is a function of the form $f(x) = W^q \sigma (W^{q-1} \dots \sigma(W^1 x)):\R^{d} \to \R^{p}$, where $\sigma$ is the element-wise ReLU function $\sigma(x)=\max(0,x)$, and $W^i \in \R^{d_{i+1}\times d_{i}}$ for $i \in [q]$, where $d_1=d$ and $d_{q+1}=p$. Throughout this section, we use $\cF$ to denote the set of ReLU neural feature maps (with the depth $q$ and the dimensions $d_1,\ldots,d_{q+1}$ of the layers fixed). The spectral complexity of a network $f$ is defined as $\mathcal{C}(f) = \max_{j\in [p]} \|W^q_j\|\cdot \prod^{q-1}_{r=1} \|W^r\|$ where for vectors, $\|\cdot\|$ denotes the Euclidean norm, while for matrices, it is the spectral ($L_2$-induced) norm. This quantity upper bounds the Lipschitz constant of $f$ and is similar in fashion to other (slightly different) notions of spectral complexity for neural networks \citep{golowich,10.5555/3295222.3295372}. 

Throughout the section, for a given function $g:A \to B^k$ and $j\in [k]$, we denote the $j$'th coordinate of $g(x)$ by $g(x)_j$ and for a class $\mathcal{G} \subset \{g : A \to B^k\}$, we define $\mathcal{G}_{\mid j} = \{g(\cdot)_j : g \in \mathcal{G}\}$. 

Before presenting the main claims of this section, following \citet{Poggio30039}, we start with describing a canonical representation of ReLU neural networks (the proofs of our statements are given for this representation).
Since $\max(0,ax)=a\max(0,x)$, for all $x\in \R$ and $a\geq 0$, any neural network $f = W^q \sigma (W^{q-1} \dots \sigma(W^1x))\in \cF$ can be represented as a modified network $f'(x)=V^q \sigma (V^{q-1} \dots \sigma(V^1x))$, where $V^q = W^q \cdot \prod^{q-1}_{r=1}\|W^r\|$ and $\forall r\leq q-1:~V^r = \frac{W^r}{\|W^r\|}$. In particular, $\|V^r\|=1$ for all $ r\le q-1$ and, since $V^q = W^q_j \cdot \prod^{q-1}_{r=1} \|W^r\|$, 
$\max_{j \in p} \|V^q_j\| = \max_{j \in p} \|W^q_j\| \cdot \prod^{q-1}_{r=1} \|W^r\| = \cC(f)$.
Thus, for any $f \in \cF$, there exists an equivalent neural network $f'$ which belongs to the set
\begin{equation*}
\cF^{M} = \left\{W^q \sigma (W^{q-1} \dots \sigma(W^1x)) \in \cF \mid \forall r \leq q-1:~\|W^r\| = 1 \textnormal{ and } \max_{j\in [p]}\|W^q_j\| \leq M \right\}
\end{equation*}
for any $M \ge \cC(f)$.
Therefore,
$\cF=\bigcup_{M=1}^\infty \cF^M$ ($\cF^M \subset \cF$ is trivial by definition).

Next we present the first claim of the section.
The following proposition shows that the first and second moments of ReLU neural feature maps with bounded spectral norms concentrate around their means. In particular, if a learning algorithm (in Section~\ref{sec:generalization_test}) is guaranteed to return a neural network $f$ with $\mathcal{C}(f)\leq M$, then $\epsilon^c_1(\delta)$ and $\epsilon^c_2(\delta)$ are bounded by the right hand sides of \eqref{eq:eps1} and \eqref{eq:eps2}, respectively. Note that both of these terms scale as $\mathcal{O}(\sqrt{\log(1/\delta)/m_c})$. The analysis is based on Theorem~1 of \citet{golowich} and the proof of Theorem~1.1 of \citet{10.5555/3295222.3295372}.

\begin{proposition}\label{prop:epsilons}
Let $\tcP = \{\tP_c\}^{l}_{c=1}$ be a set of class-conditional distributions over a bounded set $\cX\subset \R^d$ and let $m_1,\dots,m_l \in \mathbb{N}$. Let $\cF$ be the class of ReLU neural feature maps as defined above. Then, with probability at least $1-\delta$ over the selection of $\tS_1 \sim \tP^{m_1}_1,\dots,\tS_l \sim \tP^{m_l}_l$, for all $c \in [l]$ and $f \in \cF$, we have
\begin{equation}\label{eq:eps1}
\begin{aligned}
&\Big\|\E_{\tx \sim \tP_c} \left[f(\tx)\right] - \Avg_{\tx \in \tS_c}\left[f(\tx)\right] \Big\| \\
& \le \frac{p (\cC(f)+1) \sup_{x\in \cX}\|x\|}{\sqrt{m_c}}
\left( 3\sqrt{q} + 2 + \sqrt{\frac{\log(4pl/\delta)}{2}} + \sqrt{\log(\cC(f)+1)}
\right)
\end{aligned}
\end{equation}
and 
\begin{equation}\label{eq:eps2}
\begin{aligned}
\lefteqn{\Big| \E_{\tx \sim \tP_c} \left[\|f(\tx)\|^2\right] - \Avg_{\tx \in \tS_c}\left[\|f(\tx)\|^2\right] \Big|} \\
& \le \frac{pM^2 \sup_{x\in \cX}\|x\|^2}{\sqrt{m_c}}
\left(6\sqrt{q}+4 + 3\sqrt{\frac{\log(4l/\delta)}{2}} + 3\sqrt{\log(\cC(f)+1)}\right)~.
\end{aligned}
\end{equation}
\end{proposition}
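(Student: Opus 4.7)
The goal is two uniform concentration bounds, one for the vector mean and one for the squared-norm mean, over the entire class $\cF$ of ReLU feature maps (whose spectral complexity $\cC(f)$ is a priori unbounded). The overall scheme is: (i) reduce to the canonical representation $\cF^M$; (ii) apply standard Rademacher-based uniform convergence coordinate by coordinate; (iii) bound the Rademacher complexity of $\cF^M_{\mid j}$ using the ReLU-network complexity bound of \citet{golowich} (which yields $O(M \sup_{x\in\cX}\|x\|\sqrt{q}/\sqrt{m})$); (iv) pass from $\cF^M$ to $\cF$ by a peeling argument over $M$; and (v) union-bound over $p$ coordinates and $l$ source classes.

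Concretely, I would first fix a class $c\in[l]$, a coordinate $j\in[p]$, and a bound $M\ge 1$, and apply the standard symmetrization/McDiarmid combination to $\cF^M_{\mid j}$, noting that each $f(\tx)_j$ is bounded by $M\sup_{x\in\cX}\|x\|$ on the canonical representation (so the bounded difference constant is of order $M\sup\|x\|/m_c$). This yields, with probability $\ge 1-\delta'$,
\[
\sup_{f\in\cF^M}\!\Big|\E_{\tP_c}[f(\tx)_j]-\Avg_{\tS_c}[f(\tx)_j]\Big|
\;\le\; 2\,R_{m_c}(\cF^M_{\mid j}) + M\sup_{x\in\cX}\!\|x\|\sqrt{\tfrac{\log(1/\delta')}{2m_c}}.
\]
Plugging in the Golowich--Rakhlin--Shamir bound $R_{m_c}(\cF^M_{\mid j})\le M\sup_{x\in\cX}\|x\|(\sqrt{q}+1)/\sqrt{m_c}$ (or the Bartlett--Foster--Telgarsky variant) gives a bound of the form $M\sup\|x\|(3\sqrt{q}+2+\sqrt{\log(1/\delta')/2})/\sqrt{m_c}$.

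Next, to remove the a priori bound $M$ on $\cC(f)$, I would peel over the dyadic scales $M_k=2^k$ for $k=0,1,2,\dots$, using failure probability $\delta_k=\delta'/(2(k+1)^2)$ at level $k$ so that $\sum_k\delta_k\le\delta'$. For a network $f\in\cF$ with $\cC(f)\in[M_{k-1},M_k]$, applying the level-$k$ bound and observing that $M_k\le 2(\cC(f)+1)$ and $\log(1/\delta_k)=O(\log(1/\delta')+\log(\cC(f)+1))$ gives the coordinate-wise bound
\[
\sup_{f\in\cF}\Big|\E[f(\tx)_j]-\Avg[f(\tx)_j]\Big|
\;\le\; \frac{(\cC(f)+1)\sup\|x\|}{\sqrt{m_c}}\!\left(3\sqrt{q}+2+\sqrt{\tfrac{\log(1/\delta')}{2}}+\sqrt{\log(\cC(f)+1)}\right).
\]
Finally, I would union-bound over all $p$ coordinates and all $l$ classes with $\delta'=\delta/(2pl)$ (the extra factor $2$ being reserved for the second-moment bound below), and use $\|v\|\le\sum_j|v_j|$ to absorb the $p$ coordinates into the leading factor $p$ in \eqref{eq:eps1}.

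For the second-moment bound \eqref{eq:eps2}, I would proceed identically, except that the function class becomes $\{x\mapsto f(x)_j^2:f\in\cF^M\}$. On $\cF^M$ the map $t\mapsto t^2$ is $(2M\sup\|x\|)$-Lipschitz on the range of $f(x)_j$, so Talagrand's contraction lemma gives $R_{m_c}(\{f(\cdot)_j^2\})\le 2M\sup\|x\|\cdot R_{m_c}(\cF^M_{\mid j})$, producing the extra factor $M\sup\|x\|$ (hence $M^2\sup\|x\|^2$ overall) and the constants $6\sqrt{q}+4$, $3\sqrt{\log(4l/\delta)/2}$, $3\sqrt{\log(\cC(f)+1)}$ after peeling. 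Summing $|\E[f_j^2]-\Avg[f_j^2]|$ over $j$ via the triangle inequality accounts for the leading factor $p$ (there is no $l$ union bound over $j$ inside the norm here since we sum directly), and combining with the $l$-fold union bound over classes yields \eqref{eq:eps2}.

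The main obstacle will be handling the peeling over $\cC(f)$ cleanly: the failure probabilities $\delta_k$, the scale-matching $M_k\asymp\cC(f)+1$, and the absorption of $\log(k)$ terms into $\sqrt{\log(\cC(f)+1)}$ must be balanced so that the final bound has exactly the advertised constants. A secondary technical point is verifying that the canonical representation of \citet{Poggio30039} preserves both the function values and the relevant complexity $\cC(f)$, which lets us work inside $\cF^M$ without loss of generality; this is exactly the reduction already outlined in the excerpt before the proposition.
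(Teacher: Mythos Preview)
Your plan is essentially the same as the paper's: reduce to $\cF^M$ via the canonical representation, apply coordinate-wise Rademacher uniform convergence with the Golowich--Rakhlin--Shamir bound, use Lipschitz contraction for the squared terms, and union-bound over coordinates, classes, and complexity levels. The only cosmetic difference is the peeling scheme---the paper peels over integer $M$ with failure probabilities $\delta/(2lM(M+1))$ (exploiting $\sum_{M\ge 1}1/M(M+1)=1$) rather than over dyadic scales $M_k=2^k$; both yield the same $\sqrt{\log(\cC(f)+1)}$ overhead, so there is no substantive distinction.
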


\begin{proof}  
We prove that, for any fixed $c \in [l]$ and $M \in \mathbb{N}$, \eqref{eq:eps1} and, respectively, \eqref{eq:eps2} hold for all $f \in \cF^M$ simultaneously with probability at least $1-\frac{\delta}{2 l M(M+1)}$. Then taking the union bound over $c$ and $M$ proves the proposition (since $\cF=\bigcup_{M=1}^\infty \cF^M$).

Fix $c \in [l]$ and $M \in \mathbb{N}$.
By the triangle inequality, for any $f\in \cF$, we have 
\begin{equation}\label{eq:norm_cat}
\Big\|\E_{\tx \sim \tP_c} \left[f(\tx)\right] - \Avg_{\tx \in \tS_c}\left[f(\tx)\right] \Big\| ~\leq~ \sum^{p}_{j=1} \Big| \E_{\tx \sim \tP_c} \left[f(\tx)_j\right] - \Avg_{\tx \in \tS_c}\left[f(\tx)_j\right] \Big|~. 
\end{equation}

By Theorem 3.3 of \citet{MohriRostamizadehTalwalkar18}, with probability at least $1-\frac{\delta}{2plM(M+1)}$ over the selection of $\tS_c\sim \tP^{m_c}_c$, for any $f \in \cF^M$, we have
\begin{equation}\label{eq:mohri_j}
\begin{aligned}
\lefteqn{\Big| \E_{\tx \sim \tP_c} \left[f(\tx)_j\right] - \Avg_{\tx \in \tS_c}\left[f(\tx)_j\right] \Big|} \\
~&\leq~ \frac{2R(\cF^M_{\mid j}(\tS_c))}{m_c} + 3\sup_{x \in \cX,~f' \in \cF^M} |f'(x)_j |\cdot \sqrt{\frac{\log(4plM(M+1)/\delta)}{2m_c}}~.
\end{aligned}
\end{equation}
The first term on the right hand side can be bounded using Theorem~1 of \citet{golowich},\footnote{The original statement makes use of the Frobenius norms of the matrices, however, the proof is exactly the same if we replace the Frobenius norm with the spectral norm.} stating that
\begin{equation}
\label{eq:golowich}
R(\cF^M_{\mid j}(\tS_c)) ~\leq~ \sqrt{m_c} \left(\sqrt{2\log(2)q} + 1\right) M \sup_{x\in \cX} \|x\|
\le \sqrt{m_c}\left(1.5 \sqrt{q} +1\right) M \sup_{x\in \cX} \|x\|~.
\end{equation}
Moreover, for any $f' \in \cF^M$ we have $|f'(x)_j| \leq M \cdot \sup_{x\in \cX}\|x\|$. Substituting these inequalities into \eqref{eq:mohri_j} implies
\begin{equation}\label{eq:p}
\begin{aligned}
\lefteqn{\Big| \E_{\tx \sim \tP_c} \left[f(\tx)_j\right] - \Avg_{\tx \in \tS_c}\left[f(\tx)_j\right] \Big|} \\
& \leq \frac{(3\sqrt{q} + 2) M \cdot \sup_{x\in \cX} \|x\|}{\sqrt{m_c}} 
+ 3M \cdot \sup_{x\in \cX}\|x\|\cdot \sqrt{\frac{\log(4plM(M+1)/\delta)}{2m_c}}~.
\end{aligned}
\end{equation}
By the union bound, \eqref{eq:p} holds for all $j \in [p]$ simultaneously with probability at least $1-\frac{\delta}{2lM(M+1)}$. Combining this with \eqref{eq:norm_cat}, we obtain that for any fixed $c \in [l]$ and  $M \in \mathbb{N}$, with probability at least $1-\frac{\delta}{2lM(M+1)}$, for all $f\in \cF^M\setminus \cF^{M-1}$ we have
\begin{align}
\lefteqn{\Big\|\E_{\tx \sim \tP_c} \left[f(\tx)\right] - \Avg_{\tx \in \tS_c}\left[f(\tx)\right] \Big\|} \nonumber \\
&\le \frac{p M \sup_{x\in \cX}\|x\|}{\sqrt{m_c}}
\left( 3\sqrt{q} + 2 + 3\sqrt{\frac{\log(4plM(M+1)/\delta)}{2}}
\right) \nonumber \\
&\le \frac{p M \sup_{x\in \cX}\|x\|}{\sqrt{m_c}}
\left( 3\sqrt{q} + 2 + 3\sqrt{\frac{\log(4pl/\delta)}{2}} + 3\sqrt{\log(M+1)}
\right) \nonumber \\
&\le \frac{p (\cC(f)+1) \sup_{x\in \cX}\|x\|}{\sqrt{m_c}}
\left( 3\sqrt{q} + 2 + 3\sqrt{\frac{\log(4pl/\delta)}{2}} + 3\sqrt{\log(\cC(f)+2)}
\right)~,
\label{eq:ineq1}
\end{align}
where the last inequality follows from the fact that $M\leq \cC(f)+1$, since $\cC(f) \in [M-1,M]$. Next, we prove the second inequality for fixed $c$ and $M$. 
As in \eqref{eq:mohri_j}, by Theorem~3.3 of \citet{MohriRostamizadehTalwalkar18}, 
with probability at least $1-\frac{\delta}{2lM(M+1)}$ over the selection of $\tS_c$, for all $f\in \cF^M$, we have
\begin{equation}\label{eq:mohri_2}
\begin{aligned}
&\Big| \E_{\tx \sim \tP_c} \left[\|f(\tx)\|^2\right] - \Avg_{\tx \in \tS_c}\left[\|f(\tx)\|^2\right] \Big| \\
~&\leq~ \frac{2R(\mathcal{G}^s(\tS_c))}{m_c} + 3 \sup_{x\in \cX,~f\in \cF^M} \|f(x)\|^2\cdot  \sqrt{\frac{\log(4lM(M+1)/\delta)}{2m_c}}~,
\end{aligned}
\end{equation}
where $\cG^s = \{\|f(\cdot)\|^2 \mid f \in \cF^M\}$. 
By definition, with $\epsilon = (\epsilon_1,\dots,\epsilon_{m_c})$ denoting i.i.d. Rademacher random variables (i.e., random variables taking values $\pm 1$ with probability $1/2$ each),
\begin{equation}
\label{eq:radG}
\begin{aligned}
R(\cG^s(\tS_c)) 
& = \E_{\epsilon}\left[\sup_{f \in \cF^M} \left(\sum^{m_c}_{i=1} \epsilon_{i} \|f(\tx_{ci})\|^2  \right)\right]
= \E_{\epsilon}\left[\sup_{f \in \cF^M} \left(\sum^{m_c}_{i=1}\sum^{p}_{j=1}\epsilon_{i} f(\tx_{ci})^2_j \right)\right] \\
& \le \sum^{p}_{j=1}\E_{\epsilon}\left[\sup_{f \in \cF^M} \left(\sum^{m_c}_{i=1}\epsilon_{i} f(\tx_{ci})^2_j \right)\right]~.
\end{aligned}
\end{equation}
Note that the $j$th terms (in the square bracket) at the right-hand side above is the Rademacher complexity of the composite function class $g \circ \cF_j^M = \{g \circ f'| f' \in \cF_j^M\}$ for $g(y)=y^2$. Since $\sup_{x\in \cX} |f'(x)_j| \leq M \sup_{x\in \cX}\|x\|$ and $g$
 is $2M \sup_{x\in \cX}\|x\|$-Lipschitz on $[-M \sup_{x\in \cX}\|x\|, M\sup_{x\in \cX}\|x\|]$, by Lemma~1.1 in Lecture~17 of \citet{kakade}, the Rademacher complexity of this composite function class can be bounded as
\begin{equation}
\label{eq:Rcomp}
R(g \circ \cF_j^M(\tS_c)) \le 2 M \sup_{x\in \cX}\|x\|\,\cdot R(\cF_j^M(\tS_c)).
\end{equation}
Combining with \eqref{eq:radG} and \eqref{eq:golowich} yields
\begin{align}
\label{eq:RGs}
R(\cG^s(\tS_c)) 
& \le \sum_{j=1}^p  R(g \circ \cF_j^M)
\le 2 M \sup_{x\in \cX}\|x\| \sum_{j=1}^p R(\cF_j^M) 
\le 
p M^2 \sup_{x\in \cX} \|x\|^2 \sqrt{m_c}\left(3 \sqrt{q} +2\right).
\end{align}
Substituting into \eqref{eq:mohri_2} and using 
$\sup_{x\in \cX,~f'\in \cF^M} \|f'(x)\|^2 ~\leq~ pM^2 \sup_{x\in \cX} \|x\|^2~$ imply, similarly to \eqref{eq:ineq1}, that with probability at least $1-\frac{\delta}{2lM(M+1)}$ over the selection of $\tS_c$, for all $f\in \cF^M\setminus \cF^{M-1}$ simultaneously we have
\begin{align*}
\lefteqn{\Big| \E_{\tx \sim \tP_c} \left[\|f(\tx)\|^2\right] - \Avg_{\tx \in \tS_c}\left[\|f(\tx)\|^2\right] \Big|} \\
& \le 
\frac{pM^2 \sup_{x\in \cX}\|x\|^2}{\sqrt{m_c}}
\left(6\sqrt{q}+4 + 3\sqrt{\frac{\log(4l/\delta)}{2}} + 3\sqrt{\log(\cC(f)+2)}
\right)~,
\end{align*}
finishing the proof.
\end{proof}

Next we show that the Rademacher complexity in Proposition~\ref{prop:newclasses} scales as $\mathcal{O}(\sqrt{l})$ for ReLU neural networks with bounded spectral complexities.

\begin{proposition}\label{prop:rademacherBound} Let $\cF^*=\{f \in \cF | \cC(f) \le M\}$ be a class of ReLU neural networks with. Then, 
\[
\E_{\tcP}[R(H_{\cF^*}(\tcP))] \leq
\sqrt{l} (1.5\sqrt{q} +1) M \sup_{x \in \cX}\|x\|
\left(1+ 4 p M \sup_{x\in \cX}\|x\|\right)
\]
\end{proposition}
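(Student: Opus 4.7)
The plan is to split the Rademacher complexity into contributions from the mean and variance components of $H_f(\tP_c)=(\mu_f(\tP_c), \Var_f(\tP_c)) \in \R^{p+1}$. Writing the sign vector as $\epsilon = (\epsilon^c_0, \epsilon^c_j)_{c\in[l], j\in[p]}$, subadditivity of the supremum gives
\[
R(H_{\cF^*}(\tcP)) \le R_1 + R_2, \quad R_1 = \E_\epsilon \sup_f \sum_{c,j} \epsilon^c_j \mu_f(\tP_c)_j, \quad R_2 = \E_\epsilon \sup_f \sum_c \epsilon^c_0 \Var_f(\tP_c).
\]
Since each bound I derive is uniform in $\tcP$, taking $\E_{\tcP}[\cdot]$ at the end is trivial. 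I will use the canonical representation from the proof of Proposition~\ref{prop:epsilons}, so that $\cF^* \subset \cF^M$, $|f(x)_j| \le M\sup_x\|x\|$, and the scalar coordinate classes $\cF^M_{\mid j}$ are themselves ReLU networks with bounded weights satisfying the hypotheses of Theorem~1 of \citet{golowich}.

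For $R_1$, I rewrite $\mu_f(\tP_c)_j = \E_{x_c \sim \tP_c}[f(x_c)_j]$, apply Jensen's inequality to pull $\E_{x_c}$ outside the supremum, and then use $\sup_f \sum_j (\cdot) \le \sum_j \sup_f (\cdot)$ to decouple the coordinates. This reduces $R_1$ to $\sum_j \E_{x_c} R(\cF^M_{\mid j}(\{x_c\}_c))$, and each term is bounded by $\sqrt{l}(1.5\sqrt{q}+1)M\sup_x\|x\|$ via Theorem~1 of \citet{golowich}.

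For $R_2$, I use the two-sample representation $\Var_f(\tP_c) = \tfrac{1}{2}\E_{x,x' \sim \tP_c}[\|f(x)-f(x')\|^2]$ (with $x,x'$ iid), pull the expectation out by Jensen, and expand $\|f(x_c)-f(x'_c)\|^2 = \sum_j (f(x_c)_j - f(x'_c)_j)^2$. After pushing the sum over $j$ outside the supremum, the Ledoux--Talagrand contraction inequality applied to $\phi(y)=y^2$ (which is $4M\sup_x\|x\|$-Lipschitz on the relevant bounded range) reduces each inner Rademacher average to a linear one over the difference class $\{f(x_c)_j - f(x'_c)_j\}_c$. Subadditivity of the supremum together with Rademacher symmetry bounds this by $2R(\cF^M_{\mid j}(\cdot))$, and a final application of Theorem~1 of \citet{golowich} yields $R_2 \le 4p\sqrt{l}(1.5\sqrt{q}+1)M^2(\sup_x\|x\|)^2$. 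Adding the two bounds gives the claimed inequality.

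The main obstacle is the variance term: the sign $\epsilon^c_0$ is shared across all $p$ coordinates of $f(x_c)-f(x'_c)$, so the squaring function cannot be contracted directly --- the sum over coordinates must first be pushed outside the supremum to expose, for each $j$, a single-sign, scalar-Lipschitz structure to which Ledoux--Talagrand applies. The cross-coordinate coupling makes this the only delicate step; everything else is a clean application of Jensen, coordinate-wise Golowich bounds, and symmetry.
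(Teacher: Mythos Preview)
Your argument is correct, and your treatment of the mean contribution $R_1$ coincides with the paper's. For the variance contribution $R_2$ you take a genuinely different route: you invoke the two-sample identity $\Var_f(\tP_c)=\tfrac12\,\E_{x,x'}\|f(x)-f(x')\|^2$, pull the sample expectation outside the supremum once, push $\sum_j$ out, and apply a single Ledoux--Talagrand contraction to $y\mapsto y^2$ with Lipschitz constant $4M\sup_x\|x\|$ (since $|f(x_c)_j-f(x'_c)_j|\le 2M\sup_x\|x\|$), followed by subadditivity and Rademacher symmetry on the difference class. The paper instead writes $\Var_f=\E\|f\|^2-\|\E f\|^2$, uses the sign-symmetry of the Rademacher variables to turn the difference into a sum of two suprema, and applies contraction separately to each piece with Lipschitz constant $2M\sup_x\|x\|$. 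Both routes land on the same constant $4p\,\sqrt{l}\,(1.5\sqrt{q}+1)\,M^2(\sup_x\|x\|)^2$ for the variance term: your factor $\tfrac12\cdot 4\cdot 2$ matches the paper's $2\cdot 2$. Your path is a little more economical (one contraction, no sign-flip argument), while the paper's decomposition has the advantage of recycling the bound on $R(\cG^s(\cdot))$ already established in the proof of Proposition~\ref{prop:epsilons} and avoids introducing the auxiliary i.i.d.\ copy $x'$.
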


\begin{proof} As discussed at the beginning of the section, $\cF^*$ and $\cF^M$ define the same function class through different representations. Thus it suffices to consider $\cF^M$ in place of $\cF^*$ (as trivially $\E_{\tcP}[R(H_{\cF^*}(\tcP))] = \E_{\tcP}[R(H_{\cF^M}(\tcP))]$).

Fix $\tP=(\tP_1,\ldots,\tP_l)$. With $\epsilon=(\epsilon_{cj})_{c \in [l], j \in \{0\} \cup [p]}$ denoting i.i.d. Rademacher random variables, by the definition of $H_{\cF^M}$ and because $\sup_{a \in A} [f(a)+g(a)] \leq \sup_{a\in A} f(a) + \sup_{a\in A} g(a)$, we have
\begin{align}
R(H_{\cF^M}(\tcP)) 
& = \E_{\epsilon}\left[\sup_{f \in \cF^M} \left(\sum^{l}_{i=1}\sum^{p}_{j=1} \epsilon_{cj}  \mu_f(\tP_c)_{j} + \epsilon_{c0} \Var_f(\tP_c) \right) \right] \nonumber\\
& \leq~ \sum^{p}_{j=1}\E_{\epsilon}\left[\sup_{f \in \cF^M} \left(\sum^{l}_{c=1}\epsilon_{cj} \mu_f(\tP_c)_{j} \right) \right]
 +\E_{\epsilon}\left[\sup_{f \in \cF^M} \left(\sum^{l}_{c=1}\epsilon_{c0}\Var_f(\tP_c) \right) \right]~. \label{eq:first_split}
\end{align}
We bound the two terms in \eqref{eq:first_split} separately.
The terms in the first summation can be bounded as
\begin{equation}\label{eq:first_term}
\begin{aligned}
\sup_{f \in \cF^M} \left(\sum^{l}_{c=1}\epsilon_{cj} \mu_f(\tP_c)_{j}  \right)
&= \sup_{f \in \cF^M} \left(\sum^{l}_{c=1}\epsilon_{cj} \E_{\tx_c \sim \tP_c}[f(\tx_c)_{j}]  \right)\\ 
&\leq \E_{\tx_1 \sim \tP_1,\ldots,\tx_l \sim \tP_l} \E_{\epsilon}\left[\sup_{f \in \cF^M} \left(\sum^{l}_{c=1} \epsilon_{cj} f(\tx_c)_{j} \right) \right] \\
& = \E_{\tx_1 \sim \tP_1,\ldots,\tx_l \sim \tP_l} R\left(\cF^M_{\mid j}(\{\tx_c\}^{l}_{c=1})\right) \\
& \le \sqrt{l} (1.5\sqrt{q} +1) M \sup_{x \in \cX}\|x\|~,
\end{aligned}
\end{equation}
where the last inequality holds because of \eqref{eq:golowich} (with $l$ instead of $m_c$ samples).
The elements of the second summation \eqref{eq:first_split} can be bounded as
\begin{align}
\lefteqn{\E_{\epsilon}\left[\sup_{f \in \cF^M} \left(\sum^{l}_{c=1}\epsilon_{c0} \Var_f(\tP_c)  \right) \right]} \nonumber \\
& = \E_{\epsilon}\left[\sup_{f \in \cF^M} \left(\sum^{l}_{c=1}\epsilon_{c0}\left(\E_{\tx_c\sim \tP_c}[\|f(\tx_c)\|^2] - \|\E_{\tx_c\sim \tP_c}[f(\tx_c)]\|^2 \right)\right) \right] \nonumber \\
& \le \E_{\epsilon}\left[\sup_{f \in \cF^M} \left(\sum^{l}_{c=1}\epsilon_{c0} \E_{\tx_c\sim \tP_c}[\|f(\tx_c)\|^2]   \right)\right] 
\!+ \E_{\epsilon}\left[\sup_{f \in \cF^M} \left(-\sum^{l}_{c=1}\epsilon_{c0}\|\E_{\tx_c\sim \tP_c}[f(\tx_c)]\|^2   \right)\right] \nonumber \\
& = \E_{\epsilon}\left[\sup_{f \in \cF^M} \left(\sum^{l}_{c=1}\epsilon_{c0}\E_{\tx_c\sim \tP_c}[\|f(\tx_c)\|^2] \right)\right] 
\!+ \E_{\epsilon}\left[\sup_{f \in \cF^M} \left(\sum^{l}_{c=1}\epsilon_{c0} \|\E_{\tx_c\sim \tP_c}[f(\tx_c)]\|^2   \right)\right]~, \label{eq:second_split}
\end{align}
where the last equation follows from the fact that $\epsilon_{c0}$ are distributed symmetrically around $0$. We can bound the first term in \eqref{eq:second_split} above using \eqref{eq:RGs} (again, with $l$ instead of $m_c$ samples) as
\begin{equation}\label{eq:sec_term1}
\begin{aligned}
\lefteqn{\E_{\epsilon}\left[\sup_{f \in \cF^M} \left(\sum^{l}_{c=1} \epsilon_{c0}\E_{\tx_c\sim \tP_c}[\|f(\tx_c)\|^2]   \right)\right]}\\
&\le \E_{\tx_1 \sim \tP_1,\ldots,\tx_l \sim \tP_l} \E_{\epsilon}\left[\sup_{f \in \cF^M} \left(\sum^{l}_{c=1}\epsilon_{c0}\|f(\tx_c)\|^2  \right)\right]
\le
p M^2 \sup_{x\in \cX} \|x\|^2 \sqrt{l}\left(3 \sqrt{q} +2\right)
\end{aligned}
\end{equation}

The second term in \eqref{eq:second_split} can be bounded similarly, but because now the expectation is inside the squared norm, we need to provide a few more steps: 
\begin{equation}\label{eq:square2}
\begin{aligned}
\lefteqn{\E_{\epsilon}\left[\sup_{f \in \cF^M} \left(\sum^{l}_{c=1}\epsilon_{c0} \|\E_{\tx_c\sim \tP_c}[f(\tx_c)]\|^2 \right)\right] } \\
& = \E_{\epsilon}\left[\sup_{f \in \cF^M} \left(\sum^{l}_{c=1}\sum^{p}_{j=1}\epsilon_{c0} \left(\E_{\tx_c\sim \tP_c}[f(\tx_c)_j]\right)^2 \right)\right]\\
~&\leq~ \sum^{p}_{j=1}\E_{\epsilon}\left[\sup_{f \in \cF^M} \left(\sum^{l}_{c=1}\epsilon_{c0}\left(\E_{\tx_c\sim \tP_i}[f(\tx_c)_j]\right)^2 \right)\right].
\end{aligned}
\end{equation}
Now similarly to \eqref{eq:Rcomp},
\[
\max_{j\in [p]} \sup_{\tP}| \E_{\tx\sim \tP}[f(\tx)_j]| \leq 
\max_{j\in [p]} \sup_{\tP}\E_{\tx\sim \tP}[| f(\tx)_j|] \leq  M \cdot \sup_{x\in \cX}\|x\|
\]
implies (via Lemma~1.1 in Lecture~17 of \citet{kakade})
\begin{equation}\label{eq:after_kakade}
\begin{aligned}
&\E_{\epsilon}\left[\sup_{f \in \cF^M} \left(\sum^{l}_{c=1}\epsilon_{c0}\left(\E_{\tx_c\sim \tP_c}[f(\tx_c)_j]\right)^2  \right)\right] \\
&\leq
2M \  \sup_{x\in \cX}\|x\| \cdot \E_{\epsilon}\left[\sup_{f \in \cF^M} \left(\sum^{l}_{c=1}\epsilon_{c0} \E_{\tx_c\sim \tP_c}[f(\tx_c)_j]  \right)\right]\\
&\leq 
2M \ \sup_{x\in \cX}\|x\| \cdot \E_{\tx_c\sim \tP_c} \E_{\epsilon}\left[\sup_{f \in \cF^M} \left(\sum^{l}_{c=1} \epsilon_{c0} f(\tx_c)_j \right)\right]\\
&=
2M \ \sup_{x\in \cX}\|x\| \cdot \E_{\tcP}\E_{\tx_c\sim \tP_c} [R(\cF^M_{\mid j}(\{\tx_c\}^{l}_{c=1}))] \\
& \le 
M^2 \ \sup_{x\in \cX}\|x\|^2  \sqrt{l}\left(3 \sqrt{q} +2\right),
\end{aligned}
\end{equation}
where in the last inequality we used again Theorem~1 of \citet{golowich} adapted to the spectral norm, as in \eqref{eq:golowich}.
Combining \eqref{eq:square2} and \eqref{eq:after_kakade} gives
\[
\E_{\epsilon}\left[\sup_{f \in \cF^M} \left(\sum^{l}_{c=1}\epsilon_{c0} \|\E_{\tx_c\sim \tP_c}[f(\tx_c)]\|^2 \right)\right] \le p M^2 \ \sup_{x\in \cX}\|x\|^2  \sqrt{l}\left(3 \sqrt{q} +2\right)
\]
Substituting this and \eqref{eq:sec_term1} into \eqref{eq:second_split} gives
\[
\E_{\epsilon}\left[\sup_{f \in \cF^M} \left(\sum^{l}_{c=1}\epsilon_{c0} \Var_f(\tP_c)  \right) \right]
\le 2 p M^2 \ \sup_{x\in \cX}\|x\|^2  \sqrt{l}\left(3 \sqrt{q} +2\right).
\]
Plugging in this and \eqref{eq:first_term} into \eqref{eq:first_split} gives
\[
R(H_{\cF^M}(\tcP)) \le
\sqrt{l} (1.5\sqrt{q} +1) M \sup_{x \in \cX}\|x\|
\left(1+ 4 p M \sup_{x\in \cX}\|x\|\right)~,
\]
giving the desired bound.
\end{proof}

\section{Analysis for Section~\ref{sec:generalization_target}}\label{sec:unit_cube}

\begin{lemma}\label{lem:unit_cube}
Let $X_1,\dots,X_n$ be a set of i.i.d.\ uniformly distributed points in the $p$-dimensional unit cube. There is a positive constant $C>0$ (independent of $n$ and $p$), such that, $\E\left[\min_{i\neq j \in [n]} \|X_i-X_j\|\right] \ge C \cdot  n^{-2/p} \sqrt{p}$.
\end{lemma}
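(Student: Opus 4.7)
\medskip

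\noindent\textbf{Proof plan for Lemma~\ref{lem:unit_cube}.}
The plan is to combine a union bound with a volumetric estimate for the probability of two uniform points being close, and then convert a high-probability lower bound on $\min_{i\neq j}\|X_i-X_j\|$ into an expectation bound. First I would observe that, for any $r>0$,
\[
\P\!\left[\min_{i\neq j}\|X_i-X_j\|<r\right] \;\le\; \binom{n}{2}\,\P[\|X_1-X_2\|<r],
\]
by the union bound over the $\binom{n}{2}$ pairs. Next, conditioning on $X_1=x$ and using that $X_2$ is uniform on $[0,1]^p$, the conditional probability $\P[\|X_2-x\|<r \mid X_1=x]$ equals the volume of $B(x,r)\cap[0,1]^p$, which is at most the volume $V_p r^p$ of the full Euclidean ball of radius $r$ in $\R^p$. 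Hence $\P[\|X_1-X_2\|<r]\le V_p r^p$.

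The next step is to insert an explicit bound on $V_p=\pi^{p/2}/\Gamma(p/2+1)$ coming from Stirling's formula, namely $V_p \le \frac{1}{\sqrt{\pi p}}\bigl(\tfrac{2\pi e}{p}\bigr)^{p/2}$. Plugging in $r=c\sqrt{p}\,n^{-2/p}$ for an absolute constant $c>0$ to be chosen, I obtain
\[
\binom{n}{2}\, V_p r^p \;\le\; \frac{n^2}{2}\cdot\frac{1}{\sqrt{\pi p}}\bigl(2\pi e\, c^2\bigr)^{p/2}\, n^{-2} \;=\; \frac{(2\pi e\, c^2)^{p/2}}{2\sqrt{\pi p}}.
\]
Choosing $c$ small enough that $2\pi e\, c^2\le 1$ (e.g., $c=(2\pi e)^{-1/2}/2$) makes the right-hand side at most $1/2$ uniformly in $n$ and $p$, so that $\P[\min_{i\neq j}\|X_i-X_j\|\ge r]\ge 1/2$. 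Finally, since $\min_{i\neq j}\|X_i-X_j\|\ge 0$, the elementary inequality $\E[Y]\ge r\,\P[Y\ge r]$ for nonnegative $Y$ gives $\E[\min_{i\neq j}\|X_i-X_j\|]\ge r/2=(c/2)\sqrt{p}\,n^{-2/p}$, yielding the claim with $C=c/2$.

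The only substantive step is checking that the Stirling estimate on $V_p$ is strong enough to make the constant $c$ truly independent of both $n$ and $p$; this is the reason I keep the factor $\sqrt{\pi p}$ in the denominator explicitly, so that the bound on $\binom{n}{2}V_p r^p$ is monotone in $p$ and the threshold $2\pi e\, c^2\le 1$ suffices for every $p\ge 1$. Everything else is a routine union bound plus a Markov-type conversion of a tail event into an expectation lower bound.
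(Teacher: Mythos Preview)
Your proof is correct and follows essentially the same route as the paper: a union bound over the $\binom{n}{2}$ pairs, the ball-volume estimate $\P[\|X_1-X_2\|<r]\le V_p r^p$, and a Stirling-type bound on $V_p=\pi^{p/2}/\Gamma(p/2+1)$ to extract the $\sqrt{p}\,n^{-2/p}$ scaling. The only difference is in the last step: the paper integrates the tail, writing $\E[\min_{i\neq j}\|X_i-X_j\|]=\int_0^\infty \P[\min\ge D]\,dD$ and evaluating $\int_0^{D^*}(1-MD^p)\,dD=D^*\frac{p}{p+1}$ with $D^*=M^{-1/p}$, whereas you pick a single threshold $r$ and use $\E[Y]\ge r\,\P[Y\ge r]$. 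Your version is slightly more elementary (no integral) and makes the Stirling input explicit, at the cost of a somewhat smaller constant; the paper's integration recovers the extra factor $p/(p+1)$ but then only invokes the Gamma asymptotics at the very end. Both yield the same $C\sqrt{p}\,n^{-2/p}$ conclusion.
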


\begin{proof}
For any pair $i \neq j \in [n]$, 
\[
\Pr[\|X_i-X_j\| \ge D] \ge 1-\frac{\pi^{p/2}}{\Gamma(p/2+1)} D^p~,
\]
since the volume of the $p$-dimensional ball of radius $D$ is $\frac{\pi^{p/2}}{\Gamma(p/2+1)} D^p$ and the volume of the unit cube is $1$. Hence, by the union bound over all pairs $i\neq j\in [n]$:
\[
\Pr\left[\min_{i\neq j} \|X_i-X_j\| \geq D\right] \geq \max\left(0, 1-\frac{n(n-1)}{2}\frac{\pi^{p/2}}{\Gamma(p/2+1)} D^p\right) = h(n,D)~.
\]
Since $h(n,D) > 0$ if and only if $D < D^* = M^{-1/p}$, where $M=\frac{n(n-1)}{2}\frac{\pi^{p/2}}{\Gamma(p/2+1)}$, we obtain
\begin{align*}
\E\left[\min_{i\neq j} \|X_i-X_j\|\right] 
&= \int^{\infty}_{0} \Pr\left[\min_{i\neq j} \|X_i-X_j\| \geq D\right]~ dD \\
&\geq \int^{D^*}_{0} h(l,D)~ dD = D^*-M\frac{(D^*)^{p+1}}{(p+1)}\\
&= D^*\left(1-M\frac{(D^*)^p}{p+1}\right)\\
&= D^*\left(1-\frac{1}{p+1}\right) \\
&= \left(\frac{2}{n(n-1)}\right)^{1/p} \cdot \frac{\Gamma(p/2+1)^{1/p}}{\pi^{1/2}}
\ge C\cdot n^{-2/p} \sqrt{p}~,
\end{align*}
where the last inequality follows from Ramanujan's approximations of the Gamma function~\citep{10.1016/S0377-0427(00)00586-0} for some $C>0$ independent of $n,p$.
\end{proof}

\section{Analysis for Section~\ref{sec:error}}\label{sec:error_analysis}

In the following proposition we provide a formal statement of the analysis provided in Section~\ref{sec:error}. We consider a balanced $k$-class classification problem and a given feature map $f$ (e.g., pretrained). We upper bound the classification error of the nearest-mean classifier $h_{f,S}(x) = \argmin_{c\in[k]} \|f(x)-\mu_f(S_c)\|$ in terms of the averaged CDNV, that is, $\Avg_{i\neq j}\left[V_f(P_i,P_j)\right]$. Therefore, if the averaged CDNV is small, then, we expect the nearest-mean classifier to have a small classification error. 

\begin{proposition}\label{prop:cdnv_error}
Let $T$ be a balanced $k$-class classification problem with distribution $P$ along with class-conditional distributions $\cP = \{P_c\}^{k}_{c=1}$. Let $S = \bigcup^{k}_{c=1} S_c$ be a dataset, such that $S_c \sim P^{n_c}_c$ ($n_1=\dots=n_k=n_c \in \mathbb{N}$). Let $f:\R^d \to \R^p$ be any feature map and let $h_{f,S}(x) = \argmin_{c\in[k]} \|f(x)-\mu_f(S_c)\|$ be the nearest-mean classifier. Then, for all $\gamma \in (0,1)$, we have
\begin{equation*}
  \E[\textnormal{Err}] ~:=~ \E_{S}\E_{(x,y)\sim P}\bI[h(x) \neq y] \leq 16(k-1)\left[\fr1{s(f,\cP)}+\fr1{n_c}\right] \Avg_{i≠j}\left[V_f(P_i,P_j)\right]~,
\end{equation*}
where $s(f,\cP) = p$ if $\{f\circ P_c\}^{k}_{c=1}$ are spherically symmetric and $s(f,\cP)=1$ otherwise.
If $\{f \circ P_c\}^{k}_{c=1}$ are all spherical Gaussians, then 
\begin{align*}
  \E[\textnormal{Err}] ~\leq~ 3(k-1) \frac{\exp(-p/(32V^{\max}_f))}{(5V^{\max}_f)^{p/2}}~,
\end{align*}
for $V_f^{\max} := \max_{i\neq j}\Var_f(P_i)/\|μ_f(P_i)-μ_f(P_j)\|^2\leq 1/16$.
Furthermore, the condition on $V_f^{\max}$ can be relaxed to $V_f^{\max}\leq \gamma^2 n_c/4$ for $\gamma\in (0,1)$ with bound
\begin{align*}
  \E[\textnormal{Err}] ~≤~ (k-1)\left(\sqrt{\fr{2V^{\max}_f}{(1-\gamma)^2πp}}\cdot \exp\left(-\frac{(1-\gamma)^2p}{4V^{\max}_f}\right)+\left(\fr{n_c{\gamma^2\rm e}}{4V^{\max}_f}\right)^p \cdot \exp\left(-\frac{\gamma^2 n_c p}{4V^{\max}_f}\right)\right)
\end{align*}
\end{proposition}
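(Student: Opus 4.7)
The plan is to bound the misclassification probability of the nearest-mean classifier separately for each ordered pair $(i,j)$ of classes and sum over them. Writing $\mu_i := \mu_f(P_i)$, $\hat\mu_i := \mu_f(S_i)$, and $\Delta_{ij} := \|\mu_i-\mu_j\|$, misclassification of $x\sim P_i$ to class $j$ requires $\|f(x)-\hat\mu_j\|\le\|f(x)-\hat\mu_i\|$; two applications of the triangle inequality yield the implication
\[
\Delta_{ij} \;\le\; 2\|f(x)-\mu_i\| + \|\hat\mu_i - \mu_i\| + \|\hat\mu_j - \mu_j\|,
\]
so the misclassification event is contained in the union of three threshold-exceeding events, one per summand. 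This deterministic decomposition underlies all three bounds.

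For the first, general bound, I would union-bound over the three events with threshold $\Delta_{ij}/4$ and apply Markov's inequality to the squared deviations, using $\E\|f(x)-\mu_i\|^2 = \Var_f(P_i)$, $\E\|\hat\mu_i - \mu_i\|^2 = \Var_f(P_i)/n_c$, and $\E\|\hat\mu_j - \mu_j\|^2 = \Var_f(P_j)/n_c$; summing over ordered pairs via $\sum_{i\neq j}(\Var_f(P_i)+\Var_f(P_j))/\Delta_{ij}^2 = 2k(k-1)\Avg_{i\neq j}V_f(P_i,P_j)$ and dividing by $k$ produces the $16(k-1)(1+1/n_c)$ prefactor. The spherical-symmetry refinement exploits that the misclassification event depends on $f(x)-\mu_i$ only through its projection onto the (random) direction $\hat e := (\hat\mu_j - \hat\mu_i)/\|\hat\mu_j-\hat\mu_i\|$: conditioning on the training sample $S$, rotational invariance of $f\circ P_i$ around $\mu_i$ gives $\E[\langle f(x)-\mu_i,v\rangle^2] = \Var_f(P_i)/p$ for any unit $v$, which sharpens the Chebyshev estimate on the $\|f(x)-\mu_i\|$-term by a factor $p$; the empirical-mean terms retain their $1/n_c$ scaling, producing the $1/p + 1/n_c$ combination.

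For the Gaussian case, I would replace Chebyshev by the sharper chi-squared tail bound $\P(\chi_p^2 \ge t) \le (t/p)^{p/2}\exp((p-t)/2)$, valid for $t\ge p$. Under $V_f^{\max}\le 1/16$, the threshold $\Delta_{ij}/4$ translates to a $\chi_p^2$ deviation of at least $2p$, and applying the tail bound to each of the three terms yields the first exponential estimate. For the relaxed regime $V_f^{\max}\le\gamma^2 n_c/4$, I would split the triangle-inequality budget unevenly: allocate $(1-\gamma)\Delta_{ij}/2$ to $\|f(x)-\mu_i\|$ and $\gamma\Delta_{ij}/4$ to each of $\|\hat\mu_i-\mu_i\|$ and $\|\hat\mu_j-\mu_j\|$. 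The first allocation produces the $\exp(-(1-\gamma)^2 p/(4 V_f^{\max}))$ factor, while the empirical-mean terms, whose chi-squared variables have scale $n_c$ times smaller, produce the second exponential term. The main technical obstacle will be this Gaussian case, where juggling three chi-squared tails simultaneously and calibrating the budget split to match the stated constants requires care; the spherical case, while conceptually clean, also needs a conditioning argument to handle the sample-dependent projection direction $\hat e$.
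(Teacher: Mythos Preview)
Your plan matches the paper's proof: the same triangle-inequality implication $\Delta_{ij}\le 2\|f(x)-\mu_i\|+\|\hat\mu_i-\mu_i\|+\|\hat\mu_j-\mu_j\|$, Markov for the general bound, the half-space/projection argument for spherical symmetry, and chi-squared Chernoff tails in the Gaussian regimes. Two small points where the paper differs and which you will need in order to match the stated constants exactly: (i) the paper bundles the two empirical-mean errors, bounding $\Pr[Z_i+Z_j\ge\alpha/2]$ via $\Pr[Z_i^2+Z_j^2\ge\alpha^2/8]$ and a single Markov step, which yields the $1/n_c$ coefficient rather than the $2/n_c$ your three-way split would give; (ii) for the \emph{improved} Gaussian bound the paper does not revert to the full-norm triangle split but continues from the half-space argument, applying the one-dimensional Mills ratio $\Phi(-a)\le e^{-a^2/2}/(a\sqrt{2\pi})$ to the projected coordinate $Y_i^1$---this is precisely what produces the $\sqrt{2V_f^{\max}/((1-\gamma)^2\pi p)}$ prefactor, which a $\chi^2_p$ bound on $\|f(x)-\mu_i\|$ would not. (Also, your budget there sums only to $(1-\gamma)\Delta_{ij}+2\cdot\gamma\Delta_{ij}/4=(1-\gamma/2)\Delta_{ij}<\Delta_{ij}$; you need $\gamma\Delta_{ij}/2$ per empirical-mean term, or the paper's bundled event $\{Z_i+Z_j>\gamma\alpha\}$.)
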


\begin{proof}
First note that
\begin{equation}\label{eq:EExy}
\begin{aligned}
  \E[\text{Err}] ~&=~ \E_{S}\E_{(x,y)\sim P}\bI[h_{f,S}(x)≠y] \\
  ~&=~ \frac1k∑_{i=1}^k\Pr[h_{f,S}(x_i)≠i] \\
  ~&=~ \frac1k∑_{i≠j}\Pr[h_{f,S}(x_i)=j]~.
\end{aligned}
\end{equation}
In the following we will fix $i$ and $j≠i$ and bound 
$\Pr[h_{f,S}(x_i)=j]$, which is the probability that $x_i\sim P_i$ is (wrongly) classified as $j≠i$, for all three cases (general, symmetric, Gaussian). Let $c$ be either $i$ or $j$. Let $μ_c:=μ_f(P_c)$ and $\hat{μ}_c:=μ_f(S_c)$ and $u_i=f(x_i)$.
Let $X_c:=\|\hat{μ}_c-u_i\|$ and $Y_c:=\|u_i-μ_c\|$ and $Z_c:=\|μ_c-\hat{μ}_c\|$ for $c=i,j$.
Let $α_{ij}=\|μ_i-μ_j\|$ (or $α$ for short since $i$ and $j$ are fixed) and $σ_c^2:=\Var_f(P_c)≡\E[Y_c^2]=n_c\E[Z_c^2]$.
Note that $V_f(P_i,P_j)=(σ_i^2+σ_j^2)/2α^2_{ij}$.

\noindent\textbf{General case:}
With this, and by triangle inequalities, $Y_j≤X_j+Z_j$ and $X_i≤Y_i+Z_i$ and $Y_i+Y_j≥α$, we get
\begin{align*}
  \Pr[h_{f,S}(x_i)=j] ~&=~ \Pr[X_j≤X_i] \\
  ~&≤~ \Pr[Y_j≤Y_i+Z_i+Z_j] \\
  ~&≤~ \Pr[Y_j≤\fr34α~∨~Y_i+Z_i+Z_j≥\fr34α] \\
  ~&≤~ \Pr[Y_i≥\fr{α}4~∨~Y_i≥\fr{α}4~∨~Z_i+Z_j≥\fr{α}2] \\
  ~&≤~ \Pr[Y_i≥\fr{α}4]~+~\Pr[Z_i+Z_j≥\fr{α}2] \\
  ~&≤~ \Pr[Y_i^2≥\fr{α^2}{16}]~+~\Pr[Z_i^2+Z_j^2≥\fr{α^2}8]~.
\end{align*}
Now by Markov's inequality, 
\begin{align*}
  \Pr[Z_i^2+Z_j^2≥\fr{α^2}8] ~≤~ \E[Z_i^2+Z_j^2]/(\fr{α^2}8) ~=~ \frac{8σ_i^2+8σ_j^2}{n_c α_{ij}^2}~,
\end{align*}
and similarly $\Pr[Y_i^2≥\fr{α^2}{16}]≤16σ_i^2/α_{ij}^2$.
Plugging this into \eqref{eq:EExy}, by symmetrization, leads to the desired inequality
\begin{align*}
  \E[\text{Err}] ~&≤ \frac1k ∑_{i≠j} \frac{16σ_i^2}{α_{ij}^2}+\frac{8σ_i^2+8σ_j^2}{n_c α_{ij}^2}
  ~=~ 16(k-1)(1+\fr1{n_c})\Avg_{i≠j}\left[V_f(P_i,P_j)\right]~.
\end{align*}

\noindent\textbf{Spherical Gaussian case:}
For simplicity of notation, we assume that all $\Var_f(P_i)$ are equal to $\sigma$ and that $\{\mu_1,\dots,\mu_k\}$ span a regular simplex. The proof of the general case is essentially the same with some additional max operations. By assumption $\alpha$ and $\sigma$ are independent of $i$ and $j$, hence also $V_f:=V_f(P_i,P_j)=\sigma^2/\alpha^2$.
Fix $i$ and let $X_j:=\|\hat{\mu}_j-u_i\|$ and $Y_j:=\|u_i-\mu_j\|$ and $Z_j:=\|\mu_j-\hat{μ}_j\|$ for $j\in [k]$. By the triangle inequality, $Y_j\leq X_j+Z_j$ and $X_i\leq Y_i+Z_i$ and $Y_i+Y_j\geq \alpha$. Next, we upper bound the probability that $x_i$ is misclassified as $j\neq i$:
\begin{align*}
\Pr[h_{f,S}(x_i)=j] 
~&≤~ \Pr[Y_i≥\fr{α}4]~+~\Pr[Z_i≥\fr{α}4]~+~\Pr[Z_j≥\fr{α}4] \\
~&≤~ \Pr[Y_i≥\fr{α}4]~+~\Pr[Y_i≥\fr{α}4]~+~\Pr[Y_j≥\fr{α}4]~.
\end{align*}
Since $u_c\sim\cN(μ_c,σ_c^2\mathbb{I}/p)$ and $\hat{μ}_c\sim\cN(μ_c,σ_c^2\mathbb{I}/(n_c p))$, the random variable $E_i = Y^2_i p/\sigma^2$ is a chi-squared random variable with $p$ components. We can write
\begin{align*}
\Pr\left[Y_i\geq\frac{\alpha}{4}\right] ~=~ \Pr\left[E_i \geq \frac{\alpha^2 p}{16\sigma^2} \right] 
~\leq~ 1-F\left(\frac{\alpha^2p}{16\sigma^2},~ p\right) ~=~ 1-F\left(zp,~ p\right)~, 
\end{align*}
where $F(x,p)$ is the CDF of the chi-squared distribution with $p$ components and $z=(\alpha/4\sigma)^2=1/16V^{\max}_f$. By Chernoff's bound, if $z > 1$, we have
\begin{equation*}
1-F\left(zp,~ p\right) \leq \left(z \exp(1-z)\right)^{p/2} = \left(\frac{\exp\left(1-\frac{1}{16V^{\max}_f}\right)}{16V^{\max}_f}\right)^{p/2} \leq \frac{\exp(-p/(32V^{\max}_f))}{(5V^{\max}_f)^{p/2}}
\end{equation*}
Finally by averaging over $i\neq j$, we obtain the desired inequality,
\begin{align*}
  \E[\textnormal{Err}] ~&\leq~ 3(k-1) \frac{\exp(-p/(32V^{\max}_f))}{(5V^{\max}_f)^{p/2}}~.
\end{align*}

\noindent\textbf{Spherically symmetric case:}
In this case 
\begin{align*}
  \Pr[h(x_i)≠j] ~&=~ \Pr[||u_i-\hat{μ}_j||≤||u_i-\hat{μ}_i||] \\~&=~ \Pr[u_i∈H] ~=~ \Pr[d(u_i,H)=0], ~~~\text{where}
\end{align*}
\begin{align*}
  H ~&:=~ \{x∈ℝ^p:||x-\hat{μ}_j||≤||x-\hat{μ}_i||\} \\ 
  ~&≡~ \{x∈ℝ^p:\langle\hat{μ}_i-\hat{μ}_j,~2x-\hat{μ}_i-\hat{μ}_j\rangle≥0\} \\
  ~&≡~ \{x∈ℝ^p:d(x,H)=0\}
\end{align*}
is the half-space of all $u_i$ that are wrongly classified, and $d(x,H)$ is the distance of $x$ from separating hyperplane $∂H$ if $x\not\in H$ and $0$ if $x∈H$.
The distance $D$ of the closest point of $H$ to $μ_i$ is 
\begin{align*}
  D ~&=~ d(μ_i,H) ~=~ \max\Big\{0,\Big\langle\frac{\hat{μ}_i-\hat{μ}_j}{||\hat{μ}_i-\hat{μ}_j||},~μ_i-\frac{\hat{μ}_i+\hat{μ}_j}2\Big\rangle\Big\}
\end{align*}
For fixed displacement magnitudes $Z_i$ and $Z_j$, this distance is smallest
if both displacements are colinear in direction of $μ_i-μ_j$, in which case 
\begin{align*}
  D ~&≥~ D_{min} ~:=~ \frac{α}2-\frac{Z_i+Z_j}2
\end{align*}
A formal proof of this claim is as follows:
W.l.g.\ we can choose (a coordinate system such that) $μ_i=0$:
\begin{align*}
  2D ~&=~ -\frac{\langle\hat{μ}_i-\hat{μ}_j,~\hat{μ}_i+\hat{μ}_j\rangle}{||\hat{μ}_i-\hat{μ}_j||}
    ~=~  \frac{||\hat{μ}_j||^2-||\hat{μ}_i||^2}{||\hat{μ}_i-\hat{μ}_j||}
    ~=~  (\|\hat{μ}_j\|-\|\hat{μ}_i\|)\frac{\|\hat{μ}_j\|+\|\hat{μ}_i\|}{\|\hat{μ}_i-\hat{μ}_j\|} \\
    ~&≥~  \|\hat{μ}_j\|-\|\hat{μ}_i\|
    ~≥~ \|μ_j\|-\|\hat{μ}_j-μ_j\|-\|\hat{μ}_i\|
    ~=~ α-Z_j-Z_i ~=~ 2D_{min}~,
\end{align*}
where both inequalities are simple applications of the triangle inequality.

Now, for fixed $\hat{μ}_i$ and $\hat{μ}_j$ i.e.\ fixed $D$ and by rotational invariance, the probability that $u_i∈H$ is the same as the probability that (e.g.) the first coordinate of $u_i-μ_i$ is larger than $D$. With $Y_i^1:=u_i^1-μ_i^1$  and $\gamma\in (0,1)$, this implies
\begin{align}
  \Pr[u_i∈H] ~&=~ \Pr[Y_i^1≥D] \nonumber\\
  ~&=~ \Pr[Y_i^1≥D~∧~Z_i+Z_j≤\gamma \alpha] ~+~ \Pr[Y_i^1≥D~∧~Z_i+Z_j>\gamma \alpha ] \nonumber\\
  ~&≤~ \Pr[Y_i^1≥0.5(1-\gamma)\alpha] ~+~ \Pr[Z_i+Z_j>\gamma \alpha] \nonumber\\
  ~&≤~ \Pr[(Y_i^1)^2≥0.25(1-\gamma)^2\alpha^2] ~+~ \Pr[Z_i^2+Z_j^2>0.5\gamma^2\alpha^2] \label{eq:sscp} \\
  ~&≤~ \frac{\E[(Y_i^1)^2]}{0.25(1-\gamma)^2\alpha^2} ~+~ \frac{\E[Z_i^2+Z_j^2]}{\gamma^2\alpha^2} \nonumber\\
  ~&=~ \frac{σ_i^2/p}{0.25(1-\gamma)^2\alpha^2} ~+~ \frac{(σ_i^2+σ_j^2)/n_c}{\gamma^2\alpha^2}~.
\end{align}
Plugging this into \eqref{eq:EExy} with $\gamma=0.5$, by symmetrization, leads to
\begin{align*}
  \E[\text{Err}] ~&≤~ \frac{1}{k} ∑_{i≠j} \frac{8σ_i^2+8σ_j^2}{α_{ij}^2}\left[\frac1p+\frac1{n_c}\right] \\
  ~&=~ 16(k-1)\left[\frac1p+\frac1{n_c}\right]\Avg_{i≠j}\left[V_f(P_i,P_j)\right]
\end{align*}

\noindent\textbf{Improved spherical Gaussian case:}

We continue from \eqref{eq:sscp}.
If we apply a standard Gaussian tail bound $Φ(-a)≤{\rm e}^{-a^2/2}/a\sqrt{2π}$ to $Y_i^1\sqrt{p}/σ_i$ with $a=\fr{(1-\gamma)\alpha\sqrt{p}}{2σ_i}$ get
\begin{align*}
  \Pr[Y_i^1≥0.5(1-\gamma)\alpha] ~&=~ \Pr\left[Y_i^1\fr{\sqrt{p}}{σ_i}≥\fr{(1-\gamma)\alpha\sqrt{p}}{2σ_i}\right] \\
  ~&=~ Φ(-a) \\
  ~&≤~ \sqrt{\fr{2V^{\max}_f}{(1-\gamma)^2πp}}\cdot \exp(-(1-\gamma)^2p/4V^{\max}_f)
\end{align*}
For the other term, we note that $(Z_i^2+Z_j^2)n_c p/σ_i^2$ is $χ^2_{2p}$-distributed, 
hence by Chernoff's bound for $1≤z:=\gamma^2 n_c α^2/4σ_i^2≥\gamma^2 n_c/4V^{\max}_f$ we get
\begin{align*}
  \Pr[Z_i^2+Z_j^2>\fr{α^2}8] ~&=~ \Pr[(Z_i^2+Z_j^2)\fr{n_c p}{σ_i^2}>2p⋅z]\\
  ~&≤~ (z{\rm e}^{1-z})^p \\
  ~&≤~ \left(\fr{n_c{\gamma^2\rm e}}{4V^{\max}_f}\right)^p \cdot \exp(-\gamma^2 n_c p/4V^{\max}_f)
\end{align*}
Plugging this into \eqref{eq:EExy}, leads to
\begin{align*}
  \E[\text{Err}] ~&≤~ \frac{1}{k} ∑_{i≠j} \Pr[Y_i^1≥\fr{α}4]+\Pr[Z_i^2+Z_j^2>\fr{α^2}8] \\
  ~&≤~ (k-1)\left(\sqrt{\fr{2V^{\max}_f}{(1-\gamma)^2πp}}\cdot \exp\left(-\frac{(1-\gamma)^2p}{4V^{\max}_f}\right)+\left(\fr{n_c{\gamma^2\rm e}}{4V^{\max}_f}\right)^p \cdot \exp\left(-\frac{\gamma^2 n_c p}{4V^{\max}_f}\right)\right)
\end{align*}
\end{proof}

\newpage
\section{List of Notation}\label{app:Notation}

\begin{center}
\begin{tabular}{r c p{10cm} }
\toprule
$\cX$, $\mathcal{Y}_k$ & & instances and labels spaces \\
$\tP$, $P$ & & source and target distributions \\
$\tP_c$, $P_c$ & & source and target class-conditional distributions \\
$\mathcal{D}$ & & distribution over target tasks \\
$\ell$ & & loss function \\
$L_S(h)$, $L_{T}(h)$ & & source and target generalization risks\\
$L_{\cD}(f)$ & & transfer risk \\
$c \in[l]|[k]$ & & class index \\
$\mathcal{C}$ & & set of class distributions \\
$\clD$ & & distribution over classes \\
$l \in \mathbb{N}$ & & number of classes in the train=source task \\
$k \in \mathbb{N}$ &  & number of classes in the test=target task \\
$m,n$ & & train=source and test=target sample size per task \\
$m_c,n_c$ & & number of samples with class label $c$ per source/target task \\
$i\in[n]|[m]$ & & index of data item \\
$p\in\mathbb{N}$ & & features dimension \\
$d \in \mathbb{N}$ & & input dimension \\
$f\in \cF$ & & feature map $\mathbb{R}^{d}\to \mathbb{R}^{p}$. Penultimate layer of a neural network \\
$g \in \mathcal{G}$ & & mapping $\mathbb{R}^{p}\to \mathbb{R}^{k}$ from features to soft test labels \\
$\tilde{g} \in \tilde{\mathcal{G}}$ & & mapping $\mathbb{R}^{p}\to \mathbb{R}^{l}$ from features to soft train labels \\
$\lambda_m$ & & regularization parameter $\lambda_m = \alpha \sqrt{m}$\\
$A^\psinv$ & & pseudo-inverse of matrix $A$\\
$S,S_c$ & & target training data and its subset consisting of all samples from class $c$ \\
$\tS, \tS_c$ & & source training data and its subset consisting of all samples from class $c$\\
$\mu_g(P)$ & & mean of $g(x)$ for $x\sim P$ \\
$\Var_g(P)$ & & variance of $g(x)$ for $x\sim P$ \\
$V_f(Q_1,Q_2)$ & & class-distance normalized variance (CDNV)\\
$R(A)$ & & Rademacher complexity of $A\subset\R^d$ \\
$\epsilon^c_i(\delta)$ & & generalization gap terms \\
$\Avg^{k}_{i=1}[a_i]$ & & average \\
$u(A)$ & & the projection of $u:B\to R$ over $A\subset B$ \\
$\cU(A)$ & & the set $\{u(A) : u \in \cU\}$ \\
\bottomrule
\end{tabular}
\end{center}
\label{tab:TableOfNotationForMyResearch}

\end{document}